\newtheorem{fact}[theorem]{Fact}
\newcommand{\tmu}{\tilde{\mu}}
\newcommand{\tpi}{\tilde{\pi}}
\newcommand{\tsigma}{\tilde{\sigma}}
\newcommand{\R}[1]{{R_{#1}^*}}	
\newcommand{\sigt}[2]{(\sigma_{#1}^* \vee \sigma_{#2}^*)}	
\newcommand{\tO}{\tilde{O}}	
\newcommand{\F}{\mathcal{F}}	
\newcommand{\G}{\mathcal{G}}	
\newcommand{\E}{\mathbb{E}}	
\newcommand{\D}{\mathcal{D}}	
\newcommand{\indic}{\mathds{1}} 
\newcommand{\vdot}[2]{\langle #1, #2 \rangle}
\newcommand{\Eps}{\mathcal{E}}
\newif\ifdraft
\newcommand{\red}[1]{\textcolor{red}{#1}}
\newcommand{\jycomment}[1]{\ifdraft {\bf{{\red{{Jeongyeol --- #1}}}}} \else {#1}\fi}
\title[EM gives Sample-Optimality for Learning Mixtures of Well-Separated Gaussians]{The EM Algorithm gives Sample-Optimality for Learning Mixtures of Well-Separated Gaussians}
\begin{document}

\maketitle

\begin{abstract}
    We consider the problem of spherical Gaussian Mixture models with $k \geq 3$ components when the components are well separated. A fundamental previous result established that separation of $\Omega(\sqrt{\log k})$ is necessary and sufficient for identifiability of the parameters with \textit{polynomial} sample complexity \cite{regev2017learning}. In the same context, we show that $\tilde{O} (kd/\epsilon^2)$ samples suffice for any $\epsilon \lesssim 1/k$, closing the gap from polynomial to linear, and thus giving the first optimal sample upper bound for the parameter estimation of well-separated Gaussian mixtures. We accomplish this by proving a new result for the Expectation-Maximization (EM) algorithm: we show that EM converges locally, under separation $\Omega(\sqrt{\log k})$. The previous best-known guarantee required $\Omega(\sqrt{k})$ separation \cite{yan2017convergence}. Unlike prior work, our results do not assume or use prior knowledge of the (potentially different) mixing weights or variances of the Gaussian components. Furthermore, our results show that the finite-sample error of EM does not depend on non-universal quantities such as pairwise distances between means of Gaussian components.
    
\end{abstract}

\begin{keywords}%
  Gaussian Mixture Model, EM algorithm, optimal sample complexity
\end{keywords}

\section{Introduction}
Learning parameters of a mixture of Gaussian is a fundamental problem in machine learning. In this model, we are given random samples from $k \ge 2$ Gaussian components without observing the label, {\it i.e.}, the indicator of which component each sample comes from. In this paper, we focus on an important special case of this model where the covariance of each Gaussian component is a multiple of the identity matrix. Formally, we consider a Gaussian mixture model $\G^*$ whose probability density function (p.d.f.) can be represented as $\sum_{j=1}^k \pi_j^* \mathcal{N} (\mu_j^*, {\sigma_j^*}^2 I_d)$, where $d$ is the dimension, $I_d$ is $d \times d$ identity matrix, and $\mathcal{N}(\mu, \Sigma)$ denotes the  p.d.f.\ of a single Gaussian distribution with mean $\mu \in \mathbb{R}^d$ and covariance $\Sigma \in \mathbb{R}^{d \times d}$. Here $\pi_j^*$ are mixing weights, $\mu_j^* \in \mathbb{R}^d$ are means, and ${\sigma_j^*}$ are scale factors for (identity) covariances of each Gaussian component. This special case is often called the {\it spherical} Gaussian mixture model. Our goal is to estimate all parameters $\{(\pi_j^*, \mu_j^*, \sigma_j^*), \forall j \in [k]\}$ up to accuracy $\epsilon$.

Learning a mixture of Gaussians has a very long and rich history (see Section \ref{subsection:prior_works} for an overview of previous works). A variety of algorithms have been proposed for parameter learning. These either require separation assumptions on the means, or structural assumptions on the mean placement, requiring control of the tensor singular values (essentially requiring affine independence). Most tensor-based analysis has therefore been done in a smoothed setting (e.g., \cite{bhaskara2014smoothed, ge2015learning}), however in the absence of such structural assumptions, mean separation is what controls the hardness of the Gaussian Mixtures problem. Without any separation assumptions, even in one dimension, worst case instances require $\Omega(e^k)$ samples  \cite{moitra2010settling, hardt2015tight}. At the other extreme, \cite{dasgupta1999learning} demonstrated that under $\Omega(\sqrt{d})$ separation, sample-efficient (polynomial number of samples) identifiability is possible, thus providing the first upper bound on required separation for efficient identifiability. An important recent work by \cite{regev2017learning} characterized the exact threshold for sample-efficient identifiability, establishing that with separation $\Omega(\sqrt{\log k})$ sample-efficient identifiability is possible, where as below that threshold, a super-polynomial number of samples are required.


Perhaps the most widely used algorithm for mixture models is the Expectation-Maximization (EM) algorithm \cite{wu1983convergence}. Recently, \cite{yan2017convergence, zhao2018statistical} established the local convergence of the EM algorithm ({\it i.e.,} EM converges if initialized from a neighborhood of the ground truth) for mixtures of $k$ spherical Gaussians. These results require $\tilde{\Omega}(\sqrt{k})$ separation between means, and assume all components have identity covariances ({\it i.e.,} $\sigma_j^* = 1, \forall j \in [k]$). Thus this leaves open the key question as to the (local) behaviour of EM with $\Omega(\sqrt{\log k})$ separation.


The only known (local) algorithm that is guaranteed to converge in the $\Omega(\sqrt{\log k})$ separation regime is the EM-like algorithm proposed in \cite{regev2017learning}. However, the sample complexity of their analysis also has a high dependence on $k$ and instance-specific parameters as we explain in more detail below. They also require the initialization to be $O(1/k^2)$-close to the true parameters. By obtaining guarantees that depend only on $O(1)$-close initialization, we are able to give an optimal upper bound on sample complexity for learning the parameters of a mixture of spherical Gaussians. 

{\bf Main Contributions}. In this work, we return to the classical EM algorithm in the same $\Omega(\sqrt{\log k})$ separation regime, which thanks to the lower bound of \cite{regev2017learning} we know is optimal. We obtain improved convergence guarantees in this regime, and thereby close some of the existing gaps in the literature. Specifically, our main contributions are as follows:
\begin{itemize}[leftmargin=*]
    \item We show that with separation $\Omega(\sqrt{\log k})$, population EM converges given a good initialization. This improves the previous results of \cite{yan2017convergence} that required $\Omega(\sqrt{k})$-separation. For the initialization, our result only requires $O(1)$-closeness to the ground truth, hence improving the $O(1/k^2)$ initialization requirement in \cite{regev2017learning}. Finally, unlike all prior work we are aware of, our result does not assume prior knowledge of mixing weights or variance parameters, and these need not be the same; we show we can adaptively estimate these quantities along with the means. These improvements enable our last result below on the optimal sample complexity of learning Gaussian mixture model parameters. 
    
    \item We show that (sample-splitting) finite-sample EM converges to the ground truth given an $O(1)$-close initialization. Our result has sample complexity $n = \tilde{O}(d\pi_{min}^{-1} /\epsilon^2)$ (where $\pi_{min} = \min_i \pi_i^*$) to recover all parameters up to $\epsilon$ accuracy:
    \begin{align*}
        \forall i \in [k], \ \|\mu_i - \mu_i^*\| \le \sigma_i^* \epsilon, |\pi_i - \pi_i^*| \le \pi_i^* \epsilon, |\sigma_i - \sigma_i^*| \le \sigma_i^* \epsilon / \sqrt{d}.
    \end{align*}
    Note that a natural sample complexity lower bound for the Gaussian mixture model is $\Omega(d \pi_{min}^{-1} / \epsilon^2)$, since we need to collect at least $\Omega(d/\epsilon^2)$ samples from each component. We give the best possible sample complexity in terms of all parameters $\pi_{min}, k, d, \epsilon$. This significantly improves over previous results \cite{yan2017convergence, regev2017learning} where even in the balanced setting ($\pi_{min} = O(1/k)$), the sample complexities are at least worse by $\Omega(k^4 \rho^6)$ with an instance-dependent parameter $\rho$ \footnote{$\rho$ depends on instance-specific parameters such as $\max_{i\neq j} \|\mu_i^* - \mu_j^*\|$ or $\max_{i\neq j} \sigma_i^* / \sigma_j^*$. Since our sample complexity result does not depend on $\rho$, we do not require boundedness on parameters.}. 
    
    \item We show the sample-complexity $\tO(\max\{kd/\epsilon^2, k^3d\})$ for learning the parameters of spherical Gaussian mixtures with $\Omega(\sqrt{\log k})$ separation. For $\epsilon \le 1/k$ this gives $\tO(kd/\epsilon^2)$ and hence is optimal. The sample complexity guarantee here does not require any initialization, or boundedness of the parameters. 
    The breakthrough in \cite{regev2017learning} was the first to establish polynomial sample complexity learning, but only gave sample complexity in the form $poly(k, d, \rho, 1/\epsilon)$, which is at least $\Omega(k^9 d \rho^4 \epsilon^{-2})$. Our result closes the gap and shows that the information-theoretically necessary sample complexity is also sufficient: as long as the separation is $\Omega(\sqrt{\log k})$, then $\tO(kd /\epsilon^2)$ samples are sufficient; this matches a lower bound $\Omega(kd /\epsilon^2)$ up to logarithmic factors. 
\end{itemize}

\subsection{Prior Art}
\label{subsection:prior_works}
\paragraph{Separation-Based Algorithms} Learning mixtures of Gaussians has a long and rich history since \cite{dasgupta1999learning} who gave a first polynomial-time algorithm under $\Omega(\sqrt{d})$ separation. This result has been followed by a number of works \cite{sanjeev2001learning, vempala2004spectral, dasgupta2007probabilistic, achlioptas2005spectral, kannan2005spectral} that improve the result in various separation regimes. Currently, the best algorithmic results are the recent works by \cite{diakonikolas2018list, hopkins2018mixture, kothari2018robust}, where they provide algorithms that estimate parameters assuming $\Omega(k^{1/\gamma})$ separation, using $\tilde{O}(poly(k^\gamma, d, 1/\epsilon))$ samples (and running time) for arbitrary $\gamma > 0$. In particular, their can recover parameters of spherical Gaussian mixtures under $\Omega(\sqrt{\log k})$ separation using $\Omega(poly(k^{\log k}, d, 1/\epsilon))$ samples. While all works mentioned here aim to get (nearly) polynomial-time algorithms, our work is more in line with \cite{regev2017learning}: we focus on upper bounding the sample complexity.

\paragraph{Iterative Algorithms} EM is one of the most popular algorithms for mixture problems. The first results on convergence were infinitesimally local, and asymptotic \cite{redner1984mixture, xu1996convergence, ma2000asymptotic}. 
Recently, the work in \cite{balakrishnan2017statistical} builds off the idea of coupling finite sample and population EM, characterizing the non-asymptotic convergence of EM within a fixed (non-infinitesimal) basin of attraction. A flurry of work has followed in recent years, making substantial progress in the theory of EM. For instance, \cite{xu2016global, daskalakis2017ten} established the global convergence of the EM algorithm from a random initialization for Gaussian mixture models with two components, and \cite{kwon2019global} established the same for two component mixed linear regression. For more than two components, we cannot hope for such a global convergence guarantee, as shown by \cite{jin2016local}. On the positive side, some recent works have established local convergence results, showing that EM converges from a well-initialized point, under some minimum separation conditions \cite{yan2017convergence, kwon2019converges}. However, the best known guarantees for the EM algorithm for a mixture of Gaussians with $k$ components require separation of order $\tilde{\Omega}(\sqrt{k})$, and, moreover, are restricted to the equal identity covariance setting across all components \cite{yan2017convergence}. Another popular iterative heuristic is the $k$-means algorithm (also called Lloyd's algorithm) \cite{kumar2010clustering, awasthi2012improved, lu2016statistical}. The best known convergence result for this algorithm requires at least $\Omega(\sqrt{k})$ separation. Thus the state of the art analyses of EM and Lloyd's both leave a significant gap to the limit of $\Omega(\sqrt{\log k})$ mean separation. A variant of EM proposed in \cite{regev2017learning} takes a big step forward: it is shown to converge locally for spherical Gaussian mixtures with $\Omega(\sqrt{\log k})$ separation. While their algorithm is sample-efficient, the bound on samples is a (large) polynomial in $k$ and instance-specific parameters, where as the information theoretic lower bound is $\Omega(kd/\epsilon^2).$

\paragraph{Moment-Based Methods} The method-of-moments is a powerful general-purpose technique for learning a family of parametric distributions.
However, even in one dimension, an information-theoretic argument shows that an exponential (in $k$) number of samples is required to recover the parameters of a Gaussian mixture model in the absence of a minimum separation condition \cite{moitra2010settling,hardt2015tight}.
To circumvent such information-theoretic bottleneck of parameter learning, a vast line of work explores tensor-decomposition methods in a smoothed setting  \cite{hsu2013learning, ge2015learning, anandkumar2014tensor, kalai2010efficiently, anderson2014more}. However, such an approach cannot work when the means of Gaussian components lie in a low dimensional subspace. Furthermore, statistical precision of moment-based methods has poor dependence on the geometric properties of an instance, such as singular values of a tensor matrix or the norm of means. Therefore, they are often used in conjunction with an iterative procedure such as the EM algorithm which produces much more accurate estimators as we show in this work.

\paragraph{Lower Bounds} Without any separation, $\Omega(e^k)$ samples are necessary \cite{moitra2010settling, hardt2015tight}. In fact, the separation condition has to be at least $\Omega(\sqrt{\log k})$ to get a fully polynomial sample complexity as shown in \cite{regev2017learning}. There are also computational lower bound results due to \cite{diakonikolas2017statistical, diakonikolas2018list} framed in a statistical query (SQ) model \cite{feldman2017statistical} for general (non-spherical) Gaussian mixtures. It would be interesting to understand the implications of their results in the setting of spherical Gaussian mixtures (though we do not explore this here).

\paragraph{Distribution Learning} Another branch of research in Gaussian mixture models is density estimation. \cite{feldman2006pac, chan2014efficient, suresh2014near, diakonikolas2019robust, li2017robust, ashtiani2018sample, ashtiani2018nearly}. In this problem, the goal is to learn a distribution $\G$ that minimizes the total variation distance to $\G^*$. When this hypothesis $\G$ is also in the class of mixtures of $k$ Gaussian components, it is called {\it proper} learning. Most known proper-learning algorithms are sample-efficient or even sample-optimal \cite{ashtiani2018nearly}. Our result connects the result in \cite{ashtiani2018nearly} to the parameter learning of a well-separated mixture of Gaussians.

\paragraph{Other Related Work} Finally, we mention that there are a few related branches of research to learning a mixture of Gaussians such as graph clustering and community detection \cite{massoulie2014community, mixon2016clustering, yin2019theoretical}. In such problems, an analogous notion of separation condition is considered for sample-efficient learning (see the recent survey in \cite{abbe2017community}). 

\subsection{A Roadmap and Proof Outline}
Our starting point is the by-now standard procedure where we analyze EM in the population setting first, and then connect the result to finite-sample EM through the concentration of measures. 

\paragraph{Analysis of the Population EM Algorithm.} The E-step in the EM algorithm constructs weights (soft-label) for all components to construct the expectations of the log-likelihood on the current estimators. If we have a good enough estimation of parameters, then we anticipate that most samples should have approximately the right weights according to which components they come from. Given the current estimate of the mean parameters, let $\Eps_{good}$ denote the set where the E-step assigns approximately the right label (weight). Then, the estimation error in means, for example, after one EM step can be decomposed as 
\begin{align*}
    \E_\D[wX] - \mu^* = \underbrace{(\E_\D[wX | \Eps_{good}] - \mu^*) P(\Eps_{good})}_{\text{errors from good samples}} + \underbrace{(\E_\D[wX |\Eps_{good}^c] - \mu^*) P(\Eps_{good}^c)}_{\text{errors from bad samples}},
\end{align*}
where $\E_\D[wX]$ is the M-step operator for means, and $w$ is a weight constructed for a sample $X$ in the E-step. In the well-separated regime, we can show that $\E_\D[wX | \Eps_{good}] \approx \mu^*$ for good samples, while $P(\Eps_{good}^c) \approx 0$ for bad samples. While previous local analysis has a similar flavor \cite{yan2017convergence, balakrishnan2017statistical}, it is significantly more challenging to construct these good events for non-symmetric and non-equal variance Gaussian mixtures, since the effective dimension of the EM operator does not shrink as in the symmetrized \cite{balakrishnan2017statistical} or equal-variance setting \cite{yan2017convergence}. We show that if 
all parameters are well-initailized (see details in Theorem \ref{theorem:gmm_pop_main}), 
then EM converges locally to the true parameters at a linear rate.

\paragraph{Analysis of the Finite-Sample EM Algorithm.} 
In order to get the right order of statistical error for the EM algorithm, our measure concentration analysis treats good and bad samples separately. We adopt the technique used in \cite{kwon2019converges}, and we split the concentration of measure argument into two parts: (i) concentration due to the sum of independent random variables and (ii) concentration of the empirical probability of the event (see Proposition \ref{lemma:indic_prob_decompose}). This categorization strategy is critical to get a finite-sample error that is instance-independent and does not depend on, for example, pairwise distances between means. This independence is in contrast with results of prior work, e.g., \cite{yan2017convergence, zhao2018statistical, balakrishnan2017statistical, regev2017learning}. While technical, this concentration argument is the key differentiator in our results. We use this same technique again later in the paper, to obtain the optimal statistical error for learning a mixture of well-separated Gaussians.

\paragraph{Minimal Initialization Requirements and Sample-Optimal Learning.} 
We next 
consider the problem of sample-optimal learning. To use our EM result, we need to show a sample-efficient strategy for obtaining a sufficiently good initialization for EM. In a recent work, \cite{ashtiani2018nearly} gives the optimal sample-complexity result for proper learning. When applied to our setting, we can learn a candidate mixture of spherical Gaussians $\G$ that satisfies $\| \G - \G^*\|_{TV} \le \epsilon$ using $\tO(kd/\epsilon^2)$ samples, where $\|\cdot\|_{TV}$ is the total variation (TV) distance. We use this result to obtain a guarantee on the mean parameters. Specifically, we show that if a candidate spherical Gaussian mixture $\G$ satisfies $\|\G - \G^*\|_{TV} \le \pi_{min}/4$, then the mean parameters in $\G$ satisfy 
\begin{align}
    \label{eq:intro_kmeans_cond}
    \|\mu_i - \mu_i^*\| \le \frac{1}{4} \min_{i \neq j} \|\mu_i^* - \mu_j^*\|,
\end{align}
where $\mu_i$ are mean parameters of $\G$ and $\mu_i^*$ are means of $\G^*$. From \eqref{eq:intro_kmeans_cond}, we get an initialization condition that is not yet good enough to plug in our EM result. We develop a bridge to obtaining a sufficiently good initialization of all parameters with (essentially) the $k$-means algorithm, and conclude that $\tO(\pi_{min}^{-1} d /\epsilon^2 \vee k\pi_{min}^{-2}d)$ samples are sufficient to recover all parameters up to arbitrary $\epsilon$ accuracy.
We note that this final result is possible because the initialization requirement we ultimately need is significantly weaker than what prior art requires.

\section{Preliminaries and Notation}
We consider the mixture of $k$ spherical Gaussian mixtures with parameters $\{(\pi_j^*, \mu_j^*, \sigma_j^*), \forall j \in [k]\}$. True parameters are denoted by $(\cdot)^*$, while estimators are denoted using the same letters without $*$. Each $j^{th}$ component has mean $\mu_j^*$ and covariance $\sigma_j^* I_d$ in $\mathbb{R}^d$. We use $\D$ to represent the distribution of the mixture of Gaussians $\G^*$, and $\D_j$ to represent the distribution of the $j^{th}$ Gaussian component. 

The EM algorithm is composed of two steps, the E-step that constructs the expectation of the log-likelihood on the current estimators, and the M-step that maximizes this expectation. For a mixture of spherical Gaussian distributions, each step of the EM algorithm is as follows: 
\begin{align*}
    \mbox{(E-step)}: && w_i(X) &= \frac{\pi_i \exp(-\|X - \mu_i\|^2/(2 \sigma_i^2) - d \log (\sigma_i^2) / 2)}{\sum_{j=1}^k \pi_j \exp(-\|X - \mu_j\|^2/(2 \sigma_j^2) - d \log (\sigma_j^2) / 2)}, \\
    \mbox{(M-step)}: && \pi_i^+ &= \E_\D[w_i], \quad \mu_i^+ = \E_\D[w_i X] / \E_\D[w_i], \\
    && {\sigma_i^+}^2 &= \E_\D[w_i \|X - \mu_i^+\|^2] / (d \E_\D[w_i]),
\end{align*}
for all $i \in [k]$. In the above notation, we use $\E_\D [\cdot]$ to denote the expectation over the entire mixture distribution. We use $\E_{\D_j} [\cdot]$ to represent the expectation over the single $j^{th}$ Gaussian component. In the E-step, $w_i := w_i(X)$ represents the probability of the sample $X$ being generated from the $i^{th}$ component as computed using the current estimates of parameters $\{(\pi_j, \mu_j, \sigma_j), \forall j \in [k] \}$. After the M-step, we use $(\cdot)^+$ to denote the corresponding updated estimators. When we consider the entire sequence of estimators, we use $(\cdot)^{(t)}$ to denote estimators in the $t^{th}$ step. Finite-sample EM replaces the expectation with the empirical mean constructed with $n$ i.i.d.\ samples. The corresponding estimators are denoted by $\{(\tpi_j, \tmu_j, \tsigma_j), \forall j \in [k]\}$. In the E-step in the finite-sample EM, we use $w_{1,i}$ to represent the weight for the $1^{st}$ component constructed with the $i^{th}$ sample $X_i$. 

Finally, we introduce some conventions we use in this paper. We define $\pi_{min} = \min_i \pi_i^*$, $\rho_\pi = \max_i \pi_i^* / \min_i \pi_i^*$, and $\rho_\sigma = \max_i \sigma_i^* / \min_i \sigma_i^*$. We use $R_{ij}^* = \|\mu_i^* - \mu_j^*\|_2$ to denote the pairwise distance between components. We often use $\|\cdot\|$ without subscript $2$ to denote the $l_2$ norm of a vector in $\mathbb{R}^d$.
The estimation error in the mean of the $i^{th}$ component is defined as $\Delta_{\mu_i} := \mu_i^* - \mu_i$. We use $\indic_{\Eps}$ for the indicator function for the event $\Eps$. We use standard complexity analysis notations $o(\cdot), O(\cdot), \tO(\cdot), \Omega(\cdot)$. Finally, we use ``with high probability'' in statements when the success probability of the algorithm is at least $1 - \delta$ where $\delta = n^{-\Omega(1)}$.

\section{Convergence Analysis of the EM Algorithm}
In this section, we give local convergence guarantees for both population EM and finite-sample EM. We first study population EM, and connect the result to the finite-sample setting.

\subsection{Analysis of Population EM}
We first state our main result for population EM. Compared to previous works, we also consider the setting of unknown and unequal variances, and hence must estimate these along with the means and mixing weights. We focus on handling the means and mixing weights in the main text, as the analysis for variance estimators is significantly more involved and delicate (see Appendix \ref{appendix:population_em_variance} for the analysis of variance estimation).
\begin{theorem}
    \label{theorem:gmm_pop_main}
    There exists a universal constant $C \ge 64$ such that the following holds. Suppose a mixture of $k$ spherical Gaussians has parameters $\{(\pi_j^*, \mu_j^*, \sigma_j^*): j \in [k]\}$ such that
    \begin{equation}
        \label{eq:pop_separation_condition}
        \forall i \neq j \in [k], \|\mu_i^* - \mu_j^*\| \ge C (\sigma_i^* \vee \sigma_j^*) \cdot \left( \sqrt{ \log k + \log(\rho_\sigma \rho_\pi)} \right),
    \end{equation}
    and suppose the mean initialization $\mu_1^{(0)}, ..., \mu_k^{(0)}$ satisfies
    \begin{equation}
        \label{eq:pop_initialization_condition_means}
        \forall i \in [k], \|\mu_i^{(0)} - \mu_i^*\| \le \frac{\sigma_i^*}{16} \min_{i \neq j} \|\mu_i^* - \mu_j^*\| / (\sigma_i^* \vee \sigma_j^*).
    \end{equation}
    Also, suppose the mixing weights and variances are initialized such that
    \begin{align}
        \label{eq:pop_initialization_others}
        &\forall i \in [k], |\pi_i^{(0)} - \pi_i^*| \le \pi_i^* / 2, \  |(\sigma_i^{(0)})^2 - {\sigma_i^*}^2 | \le 0.5 {\sigma_i^*}^2 / \sqrt{d}.
    \end{align}
    Then, population EM converges in $T = O(\log(1/\epsilon))$ iterations to the true solution such that for all $i\in [k]$, we have $\|\mu_i^{(T)} - \mu_i^*\| \le \sigma_i^* \epsilon$, $|\pi_i^{(T)} - \pi_i^*| / \pi_i^* \le \epsilon$, and $|(\sigma_i^{(T)})^2 - {\sigma_i^*}^2| / {\sigma_i^*}^2 \le \epsilon / \sqrt{d}$.
\end{theorem}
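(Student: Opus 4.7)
The plan is to prove a one-step joint contraction of the three normalized error quantities
\begin{align*}
    E_\mu^{(t)} := \max_i \|\mu_i^{(t)} - \mu_i^*\|/\sigma_i^*, \quad E_\pi^{(t)} := \max_i |\pi_i^{(t)} - \pi_i^*|/\pi_i^*, \quad E_\sigma^{(t)} := \sqrt{d}\cdot \max_i |(\sigma_i^{(t)})^2 - \sigma_i^{*2}|/\sigma_i^{*2},
\end{align*}
and then close an induction so that the triple decays by a constant factor at every step, preserves the initialization conditions \eqref{eq:pop_initialization_condition_means}--\eqref{eq:pop_initialization_others} throughout, and reaches accuracy $\epsilon$ after $T = O(\log(1/\epsilon))$ iterations. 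The starting observation is that the true parameters are a fixed point of the population EM operator, so up to the denominator $\E_\D[w_i]$ the mean error decomposes as
\begin{align*}
    \mu_i^+ - \mu_i^* = \frac{1}{\E_\D[w_i]}\sum_{j=1}^k \pi_j^* \, \E_{\D_j}\bigl[w_i(X)\,(X - \mu_i^*)\bigr],
\end{align*}
and the goal is to bound each summand either by a contracting multiple of the current errors or by a term that is polynomially small in $k\rho_\sigma\rho_\pi$ via the separation \eqref{eq:pop_separation_condition}. Analogous rewrites handle the mixing-weight and variance updates.

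To carry this out I introduce, for each component $j$, a \emph{good event} $\Eps_j$: the set of $X$ on which, under the current estimates, the log-posterior-ratio $\log(w_j(X)/w_l(X))$ exceeds a margin of order $\log(k \rho_\sigma \rho_\pi)$ for every $l \neq j$. Standard Gaussian concentration, together with the separation \eqref{eq:pop_separation_condition} with $C \ge 64$, yields $\PP_{\D_j}(\Eps_j^c) = O(1/(k\rho_\sigma\rho_\pi)^2)$, and on $\Eps_j$ one has $w_j(X) = 1 - O(1/(k\rho_\sigma\rho_\pi))$ and $w_l(X) = O(1/(k\rho_\sigma\rho_\pi))$ for $l \neq j$. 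The diagonal $j=i$ contribution then equals $\E_{\D_i}[X-\mu_i^*]=0$ up to $O(E_\mu + E_\pi + E_\sigma)$-scaled perturbations coming from Taylor-expanding the softmax weights in the current parameter errors; the off-diagonal $(j \neq i)$ contributions are $\sigma_i^*/(k\rho_\sigma\rho_\pi)$-small via the good-event bound, with Cauchy--Schwarz plus sub-Gaussian tail bounds controlling what happens on $\Eps_j^c$ where $\|X - \mu_i^*\|$ may be large. Summing over $j\in[k]$ then produces residuals of order $1/(\rho_\sigma\rho_\pi)$, which is precisely the slack needed for a strict contraction independent of pairwise distances and of the number of components.

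The main obstacle is the coupling between the mean/weight updates and the variance estimates. Under unknown and unequal variances the boundary between $w_i$ and $w_j$ is a quadratic surface in $X$ whose defining coefficients contain a $(d/2)\log(\sigma_i^2/\sigma_j^2)$ offset, so a crude perturbation argument in the variance error at scale $\sigma_i^{*2}/\sqrt{d}$ shifts the log-posterior-ratio by $O(\sqrt{d})$, which a priori would swamp the $O(\sqrt{\log k})$ separation margin and prevent a $\sqrt{\log k}$-type separation requirement (and is the reason \cite{yan2017convergence} needed $\Omega(\sqrt k)$ separation and equal known variances). The way I would get around this is to expand each log-posterior-ratio around $\sigma_i = \sigma_i^*$ and observe that the first-order term in the variance error integrates, against the $\D_i$-density inside the good event, to exactly the fixed-point contribution, so that only the second-order residual of size $O(E_\sigma^2/d)$ survives. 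This is precisely why the $1/\sqrt{d}$ normalization in \eqref{eq:pop_initialization_others} is the correct scale and why it is preserved by the EM step. The analogous (and considerably more delicate) cancellation for the variance M-step itself, driven by an additional $\|X\|^2$ weighting, is deferred to Appendix~\ref{appendix:population_em_variance}. Putting the three bounds together and choosing $C$ large enough in \eqref{eq:pop_separation_condition}, one obtains
\begin{align*}
    E_\mu^{(t+1)} + E_\pi^{(t+1)} + E_\sigma^{(t+1)} \le \tfrac{1}{2}\bigl(E_\mu^{(t)} + E_\pi^{(t)} + E_\sigma^{(t)}\bigr),
\end{align*}
and the induction closes, giving the claimed geometric convergence in $T = O(\log(1/\epsilon))$ iterations.
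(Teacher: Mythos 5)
Your overall architecture (fixed point of the EM operator, per-component decomposition, good/bad event splitting, induction on a normalized error) matches the paper's, but the proposal has two concrete gaps that would break the argument as stated. First, you define the good event $\Eps_j$ by a margin of order $\log(k\rho_\sigma\rho_\pi)$, giving $\PP_{\D_j}(\Eps_j^c)=O(1/(k\rho_\sigma\rho_\pi)^2)$. The theorem places no upper bound on the pairwise distances $R_{ji}^*=\|\mu_j^*-\mu_i^*\|$, and the bad-event contribution to the mean update carries a factor of $R_{ji}^*$ (and ${R_{ji}^*}^2$ for the variance update); Cauchy--Schwarz against $\E_{\D_j}\|X-\mu_i^*\|^2\approx {R_{ji}^*}^2+d{\sigma_j^*}^2$ then yields a term like $R_{ji}^*/(k\rho_\sigma\rho_\pi)$, which is unbounded. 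The paper avoids this by defining the good events \eqref{event:good_j_neq_1} at scale $\beta={R_{ji}^*}^2/(64(\sigma_i^*\vee\sigma_j^*)^2)$, so both the residual weight and the failure probability are $\exp(-\beta)$, and Lemma \ref{lemma:error_bounds_Dm_ge_1over2} shows $\sum_j \R{ji}^q\exp(-\beta)$ is small for $q\le 2$ \emph{uniformly} in the pairwise distances. Your fixed-margin events must be replaced by distance-dependent ones for the argument to go through. Second, you bound the off-diagonal contributions by absolute quantities ($\sigma_i^*/(k\rho_\sigma\rho_\pi)$-small) rather than by multiples of the current error, which creates an additive error floor of order $1/(\rho_\sigma\rho_\pi)$ and is incompatible with your claimed geometric contraction down to arbitrary $\epsilon$. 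The paper resolves this with a two-phase analysis: a one-step absolute bound bringing $D_m$ below $1/2$, followed by a mean-value-theorem expansion of $\Delta_w$ (Appendix \ref{appendix:population_em_Dm_le_1over2}) in which \emph{every} term, diagonal and off-diagonal, is explicitly proportional to a current parameter error. Your Taylor expansion is applied only to the diagonal term, so the contraction inequality you write down does not follow from the bounds you describe.

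On the variance coupling, your diagnosis of the $\tfrac{d}{2}\log(\sigma_i^2/\sigma_j^2)$ obstruction is correct, but the claimed resolution --- that the first-order term in the variance error ``integrates to exactly the fixed-point contribution,'' leaving only an $O(E_\sigma^2/d)$ residual --- is asserted, not proved, and does not match what actually happens: the surviving error is first order in $D_m$ (the paper obtains $|{\sigma_i^+}^2-{\sigma_i^*}^2|\le \gamma D_m {\sigma_i^*}^2/\sqrt{d}$, not a quadratic residual). The mechanism in the paper is chi-square concentration of $\|v\|^2/{\sigma_j^*}^2$ around $d$ (event $\Eps_{j,3}$), which cancels the $O(\sqrt{d})$ shift you worry about up to fluctuations of order $\sqrt{\beta d}$, combined with the case analysis of Lemma \ref{lemma:weight_bound_for_variance}: either $|{\sigma_j^*}^2-{\sigma_i^*}^2|\lesssim R_{ji}^*(\sigma_i^*\vee\sigma_j^*)/\sqrt{d}$, in which case the dangerous $d({\sigma_j^*}^2-{\sigma_i^*}^2)\E_{\D_j}[\Delta_w]$ term already carries the needed $1/\sqrt{d}$, or the weights acquire an extra $\exp(-d\min(1,t^2)/256)$ suppression. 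You would need to supply an argument of this kind; the exact-cancellation claim as written is a gap.
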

Note that the convergence rate is linear and does not depend on instance-dependent quantities. Furthermore, for estimating variances, we eventually get $O(\epsilon/\sqrt{d})$ accurate estimates. Hence, the final output parameters are also $O(\epsilon)$-close to the true mixture distribution in total variation distance. 

The proof of Theorem \ref{theorem:gmm_pop_main} starts with the useful fact about the EM operator.
\begin{fact}
    \label{fact:fixed_point}
    True parameters $\{(\pi_i^*, \mu_i^*, \sigma_i^*): i \in [k]\}$ are the fixed point of the EM operator, {\it i.e.},
    \begin{align*}
        \pi_i^* = \E_\D[w_i^*], \ \mu_i^* = \E_\D[w_i^* X]/\E_\D[w_i^*], \ {\sigma_i^*}^2 = \E_\D[w_i^* \|X - \mu_i^*\|^2] / (d \E_\D[w_i^*]),
    \end{align*}
\end{fact}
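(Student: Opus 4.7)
The plan is to recognize that when the current parameters coincide with the ground truth, the E-step weight $w_i^*(X)$ is exactly the posterior probability of the latent component indicator given $X$. Once this identification is made, each of the three fixed-point equations collapses into a single application of the tower rule for conditional expectation.

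Concretely, first I would introduce the latent variable $Z \in [k]$ with $\PP(Z=i) = \pi_i^*$ and $X \mid \{Z=i\} \sim \mathcal{N}(\mu_i^*, (\sigma_i^*)^2 I_d)$, so that the marginal density of $X$ is $p(X) = \sum_j \pi_j^* \mathcal{N}(X; \mu_j^*, (\sigma_j^*)^2 I_d)$. Then Bayes' rule, applied to the E-step formula in the preliminaries evaluated at the starred parameters, gives
\[
w_i^*(X) = \frac{\pi_i^* \, \mathcal{N}(X; \mu_i^*, (\sigma_i^*)^2 I_d)}{p(X)} = \PP(Z = i \mid X).
\]
The key consequence, which I would state once and reuse, is that for any measurable function $f$,
\[
\E_\D[w_i^*(X) f(X)] = \E[\indic_{\{Z=i\}} f(X)] = \pi_i^* \, \E_{\D_i}[f(X)],
\]
by the tower property together with the definition of $\D_i$ as the conditional law of $X$ given $Z=i$.

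Applying this identity three times would then yield each of the three claims. Taking $f \equiv 1$ gives $\E_\D[w_i^*] = \pi_i^*$, which is the mixing-weight identity. Taking $f(X) = X$ gives $\E_\D[w_i^* X] = \pi_i^* \mu_i^*$, and dividing by $\pi_i^* = \E_\D[w_i^*]$ recovers the mean identity $\mu_i^* = \E_\D[w_i^* X]/\E_\D[w_i^*]$. Taking $f(X) = \|X - \mu_i^*\|^2$ gives $\E_\D[w_i^* \|X - \mu_i^*\|^2] = \pi_i^* \, \E_{\D_i}[\|X - \mu_i^*\|^2] = \pi_i^* \cdot d (\sigma_i^*)^2$, using that under $\D_i$ the vector $X - \mu_i^*$ has covariance $(\sigma_i^*)^2 I_d$ and thus expected squared norm $d(\sigma_i^*)^2$; dividing by $d\,\E_\D[w_i^*]$ recovers the variance identity.

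There is essentially no obstacle here: this is the standard self-consistency property of EM, namely that the true posterior is a fixed point of the M-step update. The only bookkeeping worth flagging is the factor of $d$ in the variance denominator, which is exactly the trace of the spherical covariance and matches the normalization chosen in the M-step. I expect the written proof to occupy only a few lines, after which Fact~\ref{fact:fixed_point} can be used as the reference point around which the contraction analysis of the subsequent sections perturbs.
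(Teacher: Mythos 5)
Your proposal is correct and is essentially the paper's own argument: the identity $\E_\D[w_i^*(X) f(X)] = \pi_i^*\,\E_{\D_i}[f(X)]$ that you derive via Bayes' rule and the tower property is exactly the cancellation $w_i^*(X) f_\D(X) = \pi_i^* f_{\D_i}(X)$ that the paper integrates directly, applied with $f \equiv 1$, $f(X) = X$, and $f(X) = \|X-\mu_i^*\|^2$. The bookkeeping, including the factor of $d$ from $\E_{\D_i}[\|X-\mu_i^*\|^2] = d(\sigma_i^*)^2$, matches the paper's proof.
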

where $w_i^*$ is the weight constructed with true parameters in the E-step. Then, we can represent the estimation error after one EM iteration with the following lemma.
\begin{lemma}
    \label{lemma:estimation_error}
    Define $\Delta_{w_i} = w_i - w_i^*$. The estimation errors after one EM iteration can be written as
    \begin{align}
        \label{eq:est_error_pi_mu}
        \pi_i^+ - \pi_i^* &= \E_\D[w_i] - \E_\D[w_i^*] = \E_\D[\Delta_{w_i}], \nonumber \\
        \mu_i^+ - \mu_i^* &= \E_\D [\Delta_{w_i} (X - \mu_i^*)] / \E_\D[w_i].
    \end{align}
\end{lemma}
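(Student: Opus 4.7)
The lemma is essentially an algebraic rewriting of the M-step using the fixed-point identities of Fact \ref{fact:fixed_point}, so my plan is to do it directly in two short steps.

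First I would handle the mixing weight identity. By the M-step definition $\pi_i^+ = \E_\D[w_i]$, and by Fact \ref{fact:fixed_point} we have $\pi_i^* = \E_\D[w_i^*]$. Linearity of expectation and the definition $\Delta_{w_i} = w_i - w_i^*$ then give $\pi_i^+ - \pi_i^* = \E_\D[w_i - w_i^*] = \E_\D[\Delta_{w_i}]$, which is the first claimed identity.

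Next I would handle the mean identity. Starting from the M-step $\mu_i^+ = \E_\D[w_i X]/\E_\D[w_i]$, I subtract $\mu_i^*$ by placing both terms over the common denominator $\E_\D[w_i]$ to obtain
\begin{equation*}
    \mu_i^+ - \mu_i^* \;=\; \frac{\E_\D[w_i X] - \mu_i^* \E_\D[w_i]}{\E_\D[w_i]} \;=\; \frac{\E_\D[w_i (X - \mu_i^*)]}{\E_\D[w_i]}.
\end{equation*}
To introduce $\Delta_{w_i}$, I use Fact \ref{fact:fixed_point} again: since $\mu_i^* = \E_\D[w_i^* X]/\E_\D[w_i^*]$, we have $\E_\D[w_i^* X] = \mu_i^* \E_\D[w_i^*]$, and hence $\E_\D[w_i^*(X - \mu_i^*)] = 0$. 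Subtracting this zero quantity from the numerator yields
\begin{equation*}
    \mu_i^+ - \mu_i^* \;=\; \frac{\E_\D[(w_i - w_i^*)(X - \mu_i^*)]}{\E_\D[w_i]} \;=\; \frac{\E_\D[\Delta_{w_i}(X - \mu_i^*)]}{\E_\D[w_i]},
\end{equation*}
which is the second claimed identity.

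There is no real obstacle here; the only mild subtlety is choosing to split $\E_\D[w_i(X-\mu_i^*)]$ by adding and subtracting the zero quantity $\E_\D[w_i^*(X - \mu_i^*)]$ rather than trying to write $\mu_i^+ - \mu_i^*$ as a single ratio of $w_i^*$-expectations, because the denominator must remain $\E_\D[w_i]$ (as that is what the current M-step actually divides by and is what later error decomposition steps will exploit).
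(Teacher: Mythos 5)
Your proof is correct and follows essentially the same route as the paper: both derive the mixing-weight identity directly from the M-step and Fact \ref{fact:fixed_point}, and both obtain the mean identity by writing $\mu_i^+ - \mu_i^* = \E_\D[w_i(X-\mu_i^*)]/\E_\D[w_i]$ and then subtracting the zero quantity $\E_\D[w_i^*(X-\mu_i^*)] = 0$. No gaps.
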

From this lemma, we observe that errors are proportional to the difference in weights given by the E-step that are constructed with true parameters and current estimates respectively. We focus on the parameters for the first component $i = 1$ and omit the subscript $i$ when the context is clear. The next lemma is key; it defines what we call good samples.
\begin{lemma}
    \label{lemma:weight_good}
    Suppose $X$ comes from the $j^{th}$ component ($j\neq1$). Let $v = X - \mu_j^*$ (thus, $v \sim \mathcal{N}(0, {\sigma_j^*}^2 I_d)$) and $\beta = \R{j1}^2/(64(\sigma_1^* \vee \sigma_j^*)^2)$. Consider the following events:
    \begin{align}
        \label{event:good_j_neq_1}
        \Eps_{j,1} &:= \{-{R_{j1}^*}^2/5 \le \vdot{v}{\mu_j^* - \mu_1^*} \}, \nonumber \\
        \Eps_{j,2} &:= \{-{R_{j1}^*}^2 / 64 \le \vdot{v}{\Delta_{\mu_1}} \} \cap \{ \vdot{v}{\Delta_{\mu_j}} \le {(\sigma_j^* / \sigma_1^*)}^2 {R_{j1}^*}^2 / 64 \}, \nonumber \\
        \Eps_{j,3} &:= \left\{d \left(1 - 2\sqrt{\beta/d} \right) \le \|v\|^2/{\sigma_j^*}^2 \le d \left(1 + 2\sqrt{\beta/d} + 2 \beta/d \right) \right\}. 
    \end{align}
    If the above three events occur, then the E-step assigns exponentially small weight to the other components, {\it i.e.}, $w_1 \le (\pi_1/\pi_j) \exp (-\beta)$ and $w_1^* \le (\pi_1^* / \pi_j^*) \exp (-\beta)$. We call such $X$ a good sample. This good event happens with probability at least $1 - 5 \exp (-\beta)$.
\end{lemma}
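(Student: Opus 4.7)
The plan is to prove the weight bound by retaining only the $j$-th summand in the softmax denominator, reducing the claim $w_1 \le (\pi_1/\pi_j)\exp(-\beta)$ to a purely deterministic estimate on the exponent
\begin{equation*}
    R := \frac{\|X-\mu_1\|^2}{2\sigma_1^2} - \frac{\|X-\mu_j\|^2}{2\sigma_j^2} + \frac{d}{2}\log\frac{\sigma_1^2}{\sigma_j^2} \ge \beta,
\end{equation*}
and analogously for $w_1^*$ with true parameters. Substituting $X = \mu_j^* + v$ and expanding both squared norms, I would group $R$ into four pieces: (i) the leading separation term $R_{j1}^{*2}/(2\sigma_1^2)$; (ii) a variance-mismatch term $\tfrac{\|v\|^2}{2}(1/\sigma_1^2 - 1/\sigma_j^2) + \tfrac{d}{2}\log(\sigma_1^2/\sigma_j^2)$; (iii) linear-in-$v$ cross terms involving $\mu_j^*-\mu_1^*$ and the $\Delta_{\mu_i}$; and (iv) smaller quadratic-in-$\Delta$ terms.

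The estimates proceed term-by-term. The variance initialization \eqref{eq:pop_initialization_others} forces $\sigma_i^2 \in [\sigma_i^{*2}/2,\, 3\sigma_i^{*2}/2]$, so (i) alone contributes at least $R_{j1}^{*2}/(3(\sigma_1^*\vee\sigma_j^*)^2) \approx 21\beta$. Event $\Eps_{j,1}$ controls the dominant $v$-cross term $\langle v,\mu_j^*-\mu_1^*\rangle/\sigma_1^2$ so that, combined with (i), one obtains $\tfrac{3}{10}R_{j1}^{*2}/\sigma_1^2 \ge 19\beta$. Event $\Eps_{j,2}$ together with the mean-initialization bound $\|\Delta_{\mu_i}\| \le \sigma_i^* R_{j1}^*/(16(\sigma_1^*\vee\sigma_j^*))$ yields $|\langle v,\Delta_{\mu_1}\rangle/\sigma_1^2|,\, |\langle v,\Delta_{\mu_j}\rangle/\sigma_j^2| \le O(\beta)$, and the quadratic $\|\Delta_{\mu_i}\|^2/\sigma_i^2$ and $\langle \mu_j^*-\mu_1^*,\Delta_{\mu_1}\rangle/\sigma_1^2$ pieces in (iv) are likewise $O(\beta)$. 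For the variance term (ii), setting $a = \sigma_j^2/\sigma_1^2$ and writing $\|v\|^2 = d\sigma_j^{*2} + Z$ with $|Z|/\sigma_j^{*2} \le 2\sqrt{d\beta}+2\beta$ on $\Eps_{j,3}$, the deterministic piece reduces to $(d/2)(a-1-\log a)$ plus a small $(b-1)(a-1)$ correction from $\sigma_j \ne \sigma_j^*$; the first is nonnegative by convexity, while the random piece together with the correction contributes at most $O((\sqrt{d\beta}+\beta)|a-1| + \sqrt{d}|a-1|)$. A quadratic-vs-linear case split on $|a-1|$ shows this is absorbed: when $|a-1| \gtrsim \sqrt{\beta/d}$ the convex term $\asymp d(a-1)^2/4$ dominates, and when $|a-1| \lesssim \sqrt{\beta/d}$ the fluctuation is itself $O(\beta)$. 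Summing all pieces gives $R \ge \beta$ with slack. The bound on $w_1^*$ is the same argument but simpler, since all $\Delta_{\mu_i} = 0$; only events $\Eps_{j,1}$ and $\Eps_{j,3}$ are needed.

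For the probabilities, $\Eps_{j,1}^c$ is a one-sided Gaussian tail of $\langle v,\mu_j^*-\mu_1^*\rangle \sim \mathcal{N}(0,\sigma_j^{*2}R_{j1}^{*2})$, giving probability $\le \exp(-R_{j1}^{*2}/(50\sigma_j^{*2})) \le \exp(-\beta)$ since $\sigma_j^{*2} \le (\sigma_1^*\vee\sigma_j^*)^2$. Each of the two sub-events of $\Eps_{j,2}^c$ is a one-dimensional Gaussian tail with variance $\sigma_j^{*2}\|\Delta_{\mu_i}\|^2$; the mean-initialization bound yields deviation probability $\le \exp(-2\beta) \le \exp(-\beta)$. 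Event $\Eps_{j,3}^c$ is the two-sided Laurent--Massart chi-square tail $P(|\|v\|^2/\sigma_j^{*2}-d| > 2\sqrt{d\beta}+2\beta) \le 2\exp(-\beta)$. Summing, $1+2+2 = 5$ bad events occur with total probability $\le 5\exp(-\beta)$.

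The main obstacle is the variance-mismatch term. When $\sigma_1^*$ and $\sigma_j^*$ differ substantially both the deterministic contribution and the random fluctuation from $\|v\|^2$ scale linearly in $d$, and they must be paired so that the convexity gap $a-1-\log a \ge 0$ absorbs the fluctuation. The precise form of $\Eps_{j,3}$ --- particularly its $2\sqrt{\beta/d}+2\beta/d$ deviations --- is calibrated exactly so that the resulting fluctuation contributes only $O(\beta)$, and combined with the slack from the main separation term this leaves sufficient room for $R \ge \beta$.
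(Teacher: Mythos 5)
Your proposal follows essentially the same route as the paper's proof: drop all but the $j$-th summand in the denominator, expand the exponent around $v=X-\mu_j^*$ into a leading separation term, linear cross terms controlled by $\Eps_{j,1}$ and $\Eps_{j,2}$, and a variance-mismatch term controlled by $\Eps_{j,3}$, with the same $1+2+2=5$ accounting of sub-Gaussian and Laurent--Massart tails. The one place where your stated reasoning is looser than what is actually needed is the variance term: the claim that the convexity gap $\tfrac{d}{2}(a-1-\log a)\asymp d(a-1)^2/4$ dominates the fluctuation whenever $|a-1|\gtrsim\sqrt{\beta/d}$ fails when $\beta\gtrsim d$ and $a=\sigma_j^2/\sigma_1^2$ is large, since there the gap is $\asymp da$ while the fluctuation from the lower chi-square deviation is $\asymp\sqrt{d\beta}\,a$; in that regime one must instead charge the loss against the main separation term, whose size is $\gtrsim\beta\,(\sigma_1^*\vee\sigma_j^*)^2/\sigma_1^2\approx\beta a$, which you gesture at but do not make precise. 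This is exactly the regime the paper's proof spends most of its effort on (the explicit case split on $\sigma_1\gtrless\sigma_j$ and $x\lessgtr 3/4$ with a budget of $R_{j1}^{*2}/(8\sigma_1^2)$ allotted to the term $V$), so your sketch is correct in outline but would need that extra case to close.
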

Combined with the expression in \eqref{eq:est_error_pi_mu}, this lemma implies the following.
\begin{corollary}
    \label{corollary:error_bound_Dm_1over2}
    In the setting of Lemma \ref{lemma:weight_good}, for any $j \neq 1$ and $s \in \mathbb{S}^{d-1}$,
    \begin{align}
        \E_{\D_j} [\Delta_w] &\le \left( 3 (\pi_1^*/\pi_j^*) + 5 \right) \exp \left(-\beta \right), \nonumber \\
        |\E_{\D_j} [\Delta_w \vdot{v}{s}]| &\le \left(3 (\pi_1^* / \pi_j^*) \sigma_j^* + 5R_{j1}^* \right) \exp \left (-\beta \right).
    \end{align}
\end{corollary}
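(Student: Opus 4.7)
The plan is to split each expectation into good- and bad-event contributions based on $\Eps_{good} := \Eps_{j,1} \cap \Eps_{j,2} \cap \Eps_{j,3}$ from Lemma~\ref{lemma:weight_good}, then apply the lemma's deterministic weight bound on $\Eps_{good}$ and its $5e^{-\beta}$ tail bound on $\Eps_{good}^c$. On $\Eps_{good}$, Lemma~\ref{lemma:weight_good} gives $w_1 \le (\pi_1/\pi_j) e^{-\beta}$ and $w_1^* \le (\pi_1^*/\pi_j^*) e^{-\beta}$, and the initialization \eqref{eq:pop_initialization_others} yields $\pi_1/\pi_j \le 3\pi_1^*/\pi_j^*$; combined with $|a-b| \le \max(a,b)$ for $a,b \ge 0$, this produces the deterministic pointwise bound $|\Delta_w| \le 3(\pi_1^*/\pi_j^*) e^{-\beta}$ on $\Eps_{good}$. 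On $\Eps_{good}^c$ I use only $|\Delta_w| \le 1$ together with $\PP_{\D_j}(\Eps_{good}^c) \le 5 e^{-\beta}$. Adding these two contributions immediately yields the first claim.

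For the second inequality, the good-event part is handled identically: since $\vdot{v}{s} \sim \mathcal{N}(0, {\sigma_j^*}^2)$,
\[
|\E_{\D_j}[\Delta_w \vdot{v}{s} \indic_{\Eps_{good}}]| \le 3(\pi_1^*/\pi_j^*) e^{-\beta} \cdot \E_{\D_j}[|\vdot{v}{s}|] \le 3(\pi_1^*/\pi_j^*) \sigma_j^* e^{-\beta}.
\]
The bad-event contribution $\E_{\D_j}[|\vdot{v}{s}| \indic_{\Eps_{good}^c}]$ is the main technical obstacle: naive Cauchy--Schwarz gives only $\sqrt{{\sigma_j^*}^2 \cdot 5 e^{-\beta}}$, losing a factor of $e^{\beta/2}$. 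To recover the $e^{-\beta}$ decay, I decompose $\Eps_{good}^c$ as a union of five one-sided tail events $\Eps^{(m)}$ (one from $\Eps_{j,1}^c$, two from $\Eps_{j,2}^c$, two from $\Eps_{j,3}^c$), each verified to satisfy $\PP_{\D_j}(\Eps^{(m)}) \le e^{-\beta}$ by the tail estimates underlying Lemma~\ref{lemma:weight_good}, and for each $m$ apply the threshold split at $M\sigma_j^*$ with $M := \sqrt{2\beta}$:
\[
\E_{\D_j}[|\vdot{v}{s}| \indic_{\Eps^{(m)}}] \le M\sigma_j^* \PP_{\D_j}(\Eps^{(m)}) + \E_{\D_j}[|\vdot{v}{s}| \indic_{|\vdot{v}{s}| > M\sigma_j^*}].
\]
The first term is at most $\sqrt{2\beta}\,\sigma_j^* e^{-\beta}$ and the second equals $\sqrt{2/\pi}\,\sigma_j^* e^{-\beta}$ by a direct 1D Gaussian tail integral. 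Using the definitional identity $\sqrt{\beta}\,\sigma_j^* \le R_{j1}^*/8$ (from $\beta = R_{j1}^{*2}/(64(\sigma_1^* \vee \sigma_j^*)^2)$) together with $\sigma_j^* \le R_{j1}^*/C$ from the separation \eqref{eq:pop_separation_condition}, the per-sub-event bound is at most $R_{j1}^* e^{-\beta}$; summing over the five sub-events gives $\E_{\D_j}[|\vdot{v}{s}| \indic_{\Eps_{good}^c}] \le 5 R_{j1}^* e^{-\beta}$, which combined with the good-event estimate yields the second claim.

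The main difficulty is thus extracting the $e^{-\beta}$ (rather than $e^{-\beta/2}$) decay in the bad-event term. The threshold-splitting trick handles all five sub-events uniformly because it requires only the scalar tail $\PP(\Eps^{(m)}) \le e^{-\beta}$ and the 1D Gaussian tail of $\vdot{v}{s}$; it is oblivious to whether a sub-event arises from a linear threshold on $v$ (as in $\Eps_{j,1}, \Eps_{j,2}$) or from a quadratic threshold on $\|v\|$ (as in $\Eps_{j,3}$). This bookkeeping is precisely what lets the downstream error recursion in Theorem~\ref{theorem:gmm_pop_main} avoid any dependence on instance-specific quantities like $R_{j1}^*/\sigma_j^*$.
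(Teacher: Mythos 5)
Your proposal is correct, and for the first inequality and the good-event part of the second it coincides with the paper's proof: both decompose over $\Eps_{j,good}$ and its complement, use the initialization $|\pi_i - \pi_i^*| \le \pi_i^*/2$ to turn $\pi_1/\pi_j$ into $3\pi_1^*/\pi_j^*$, and observe that $0 \le w_1, w_1^* \le 3(\pi_1^*/\pi_j^*)e^{-\beta}$ forces $|\Delta_w| \le 3(\pi_1^*/\pi_j^*)e^{-\beta}$ on the good event. Where you genuinely diverge is the bad-event contribution to $|\E_{\D_j}[\Delta_w \vdot{v}{s}]|$. The paper writes it as $\sum_m \E_{\D_j}[|\vdot{v}{s}| \mid \Eps_{j,m}^c]\,P(\Eps_{j,m}^c)$ and invokes the conditional-moment machinery of Lemmas \ref{lemma:vp_conditioned_vu} and \ref{lemma:vp_conditioned_vl2} (with $p=1$) to show each conditional mean is $O(R_{j1}^*)$; those lemmas require separate, fairly delicate treatments of the linear-threshold events and the chi-square event $\Eps_{j,3}^c$. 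You instead truncate at $\sqrt{2\beta}\,\sigma_j^*$ and pay only the unconditional one-dimensional Gaussian tail $\E[|\vdot{v}{s}|\indic_{|\vdot{v}{s}|>\sqrt{2\beta}\sigma_j^*}] = \sqrt{2/\pi}\,\sigma_j^* e^{-\beta}$ plus $\sqrt{2\beta}\,\sigma_j^* P(\Eps^{(m)})$, which, as you note, is oblivious to the geometry of each sub-event and needs only $P(\Eps^{(m)}) \le e^{-\beta}$ (true for all five sub-events by the tail bounds already established in the proof of Lemma \ref{lemma:weight_good}) together with $\sqrt{\beta}\,\sigma_j^* \le R_{j1}^*/8$. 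Your route is more elementary and self-contained for this corollary; the paper's conditional-moment lemmas are heavier here but are reused later for all $p \ge 1$ when computing $\psi_1/\psi_2$ norms in the finite-sample analysis, so the paper gets them "for free" at $p=1$. Both yield the stated $\bigl(3(\pi_1^*/\pi_j^*)\sigma_j^* + 5R_{j1}^*\bigr)e^{-\beta}$ up to the same bookkeeping.
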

The corollary bounds the estimation errors coming from the $j^{th}$ component for some $j \neq 1$. We obtain this result by decomposing the error term as
$\E_{\D_j} [\Delta_w] = \E_{\D_j} [\Delta_w \indic_{\Eps_{good}}] + \E_{\D_j} [\Delta_w \indic_{\Eps_{good}^c}]$,
for estimation errors in mixing weights. Then we control the error from good samples by small $\Delta_w$, and control the error from bad samples with the small probability of event $P(\Eps_{good}^c)$. Finally we sum up all errors from all components, and bound this sum with the following lemma.
\begin{lemma}
    \label{lemma:error_bounds_Dm_ge_1over2}
    For well-separated mixtures of Gaussians that satisfy the separation condition \eqref{eq:pop_separation_condition},
    \begin{align}
        \label{ineq:sum_weight_error}
        \sum_{j\neq 1} (\pi_1^* + \pi_j^*) \R{j1}^q \exp \left(-\R{j1}^2 / 64\sigt{1}{j}^2 \right) &\le c_q {\sigma_1^*}^q \pi_1^*,
    \end{align}
     for $q \in \{0, 1, 2\}$ with sufficiently small absolute constants $c_q$. 
\end{lemma}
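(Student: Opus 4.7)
The plan is to normalize each term by $(\sigma_1^* \vee \sigma_j^*)$, reduce to a single-variable bound involving $a_j := R_{j1}^*/(\sigma_1^* \vee \sigma_j^*)$, and then exploit the fact that the separation condition \eqref{eq:pop_separation_condition} forces every $a_j$ into the regime where $a \mapsto a^q e^{-a^2/64}$ is monotonically decreasing. From there, the exponential smallness of each term will easily swallow the factor of $k-1$ coming from the number of summands.

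\textbf{Step 1: Normalize and pull out $\rho_\pi$, $\rho_\sigma$.} Using $\pi_j^* \le \max_i \pi_i^* = \rho_\pi \min_i \pi_i^* \le \rho_\pi \pi_1^*$, I obtain $\pi_1^* + \pi_j^* \le 2\rho_\pi \pi_1^*$. Similarly $(\sigma_1^* \vee \sigma_j^*) \le \rho_\sigma \sigma_1^*$, so $R_{j1}^{*q} = (\sigma_1^* \vee \sigma_j^*)^q a_j^q \le (\rho_\sigma \sigma_1^*)^q a_j^q$. The separation condition \eqref{eq:pop_separation_condition} gives $a_j \ge b := C\sqrt{\log(k\rho_\sigma\rho_\pi)}$ for every $j \neq 1$. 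The target sum is therefore dominated by
\begin{equation*}
2\rho_\pi \pi_1^* (\rho_\sigma \sigma_1^*)^q \sum_{j \neq 1} g_q(a_j), \qquad g_q(a) := a^q \exp(-a^2/64).
\end{equation*}

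\textbf{Step 2: Monotonicity.} A short calculation yields $g_q'(a) = a^{q-1} e^{-a^2/64}(q - a^2/32)$, which is negative for $a > \sqrt{32q}$. Since $C \ge 64$ and $\log(k\rho_\sigma\rho_\pi) \ge \log 3$ (from $k \ge 3$), we have $b \ge 64$, which exceeds $\sqrt{32 q}$ for $q \in \{0,1,2\}$. Hence $g_q(a_j) \le g_q(b) = b^q (k\rho_\sigma\rho_\pi)^{-C^2/64}$ for every $j \neq 1$.

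\textbf{Step 3: Collect factors.} Summing over the $k-1$ components and substituting $b = C\sqrt{\log(k\rho_\sigma\rho_\pi)}$, the full sum is bounded by
\begin{equation*}
2 C^q \cdot (k-1)\, \rho_\pi \rho_\sigma^q \cdot (\log(k\rho_\sigma\rho_\pi))^{q/2} \cdot (k\rho_\sigma\rho_\pi)^{-C^2/64} \cdot \pi_1^* \sigma_1^{*q}.
\end{equation*}
Letting $M := k\rho_\sigma\rho_\pi$ and using the crude estimates $k-1 \le M$, $\rho_\pi \le M$, $\rho_\sigma^q \le M^q$, together with $C^2/64 \ge 64$, the prefactor of $\pi_1^* \sigma_1^{*q}$ is at most $2C^q (\log M)^{q/2} M^{q+2-64} \le 2 C^q (\log M)^{q/2} M^{-60}$, which is uniformly bounded (in fact made arbitrarily small) by an absolute constant $c_q$ for all $M \ge 1$ and $q \in \{0,1,2\}$.

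\textbf{Main obstacle.} There is no real obstruction: the separation constant $C \ge 64$ in \eqref{eq:pop_separation_condition} was engineered precisely so that the exponent $-C^2/64$ buys roughly $(k\rho_\sigma\rho_\pi)^{-60}$ of slack, comfortably beating the $O(k\rho_\pi\rho_\sigma^q)$ aggregate prefactor. The only care needed is in (i) verifying the monotonicity threshold $\sqrt{32q}$ is indeed below $b$ for every $q \le 2$, which is immediate from $C \ge 64$, and (ii) keeping the dependence on $\rho_\pi$ and $\rho_\sigma$ tracked throughout, since the separation condition already contains $\log(\rho_\sigma \rho_\pi)$ in the exponent to absorb these factors.
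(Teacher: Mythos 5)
Your proof is correct and follows essentially the same route as the paper's: the separation condition makes each term a large negative power of $k\rho_\sigma\rho_\pi$ (equivalently $\rho_\sigma/\pi_{min}$), which comfortably absorbs the $O(k\rho_\pi\rho_\sigma^q)$ aggregate prefactor; the only cosmetic difference is that you handle the polynomial factor $\R{j1}^q$ by monotonicity of $a^q e^{-a^2/64}$ past the separation threshold, whereas the paper absorbs it into the exponent via $x \ge 128\ln x$, and you bound $\pi_1^*+\pi_j^*\le 2\rho_\pi\pi_1^*$ uniformly rather than splitting off the $\pi_j^*$ sum and using $\sum_j \pi_j^* \le 1$. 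Just note that the "for all $M\ge 1$" phrasing in your last step should be "for all $M\ge 3$" (which you already established from $k\ge 3$) to make the constant genuinely small rather than merely bounded.
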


By combining these results, we can guarantee that after one single iteration, we get $\|\mu_i^+ - \mu_i^*\| \le 0.5 \sigma_i^*$, $|\pi_i^+ - \pi_i^*| \le 0.5 \pi_i^*$, and $|{\sigma_i^+}^2 - {\sigma_i^*}^2| \le 0.5 {\sigma_i^*}^2 / \sqrt{d}$ for all $i \in [k]$. Hence when $\|\mu_i - \mu_i^*\| \ge 0.5 \sigma_i^*$, the lemmas stated in the main text suffice to guarantee that we get improved estimators through the EM operation. The full proof in this case is given in Appendix \ref{appendix:population_em_Dm_ge_1over2}. 

When $\|\mu_i - \mu_i^*\| \le 0.5 \sigma_i^*$, we define 
\begin{equation}
    \label{eq:definte_Dm}
    D_m = \max_i \left[ \max \left(\|\mu_i - \mu_i^*\|/\sigma_i^*, |\pi_i - \pi_i^*| / \pi_i^*, \sqrt{d} |{\sigma_i^2} - {\sigma_i^*}^2| / {\sigma_i^*}^2 \right) \right], 
\end{equation}
and show that $D_m^+ \le \gamma D_m$ for some absolute constant $\gamma < 1$. We first get a more fine-grained expression for \eqref{eq:est_error_pi_mu} that is proportional to $D_m$ by differentiating $\Delta_w$ using the mean-value theorem. Then we use the (essentially) same lemmas to show that  $D_m^+ \le \gamma D_m$, which guarantees a linear convergence in $D_m$ to $\epsilon$ in $O(\log (1/\epsilon))$ steps. Since the proof involves significant algebraic manipulation, we defer the proof for this case to Appendix \ref{appendix:population_em_Dm_le_1over2}.

\subsection{Finite-Sample EM Analysis}
Now we move on to the finite-sample EM algorithm. For analysis purposes, we use the common variant of the iterative algorithm which is often referred to as the \emph{sample-splitting} scheme. This scheme divides the $n$ samples into $T$ batches of size $n/T$, and uses a new batch of samples in each iteration, which removes the probabilistic dependency between iterations. We note that the uniform concentration approach typically used to avoid sample-splitting often results in overly pessimistic statistical error. While the analysis with sample-splitting suffices for the optimal sample-complexity guarantee in this work, it will be interesting to remove this dependency in the future. We now state the main theorem for the finite-sample EM algorithm.
\begin{theorem}
\label{theorem:gmm_fin_main}
There exists a universal constant $C \ge 64$ such that the following holds. Suppose a mixture of $k$ spherical Gaussians has parameters $\{(\pi_j^*, \mu_j^*, \sigma_j^*): j \in [k]\}$ such that
\begin{equation}
    \label{eq:finite_separation_cond}
    \forall i \neq j \in [k], \|\mu_i^* - \mu_j^*\| \ge C(\sigma_i^* \vee \sigma_j^*) \cdot c \sqrt{\log (\rho_\sigma /\pi_{min})},
\end{equation}
with some given constant $c > 2$, and suppose the initializers $\{(\tpi_j^{(0)}, \tmu_j^{(0)}, \tsigma_j^{(0)}): j \in [k]\}$ satisfy
\begin{align}
    \label{eq:finite_initialization_mean}
    \forall i \in [k], \|\tmu_i^{(0)} - \mu_i^*\| \le \frac{\sigma_i^*}{16} \min_{i \neq j} \|\mu_i^* - \mu_j^*\| / (\sigma_i^* \vee \sigma_j^*), \\
    \label{eq:finite_initialization_others}
    \forall i \in [k], |\tpi_i^{(0)} - \pi_i^*| \le \pi_i^* / 2, \  |(\tsigma_i^{(0)})^2 - {\sigma_i^*}^2 | \le 0.5 {\sigma_i^*}^2 / \sqrt{d}.
\end{align}
Suppose we use $n \ge C' (d \pi_{min}^{-1} \log^2 (k^2T / \delta) / \epsilon^2)$ samples with sufficiently large universal constant $C'$ for every iteration. Then, sample-splitting finite-sample EM converges in $T = O(\log(1/\epsilon))$ iterations to the true solution such that for all $i\in [k]$, we have $\|\tmu_i^{(T)} - \mu_i^*\| \le \sigma_i^* \epsilon$, $|\tpi_i^{(T)} - \pi_i^*| \le \pi_i^* \epsilon$, and $|(\tsigma_i^{(T)})^2 - {\sigma_i^*}^2| \le {\sigma_i^*}^2 \epsilon / \sqrt{d}$, with probability at least $1 - \delta - T / n^{c-2} k^{30}$.
\end{theorem}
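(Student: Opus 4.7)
The plan is to induct on the iteration index, reducing finite-sample EM to population EM via a careful concentration argument at each step. Fix an iteration $t$ and, as the inductive hypothesis, suppose $\{(\tpi_j^{(t)}, \tmu_j^{(t)}, \tsigma_j^{(t)})\}$ satisfy the initialization conditions \eqref{eq:finite_initialization_mean}--\eqref{eq:finite_initialization_others} (they do at $t=0$ by assumption). Let $(\pi_i^+, \mu_i^+, \sigma_i^+)$ denote the population EM update from the current iterate, and $(\tpi_i^+, \tmu_i^+, \tsigma_i^+)$ the empirical update computed from the fresh batch of $n/T$ samples. By Theorem \ref{theorem:gmm_pop_main} applied one step, the population update contracts the deviation $D_m$ defined in \eqref{eq:definte_Dm} by a constant factor $\gamma<1$; so it suffices to bound $\|\tmu_i^+ - \mu_i^+\|$, $|\tpi_i^+-\pi_i^+|$, and $|\tsigma_i^+-\sigma_i^+|$ by the target statistical error (of order $\sigma_i^*\epsilon$, $\pi_i^*\epsilon$, and $\sigma_i^*\epsilon/\sqrt{d}$ respectively), then take a union bound over $T=O(\log(1/\epsilon))$ iterations and $k$ components.

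To bound the per-iteration deviation, I would write the numerator of each M-step quantity as an empirical average of i.i.d.\ weighted statistics, e.g., $\frac{T}{n}\sum_{\ell} w_{i,\ell}(X_\ell - \mu_i^*)$ for the mean update, and split the sum using the good-sample event $\Eps_{good}$ from Lemma \ref{lemma:weight_good}:
\begin{align*}
\frac{T}{n}\sum_{\ell} w_{i,\ell} f(X_\ell) = \underbrace{\frac{T}{n}\sum_{\ell} w_{i,\ell} f(X_\ell)\indic_{\Eps_{good}(X_\ell)}}_{\text{good part}} + \underbrace{\frac{T}{n}\sum_{\ell} w_{i,\ell} f(X_\ell)\indic_{\Eps_{good}^c(X_\ell)}}_{\text{bad part}}.
\end{align*}
For the good part, the summands are sub-Gaussian/sub-exponential with a variance proxy independent of instance-specific quantities (since on $\Eps_{good}$ the weights are either exponentially close to $0$ or, on the component $i$ itself, close to $1$), so standard Bernstein concentration gives deviation $\tO(\sigma_i^*\sqrt{d/(n \pi_i^*)})$, matching our target when $n \gtrsim d\pi_{min}^{-1}\log^2(kT/\delta)/\epsilon^2$. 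For the bad part, I would apply the decomposition from Proposition \ref{lemma:indic_prob_decompose}: bound $|f(X_\ell)|\indic_{\Eps_{good}^c}$ by a crude deterministic envelope times $\indic_{\Eps_{good}^c}$, then control $\frac{T}{n}\sum_\ell \indic_{\Eps_{good}^c(X_\ell)}$ via Bernstein applied to an event whose population probability is exponentially small (at most $\pi_{min}\epsilon/\text{poly}$ under the separation \eqref{eq:finite_separation_cond}). Crucially, the Bernstein bound on an empirical rare-event probability scales like $\sqrt{p(1-p)/n} + 1/n$, and the $\sqrt{p}$ factor converts the large envelope into a negligible contribution; this is what eliminates any dependence on $R_{ij}^*$ or $\rho_\sigma, \rho_\pi$ in the leading statistical error.

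The variance update requires an additional step: writing $\|X-\tmu_i^+\|^2 = \|X-\mu_i^*\|^2 + 2\langle X-\mu_i^*, \mu_i^*-\tmu_i^+\rangle + \|\tmu_i^+-\mu_i^*\|^2$, substituting the already-controlled mean deviation, and handling the $\chi^2$-type term $\|X-\mu_i^*\|^2$ by a sub-exponential Bernstein bound on the good part plus the same rare-event argument on the bad part; the extra $1/\sqrt{d}$ in the target $\sigma_i^*\epsilon/\sqrt{d}$ comes from averaging $d$ coordinates of a $\chi^2_d$. Combining good and bad parts, dividing by $\tpi_i^+$ (which is bounded below by $\pi_i^*/2$ thanks to the mixing-weight bound), and propagating the ratio error gives the required per-iteration deviation. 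An inductive argument closes the loop: if $D_m^{(t)} \le \gamma^t D_m^{(0)} + \text{const}\cdot\epsilon$, then for $T=O(\log(1/\epsilon))$ the first term drops below $\epsilon$, and the parameters remain in the basin \eqref{eq:finite_initialization_mean}--\eqref{eq:finite_initialization_others} throughout, validating the inductive hypothesis.

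The main obstacle is the rare-event concentration for the bad part: naively bounding $|w_{i,\ell}(X_\ell-\mu_i^*)|$ on $\Eps_{good}^c$ incurs factors of $\max_{j}R_{ji}^*$ and $\rho_\sigma$, which would wreck both the sample complexity and the promised instance-independence. The resolution is to express the bad-part bound as (deterministic envelope) $\times$ (empirical frequency of a rare event), then exploit that Bernstein contributes the $\sqrt{p}$ factor, which under separation \eqref{eq:finite_separation_cond} with $c>2$ is small enough (scaling like $\pi_{min}^{c/2}$ up to polylog factors) to absorb the envelope. The choice $c>2$ in \eqref{eq:finite_separation_cond} is precisely what drives the $T/n^{c-2}k^{30}$ failure probability in the theorem statement, and executing this accounting carefully—across mixing weights, means, and variances simultaneously—is the most delicate part of the proof.
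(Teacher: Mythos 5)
Your overall architecture matches the paper's proof almost exactly: couple each sample-splitting iteration to one population EM step (whose contraction is supplied by Theorem \ref{theorem:gmm_pop_main}), decompose the empirical M-step numerators by the good/bad events of Lemma \ref{lemma:weight_good}, invoke Proposition \ref{lemma:indic_prob_decompose} term by term, and let the separation constant $c>2$ absorb the rare-event envelope, which is indeed where the $T/(n^{c-2}k^{30})$ failure probability originates. Your identification of the rare-event concentration as the crux, and of the $\sqrt{p}$ (more precisely $p^{1/2c}$) gain as the mechanism that removes the dependence on $R_{ij}^*$ and $\rho_\sigma$, is exactly the paper's reasoning.

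There is, however, one step that would fail as written. For the bad part you propose to ``bound $|f(X_\ell)|\indic_{\Eps_{good}^c}$ by a crude deterministic envelope times $\indic_{\Eps_{good}^c}$.'' This works for the mixing-weight update, where the statistic $w_{1,\ell}\in[0,1]$ is bounded, and the paper does exactly that. But for the mean update the statistic is $w_{1,\ell}(X_\ell-\mu_1^*)$ and for the variance update it is $w_{1,\ell}(\|X_\ell-\mu_1^*\|^2-d{\sigma_1^*}^2)$; since $X_\ell$ is Gaussian, no deterministic envelope exists, and truncating naively would reintroduce exactly the instance-dependent quantities you are trying to avoid. The paper's resolution is to bound the sub-Gaussian (resp.\ sub-exponential) norm of the statistic \emph{conditioned} on each bad event, which in turn forces a finer decomposition of $\Eps_{j,good}^c$ into $\Eps_{j,1}^c$, $\Eps_{j,1}\cap\Eps_{j,2}^c$, and $\Eps_{j,1}\cap\Eps_{j,2}\cap\Eps_{j,3}^c$ so that each conditioning event is a half-space or a chi-square tail for which the conditional moment bounds of Lemmas \ref{lemma:vp_conditioned_vu}, \ref{lemma:vl2_conditioned_vl2}, \ref{lemma:vp_conditioned_vl2}, and \ref{lemma:vl2_conditioned_vs} apply (yielding $\|Y\|_{\psi_2}=O(R_{j1}^*)$, after which the small count of bad samples does the rest). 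You should also make explicit the case split the paper uses for the bad-event count: when $n\le p^{-1/c}$ Bernstein is vacuous and one instead argues that zero bad samples occur with probability $1-p^{1-1/c}$, which is the precise source of the inverse-polynomial failure term. Finally, the per-component error bounds must be summed over all $j\neq 1$ using the separation condition (Lemma \ref{lemma:error_bounds_finite_sample}); your single good/bad split elides this, but it is routine once the per-$j$ bounds are in hand.
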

The separation condition \eqref{eq:finite_separation_cond} in the statement differs by the constant $c > 2$ from condition \eqref{eq:pop_separation_condition}, which we need to bound the failure probability by $1/n^{c}$. This inverse-polynomial failure probability arises from the concentration of the empirical probability of rare events. 

As mentioned earlier, our finite-sample analysis develops a concentration bound that handles good and bad samples separately. Note that without splitting the analysis for different events, our statistical error may be unnecessarily large. For instance, if we simply apply standard sub-Gaussian tail bounds to $\frac{1}{n} \sum_{i=1}^n w_{1,i} (X_i - \mu_1^*)$ for mean updates, we end up having dependency on the norm of $X_i - \mu_1^*$, which could be as large as $O(\max_{j\neq1} \|\mu_j^* - \mu_1^*\|)$. This dependence on the pairwise distances of the true means is present in much of the prior art. However, this approach overlooks the fact that the statistical properties of good samples and bad samples are very different. To overcome this issue, we adopt the main statistical tool introduced in \cite{kwon2019converges}.
\begin{proposition}[Proposition 5.3 in \cite{kwon2019converges}]
    \label{lemma:indic_prob_decompose}
    Let $X$ be a random $d$-dimensional vector, and $A$ be an event in the same probability space with $p = P(A) > 0$. Define random variable $Y = X|A$, {\em i.e.}, $X$ conditioned on event $A$, and $Z = \indic_{X \in A}$. Let $X_i, Y_i, Z_i$ be the i.i.d.\ samples from corresponding distributions. Then, the following holds, 
    \begin{align}
        \label{eq:conditional_decomposed_prob}
        P\Bigg( \Big\|\frac{1}{n} \sum_{i=1}^{n} X_i \indic_{X_i \in A} - &\E[X \indic_{X \in A}] \Big\| \ge t \Bigg) \le \max_{m \le n_e} P \left(\frac{1}{n} \left\|\sum_{i=1}^{m} (Y_i - \E[Y]) \right\| \ge t_1 \right) \nonumber \\
        & + P\left(\|\E[Y]\| \left|\frac{1}{n} \sum_{i=1}^n Z_i - p\right| \ge t_2 \right)
        + P\left(\left|\sum_{i=1}^n Z_i\right| \ge n_e+1\right),
    \end{align}
    for any $0 \le n_e \le n$ and $t_1 + t_2 = t$.
\end{proposition}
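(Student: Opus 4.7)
The plan is to reduce the tail event to two cleanly separated pieces by a triangle inequality, after expressing the left-hand side in terms of the count $M := \sum_{i=1}^n Z_i$ of samples falling in $A$. Two observations drive the argument. First, $\E[X\indic_{X \in A}] = p \cdot \E[Y]$, so the quantity to control equals $\tfrac{1}{n}\sum_{i=1}^n X_i \indic_{X_i \in A} - p\E[Y]$. Second, since the $(X_i)$ are iid, conditioned on the index set $\{i : X_i \in A\}$ the samples at those indices are iid copies of $Y$; after relabelling,
\begin{align*}
\frac{1}{n}\sum_{i=1}^n X_i \indic_{X_i \in A} \;=\; \frac{1}{n}\sum_{j=1}^M Y'_j,
\end{align*}
where $Y'_1, \ldots, Y'_M$ are (conditionally) iid with the law of $Y$.

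Next I would insert the intermediate quantity $(M/n)\E[Y]$ and split:
\begin{align*}
\frac{1}{n}\sum_{j=1}^M Y'_j - p\E[Y] \;=\; \frac{1}{n}\sum_{j=1}^M (Y'_j - \E[Y]) \;+\; \left(\frac{M}{n} - p\right)\E[Y].
\end{align*}
Whenever $t_1 + t_2 = t$, the event that the norm of the left-hand side is at least $t$ is contained in the union of $\{\|\text{first term}\| \ge t_1\}$ and $\{\|\text{second term}\| \ge t_2\}$, and a union bound reduces the proof to controlling the two pieces in isolation. The second piece is exactly the middle probability in the statement, since its norm equals $\|\E[Y]\|\cdot|M/n - p|$.

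For the first piece I would condition on $M$ and apply the law of total probability. Using the conditional-iid identity above, for deterministic $m$ we have $P(\|\tfrac{1}{n}\sum_{j=1}^M (Y'_j - \E[Y])\| \ge t_1 \mid M = m) = P(\|\tfrac{1}{n}\sum_{j=1}^m (Y_j - \E[Y])\| \ge t_1)$ with iid $Y_j$. Splitting the sum over $m$ at the threshold $n_e$, the contribution from $m \le n_e$ is bounded by the maximum of those probabilities (since $\sum_{m\le n_e} P(M=m) \le 1$), and the contribution from $m \ge n_e+1$ is bounded by $P(M \ge n_e+1)$. Combining with the bound on the second piece yields the claim. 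The only subtlety is the conditional-iid identity that lets us replace the random sum over good samples by an iid partial sum of deterministic length $m$; everything after that is routine, and since the argument uses no structure of $X$ or $A$ beyond independence I do not expect significant technical difficulty.
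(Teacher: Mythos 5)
Your proof is correct. The paper does not reprove this proposition---it is imported verbatim from \cite{kwon2019converges}---but your argument is exactly the standard derivation: identify $\E[X\indic_{X\in A}]=p\,\E[Y]$, use the conditional-i.i.d.\ property of the samples landing in $A$ to rewrite the sum as a random-length i.i.d.\ partial sum, split off $(M/n-p)\E[Y]$ by the triangle inequality with $t_1+t_2=t$, and then condition on $M$, bounding the $m\le n_e$ contribution by the maximum and the $m\ge n_e+1$ contribution trivially by $P(M\ge n_e+1)$. All steps check out, including the two points that deserve care (the union bound needs no independence between the two pieces, and the normalization by $n$ rather than $m$ is preserved throughout).
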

We give a short overview of how we use this proposition in the proof for mixing weights. The estimation error in mixing weight after one EM iteration is $\tpi_1^+ - \pi_1^*= (1/n \sum_{i=1}^n w_{1,i} - \E_\D [w_1]) + (\E_\D[w_1] - \E_\D[w_1^*]).$
The second term is $(\E_\D[w_1] - \E_\D[w_1^*]) = \pi_1^+ - \pi_1^* \le \gamma \pi_1^* D_m$, for $\gamma$ and $D_m$ as in \eqref{eq:definte_Dm}. This is the error after one population EM iteration. Hence, if we can show that the first term is less than $\epsilon \pi_1^*$, linear convergence is guaranteed with convergence rate $\gamma < 1$ until $D_m$ reaches the target statistical error $\epsilon$. 

Now we decompose a single random variable $w_{1,i}$ into several parts using disjoint indicator functions. Let us define $\Eps_{j,good} := \Eps_{j,1} \cap \Eps_{j,2} \cap \Eps_{j,3}$ where good events are as defined in \eqref{event:good_j_neq_1}, and let
\begin{align*}
    w_{1,i} = w_{1,i} \indic_{\Eps_{1}} + \sum_{j\neq 1} w_{1,i} \indic_{\Eps_j \cap \Eps_{j,good}} + w_{1,i} \indic_{\Eps_j \cap \Eps_{j,good}^c}.
\end{align*}
Then, we treat each decomposed $O(k)$ terms as distinct quantities, and find a statistical fluctuation of the sum of each term over samples using Proposition \ref{lemma:indic_prob_decompose}. 

For each term, the first key step is to find a sub-Gaussian or sub-exponential norm of random variables conditioned on each event. We use this to control the statistical fluctuation of the conditioned random variable on each event, which appears as the first term in \eqref{eq:conditional_decomposed_prob}. This mainly controls the concentration of good samples.
For bad events $\Eps_{j,good}^c$, it is either that there are no bad sample with probability $O(1/poly(n))$, or $O(1)$ bad samples with high probability if $1 / n^c < P(\Eps_{good}^c)$. This controls the concentration of bad samples. The full proof of Theorem \ref{theorem:gmm_fin_main} is given in Appendix \ref{appendix:finite_sample}.

\section{Sample Optimal Learning with the EM Algorithm}
\label{section:sample_optimality}
We now show we can learn well-separated Gaussian mixtures with nearly optimal sample complexity. 
\begin{theorem}
    \label{theorem:sample_optimal_learnability}
    Suppose a mixture of $k$ spherical Gaussians has parameters $\{(\pi_j^*, \mu_j^*, \sigma_j^*): j \in [k]\}$ such that
    \begin{equation}
        \label{eq:sample_opt_separation_cond}
        \forall i \neq j \in [k], \|\mu_i^* - \mu_j^*\| \ge C(\sigma_i^* \vee \sigma_j^*) \cdot c \sqrt{\log (\rho_\sigma /\pi_{min})},
    \end{equation}
    with a universal constant $C \ge 64$ and some given constant $c > 2$. Then there exists a (possibly inefficient) algorithm that for any $\epsilon > 0$, returns parameters $\{(\pi_j, \mu_j, \sigma_j): j \in [k] \}$ (up to permutation) 
    such that
    \begin{align*}
        | \pi_i^* - \pi_i | / \pi_i^* \le \epsilon, \| \mu_i^* - \mu_i \| / \sigma_i^* \le \epsilon, |\sigma_i^* - \sigma_i | / \sigma_i^* \le \epsilon / \sqrt{d}, \ \forall i \in [k],
    \end{align*}
    using $n = \tO(\pi_{min}^{-1}d/\epsilon^2 \vee k\pi_{min}^{-2}d)$ samples with high probability.
\end{theorem}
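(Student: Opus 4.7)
The plan is to combine three ingredients: (i) a proper density-estimation guarantee to get a coarse initialization, (ii) a bridging step that refines the coarse initialization to one that meets the hypotheses of Theorem \ref{theorem:gmm_fin_main}, and (iii) the finite-sample EM guarantee of Theorem \ref{theorem:gmm_fin_main} itself, which gives the final $\epsilon$-accurate estimate. Concretely, I would first invoke the proper learning result of \cite{ashtiani2018nearly} at tolerance $\pi_{min}/4$. This outputs a candidate mixture $\mathcal{G}$ of $k$ spherical Gaussians with $\|\mathcal{G}-\mathcal{G}^*\|_{TV}\le \pi_{min}/4$ using $\tilde O(kd/\pi_{min}^2)$ samples; this is the source of the $k\pi_{min}^{-2}d$ term in the final sample complexity.

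The second step is to translate the TV-closeness into a parameter-closeness statement that can seed an iterative refinement. Under the separation condition \eqref{eq:sample_opt_separation_cond}, Gaussian components are pairwise nearly disjoint in TV, so if $\mathcal{G}$ were far from $\mathcal{G}^*$ in any parameter, the TV distance would exceed $\pi_{min}/4$. This is the argument sketched around \eqref{eq:intro_kmeans_cond}: after the correct matching of indices, the means of $\mathcal{G}$ satisfy $\|\mu_i-\mu_i^*\|\le \tfrac{1}{4}\min_{j\neq i}\|\mu_i^*-\mu_j^*\|$, and analogous crude bounds hold for weights and variances. I would prove this by a Scheffé-type argument: if some $\mu_i$ were farther than a constant fraction of the minimum separation, one could exhibit a test set (a ball around $\mu_i^*$, say) on which the two mixtures must differ by more than $\pi_{min}/4$, contradicting the TV bound; the separation in \eqref{eq:sample_opt_separation_cond} guarantees that contributions from all other components to this ball are exponentially small.

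The third step is the bridge: the coarse bound from step two is only at the scale of the pairwise mean separation, whereas Theorem \ref{theorem:gmm_fin_main} requires initialization at the scale of $\sigma_i^*$ (modulo the ratio $\min_j\|\mu_i^*-\mu_j^*\|/(\sigma_i^*\vee\sigma_j^*)$). To close this gap I would run an essentially-$k$-means / Lloyd-type hard-assignment step: because each component of $\mathcal{G}$ now sits strictly closer to its corresponding true mean than to any other, we can label a fresh batch of samples by nearest-center assignment to $\mathcal{G}$ and, by standard sub-Gaussian concentration, correctly classify a $(1-k^{-\Omega(1)})$ fraction of the samples from each true component. Averaging within each labeled cluster gives mean estimates within $O(\sigma_i^*)$ (and, by similar concentration, weight and variance estimates within the tolerances \eqref{eq:finite_initialization_mean}--\eqref{eq:finite_initialization_others}). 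This refinement requires $\tilde O(d\pi_{min}^{-1})$ samples, which is absorbed by the other terms.

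Having obtained initializers that satisfy the hypotheses of Theorem \ref{theorem:gmm_fin_main}, I would apply sample-splitting finite-sample EM to a fresh batch of $\tilde O(d\pi_{min}^{-1}/\epsilon^2)$ samples, which drives all parameter errors to the target accuracy in $O(\log(1/\epsilon))$ iterations with the claimed high probability; adding up the three sample requirements yields $n=\tilde O(\pi_{min}^{-1}d/\epsilon^2\vee k\pi_{min}^{-2}d)$. The main obstacle I expect is step two, the TV-to-parameter conversion: one must show that a TV-close candidate necessarily reproduces \emph{all} components (including the lightest) with essentially correct mean, weight, and variance, and in particular must rule out the possibility that $\mathcal{G}$ merges two true components into one or splits one true component into two. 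This requires leveraging the separation condition quantitatively together with the fact that the Scheffé test can be chosen adversarially against mass misallocations of size $\pi_{min}/4$; getting the constants to line up with the initialization radius demanded by Theorem \ref{theorem:gmm_fin_main} is the delicate part.
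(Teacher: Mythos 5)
Your overall route is exactly the paper's: proper learning via \cite{ashtiani2018nearly} at TV tolerance $\pi_{min}/4$ (giving the $k\pi_{min}^{-2}d$ term), a TV-to-parameter conversion under separation (Lemma \ref{lemma:tv_implies_param}, proved by essentially the Scheff\'e/test-set argument you describe), a one-step $k$-means bridge (Algorithm \ref{alg:k-means} and Lemma \ref{lemma:k_means_converge}), and then Theorem \ref{theorem:gmm_fin_main} on a fresh batch of $\tilde O(d\pi_{min}^{-1}/\epsilon^2)$ samples. Two remarks. First, a minor one: you assert that TV-closeness also yields crude bounds on the candidate's weights and variances. The paper deliberately avoids proving any such thing --- Lemma \ref{lemma:tv_implies_param} extracts \emph{only} mean closeness, and the whole point of the bridge is that weights and variances are re-estimated from scratch, so you should drop that claim rather than owe a proof of it.

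Second, there is a genuine gap in your bridge step: you propose to get the variance initializers by ``similar concentration'' from averaging within each labeled cluster, but the tolerance required by \eqref{eq:finite_initialization_others} is $|\sigma_i^{(0)2}-\sigma_i^{*2}|\le 0.5\,\sigma_i^{*2}/\sqrt{d}$, and the naive within-cluster empirical variance does not achieve this for all $d$. The nearest-center rule misclassifies a fraction $f_j \lesssim (\pi_j^*/\pi_i^*)\exp(-R_{ij}^{*2}/(72\sigma_j^{*2}))$ of points from component $j$ into cluster $i$; this fraction is controlled only by the separation condition and does \emph{not} decay with $d$, while each such point biases the average of $\|X-\mu_i^*\|^2/d$ by roughly $|\sigma_j^{*2}-\sigma_i^{*2}|+R_{ij}^{*2}/d$. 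The resulting bias term $\sum_j f_j|\sigma_j^{*2}-\sigma_i^{*2}|$ is a $d$-independent constant (of order $\mathrm{poly}(\pi_{min})\cdot\sigma_i^{*2}$), which exceeds the shrinking target $0.5\,\sigma_i^{*2}/\sqrt{d}$ once $d$ is large enough. This is precisely why Step 3 of Algorithm \ref{alg:k-means} replaces the average by an $\alpha_d$-quantile (a median-type statistic) of adjacent pairwise squared distances: a quantile is robust to the $O(1)$-fraction contamination uniformly in $d$, whereas the mean is not. To complete your proof you would need either this estimator (with the paper's argument that $2\%$ corruption moves the empirical quantile by less than the $10\%$ CDF margin around $F(d)$) or some other contamination-robust variance estimator; simple averaging will not do.
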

While a polynomial sample upper bound with $\Omega(\sqrt{\log k})$ separation previously has been established in \cite{regev2017learning}, our result guarantees the tightest sample complexity for $\epsilon = O(\pi_{min})$. In particular, our result implies that a trivial lower bound $\Omega (d\pi_{min}^{-1} /\epsilon^2)$ is indeed a tight upper bound in the well-separated regime. Furthermore, we do not impose any constraints on the norms of means, or require prior knowledge on mixing weights or variances. Hence Theorem \ref{theorem:sample_optimal_learnability} allows any possible realization of Gaussian mixture models that satisfies \eqref{eq:sample_opt_separation_cond}. 


We first show that we can relax the initialization conditions \eqref{eq:finite_initialization_mean}, \eqref{eq:finite_initialization_others} such that it is sufficient to have a good initialization {\it only} for mean parameters. It makes the connection to proper-learning significantly easier since then we do not need any requirement on $\G$ other than being close to $\G^*$ in TV distance. In contrast, \cite{regev2017learning} requires $\G$ to be close in TV distance as well as to have {\it all} parameters close to $\G^*$, which raises a technical challenge in connecting the proper-learning and parameter initialization. This initialization requirement results in a much higher sample complexity at least $\tO(\pi_{min}^{-2} k^3 d^3 \rho^{16})$ with an instance-specific parameter $\rho$.

\subsection{Better Initialization with the \texorpdfstring{$k$}{Lg}-Means Algorithm}
\begin{algorithm}[ht]
    \caption{One-Step $k$-means with Good Mean Initializers \label{alg:k-means}}
    {\bf Input:} $n$ i.i.d.\ samples from a mixture of well-separated Gaussians $\G^*$ with parameters $\{(\pi_j^*, \mu_j^*, \sigma_j^*), j \in [k] \}$, and initial estimate of means $\{\mu_i^{(0)}, \forall i \in [k]\}$, satisfying \eqref{eq:minimal_initialization}. 
    
    {\bf Output:} Good initializers for the EM algorithm satisfying \eqref{eq:finite_initialization_mean} and \eqref{eq:finite_initialization_others}. 
    \begin{itemize}[leftmargin=0.3in]
        \setlength\itemsep{0em}
        \item[1.] Using $n$ samples from mixtures of well-separated Gaussians, cluster points according to the rule: $C_i = \{ X: \|X - \mu_i\| \le \|X - \mu_{j}\|, \forall j \neq i\}$.
        \item[2.] For each cluster $C_i$, let $\pi_i = |C_i| / n$ and $\mu_i = mean(C_i)$, the average over all elements in $C_i$.
        \item[3.] For each cluster $C_i$, let samples in $C_i$ stand in any pre-defined order. Compute pairwise distances between all adjacent samples. Let $F(x)$ be a cumulative distribution function of chi-square with $d$ degrees of freedom. Collect all $|C_i| - 1$ computed values, and find $(\alpha_d |C_i|)^{th}$ quantity among them where $\alpha_d = F(d)$. Set ${\sigma_i}^2$ as the quantity divided by $2d$.
    \end{itemize}
\end{algorithm}
Algorithm \ref{alg:k-means} is essentially the $k$-means algorithm except for Step 3 which estimates the variances. The next lemma is critical: it says Algorithm \ref{alg:k-means} can help initialize EM.
\begin{lemma}
    \label{lemma:k_means_converge}
    Suppose we are given $\mu_1^{(0)}, ..., \mu_k^{(0)}$ such that 
    \begin{align}
        \label{eq:minimal_initialization}
        \|\mu_i^{(0)} - \mu_i^*\| \le \frac{1}{4} \min_{i \neq j} \|\mu_i^* - \mu_j^*\|,
    \end{align}
    where $\{(\pi_j^*, \mu_j^*, \sigma_j^*): j \in [k] \}$ are the parameters of well-separated mixtures of Gaussians. Then there exists a universal constant $C' > 0$ such that given with $n \ge C' (d\pi_{min}^{-1} \log^2 (k/\delta))$ samples, Algorithm \ref{alg:k-means} returns estimators satisfying $\|\mu_i - \mu_i^*\|/\sigma_i^* \le 4$, $|\pi_i - \pi_i^*|/\pi_i^* \le 0.5$, and $|{\sigma_i}^2 - {\sigma_i^*}^2| / {\sigma_i^*}^2 \le 0.5/\sqrt{d}$ with high probability.
\end{lemma}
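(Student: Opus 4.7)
The plan is to verify, in sequence, (i) that the nearest-mean partition built in Step 1 is essentially correct, and (ii) that on this clean partition each of the three estimators $(\pi_i,\mu_i,\sigma_i^2)$ concentrates as claimed. The guiding principle is that the coarse initialization \eqref{eq:minimal_initialization}, combined with the separation hypothesis \eqref{eq:sample_opt_separation_cond}, forces misclassification to be superpolynomially rare, so each cluster $C_i$ is indistinguishable from an iid sample of size $\approx n\pi_i^*$ from the $i^{th}$ Gaussian up to a vanishing fraction $\eta$ of outliers. Once this is in hand, the bounds on $\pi_i$ and $\mu_i$ follow from standard Bernstein and sub-Gaussian concentration, and only the variance estimator requires nontrivial work.

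For the clustering step, I would fix a sample $X=\mu_j^*+v$ with $v\sim\mathcal{N}(0,{\sigma_j^*}^2 I_d)$ and use the polarization identity
\[
\|X-\mu_i^{(0)}\|^2 - \|X-\mu_j^{(0)}\|^2 \;=\; 2\langle \mu_j^*-(\mu_i^{(0)}+\mu_j^{(0)})/2,\; \mu_j^{(0)}-\mu_i^{(0)}\rangle + 2\langle v,\; \mu_j^{(0)}-\mu_i^{(0)}\rangle.
\]
Under \eqref{eq:minimal_initialization} the deterministic term is at least a constant fraction of $\|\mu_i^*-\mu_j^*\|^2$, while $\|\mu_j^{(0)}-\mu_i^{(0)}\| \in [\tfrac{1}{2},\tfrac{3}{2}]\cdot\|\mu_i^*-\mu_j^*\|$. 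A one-dimensional Gaussian tail on the stochastic term then yields per-sample misclassification probability at most $\exp(-\Omega(\|\mu_i^*-\mu_j^*\|^2/{\sigma_j^*}^2)) \le (\pi_{min}/\rho_\sigma)^{\Omega(c^2 C^2)}$, smaller than any inverse polynomial in $n,k,1/\delta$ for the constants in the theorem. A union bound over pairs $(i,j)$ and over all $n$ samples then shows that with probability $\ge 1-\delta/2$ each $C_i$ differs from the true set of samples drawn from component $i$ by a vanishing fraction.

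Conditioning on this clean clustering, $|C_i|$ is a $\mathrm{Binomial}(n,\pi_i^*)$ count up to a negligible additive correction, so Bernstein yields $|\pi_i-\pi_i^*|\le \pi_i^*/2$ whenever $n\pi_{min}\gtrsim \log(k/\delta)$. Similarly, $\mu_i=\mathrm{mean}(C_i)$ agrees up to an $O(\eta)$ correction with the empirical mean of $|C_i|$ iid draws from $\mathcal{N}(\mu_i^*,{\sigma_i^*}^2 I_d)$, so sub-Gaussian concentration in $\mathbb{R}^d$ gives $\|\mu_i-\mu_i^*\|\le \sigma_i^*\sqrt{O(d+\log(k/\delta))/|C_i|}\le 4\sigma_i^*$ once $n\gtrsim d\pi_{min}^{-1}\log(k/\delta)$. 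Combined with the variance bound below, these are precisely \eqref{eq:finite_initialization_mean} and \eqref{eq:finite_initialization_others}, since \eqref{eq:sample_opt_separation_cond} with $C\ge 64$, $c>2$ implies $\min_{j\neq i}\|\mu_i^*-\mu_j^*\|/(\sigma_i^*\vee\sigma_j^*)\ge 64$.

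The main work is Step 3, the variance estimator. For two independent draws $X,X'\sim\mathcal{N}(\mu_i^*,{\sigma_i^*}^2 I_d)$ the difference $X-X'$ is $\mathcal{N}(0,2{\sigma_i^*}^2 I_d)$, hence $\|X-X'\|^2/(2{\sigma_i^*}^2)\sim\chi_d^2$, whose $\alpha_d$-quantile is exactly $d$ by the very definition $\alpha_d=F(d)$. Since the $\chi_d^2$ density at $x=d$ is $\Theta(1/\sqrt{d})$ (by Stirling), a standard empirical-quantile inequality gives that the empirical $\alpha_d$-quantile of the $|C_i|-1$ rescaled pairwise distances lies within $O(\sqrt{d\log(k/\delta)/|C_i|})$ of $d$ with high probability. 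Rescaling by $2{\sigma_i^*}^2$ and dividing by $2d$ translates this into $|\sigma_i^2-{\sigma_i^*}^2|/{\sigma_i^*}^2 = O(\sqrt{\log(k/\delta)/(d|C_i|)}) \le 0.5/\sqrt{d}$ as soon as $|C_i|\gtrsim \log(k/\delta)$, which is comfortable under the sample-size assumption. The technical point I expect to be the main obstacle is that the adjacent differences $Y_\ell=X_{(\ell+1)}-X_{(\ell)}$ are only 1-dependent rather than iid, since consecutive pairs share a sample. I plan to handle this either by restricting to the disjoint pairs $(X_{(2\ell-1)},X_{(2\ell)})$, losing only a factor of two in sample count and restoring full independence so that a Dvoretzky--Kiefer--Wolfowitz-type quantile bound applies directly, or by invoking an empirical-process inequality valid for 1-dependent sequences. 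Robustness to the $\eta$-fraction of misclassified samples is automatic: an $\eta$-fraction of arbitrary outliers can shift the empirical $\alpha_d$-quantile by at most $O(\eta\sqrt{d})$ in rescaled units, translating into an $O(\eta/\sqrt{d})$ relative error in $\sigma_i^2$, negligible compared with the sampling fluctuation.
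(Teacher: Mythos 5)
Your plan follows essentially the same route as the paper's proof: the nearest-mean partition is controlled through the event $\langle v,\mu_j-\mu_1\rangle \ge -{R_{j1}^*}^2/4$ (the paper's $\Eps_j'$), each cluster is then treated as an i.i.d.\ sample from its own component corrupted by a few percent of outliers, and the variance is read off the $\alpha_d$-quantile of adjacent pairwise distances, handling the $1$-dependence exactly as the paper does by splitting into the two independent families of even and odd pairs and exploiting the constant margin of $F(d\pm\sqrt{d}/2)$ around $\alpha_d$. Two steps in your sketch are glibber than they can afford to be. First, the misclassified points are not a generic ``$O(\eta)$ correction'' to $\mathrm{mean}(C_i)$: a sample from component $j$ that lands in $C_i$ lies in the unbounded Voronoi cell of $\mu_i^{(0)}$ and typically sits at distance of order $R_{ij}^*$ from $\mu_i^*$, so the bias is $\sum_{j\neq i}\pi_j^*\,R_{ij}^*\exp\bigl(-\Omega(R_{ij}^{*2}/(\sigma_i^*\vee\sigma_j^*)^2)\bigr)$, which is $\ll \pi_i^*\sigma_i^*$ only after the component-wise summation lemma (Lemma \ref{lemma:error_bounds_finite_sample}) and the conditioned-moment bounds; the finite-sample fluctuation of these rare contributions further needs the two-part concentration of Proposition \ref{lemma:indic_prob_decompose} --- this is why the paper routes the $\pi_i,\mu_i$ analysis through its hard-assignment-EM formalism rather than bare sub-Gaussian bounds. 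Second, a union bound over all $n$ samples does not yield a clean partition for arbitrary $n$, since the per-sample misclassification probability is a fixed power of $\pi_{min}$ rather than of $1/n$; what you actually need (and what suffices for every downstream step) is only Chernoff concentration of the corrupted \emph{count} around its exponentially small mean. With these two repairs your argument coincides with the paper's.
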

Thus Algorithm \ref{alg:k-means} succeeds, as long as the initialization satisfies $1/4$-closeness to the true means, which significantly relaxes the condition required for the EM algorithm. 
The key elements of the proof of the lemma for mixing weights and means are reminiscent of the ideas we exploit in population EM, as the $k$-means algorithm can be viewed as a variant of EM with hard-label assignment in the E-step. 
Estimating variances is trickier since we need to get estimators as good as $O(1/\sqrt{d})$.
Controlling the noise to get a $O(1/\sqrt{d})$ estimate requires more than simply computing the average over each cluster (which is what we do in Step 2 to compute the means, and the mixing weights $\pi_i$). The key is contained in Step 3 of Algorithm \ref{alg:k-means}, which is essentially computing the median of pairwise distances. The full proof of Lemma \ref{lemma:k_means_converge} is given in Appendix \ref{Appendix:k_means_converge}.

\subsection{Sample-Optimal Algorithm for Theorem \ref{theorem:sample_optimal_learnability}}
In this section, we first show that if we find a proper hypothesis that is $\pi_{min}/4$-close in total variation distance, then all means in the hypothesis are $1/4$-close to true means of well-separated Gaussian mixtures. The is the content of the following lemma.
\begin{lemma}
    \label{lemma:tv_implies_param}
    Suppose a mixture of well-separated Gaussians $\G^*$ with parameters $\{ (\pi_i^*, \mu_i^*, \sigma_i^*), \forall i \in k \}$, and a candidate mixture of (any spherical) Gaussians $\G$ with parameters $\{ (\pi_i, \mu_i, \sigma_i), \forall i \in k \}$, satisfy $\| \G - \G^* \|_{TV} \le \pi_{min} / 4$. Then
    \begin{align}
        \label{eq:tv_to_initializetion}
        \max_i (\min_j \|\mu_i^* - \mu_j\| / \sigma_i^*) \le 16 \sqrt{\log(1/\pi_{min})}.
    \end{align}
\end{lemma}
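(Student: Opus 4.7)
The plan is to prove Lemma~\ref{lemma:tv_implies_param} by contradiction. Suppose some index $i$ satisfies $\|\mu_i^* - \mu_j\| > R \sigma_i^*$ for every $j \in [k]$, with $R := 16\sqrt{\log(1/\pi_{min})}$. The goal is to exhibit a set $A \subset \mathbb{R}^d$ on which $\G^*(A) - \G(A) > \pi_{min}/4$, contradicting $\|\G-\G^*\|_{TV} \le \pi_{min}/4$. I would take $A = B(\mu_i^*, r\sigma_i^*)$ for a radius $r$ to be chosen.

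For the lower bound, write $\G^*(A) \ge \pi_i^* \, \mathcal{N}(\mu_i^*,\sigma_i^{*2} I_d)(A) - \sum_{i'\neq i} \pi_{i'}^* \, \mathcal{N}(\mu_{i'}^*,\sigma_{i'}^{*2} I_d)(A)$. The first term is at least $\pi_{min}/2$ whenever $r$ exceeds the median of $\|\mathcal{N}(0,I_d)\|$; the second term is driven to $o(\pi_{min})$ by the well-separation condition \eqref{eq:sample_opt_separation_cond}, which forces $\|\mu_{i'}^* - \mu_i^*\| \gtrsim \sigma\sqrt{\log(\rho_\sigma/\pi_{min})}$ and hence makes each cross-component's mass on $A$ of order $\pi_{min}^{\Omega(1)}$. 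For the upper bound, I use $\|\mu_j - \mu_i^*\| \ge R\sigma_i^*$ to bound the density of $\mathcal{N}(\mu_j,\sigma_j^2 I_d)$ on $A$ by $(2\pi\sigma_j^2)^{-d/2}\exp\bigl(-(R-r)^2\sigma_i^{*2}/(2\sigma_j^2)\bigr)$ (attained at the boundary of $A$ nearest $\mu_j$), multiply by $V_d(r\sigma_i^*)^d$, and maximize over $\sigma_j$. The maximizer is $\sigma_j^2 = (R-r)^2\sigma_i^{*2}/d$, and Stirling's estimate for $V_d$ collapses the expression to roughly $\bigl(r/(R-r)\bigr)^d/\sqrt{\pi d}$. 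Summing with $\sum_j \pi_j = 1$ yields the same bound for $\G(A)$.

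Choosing $r$ so that $(R-r)/r$ is a constant strictly larger than $1$ (for example $r = R/4$, giving ratio $3$) makes the upper bound on $\G(A)$ decay geometrically in $d$, and the constant $16$ in the definition of $R$ together with $r \asymp \sqrt{\log(1/\pi_{min})}$ provides enough slack to push the bound below $\pi_{min}/4$, completing the contradiction. The main obstacle I anticipate is the dimension balancing: the ball argument is cleanest when $d \asymp \log(1/\pi_{min})$, but for $d \gg \log(1/\pi_{min})$ a ball of radius $\sqrt{\log(1/\pi_{min})}\sigma_i^*$ captures too little mass of the $i$-th true component, while for $d \ll \log(1/\pi_{min})$ the straightforward ball bound on $\G(A)$ is too loose. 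In the first case one can replace $A$ by the level set $\{x : \mathcal{N}(\mu_i^*,\sigma_i^{*2} I_d)(x) \ge \pi_{min} \, (2\pi\sigma_i^{*2})^{-d/2}\}$, whose radius $\sqrt{2\log(1/\pi_{min})}\sigma_i^*$ is dimension-independent; in the second case, one needs to strengthen the upper bound by exploiting the constraint $\sum_j \pi_j = 1$ together with the fact that no $k$ far-away Gaussians can collectively match the mass of the $i$-th component near $\mu_i^*$ without creating an $\Omega(\pi_{min})$ mismatch elsewhere.
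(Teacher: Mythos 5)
Your high-level strategy (contradiction via a distinguishing event centered at the orphaned true mean) matches the paper's, and your single-ball computation --- bounding each candidate component's mass on $B(\mu_i^*, r\sigma_i^*)$ by density-at-nearest-boundary times volume and maximizing over $\sigma_j$ to get $(r/(R-r))^d/\sqrt{\pi d}$ --- is correct as far as it goes. But it only closes the argument in the regime $d = \Theta(\log(1/\pi_{min}))$, and the two regimes you flag as ``obstacles'' are exactly where the content of the lemma lives; neither of your proposed fixes resolves them. For $d \gg \log(1/\pi_{min})$, replacing the ball by the level set $\{x : \mathcal{N}(\mu_i^*,\sigma_i^{*2}I_d)(x) \ge \pi_{min}(2\pi\sigma_i^{*2})^{-d/2}\}$ does not help: that level set \emph{is} the ball of radius $\sigma_i^*\sqrt{2\log(1/\pi_{min})}$, whose radius is still $\ll \sigma_i^*\sqrt{d}$, so it carries exponentially little mass of the $i$-th true component. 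The paper escapes this by abandoning balls entirely in the high-dimensional case ($d \ge 128\log(1/\pi_{min})$): it uses the likelihood-comparison region $\Eps = \{\|X-\mu_1^*\|^2/\sigma_1^{*2} + d\log\sigma_1^{*2} \le \|X-\mu_j\|^2/\sigma_j^2 + d\log\sigma_j^2,\ \forall j\}$, which is defined by pairwise comparisons rather than an absolute radius and hence captures $\ge 3/4$ of the component's mass in every dimension; showing each candidate component places $\le \pi_{min}/2$ mass on $\Eps$ then requires a careful case analysis over $\sigma_j$ versus $\sigma_1^*$ with chi-square tail bounds on both sides (including a separate argument when $R_{j1}^2/\sigma_j^2 \le 32\log(1/\pi_{min})$, where one completes the square in the direction of $\mu_j - \mu_1^*$ and controls the orthogonal $d-1$ coordinates).

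For $d \ll \log(1/\pi_{min})$, your observation that ``no $k$ far-away Gaussians can collectively match the mass of the $i$-th component near $\mu_i^*$ without creating an $\Omega(\pi_{min})$ mismatch elsewhere'' is the right intuition but is not a proof, and the volume bound genuinely fails here (e.g.\ $d=1$, $r=R/4$ gives $3^{-1}/\sqrt{\pi}\approx 0.19 \gg \pi_{min}/4$). The paper's mechanism is a two-set dichotomy: with $\Eps_1 = \{\|X-\mu_1^*\|\le \sigma_1^*\sqrt{2d}\}$ and $\Eps_2 = \{\|X-\mu_1^*\|\le 3\sigma_1^*\sqrt{2d}\}$, either $P_\G(\Eps_1)\le\pi_{min}/2$ (and then $\Eps_1$ itself witnesses the TV gap since $P_{\G^*}(\Eps_1)\ge 3\pi_{min}/4$), or $P_\G(\Eps_1)\ge\pi_{min}/2$, in which case every candidate mean lies outside $\Eps_1$, so for each $j$ there is a congruent ball inside the annulus $\Eps_2\cap\Eps_1^c$ that is closer to $\mu_j$ and thus receives at least as much of the $j$-th candidate's mass as $\Eps_1$ does; summing gives $P_\G(\Eps_1^c\cap\Eps_2)\ge\pi_{min}/2$ while $P_{\G^*}(\Eps_1^c\cap\Eps_2)\le\pi_{min}/4$, so the annulus witnesses the gap instead. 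You would need to supply both of these missing arguments (or equivalents) for your proof to go through.
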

The proof of Lemma \ref{lemma:tv_implies_param} is given in Appendix \ref{Appendix:tv_implies_param}. While similar connection between TV distance and parameter distance are stated in \cite{diakonikolas2017statistical, regev2017learning}, they consider distances smaller than $o(1/k^2)$ either in parameter space or total variation.  Matching the sample complexity lower bounds requires a simpler connection to $O(1/k)$-TV distance. As mentioned earlier, $\G$ may completely ignore some components if $\|\G - \G^*\|_{TV} > \pi_{min}$, hence $\pi_{min}/4$ (which is $O(1/k)$) is order-wise the minimum possible requirement for this approach. 

Now we recall that \cite{ashtiani2018nearly} provide a sample-optimal guarantee $\tO(kd/\epsilon^2)$ for the proper-learning of a mixture of {\it axis-aligned} Gaussians, {\it i.e.}, Gaussians with diagonal covariance matrices. From their result, it is straightforward to get the same sample-optimal guarantee for spherical Gaussian mixtures. We then combine it with Lemma \ref{lemma:tv_implies_param}, which gives a candidate distribution $\G$ that satisfies $\|\G - \G^*\|_{TV} \le \pi_{min}/4$. Lemma \ref{lemma:k_means_converge} provides the final bridge: executing Algorithm \ref{alg:k-means} produces an initialization good enough for the EM algorithm, as guaranteed by Theorem \ref{theorem:gmm_fin_main}. The full proof of \ref{theorem:sample_optimal_learnability} is given in Appendix \ref{Appendix:proof_sample_compression}.

\subsection{Discussion on Initialization}
We have shown that $\tO(d\pi_{min}^{-1} / \epsilon^2)$ samples are sufficient for learning a mixture of well-separated spherical Gaussians. While the combination of proper learning and the EM algorithm gives a sample-optimal algorithm, we note, however, that the algorithm given in \cite{ashtiani2018nearly} is not computationally efficient. In fact, as far as we know, no (orderwise) sample-optimal polynomial time algorithm is known for proper-learning even in the $\Omega(\sqrt{\log k})$-separated regime. Moreover, for general (non-spherical) Gaussian mixtures, work in \cite{diakonikolas2017statistical,diakonikolas2018list} gives statistical-query based lower bounds, though it is an interesting question to explore what these results imply in the spherical setting. Hence, while our work resolves the sample-complexity question, the computational complexity remains open. 

The state-of-the-art algorithms in the $\Omega(\sqrt{\log k})$ separation regime appear in \cite{diakonikolas2018list, hopkins2018mixture, kothari2018robust}. They run in quasi-polynomial time with the required sample complexity $\tO(poly(k^{\log k}, d, 1/\epsilon))$. However, it is still unknown whether it is possible to learn a mixture of well-separated Gaussians in polynomial time with polynomial sample complexity. Achieving polynomial running time with polynomial sample complexity would be a great result that resolves a long lasting open problem in literature.

\section{Conclusion}
We provide  local convergence guarantees for the EM algorithm for a mixture of well-separated spherical Gaussians. We show that EM enjoys desirable local convergence properties in several respects: minimal requirements on the separation condition, optimal sample complexity, and large initialization region. Consequently, our results provide the optimal sample upper bound for learning the parameters of well-separated Gaussian mixture models. Even under structural assumptions or larger sample complexity regimes when other methods apply, EM may still be an appealing local algorithm to amplify the estimation accuracy, as these global algorithms tend to incur large and instance-dependent statistical error. While our analysis is restricted to the well-separated regime, we conjecture that EM locally converges to the true parameters even with smaller separation. It will be an interesting future challenge to establish a similar result in a weaker separation regime. 

\subsection*{Acknowledgement}
This work was partially funded by NSF grants 1704778, 1646522, 1934932 and the Army Futures Command.

\bibliography{main}

\newpage 

\appendix

\section{Useful Technical Lemmas}
We state some useful lemmas:
\begin{lemma}[Sub-Gaussian tails \cite{vershynin2010introduction}]
    Let $v$ be a sub-Gaussian random vector with parameter $\sigma^2$ in $d$-dimensional space. Then for any unit vector $s \in \mathbb{S}^{d-1}$ and $\alpha > 0$,
    \begin{align*}
        P(\vdot{v}{s} \ge \alpha) \le \exp(-\alpha^2 / 2\sigma^2).
    \end{align*}
\end{lemma}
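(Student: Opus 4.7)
The plan is to prove this standard sub-Gaussian tail bound using the classical Chernoff (exponential Markov) argument. Since the statement concerns only the one-dimensional projection $\langle v, s \rangle$, the $d$-dimensional structure of $v$ plays no essential role beyond the definitional requirement that every unit-vector projection of $v$ be sub-Gaussian with parameter $\sigma^2$. So the first step is to fix any $s \in \mathbb{S}^{d-1}$ and reduce the problem to a one-dimensional concentration inequality for the scalar random variable $Z := \langle v, s\rangle$, which by hypothesis satisfies the MGF bound $\mathbb{E}[\exp(\lambda Z)] \le \exp(\lambda^2 \sigma^2/2)$ for every $\lambda \in \mathbb{R}$.

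Next, I would apply Markov's inequality to $\exp(\lambda Z)$ for an arbitrary $\lambda > 0$: for any $\alpha > 0$,
\begin{equation*}
P(Z \ge \alpha) = P(\exp(\lambda Z) \ge \exp(\lambda \alpha)) \le \exp(-\lambda \alpha) \, \mathbb{E}[\exp(\lambda Z)] \le \exp\!\bigl(-\lambda \alpha + \lambda^2 \sigma^2 / 2\bigr).
\end{equation*}
The third step is to optimize the right-hand side over $\lambda > 0$: the quadratic $-\lambda \alpha + \lambda^2 \sigma^2/2$ is minimized at $\lambda^\star = \alpha/\sigma^2$, which yields the minimum value $-\alpha^2/(2\sigma^2)$, giving exactly the desired bound $P(Z \ge \alpha) \le \exp(-\alpha^2/(2\sigma^2))$.

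There is no real obstacle here; this is a textbook Chernoff argument and the authors cite \cite{vershynin2010introduction} precisely because the bound is standard. The only point requiring care is a definitional one: the phrase "sub-Gaussian random vector with parameter $\sigma^2$" admits several conventions in the literature (constant normalization in the MGF bound, Orlicz $\psi_2$-norm, etc.), and the inequality as stated presupposes the MGF convention $\mathbb{E}[\exp(\lambda \langle v,s\rangle)] \le \exp(\lambda^2 \sigma^2/2)$. Under that convention the proof is a three-line Chernoff-and-optimize argument as above; under a different convention one would absorb an additional constant into the exponent, but the structure of the proof is identical.
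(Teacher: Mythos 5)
Your proof is correct and is exactly the standard Chernoff argument that underlies the cited result in \cite{vershynin2010introduction}; the paper itself states this lemma without proof, deferring to that reference. Your remark on the MGF normalization convention is the right caveat to flag, and under the convention $\E[\exp(\lambda \vdot{v}{s})] \le \exp(\lambda^2\sigma^2/2)$ the three-step reduce--Markov--optimize argument yields the bound as stated.
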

\begin{lemma}[chi-Square tails \cite{laurent2000adaptive}]
    \label{lemma:chi_square_laurent}
    Let $v$ be a chi-square random variable with $d$ degrees of freedom. Then for any $\alpha > 0$,
    \begin{align*}
        P(v \ge d + 2\sqrt{d\alpha} + 2\alpha) \le \exp(-\alpha), \\
        P(v \le d - 2\sqrt{d\alpha}) \le \exp(-\alpha).
    \end{align*}
\end{lemma}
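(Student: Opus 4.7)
My plan is to prove both tail bounds by the standard Cramér--Chernoff (exponential Markov) method using the known moment generating function of the chi-square distribution, and then reducing the resulting optimization to an elementary inequality. Write $v = \sum_{i=1}^d Z_i^2$ with $Z_i$ i.i.d.\ $\mathcal{N}(0,1)$. A direct Gaussian integral calculation gives
\begin{equation*}
\mathbb{E}\bigl[e^{\lambda (v - d)}\bigr] = e^{-\lambda d}(1 - 2\lambda)^{-d/2}, \qquad \lambda \in (0, 1/2),
\end{equation*}
and similarly $\mathbb{E}[e^{-\lambda(v-d)}] = e^{\lambda d}(1+2\lambda)^{-d/2}$ for $\lambda > 0$.

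For the upper tail, Markov gives for any $\lambda \in (0, 1/2)$ and $s > 0$:
\begin{equation*}
\mathbb{P}(v - d \ge s) \le \exp\bigl(-\lambda(d + s) - (d/2)\log(1 - 2\lambda)\bigr).
\end{equation*}
Differentiating in $\lambda$ and setting to zero yields the optimal choice $\lambda^* = s/(2(d+s))$, at which the exponent equals $-s/2 + (d/2)\log(1 + s/d)$. Substituting $s = 2\sqrt{d\alpha} + 2\alpha$ and writing $x = \sqrt{2\alpha/d}$ so that $s/d = \sqrt{2}\,x + x^2$, the desired inequality $\mathbb{P}(v - d \ge s) \le e^{-\alpha}$ reduces to the one-variable claim
\begin{equation*}
\log(1 + \sqrt{2}\,x + x^2) \le \sqrt{2}\,x,
\end{equation*}
which follows because the Taylor expansion of $e^{\sqrt{2}\,x}$ dominates $1 + \sqrt{2}\,x + x^2$ term by term for $x \ge 0$.

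For the lower tail I would apply Markov to $d - v$: for $\lambda > 0$,
\begin{equation*}
\mathbb{P}(d - v \ge t) \le \exp\bigl(\lambda(d - t) - (d/2)\log(1 + 2\lambda)\bigr).
\end{equation*}
Optimizing gives $\lambda^* = t/(2(d - t))$ (valid when $t < d$; the case $t \ge d$ is trivial since $v \ge 0$), at which the exponent equals $t/2 + (d/2)\log(1 - t/d)$. Setting $t = 2\sqrt{d\alpha}$ and writing $u = t/d$, so $2\alpha/d = u^2/2$, the bound $e^{-\alpha}$ reduces to
\begin{equation*}
-\log(1 - u) - u \ge u^2/2,
\end{equation*}
which is immediate from the Taylor expansion $-\log(1-u) = u + u^2/2 + u^3/3 + \cdots$ for $u \in [0, 1)$.

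There is no real obstacle here: the entire argument is the classical Cramér--Chernoff template, and both inequalities collapse to easy Taylor-series comparisons once the optimal $\lambda$ is substituted. The only delicate point is the algebraic choice of reparameterization ($x = \sqrt{2\alpha/d}$ for the upper tail and $u = 2\sqrt{\alpha/d}$ for the lower tail), which is precisely designed to make the polynomial-versus-logarithm comparison match the Taylor series exactly.
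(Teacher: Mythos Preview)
Your proof is correct and follows the standard Cram\'er--Chernoff argument that Laurent and Massart use in the cited reference; the paper itself does not give a proof of this lemma but simply quotes it as a known technical fact. The optimization and the two Taylor-series comparisons you give are exactly the right reductions, and the edge case $t\ge d$ for the lower tail is handled correctly.
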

\begin{lemma}[Lower bound for chi-Square tails \cite{inglot2010inequalities}]
    \label{lemma:chi_square_inglot}
    Let $v$ be a chi-square random variable with degree of freedom $d \ge 2$. Then for any $u \ge d-1$,
    \begin{align*}
        \frac{1 - e^{-2}}{2} \frac{u}{u - d + 2\sqrt{d}} \Eps_d(u) \le P(v \ge u) \le \frac{1}{\sqrt{\pi}} \frac{u}{u - d + 2} \Eps_d (u),
    \end{align*}
    where $\Eps_d(u) = \exp(-(1/2)(u - d - (d-2)\log u + (d-1) \log d) )$.
\end{lemma}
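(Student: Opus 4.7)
The plan is to prove the two-sided tail bound by direct integration of the chi-square density combined with sharp non-asymptotic Stirling bounds on $\Gamma(d/2)$. Starting from $P(v \geq u) = \frac{1}{2^{d/2}\Gamma(d/2)} \int_u^\infty x^{(d-2)/2} e^{-x/2}\,dx$, I first normalize by factoring the integrand at $x=u$: after the substitution $y=x-u$, the probability reads
\[
P(v \geq u) = \frac{u^{(d-2)/2} e^{-u/2}}{2^{d/2}\Gamma(d/2)} \int_0^\infty (1+y/u)^{(d-2)/2} e^{-y/2}\,dy.
\]
Noting that the prefactor $u^{(d-2)/2} e^{-u/2} / (2^{d/2}\Gamma(d/2))$ will, up to absolute constants and a Stirling correction, match $\Eps_d(u)$, the task reduces to estimating the normalized integral $I(u) := \int_0^\infty (1+y/u)^{(d-2)/2} e^{-y/2}\,dy$ from above and below, and then pairing those estimates with Stirling bounds on $\Gamma(d/2)$ in the correct direction.

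For the upper bound, I would use the elementary inequality $\log(1+t) \leq t$ to write $(1+y/u)^{(d-2)/2} \leq \exp((d-2)y/(2u))$. Integration then yields $I(u) \leq \frac{2u}{u - d + 2}$, valid under $u \geq d-1 > d-2$. Combining this with the Stirling \emph{lower} bound $\Gamma(n) \geq \sqrt{2\pi}\,n^{n-1/2} e^{-n}$ at $n = d/2$, the prefactor becomes at most $\frac{1}{\sqrt{\pi}} \cdot \frac{e^{d/2}}{d^{(d-1)/2}} \cdot u^{(d-2)/2} e^{-u/2}$, giving precisely $\frac{1}{\sqrt{\pi}} \cdot \frac{u}{u-d+2}\,\Eps_d(u)$ as claimed.

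For the lower bound, a pointwise comparison of $(1+y/u)^{(d-2)/2}$ to a single exponential is too loose to recover the denominator $u-d+2\sqrt{d}$. Instead, my plan is to restrict the integration to a window $y \in [0,4]$, on which $\int_0^4 e^{-y/2}\,dy = 2(1-e^{-2})$ accounts for the factor $(1-e^{-2})$, and then to lower bound $(1+y/u)^{(d-2)/2}$ on that window via a second-order (tangent-parabola) comparison. The key refinement is to replace the first-order log inequality by $\log(1+t) \geq t - t^2/2$ and to track how the quadratic correction, together with the Robbins-type refinement of Stirling $\Gamma(n) \leq \sqrt{2\pi}\,n^{n-1/2} e^{-n} e^{1/(12n)}$ (applied at $n=d/2$, where $1/(6d) = O(1/\sqrt{d})$ can be absorbed), produces the additive $2\sqrt{d}$ correction in the denominator. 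The factor of $1/2$ in front then arises from the comparison $(1-e^{-2})/\sqrt{\pi}$ versus $(1-e^{-2})/2$ once all Stirling error terms are bookkept.

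The main obstacle is getting the precise $2\sqrt{d}$ (rather than $2$) in the lower-bound denominator: this is not produced by a first-order log-inequality argument alone and requires a careful second-order analysis, together with matching the multiplicative $O(e^{1/(12\cdot d/2)})$ Stirling error on $\Gamma(d/2)$ to the geometric decay on the truncated interval. Since the statement is quoted verbatim from \cite{inglot2010inequalities}, the rigorous bookkeeping is as in that reference; I would follow the above skeleton and cite Robbins's explicit Stirling inequality to close the constants exactly, then verify the resulting bound at $u = d-1$ (the boundary case) to confirm the constants $\frac{1-e^{-2}}{2}$ and $\frac{1}{\sqrt{\pi}}$ are sharp in the prescribed form.
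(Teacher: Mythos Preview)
The paper does not prove this lemma at all: it is stated in the ``Useful Technical Lemmas'' appendix with a direct citation to \cite{inglot2010inequalities} and no accompanying argument, and is simply invoked later (in the proofs of Lemmas~\ref{lemma:vl2_conditioned_vl2} and~\ref{lemma:vp_conditioned_vl2}). So there is no ``paper's own proof'' to compare against; your proposal is already more than the paper provides.

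On the merits of your sketch: the upper-bound half is essentially complete and correct. Your substitution $y=x-u$, the bound $(1+y/u)^{(d-2)/2}\le e^{(d-2)y/(2u)}$, and the Stirling lower bound $\Gamma(n)\ge\sqrt{2\pi}\,n^{n-1/2}e^{-n}$ at $n=d/2$ combine exactly to $\frac{1}{\sqrt{\pi}}\frac{u}{u-d+2}\,\Eps_d(u)$, as you can verify by noting that the prefactor $\frac{u^{(d-2)/2}e^{-u/2}}{2^{d/2}\Gamma(d/2)}$ is bounded above by $\frac{1}{2\sqrt{\pi}}\Eps_d(u)$ after Stirling. For the lower bound, your plan (truncate to $y\in[0,4]$ to produce the $(1-e^{-2})$ factor, pair a Robbins upper bound on $\Gamma(d/2)$ with a second-order log inequality) is the right shape, but as you yourself note, the step that turns a quadratic correction into the specific denominator $u-d+2\sqrt{d}$ is not written out. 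That is the only genuine gap, and since both you and the paper ultimately defer to \cite{inglot2010inequalities} for the constants, it is not a defect relative to the paper's treatment.
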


\begin{lemma}[Sub-Gaussian norm \cite{vershynin2010introduction}]
    $v$ is a sub-Gaussian random vector in $d$-dimensional space if and only if for any unit vector $s \in \mathbb{S}^{d-1}$, there exists a finite value $K > 0$ such that
    \begin{align*}
        \sup_{p \ge 1} p^{-1/2} \E[ |\vdot{v}{s}|^p ]^{1/p} \le K.
    \end{align*}
    We denote the sub-Gaussian norm of $v$ as $\|v\|_{\psi_2} \le K$. Furthermore, the tail probability is bounded by
    \begin{align*}
        P(\vdot{v}{s} \ge t) \le \exp(-c t^2 / K^2),
    \end{align*}
    for some universal constant $c > 0$.
\end{lemma}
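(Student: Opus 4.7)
The plan is to establish the claimed equivalence by a short cycle of implications for the one-dimensional projection $\langle v, s\rangle$ with $s \in \mathbb{S}^{d-1}$, and then to read off the one-sided tail bound as a consequence.

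First, I would start from the sub-Gaussian tail hypothesis $P(|\langle v, s\rangle| \ge t) \le 2\exp(-t^2/(2\sigma^2))$ and derive the moment bound. Integration by parts gives $\E[|\langle v,s\rangle|^p] = \int_0^\infty p t^{p-1} P(|\langle v,s\rangle| \ge t)\, dt$, and plugging in the tail estimate followed by the change of variables $u = t^2/(2\sigma^2)$ yields
\begin{align*}
\E[|\langle v,s\rangle|^p] \le p\, (2\sigma^2)^{p/2}\, \Gamma(p/2).
\end{align*}
Using the elementary bound $\Gamma(p/2) \le (p/2)^{p/2}$ for $p \ge 2$, one extracts $\E[|\langle v,s\rangle|^p]^{1/p} \le C \sigma \sqrt{p}$ for a universal constant $C$; the range $1 \le p < 2$ follows by monotonicity of $L^p$-norms on a probability space. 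This yields the moment bound with $K \le C\sigma$.

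Second, I would derive the tail bound from the moment hypothesis. Markov's inequality applied at the $p$-th moment level gives
\begin{align*}
P(\langle v, s\rangle \ge t) \le P(|\langle v, s\rangle| \ge t) \le \frac{\E[|\langle v, s\rangle|^p]}{t^p} \le \left(\frac{K\sqrt{p}}{t}\right)^p.
\end{align*}
Optimizing over $p \ge 1$ by choosing $p = t^2/(eK^2)$ (valid whenever $t \ge \sqrt{e}\,K$) produces $P(\langle v, s\rangle \ge t) \le \exp(-t^2/(2eK^2))$, and for the residual range $t < \sqrt{e}\,K$ the inequality $\exp(-ct^2/K^2)\ge \exp(-ce)$ is satisfied trivially after enlarging the constant. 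This establishes $P(\langle v, s\rangle \ge t) \le \exp(-c t^2 / K^2)$ with an absolute constant $c > 0$, as claimed.

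The main obstacle is purely bookkeeping of universal constants: matching the multiplicative constants that come out of the Gamma-function estimate and the choice of $p$ in the Markov step, so that the final exponent has a constant $c$ independent of $v$, $s$, and the dimension $d$. No step requires a new idea; the argument is standard (see, e.g., Proposition 2.5.2 of Vershynin's \emph{High-Dimensional Probability}), and since both directions operate on the scalar random variable $\langle v, s\rangle$, no uniformity over $\mathbb{S}^{d-1}$ is needed.
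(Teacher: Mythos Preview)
Your proposal is correct and follows the standard argument from Vershynin's text (the cycle of implications among moment growth, tail decay, and sub-Gaussianity for the scalar $\langle v,s\rangle$, via integration by parts in one direction and Markov's inequality with optimized exponent in the other). Note that the paper itself does not supply a proof of this lemma---it is stated as a cited fact from \cite{vershynin2010introduction}---so there is no paper-specific argument to compare against; your write-up is precisely the proof one finds in the cited reference.
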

\begin{lemma}[Sub-exponential norm \cite{vershynin2010introduction}]
    $X$ is a sub-exponential random variable if and only if there exists a finite value $K > 0$ such that
    \begin{align*}
        \sup_{p \ge 1} p^{-1} \E[ |X|^p ]^{1/p} \le K.
    \end{align*}
    We denote the sub-exponential norm of $v$ as $\|v\|_{\psi_1} \le K$. Furthermore, the tail probability is bounded by
    \begin{align*}
        P(X \ge t) \le \exp(-c \min(t/K, t^2 / K^2)),
    \end{align*}
    for some universal constant $c > 0$.
\end{lemma}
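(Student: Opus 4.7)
The lemma is a classical characterization from high-dimensional probability, so my plan is essentially the textbook Markov-plus-optimization argument, organized around the moment condition and the tail bound. The ``only if'' direction (moment condition implies tail bound) is the substantive part, and the converse is a routine integration.

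First I would derive the tail bound from the moment condition. For any $p \ge 1$, Markov's inequality applied to $|X|^p$ gives
\begin{align*}
P(|X| \ge t) \le \frac{\E[|X|^p]}{t^p} \le \left(\frac{pK}{t}\right)^p,
\end{align*}
using the assumed bound $\E[|X|^p]^{1/p} \le p K$. To obtain the linear-in-$t$ decay, I would choose $p = t/(eK)$ for $t \ge eK$, so that $(pK/t)^p = e^{-t/(eK)}$. This gives $P(|X| \ge t) \le \exp(-c t/K)$ in the regime $t \gtrsim K$, which is the $t/K$ branch of the $\min$. For the $t^2/K^2$ branch in the small-$t$ regime, I would either invoke a separate second-moment estimate (the $p=2$ case of the moment bound controls $\E[X^2] \le 4K^2$, giving a quadratic tail in a neighborhood of zero), or, more cleanly, appeal to the MGF route below.

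A cleaner alternative, which I would prefer in the write-up, is to first show that the moment condition implies existence of the moment generating function: Taylor-expanding and summing the polynomial moment growth yields $\E[\exp(\lambda |X|)] \le 2$ for all $|\lambda| \le c/K$, with convergence coming precisely from the $p$-linear scaling of $\E[|X|^p]^{1/p}$. Chernoff's inequality applied to this MGF, optimized over $\lambda$ in its valid range, recovers both branches of the $\min$-tail bound uniformly, without case analysis. The converse direction (tail bound implies moment condition) is then a direct integration using $\E[|X|^p] = p\int_0^\infty t^{p-1} P(|X| \ge t)\,dt$, split at $t = K$; each piece is a gamma-type integral, and the result is $\E[|X|^p]^{1/p} \le C p K$ for some absolute constant $C$ that can be absorbed into the definition of the norm.

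The main obstacle is the quadratic branch of the tail bound: a pure $p$-th moment Markov argument produces only the linear rate $\exp(-ct/K)$ for $t \gtrsim K$, and extracting the Gaussian-like rate $\exp(-ct^2/K^2)$ in the small-$t$ regime requires the MGF machinery or an equivalent truncation argument. I would default to the MGF approach since it handles both regimes in one optimization and also makes the equivalence of the two formulations transparent.
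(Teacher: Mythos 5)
The paper does not prove this lemma at all: it appears in the ``Useful Technical Lemmas'' appendix as a standard fact cited from \cite{vershynin2010introduction}, so there is no in-paper argument to compare against. Your proposal is the standard textbook proof of that cited fact and is essentially correct.

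Two small points are worth tightening. First, your fallback for the small-$t$ regime --- that the $p=2$ moment bound ``gives a quadratic tail in a neighborhood of zero'' --- is not literally right: Chebyshev with $\E[X^2]\le 4K^2$ yields only polynomial decay $4K^2/t^2$, not a Gaussian-type rate. What actually saves the regime $t \lesssim K$ is that there $\min(t/K,\,t^2/K^2) = O(1)$, so the claimed bound is vacuous up to a universal multiplicative constant; any correct proof (including your Markov-plus-optimization with $p = t/(eK)$, valid only for $t \ge eK$) therefore produces a prefactor such as $2$ or $e$ in front of the exponential. The lemma as stated in the paper silently drops that prefactor, and is in fact false without it for, say, a point mass at $K/100$ and small $t>0$; this is a standard harmless sloppiness, immaterial to how the lemma is used downstream, but your write-up should carry the constant. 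Second, your preference for the MGF route is well placed --- it is the argument in \cite{vershynin2010introduction} and handles both branches and the converse (integrating the tail via $\E[|X|^p] = p\int_0^\infty t^{p-1}P(|X|\ge t)\,dt$) uniformly --- so I would commit to it rather than presenting the Markov argument as the primary path and the MGF as an alternative.
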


\begin{lemma}
    \label{lemma:vp_conditioned_vu}
    Suppose $v \sim \mathcal{N}(0, I_d)$, $u, s \in \mathbb{S}^{d-1}$ are any fixed unit vectors, and $\alpha > 0$ is some constant.
    Then the following holds:
    \begin{align*}
        \E_{v \sim \mathcal{N}(0, I_d)} [ |\vdot{v}{s}|^p | |\vdot{v}{u}| \ge \alpha] \le (2\alpha)^{p} + 4 \alpha \exp(-\alpha^2/2) (2p)^{p/2}.
    \end{align*}
\end{lemma}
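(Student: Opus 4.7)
The natural plan is to reduce the $d$-dimensional problem to a two-dimensional Gaussian problem via orthogonal decomposition. Write $s = c u + \tau w$ with $c = \vdot{s}{u} \in [-1,1]$, $w \in \mathbb{S}^{d-1}$ orthogonal to $u$, and $\tau = \sqrt{1-c^2}\in [0,1]$, so that $c^2 + \tau^2 = 1$ and $|c|,\tau\le 1$. Setting $a = \vdot{v}{u}$ and $b = \vdot{v}{w}$, rotational invariance of the standard Gaussian yields $a,b \sim \mathcal{N}(0,1)$ independently, and
\[
    \vdot{v}{s} = c a + \tau b, \qquad |\vdot{v}{s}| \le |a| + |b|.
\]
This reduces the question to bounding a moment of $|ca+\tau b|$ under $|a|\ge \alpha$ with $a\perp\!\!\!\perp b$.

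The second step is a layer-cake split at the threshold $2\alpha$:
\[
    \E\bigl[|\vdot{v}{s}|^p \bigm| |a|\ge \alpha\bigr]
    = \int_0^\infty p\,t^{p-1}\,P\bigl(|\vdot{v}{s}|\ge t \bigm| |a|\ge\alpha\bigr)\,dt.
\]
On $[0,2\alpha]$ use the trivial bound $P(\cdot)\le 1$ to get the leading $(2\alpha)^p$. For $t>2\alpha$, decompose the event $\{|ca+\tau b|\ge t\}$ into the union $\{|ca|\ge t/2\}\cup\{|\tau b|\ge t/2\}$. The first sub-event requires $|a|\ge t/(2|c|)\ge t/2>\alpha$, which after normalizing by $P(|a|\ge\alpha)$ via Mill's ratio $P(|a|\ge\alpha)\ge \phi(\alpha)/\alpha$ yields a tail integral that is smaller than the second contribution and can be absorbed. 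The second sub-event depends only on $b$ (independent of $a$), so its conditional probability is simply $P(|b|\ge t/(2\tau))\le 2\exp(-t^2/8)$.

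The final step combines these tail estimates. I would bound the excess moment via
\[
    \int_{2\alpha}^\infty p t^{p-1} P(|\vdot{v}{s}|\ge t \mid |a|\ge\alpha)\,dt
    \;\lesssim\; \alpha \exp(-\alpha^2/2) \cdot \E[|b|^p],
\]
using the Mill's-ratio factor $1/P(|a|\ge\alpha)\le O(\alpha\exp(\alpha^2/2))$ together with a Gaussian tail computation and the moment estimate $\E[|b|^p]\le (2p)^{p/2}$ (which follows from $\E[|b|^p]^{1/p}\le \sqrt{2p}$, a standard sub-Gaussian moment bound). Careful substitution $u = t^2/2$ in the tail integrals, together with Stirling-type bounds on $\Gamma((p+1)/2)$, lines up the constants to produce exactly $4\alpha\exp(-\alpha^2/2)(2p)^{p/2}$.

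The main obstacle is the delicate bookkeeping of the multiplicative constants in the second term, where four loose inequalities (triangle inequality, layer-cake split, Mill's ratio, Gaussian tail integration) must combine cleanly. In particular, the factor $\alpha$ (rather than $1/\alpha$ or no factor) arises from the Mill's-ratio normalization $1/P(|a|\ge\alpha)$ and must survive intact after being multiplied against the Gaussian tail integral on $t>2\alpha$; and the factor $(2p)^{p/2}$ must emerge from the $p$-th absolute moment of the independent coordinate $b$ rather than from the tail of $a$. Once these two isolated sources of the correction are identified, the remaining manipulations are routine.
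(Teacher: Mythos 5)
Your high-level skeleton --- decompose $s$ into its component along $u$ and an orthogonal component, so that $\vdot{v}{s}=ca+\tau b$ with $a=\vdot{v}{u}$ and $b$ an independent standard Gaussian --- is the same as the paper's. The execution diverges at the next step, and the divergence introduces a genuine gap. The paper applies Minkowski's inequality to separate the two directions, then splits the \emph{conditioned coordinate} $a$ at the value $2\alpha$: the mass on $\{\alpha\le|a|\le2\alpha\}$ is charged wholesale to $(2\alpha)^p$, and the remainder $\E[|a|^p\indic_{|a|\ge2\alpha}]/P(|a|\ge\alpha)$ is handled by Cauchy--Schwarz together with $\sqrt{P(|a|\ge2\alpha)}/P(|a|\ge\alpha)\le O(\alpha e^{-\alpha^2/2})$, which is where the exponentially small second term comes from; the orthogonal component is unconditioned by independence. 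You instead split the \emph{threshold} $t$ for $|\vdot{v}{s}|$ at $2\alpha$ and then union-bound $\{|ca+\tau b|\ge t\}\subseteq\{|ca|\ge t/2\}\cup\{|\tau b|\ge t/2\}$.

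The gap: for $t$ just above $2\alpha$ the first sub-event $\{|ca|\ge t/2\}$ can coincide with the conditioning event (take $s=u$, so $|c|=1$: at $t=2\alpha$ it is exactly $\{|a|\ge\alpha\}$), so its conditional probability there is $1$, not exponentially small. Concretely, for $|c|=1$ a change of variables gives
\begin{align*}
\int_{2\alpha}^{\infty}pt^{p-1}\,P\bigl(|a|\ge t/2\,\big|\,|a|\ge\alpha\bigr)\,dt=2^{p}\bigl(\E[|a|^{p}\mid|a|\ge\alpha]-\alpha^{p}\bigr),
\end{align*}
which for $p=1$ and large $\alpha$ equals $2(\E[|a|\mid|a|\ge\alpha]-\alpha)\approx2/\alpha$. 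This is only polynomially small in $1/\alpha$, so it is \emph{not} dominated by the $b$-contribution (which is exponentially small in $\alpha^{2}$), and your claimed display $\int_{2\alpha}^{\infty}pt^{p-1}P(\cdot)\,dt\lesssim\alpha e^{-\alpha^{2}/2}\E[|b|^{p}]$ is false. The constants therefore cannot be made to ``line up'' along this route: $(2\alpha)^{p}+2/\alpha$ already exceeds $(2\alpha)^{p}+4\alpha e^{-\alpha^{2}/2}(2p)^{p/2}$ for large $\alpha$. The fix is to move the $2\alpha$ split onto the coordinate $a$ itself, as the paper does, so that the entire borderline region $\alpha\le|a|\le2\alpha$ is absorbed into the $(2\alpha)^{p}$ term rather than re-integrated against $pt^{p-1}$ beyond $2\alpha$.
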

\begin{proof}
    Let us first decompose $s = s_u + s_p$, where $s_u$ is parallel to $u$ and $s_p$ is orthogonal to $u$. We can rewrite the target quantity as
    \begin{align*}
        \left( \frac{\E[|\vdot{v}{s_u} + \vdot{v}{s_p}|^p \indic_{\vdot{v}{u} \ge \alpha}]} {P(\vdot{v}{u} \ge \alpha)} \right)^{1/p} &\le \frac{ \E[|\vdot{v}{s_u}|^p \indic_{\vdot{v}{u} \ge \alpha}]^{1/p} + \E[|\vdot{v}{s_p}|^p \indic_{\vdot{v}{u} \ge \alpha}]^{1/p} } {P(\vdot{v}{u} \ge \alpha)^{1/p}} \\
        &= \|s_u\| \left( \E[|\vdot{v}{u}|^p | \vdot{v}{u} \ge \alpha] \right)^{1/p} + \E[|\vdot{v}{s_p}|^p]^{1/p}.
    \end{align*}
    Thus, it boils down to upper-bound $\E_{Z \sim \mathcal{N}(0,1)} [Z^p | Z \ge \alpha]$. This can be bounded by
    \begin{align*}
        \E_Z [Z^p \indic_{Z \ge \alpha}] / P(Z \ge \alpha) &= \E_Z [Z^p \indic_{2\alpha \ge Z \ge \alpha}] / P(Z \ge \alpha) + \E_Z [Z^p \indic_{Z \ge 2\alpha}] / P(Z \ge \alpha) \\
        &\le (2\alpha)^p + \sqrt{\E_Z[Z^{2p}]} \sqrt{P(Z \ge 2\alpha)} / P(Z \ge \alpha) \\
        &\le (2\alpha)^p + 2^{p/2} \Gamma(p+1/2)^{1/2} / \sqrt[4]{\pi} \exp(-\alpha^2) / (\exp(-\alpha^2/2) / (\sqrt{2\pi} 2 \alpha)) \\
        &\le (2\alpha)^p + 4 (2^{p/2}) p^{p/2} \alpha \exp(-\alpha^2/2).
    \end{align*}
\end{proof}

\begin{lemma}
    \label{lemma:chi_square_pnorm_bound}
    Let $v \sim \mathcal{N}(0, I_d)$. Then for any $p \ge 1$,
    \begin{align*}
        \E[\|v\|^p] = 2^{p/2} \Gamma \left((p+d)/2\right) / \Gamma (d/2) \le (d + p)^{p/2}.
    \end{align*}
\end{lemma}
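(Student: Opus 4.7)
The plan breaks cleanly into two independent parts: establishing the exact identity $\E[\|v\|^p] = 2^{p/2}\Gamma((p+d)/2)/\Gamma(d/2)$ by direct computation, and then bounding the Gamma ratio by $((d+p)/2)^{p/2}$ via a standard Gamma function inequality. Neither step presents a significant obstacle; this is a classical calculation whose chief purpose here is to provide a clean, dimension-explicit bound for later use.

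For the identity, I would begin by observing $\|v\|^2 = \sum_{i=1}^d v_i^2 \sim \chi^2_d$, so $\E[\|v\|^p] = \E[(\|v\|^2)^{p/2}]$ is the $(p/2)$-th moment of a chi-square variable. Plugging in the $\chi^2_d$ density $f(y) = y^{d/2-1}e^{-y/2}/(2^{d/2}\Gamma(d/2))$ for $y>0$ yields
\begin{align*}
    \E[(\|v\|^2)^{p/2}] \;=\; \frac{1}{2^{d/2}\Gamma(d/2)} \int_0^\infty y^{(p+d)/2 - 1} e^{-y/2}\, dy.
\end{align*}
The substitution $u = y/2$ converts the integral to $2^{(p+d)/2}\Gamma((p+d)/2)$, and after cancelling $2^{d/2}$ this is exactly $2^{p/2}\Gamma((p+d)/2)/\Gamma(d/2)$.

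For the inequality, the key ingredient is the bound $\Gamma(x+\alpha)/\Gamma(x) \le (x+\alpha)^\alpha$ valid for all $x,\alpha > 0$, which I would apply with $x = d/2$ and $\alpha = p/2$ to get $\Gamma((p+d)/2)/\Gamma(d/2) \le ((p+d)/2)^{p/2}$; multiplying by $2^{p/2}$ then gives the claimed $(d+p)^{p/2}$. To justify $\Gamma(x+\alpha)/\Gamma(x) \le (x+\alpha)^\alpha$ I would invoke Wendel's inequality $\Gamma(x+s) \le x^s\Gamma(x)$ for $s \in [0,1]$ (itself a consequence of the log-convexity of $\Gamma$), and iterate via the functional equation $\Gamma(x+1) = x\Gamma(x)$: writing $\alpha = n + s$ with $n \in \mathbb{Z}_{\ge 0}$ and $s \in [0,1)$, Wendel gives $\Gamma(x+n+s) \le (x+n)^s \Gamma(x+n)$, and the recursion unrolls $\Gamma(x+n) = \Gamma(x)\prod_{i=0}^{n-1}(x+i)$, where each factor is at most $x+\alpha$.

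If one only cared about the regime $p \le 2$, Jensen's inequality applied to the concave map $t \mapsto t^{p/2}$ would give the cleaner $\E[\|v\|^p] \le (\E[\|v\|^2])^{p/2} = d^{p/2} \le (d+p)^{p/2}$ in a single line. The Gamma function detour above is precisely what is needed to cover $p > 2$, where $t \mapsto t^{p/2}$ is convex and Jensen runs in the wrong direction; this is the only mildly delicate point in the argument.
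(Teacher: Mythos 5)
Your proof is correct and follows essentially the same route as the paper: the identity via the $\chi^2_d$ moment integral (which the paper simply cites as standard), and the bound on $\Gamma((p+d)/2)/\Gamma(d/2)$ by peeling off $\lfloor p/2\rfloor$ factors with the functional equation and handling the fractional remainder with a Wendel/Gautschi-type inequality, which is exactly the paper's argument. The only difference is cosmetic — you use Wendel's form $\Gamma(x+s)\le x^s\Gamma(x)$ where the paper quotes Gautschi's $\Gamma(x+1)/\Gamma(x+s)\le(x+1)^{1-s}$ — and you additionally write out the integral computation the paper omits.
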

\begin{proof}
    The first equality is standard and given in \cite{yan2017convergence}. From here, we can proceed as
    \begin{align*}
        \Gamma((p+d)/2) &= (d/2 + p/2)(d/2 + p/2 - 1) ... (d/2+\alpha+1) \Gamma(d/2 + \alpha) \\
        &\le ((d+p)/2)^{[p/2]} \Gamma(d/2 + \alpha),
    \end{align*}
    where $\alpha = p/2 - [p/2] \in [0,1)$ and $[p/2]$ is the largest integer that does not exceed $p/2$. We then use Gautschi's inequality for the ratio of Gamma functions \cite{wendel1948note} which states
    \begin{align*}
        \Gamma(x+1) / \Gamma(x+s) \le (x+1)^{1-s}.
    \end{align*}
    Applying this with $x+s = d/2$ and $x+1 = d/2 + \alpha$, we get
    \begin{align*}
        ((d+p)/2)^{[p/2]} \Gamma(d/2 + \alpha)/\Gamma(d/2) &\le ((d+p)/2)^{[p/2]} (d+\alpha)^{\alpha} \le ((d+p)/2)^{p/2}.
    \end{align*}
    Thus, we can conclude 
    \begin{align*}
        \E[\|v\|^p] = 2^{p/2} \Gamma \left((p+d)/2\right) / \Gamma (d/2) \le (d + p)^{p/2}.
    \end{align*}
\end{proof}

The following lemmas are the upper bound on the $L_p$ norm of random variables conditioned on some events. We use them in an important way, as they help us bound expected errors from bad events.
\begin{lemma}
    \label{lemma:vl2_conditioned_vl2}
    Let $v \sim \mathcal{N}(0, I_d)$. Let $p \ge 1$ and $r^2 \ge d$. Then,
    \begin{align*}
        \E[\|v\|^p | \|v\|^2 \ge r^2] \le (2r)^p + 4(d + 2p)^{p/2} \exp(-r^2/8).
    \end{align*}
\end{lemma}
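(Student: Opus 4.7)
The plan is to mirror the proof of Lemma \ref{lemma:vp_conditioned_vu}, splitting the conditioning event $\{\|v\|^2 \ge r^2\}$ into a ``moderate'' region where $\|v\|^p$ is bounded uniformly by $(2r)^p$, and a ``tail'' region where we use Cauchy--Schwarz together with the moment bound from Lemma \ref{lemma:chi_square_pnorm_bound} and the chi-square tail from Lemma \ref{lemma:chi_square_laurent}.

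Concretely, I would first write
\[
\E[\|v\|^p \mid \|v\|^2 \ge r^2] \;\le\; (2r)^p \;+\; \frac{\E\bigl[\|v\|^p \indic_{\|v\|^2 > 4r^2}\bigr]}{P(\|v\|^2 \ge r^2)},
\]
using that on the event $\{r^2 \le \|v\|^2 \le 4r^2\}$ the random variable $\|v\|^p$ is deterministically at most $(2r)^p$. For the tail numerator, I would apply Cauchy--Schwarz to get
\[
\E\bigl[\|v\|^p \indic_{\|v\|^2 > 4r^2}\bigr] \;\le\; \sqrt{\E[\|v\|^{2p}]\, P(\|v\|^2 > 4r^2)},
\]
and then invoke Lemma \ref{lemma:chi_square_pnorm_bound} to bound $\E[\|v\|^{2p}] \le (d+2p)^p$. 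For the tail probability, I would apply Lemma \ref{lemma:chi_square_laurent} with $\alpha = r^2/2$: since $r^2 \ge d$, one checks $d + 2\sqrt{d \cdot r^2/2} + r^2 \le r^2 + r\sqrt{2d} + r^2 \le (2+\sqrt{2})\,r^2 \le 4r^2$, giving $P(\|v\|^2 > 4r^2) \le e^{-r^2/2}$, hence $\sqrt{P(\|v\|^2 > 4r^2)} \le e^{-r^2/4}$. Combining yields
\[
\E[\|v\|^p \mid \|v\|^2 \ge r^2] \;\le\; (2r)^p \;+\; \frac{(d+2p)^{p/2}\, e^{-r^2/4}}{P(\|v\|^2 \ge r^2)}.
\]

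The last step is to absorb the denominator into the $e^{-r^2/8}$ factor, i.e., verify $e^{-r^2/4}/P(\|v\|^2 \ge r^2) \le 4 e^{-r^2/8}$. This is where the argument is most delicate, and it is the main obstacle: a naive application of a uniform lower bound on $P(\chi^2_d \ge r^2)$ from Lemma \ref{lemma:chi_square_inglot} is too weak in the regime $r^2 \gg d$. The resolution is a short case split: when $r^2$ is comparable to $d$, Inglot's bound gives $P(\chi^2_d \ge r^2) = \Omega(1)$, so the ratio is at most $e^{-r^2/8}$ times an absolute constant; when $r^2$ is much larger than $d$, the leading term $(2r)^p$ already dominates the true conditional expectation $\E[\|v\|^p \mid \|v\|^2 \ge r^2] \approx r^p$, so the tail term need only be controlled loosely via a cruder Chernoff-type bound on $P(\chi^2_d \ge r^2)$. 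Together these two regimes yield the claimed inequality with the stated constant $4$.
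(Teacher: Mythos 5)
Your decomposition is exactly the paper's: split off the region $r^2\le\|v\|^2\le 4r^2$ where $\|v\|^p\le(2r)^p$, apply Cauchy--Schwarz to the remaining tail, and bound $\E[\|v\|^{2p}]\le(d+2p)^p$ via Lemma \ref{lemma:chi_square_pnorm_bound}. The gap is in the last step. The inequality you need, $P(\|v\|^2\ge r^2)\ge \tfrac14 e^{-r^2/8}$, is false once $r^2\gg d$: the chi-square tail satisfies $P(\chi^2_d\ge r^2)\asymp \exp\bigl(-\tfrac12(r^2-d-(d-2)\log(r^2/d))\bigr)$, which decays like $e^{-r^2/2}$ up to polynomial factors (e.g.\ $d=1$, $r^2=100$ gives $P\approx 10^{-23}$ versus $\tfrac14 e^{-12.5}\approx 10^{-6}$). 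Consequently, with your Laurent--Massart bound $\sqrt{P(\|v\|^2\ge 4r^2)}\le e^{-r^2/4}$, the second term of your upper bound is of order $(d+2p)^{p/2}\,e^{-r^2/4}/P(\|v\|^2\ge r^2)\approx (d+2p)^{p/2}e^{+r^2/4}$, which diverges. Your proposed case split does not repair this: the observation that $(2r)^p$ dominates the \emph{true} conditional expectation does not permit discarding a divergent term from an \emph{upper bound} you have already committed to, and a ``cruder'' bound on $P(\chi^2_d\ge r^2)$ only makes the denominator problem worse. Nor can the Laurent--Massart step be tuned to fix it: the largest admissible $\alpha$ with $d+2\sqrt{d\alpha}+2\alpha\le 4r^2$ under $r^2\ge d$ gives at best $\sqrt{P(\|v\|^2\ge 4r^2)}\lesssim e^{-0.33 r^2}$, still not enough to beat the $e^{+r^2/2}$ from the denominator.

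The paper's proof closes this by using Inglot's \emph{two-sided} bounds (Lemma \ref{lemma:chi_square_inglot}) for both probabilities: the upper bound at $4r^2$ has exponent $-\tfrac12(4r^2-d-(d-2)\log(4r^2)+(d-1)\log d)$, so after the square root the numerator decays like $e^{-r^2+\cdots}$, which beats the $e^{+r^2/2+\cdots}$ coming from the reciprocal of the matching lower bound at $r^2$; the dimension-dependent and logarithmic terms in the two exponents have the same structure and cancel up to a quantity controlled by $r^2\ge d$, leaving $\frac{\sqrt{P(\|v\|^2\ge 4r^2)}}{P(\|v\|^2\ge r^2)}\le 2\sqrt2\,e^{-r^2/8}$ uniformly. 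That pairing of matching upper and lower tail estimates is the ingredient your argument is missing.
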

\begin{proof}
    Similarly to the one-dimensional case, we start with
    \begin{align*}
        \E[\|v\|^p | \|v\|^2 \ge r^2] &\le (2r)^p + \sqrt{E[\|v\|^{2p}]} \frac{\sqrt{P(\|v\|^2 \ge 4r^2)}}{P(\|v\|^2 \ge r^2)} \\
        &\le (2r)^p + (d+2p)^{p/2} \frac{\sqrt{P(\|v\|^2 \ge 4r^2)}}{P(\|v\|^2 \ge r^2)}.
    \end{align*}
    We can use the inequalities for lower and upper bounds for the tail probability of a chi-square distribution from Proposition 3.1 in \cite{inglot2010inequalities}. From the inequality, \begin{align*}
        P( \|v\|^2 \ge 4r^2 ) &\le 2 \exp \left( -\frac{1}{2} (4r^2 - d - (d-2) \log(4r^2) + (d-1) \log d)) \right), \\
        P( \|v\|^2 \ge r^2 ) &\ge \frac{1}{2} \exp \left( -\frac{1}{2} (r^2 - d - (d-2) \log(r^2) + (d-1) \log d)) \right).
    \end{align*}
    Using this, we can bound
    \begin{align*}
        \frac{\sqrt{P( \|v\|^2 \ge 4r^2 )}}{P(\|v\|^2 \ge r^2)} &\le 2\sqrt{2} \exp \left( -\frac{1}{2} r^2 - \frac{1}{4} (d + (d-2) \log(r^2)) + \frac{1}{4} ((d-2) \log (4d) + \log d) )) \right), \\
        &\le 2\sqrt{2} \exp(-r^2/8).
    \end{align*}
    Plugging in this relation, the lemma follows.
\end{proof}

\begin{lemma}
    \label{lemma:vp_conditioned_vl2}
    Let $v \sim \mathcal{N}(0, I_d)$. Let $p \ge 1$ and $r^2 = d + 2\sqrt{\alpha d} + 2\alpha$ with $\alpha > 8$. Then for any fixed unit vector $s \in \mathbb{S}^{d-1}$,
    \begin{align*}
        \E[|\vdot{v}{s}|^p | \|v\|^2 \ge r^2] \le (64 \alpha)^{p/2} + 4(8\alpha + 2p)^{p/2}.
    \end{align*}
\end{lemma}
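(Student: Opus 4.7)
The strategy mirrors Lemma~\ref{lemma:vl2_conditioned_vl2}: truncate $Z := \vdot{v}{s} \sim \mathcal{N}(0,1)$ at level $8\sqrt{\alpha}$, bound the bulk deterministically, and control the tail via Cauchy--Schwarz. Splitting on $\{|Z| \le 8\sqrt{\alpha}\}$ versus its complement and using $|Z|^p \le (8\sqrt{\alpha})^p$ on the bulk, one obtains
\[
\E[|Z|^p \indic_{\|v\|^2 \ge r^2}] \le (64\alpha)^{p/2} P(\|v\|^2 \ge r^2) + \E[|Z|^p \indic_{|Z| > 8\sqrt{\alpha}}],
\]
so that after dividing through by $P(\|v\|^2 \ge r^2)$ the problem reduces to bounding $\E[|Z|^p \indic_{|Z| > 8\sqrt{\alpha}}] / P(\|v\|^2 \ge r^2)$ by $4(8\alpha + 2p)^{p/2}$.

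For the numerator, Cauchy--Schwarz combined with Lemma~\ref{lemma:chi_square_pnorm_bound} (applied with $d = 1$, which gives $\E[Z^{2p}] \le (1 + 2p)^p$) and the Gaussian tail bound $P(|Z| > 8\sqrt{\alpha}) \le 2\exp(-32\alpha)$ yields
\[
\E[|Z|^p \indic_{|Z| > 8\sqrt{\alpha}}] \le \sqrt{2}\,(1 + 2p)^{p/2}\exp(-16\alpha).
\]
Since $\alpha > 8$ forces $1 + 2p \le 8\alpha + 2p$, the factor $(1 + 2p)^{p/2}$ is absorbed into $(8\alpha + 2p)^{p/2}$, and the task reduces to showing $P(\|v\|^2 \ge r^2) \ge \exp(-16\alpha)/(2\sqrt{2})$. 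Applying Lemma~\ref{lemma:chi_square_inglot} with $u = r^2$ and writing the exponent as $f(r^2, d) = (r^2 - d) - (d-2)\log(r^2/d) + \log d$, one expands $\log(1 + y/d) \ge y/d - (y/d)^2/2$ with $y = 2\sqrt{\alpha d} + 2\alpha$ to derive $f(r^2, d) \le O(\alpha) + \log d$; the pre-factor $r^2/(r^2 - d + 2\sqrt{d})$ in Inglot's inequality conveniently compensates for the $1/\sqrt{d}$ produced by the $\log d$ term. This yields $P(\|v\|^2 \ge r^2) \ge c \exp(-C\alpha)$ for absolute constants $c, C$, which is comfortably above $\exp(-16\alpha)/(2\sqrt{2})$ once $\alpha > 8$.

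The main obstacle is establishing the uniform-in-$d$ lower bound on $P(\|v\|^2 \ge r^2)$. The term $(d-2)\log(r^2/d)$ behaves qualitatively differently in the regimes $d \ll \alpha$, $d \asymp \alpha$, and $d \gg \alpha$, and a naive use of $\log(r^2/d) \ge 0$ would waste a troublesome $\sqrt{\alpha d}$ in the exponent. Handling these regimes carefully (by Taylor expansion for large $d$, a direct bound for small $d$, and the Inglot pre-factor to cancel $1/\sqrt{d}$) is the only technical part; everything else parallels the proof of Lemma~\ref{lemma:vl2_conditioned_vl2}. Fortunately the enormous exponential slack $\exp(-15\alpha)$ afforded by $\alpha > 8$ easily absorbs any polynomial-in-$(\alpha, d)$ corrections arising in this step.
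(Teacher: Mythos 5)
Your argument is correct, but it takes a genuinely different route from the paper's. The paper conditions on the coordinate $v_1=\vdot{v}{s}$, rewrites the conditional expectation as $\E[|v_1|^p\,P(\|v_{2:d}\|^2\ge r^2-v_1^2)]/P(\|v\|^2\ge r^2)$, splits at the data-dependent threshold $v_1^2\ge b$ with $b=a-(d-2)\log(1+a/d)$, and controls the resulting ratio of chi-square tails pointwise in $v_1$ using both directions of Lemma \ref{lemma:chi_square_inglot}; it also needs a separate regime $\alpha>d/8$ where it falls back on Lemma \ref{lemma:vl2_conditioned_vl2}. You instead truncate $\vdot{v}{s}$ at the fixed level $8\sqrt{\alpha}$ and simply drop the conditioning event on the tail, paying only a single factor $1/P(\|v\|^2\ge r^2)$, so the entire argument rests on one unconditional lower bound $P(\|v\|^2\ge r^2)\ge c\exp(-C\alpha)$ with $C<16$. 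Your accounting of the one delicate point is right: after the second-order expansion the dangerous $2\sqrt{\alpha d}$ in $r^2-d$ cancels, leaving $a^2/(2d)\le 8\alpha$ when $\alpha\le d$ (and $a\le 4\alpha$ when $\alpha>d$), and the Inglot pre-factor $r^2/(r^2-d+2\sqrt{d})\gtrsim\sqrt{d}/\sqrt{\alpha}$ absorbs the $e^{-\frac{1}{2}\log d}$, so $C\le 9$ and the required inequality holds with huge slack for $\alpha>8$. What your approach buys is a substantially simpler, single-case proof that avoids the pointwise tail-ratio computation; what it gives up is sharpness in the regime $\alpha\ll d$, where the paper's conditioning argument yields the tighter $2(8\alpha)^{p/2}+4(2p)^{p/2}$ while your constant is dictated by the truncation level needed to beat $\exp(-C\alpha)$. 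Both suffice for the lemma as stated.
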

\begin{proof}
    Due to the rotational invariance of standard Gaussian distribution, without loss of generality, we can investigate $\E[|v_1|^p | \|v\|^2 \ge r^2]$. Let $a = r^2 - d$ and $b = a - (d-2) \log(1 + a/d)$. We first look at the case when $\alpha < d/8$ (thus $d > 64$ if $\alpha > 8$). In this case, first observe that $a/d = (2\sqrt{\alpha d} + 2\alpha) / d < 1$ and,
    \begin{align*}
        a^2/d &= 4(\alpha d + \alpha^2 + 2\alpha \sqrt{\alpha d})/d = 4(\alpha + \alpha/8 + \alpha/\sqrt{2}) \le 8\alpha, \\
        b &= a - (d-2) \log(1+ a/d) \le a - (d-2) (a/d - (a/d)^2/6) \\
        &\le 2a/d + (a^2/6d) (d-2)/d \le 2\alpha.
    \end{align*}
    Then we change the quantity as
    \begin{align*}
        \frac{\E[|v_1|^p \E[\indic_{\|v\|^2 \ge r^2 - v_1^2} | v_1] ] }{ P(\|v\|^2 \ge r^2) } &= \frac{\E[|v_1|^p P(\|u\|^2 \ge r^2 - v_1^2) ] }{ P(\|v\|^2 \ge r^2) },
    \end{align*}
    where $u \sim \mathcal{N}(0, I_{d-1})$. Continuing the process,
    \begin{align*}
        \frac{\E[|v_1|^p P(\|u\|^2 \ge r^2 - v_1^2) ] }{ P(\|v\|^2 \ge r^2) } &= \frac{\E[|v_1|^p \indic_{v_1^2 \ge b}] + \E[|v_1|^{p} \indic_{v_1^2 \le b}  P(\|u\|^2 \ge r^2 - v_1^2) ] }{ P(\|v\|^2 \ge r^2) } \\
        &\le \frac{\E[|v_1|^p|v_1^2 \ge b] P(v_1^2 \ge b) + \E[|v_1|^{p} \indic_{v_1^2 \le b} P(\|u\|^2 \ge r^2 - v_1^2) ] }{ P(\|v\|^2 \ge r^2) }. 
    \end{align*}
    
    Now using Lemma \ref{lemma:chi_square_inglot}, we have
    \begin{align*}
        P(\|v\|^2 \ge r^2) &\ge \frac{1-e^{-2}}{2} \frac{r^2}{r^2 - d + 2\sqrt{d}} \exp \left(-\frac{1}{2} (r^2 - d - (d-2)\log(r^2) + (d-1)\log d) \right), \\
        &\ge \frac{1-e^{-2}}{4} \exp \left(-\frac{1}{2} (a - (d-2)\log(r^2/d)) \right),
    \end{align*}
    and
    \begin{align*}
        P(\|u\|^2 \ge r^2 - v_1^2) &\le \frac{1}{\sqrt{\pi}} \frac{r^2 - v_1^2}{r^2 - v_1^2 - d + 1} \exp \left(-\frac{1}{2} (r^2 - v_1^2 - (d-1) - (d-3)\log(r^2 - v_1^2) + (d-2)\log d) \right) \\
        &= \frac{1}{\sqrt{\pi}} \frac{1}{a - v_1^2 + 1} \exp \left(-\frac{1}{2} (1 + a - v_1^2 - (d-2)\log((r^2 - v_1^2) / d)) \right).
    \end{align*}
    Therefore, we have that
    \begin{align*}
        \frac{\E[|v_1|^{p} \indic_{v_1^2 \le b} P(\|u\|^2 \ge r^2 - v_1^2) ]}{ P(\|v\|^2 \ge r^2) } &\le \frac{4 }{\sqrt{\pi}(1 - e^{-2}) \sqrt{2\pi}} \int_{-\sqrt{b}}^{\sqrt{b}} \frac{|v_1|^{p-1} |v_1|}{1+a-v_1^2} \exp \left(-\frac{1}{2} (1 - (d-2) \log((r^2 - v_1^2)/r^2) \right) \\
        &\le \frac{8 b^{(p-1)/2}}{\pi} \int_{0}^{\sqrt{b}} \frac{v_1}{1+a-v_1^2} \exp \left(-\frac{1}{2} + \frac{d-2}{2} \log(1 - v_1^2/r^2) \right) \\
        &\le \frac{8 b^{(p-1)/2}}{\pi} \int_0^{\sqrt{b}} \frac{v_1}{1+a-v_1^2} \le \frac{-4b^{(p-1)/2}}{\pi} \ln(1 + a - v_1^2)|_{0}^{\sqrt{b}} \\
        &\le \frac{4b^{(p-1)/2}}{\pi} \frac{1 + a}{1 + (d-2)\log (1 + a/d)}.
    \end{align*}
    
    Recall $a < d$, $\log(1 + a/d) \ge a/d - (a/d)^2 / 6$. Plugging this in, we have 
    \begin{align*}
        \frac{4b^{(p-1)/2}}{\pi} \frac{1 + a}{1 + (d-2)\log (1 + a/d)} &\le \frac{4b^{(p-1)/2}}{\pi} \frac{1+a}{1 + a(d-2)/d - (a/d)^2/6} \\
        &\le 4b^{(p-1)/2} \le 4 (2\alpha)^{(p-1)/2} \le (2\alpha)^{p/2}.
    \end{align*}
    On the other hand, we know from above that,
    \begin{align*}
        \E[|v_1|^p | v_1^2 \ge b] \le (2\sqrt{b})^p + (2p)^{p/2} \le (8 \alpha)^{p/2} + 4 \sqrt{b}\exp(-b/2)  (2p)^{p/2}.
    \end{align*}
    Furthermore, we know that 
    \begin{align*}
        \frac{P(v_1^2 \ge b)}{P(\|v\|^2 \ge r^2)} \le \frac{2 \exp(-b/2)}{(1-e^{-2})/2 (\sqrt{d}/2) \exp(-b/2)} \le 1/(1-e^{-2}),
    \end{align*}
    where we used that $d > 8\alpha > 64$. Thus, when $\alpha < d/8$, we can conclude that
    \begin{align*}
        \E[|v_1|^p | \|v\|^2 \ge r^2] \le 2 (8\alpha)^{p/2} + 4(2p)^{p/2}.
    \end{align*}
    
    Now let us consider the other side, when $\alpha > d/8$. In this case, we simply use Lemma 3.8. Observe that
    \begin{align*}
        \E[|v_1|^p | \|v\|^2 \ge r^2] &\le \E[\|v\|^p | \|v\|^2 \ge r^2] \le (2r)^p + 4(d + 2p)^{p/2} \exp(-r^2 / 8).
    \end{align*}
    Also, we can observe that when $\alpha > d/8$,
    \begin{align*}
        b &\le a = 2\sqrt{\alpha d} + 2\alpha \le (4\sqrt{2} + 2) \alpha \le 8\alpha, \\
        2\alpha &\le r^2 = d + a \le 16 \alpha. 
    \end{align*}
    Therefore, we can apply these as is to obtain
    \begin{align*}
        \E[|v_1|^p | \|v\|^2 \ge r^2] &\le (64\alpha)^{p/2} + 4(8 \alpha + 2p)^{p/2} \exp(-\alpha/4).
    \end{align*}
    
    Now combining all inequalities, we can conclude that
    \begin{align*}
        \E[|\vdot{v}{s}|^p | \|v\|^2 \ge r^2] \le (64\alpha)^{p/2} + 4 (8\alpha + 2p)^{p/2}.
    \end{align*}
\end{proof}

\begin{lemma}
    \label{lemma:vl2_conditioned_vs}
    Let $v \sim \mathcal{N}(0, I_d)$. Then for any $p \ge 1$ and for any fixed unit vector $s \in \mathbb{S}^{d-1}$,
    \begin{align*}
        \E[\|v\|^p | |\vdot{v}{s}| \ge \alpha]^{1/p} \le (2\alpha) + (2p)^{1/2} + (d+p-1)^{1/2}. 
    \end{align*}
\end{lemma}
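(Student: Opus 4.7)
First I would invoke rotational invariance of $\mathcal{N}(0, I_d)$ to assume without loss of generality that $s = e_1$, so that $\vdot{v}{s} = v_1$ and $v_{2:d} = (v_2, \ldots, v_d) \in \mathbb{R}^{d-1}$ is an independent standard Gaussian. The triangle inequality on $\mathbb{R}^2$ gives $\|v\| = \sqrt{v_1^2 + \|v_{2:d}\|^2} \le |v_1| + \|v_{2:d}\|$. Setting $A = \{|v_1| \ge \alpha\}$, Minkowski's inequality in $L^p(\PP(\cdot \mid A))$ yields $\E[\|v\|^p \mid A]^{1/p} \le \E[|v_1|^p \mid A]^{1/p} + \E[\|v_{2:d}\|^p \mid A]^{1/p}$. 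Independence of $v_1$ and $v_{2:d}$ removes the conditioning in the second term, and Lemma~\ref{lemma:chi_square_pnorm_bound} applied in dimension $d-1$ bounds it by $\E[\|v_{2:d}\|^p]^{1/p} \le (d+p-1)^{1/2}$.

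For the first term, I would decompose $|v_1| = \alpha + (|v_1|-\alpha)$ on $A$ and apply Minkowski once more, obtaining $\E[|v_1|^p \mid A]^{1/p} \le \alpha + \E[(|v_1|-\alpha)^p \mid A]^{1/p}$. The problem then reduces to showing $\E[(|v_1|-\alpha)^p \mid A] \le \E[|v_1|^p]$, after which Lemma~\ref{lemma:chi_square_pnorm_bound} with $d=1$ gives $\E[|v_1|^p]^{1/p} \le (1+p)^{1/2} \le (2p)^{1/2}$ for $p \ge 1$. Assembling the two pieces yields $\E[\|v\|^p \mid A]^{1/p} \le \alpha + (2p)^{1/2} + (d+p-1)^{1/2} \le 2\alpha + (2p)^{1/2} + (d+p-1)^{1/2}$, which is exactly the claim.

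The only substantive step is the residual-moment bound $\E[(|v_1|-\alpha)^p \mid A] \le \E[|v_1|^p]$. By symmetry of $v_1$, the conditional law of $|v_1|$ given $A$ coincides with that of $v_1$ given $v_1 \ge \alpha$, so after the change of variable $u = v_1 - \alpha$ the residual has density proportional to $\exp(-u\alpha - u^2/2)$ on $[0, \infty)$. This is the half-Gaussian density $\propto \exp(-u^2/2)$ tilted by the decreasing exponential factor $\exp(-u\alpha)$ with $\alpha \ge 0$, so the likelihood ratio of the residual against the half-Gaussian is decreasing in $u$. The monotone-likelihood-ratio property then gives that the half-Gaussian $|v_1|$ stochastically dominates $|v_1|-\alpha \mid A$, and the desired inequality of $p$-th moments follows immediately.

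The main place where care is needed is precisely this MLR/tilting step. A direct Mills-ratio estimate would introduce $\alpha$-dependent prefactors (for instance the ratio $\phi(\alpha)/P(v_1 \ge \alpha)$) that are awkward to bound uniformly in $\alpha \ge 0$, whereas framing it as a clean likelihood-ratio comparison against the untilted half-Gaussian is what keeps the final bound $\alpha$-free in the $\sqrt{2p}$ coefficient and keeps the whole argument one or two lines long.
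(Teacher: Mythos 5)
Your proof is correct, and its skeleton is the one the paper uses: reduce to $s=e_1$ by rotational invariance, bound $\|v\|\le |v_1|+\|v_{2:d}\|$, apply Minkowski in the conditional $L^p$ space, and use independence plus Lemma~\ref{lemma:chi_square_pnorm_bound} in dimension $d-1$ to get the $(d+p-1)^{1/2}$ term. The only place you diverge is the one-dimensional conditional moment $\E[|v_1|^p \mid |v_1|\ge\alpha]^{1/p}$. The paper recycles the computation inside Lemma~\ref{lemma:vp_conditioned_vu}: split $Z^p\indic_{Z\ge\alpha}$ at $Z=2\alpha$, bound the near piece by $(2\alpha)^p$ and the far piece by Cauchy--Schwarz against $P(Z\ge 2\alpha)^{1/2}/P(Z\ge\alpha)$, which is where its $2\alpha$ and the $\alpha e^{-\alpha^2/2}$ prefactor come from. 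You instead write $|v_1|=\alpha+(|v_1|-\alpha)$ and observe that the shifted residual has density proportional to $e^{-u\alpha-u^2/2}$ on $[0,\infty)$, i.e.\ the half-Gaussian tilted by the decreasing factor $e^{-u\alpha}$ (for $\alpha\ge 0$, which is the regime in which the lemma is invoked), so likelihood-ratio ordering gives stochastic dominance by the untilted half-Gaussian and hence $\E[(|v_1|-\alpha)^p\mid A]\le\E[|v_1|^p]\le(1+p)^{p/2}$. This is a clean and fully rigorous substitute; it even yields $\alpha$ in place of $2\alpha$ and avoids the mild bookkeeping needed to absorb the paper's $\bigl(4\alpha e^{-\alpha^2/2}\bigr)^{1/p}$ factor into the $(2p)^{1/2}$ coefficient. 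No gaps.
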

\begin{proof}
    It can be easily shown that
    \begin{align*}
        \E[\|v\|^p | \vdot{v}{s} \ge \alpha]^{1/p} &= \E[(v_1^2 + \|v_{2:d}\|^2)^{p/2} | v_1 \ge \alpha]^{1/p} \\
        &\le \E[(|v_1| + \|v_{2:d}\|)^p | v_1 \ge \alpha]^{1/p} \\
        &\le \E[|v_1|^p | v_1 \ge \alpha]^{1/p} + \E[\|v_{2:d}\|^p | v_1 \ge \alpha]^{1/p} \\
        &\le (2\alpha) + (2p)^{1/2} + \E[\|v_{2:d}\|^p]^{1/p} \\
        &\le (2\alpha) + (2p)^{1/2} + (d+p-1)^{1/2}.
    \end{align*}
    Here we used the fact that $\|X\|_2 \le \|X\|_1$ for any vector $X \in R^2$ in the first step, Minkowski's inequality in the second step, and applied the previous lemmas.
\end{proof}

\section{Proof for the Convergence of the Population EM when \texorpdfstring{$D_m \ge 1/2$}{Lg}.}
\label{appendix:population_em_Dm_ge_1over2}
\subsection{Proofs for Fact \ref{fact:fixed_point} and Lemma \ref{lemma:estimation_error}}
\begin{proof} {\it for Fact \ref{fact:fixed_point}.}
    Note that for each sample $X$, $w_i^*(X) = \pi_i^* f_{\D_i}(X) / f_\D(X)$ where $f_\D$ is a p.d.f.\ of the mixture distribution, and $f_{\D_i} (X)$ is a p.d.f.\ of the $i^{th}$ component. Then, 
    \begin{align*}
        \E_\D [w_i^*] &= \int w_i^*(X) f_\D(X) = \int \pi_i^* f_{\D_i} (X) = \pi_i^*, \\
        \E_\D [w_i^* X] &= \int w_i^*(X) X f_\D(X) = \int \pi_i^* X f_{\D_i} (X) = \pi_i^* \mu_i^*. 
    \end{align*}
    Dividing by $\pi_i^*$ recovers $\mu_i^*$. For variances,
    \begin{align*}
        \E_\D [w_i^* \|X - \mu_i^*\|^2] &= \int \pi_i^* \|X - \mu_i^*\|^2 f_{\D_i} (X) = \pi_i^* d {\sigma_i^*}^2.
    \end{align*}
    Dividing by $d \E_\D[w_i^*] = d \pi_i^*$ gives ${\sigma_i^*}^2$.
\end{proof}

\begin{proof} {\it for Lemma \ref{lemma:estimation_error}.}
    For mixing weights, $\pi_i^+ = \E_\D [w_i]$ by construction, $\pi_i^* = \E_\D [w_i^*]$ by Fact \ref{fact:fixed_point}. Hence it is obvious that $\pi_i^+ - \pi_i^* = \E_\D [\Delta_{w_i}]$.
    
    For means, note that $\mu_i^+ = \E_\D[w_iX] / \E_\D[w_i]$. Then,
    \begin{align*}
        \mu_i^+ - \mu_i^* &= \E_\D[w_iX] / \E_\D[w_i] - \mu_i^* = (\E_\D[w_iX] - \E_\D[w_i] \mu_i^*) / \E_\D[w_i] \\
        &= (\E_\D[w_i X] - \E_\D[w_i \mu_i^*]) / \E_\D [w_i] \\
        &= \E_\D[w_i (X - \mu_i^*)] / \E_\D [w_i].
    \end{align*}
    Now note that 
    \begin{align*}
        \E_\D[w_i^* (X - \mu_i^*)] = \E_\D[w_i^* X] - \E[w_i^*] \mu_i^* = \mu_i^* \pi_i^* - \pi_i^* \mu_i^* = 0.
    \end{align*}
    Hence we prove that $\mu_i^+ - \mu_i^* = \E_\D[\Delta_{w_i} (X - \mu_i^*)] / \E_\D [w_i]$.
\end{proof}

\subsection{Proof of Lemma \ref{lemma:weight_good}}
\begin{proof}
    Let us examine the exponents in $w_1$. By definition of the weight constructed in the E-step, we can see that
    \begin{align*}
        w_1 &\le \frac{\pi_1}{\pi_j} \exp \left( -\frac{\|v + \mu_j^* - \mu_1\|^2}{2\sigma_1^2} + \frac{\|v + \mu_j^* - \mu_j\|^2}{2\sigma_j^2} - \frac{d}{2} \log(\sigma_1^2 / \sigma_j^2) \right).
    \end{align*}
    Our goal is to find conditions for good event where $w_1 \le \exp(\cdot)$. The {\it sufficient} condition for this is when the sum of these three terms is smaller than $-C \log (R_{j1}^* k \rho_\pi / \sigma_1^*)$. If this is the case, then the weight given to this sample is less than $O(\sigma_1^*/(k R_{j1}^*))$ which can cancel out errors from the mismatch in labels. The cases are divided based on whether $\sigma_1 \ge \sigma_j$ or $\sigma_1 \le \sigma_j$. We first rearrange the inside of the exponent,
    \begin{align}
        \label{eq:exponent_dissect}
        -&\frac{\|v + \mu_j^* - \mu_1\|^2}{2\sigma_1^2} + \frac{\|v + \Delta_j\|^2}{2\sigma_j^2} - \frac{d}{2} \log(\sigma_1^2/\sigma_j^2) \nonumber \\  
        &= -\frac{\|v + \mu_j^* - \mu_1\|^2}{2\sigma_1^2} + \frac{\|v + \Delta_j\|^2}{2\sigma_j^2} - \frac{d}{2} \log(\sigma_1^2 / \sigma_j^2) \nonumber \\
        &= -\frac{\|v\|^2 + \|\mu_j^* - \mu_1\|^2 + 2 \vdot{v}{\mu_j^* - \mu_1}}{2\sigma_1^2} + \frac{\|v\|^2 + 2\vdot{v}{\Delta_j} + \|\Delta_j\|^2}{2\sigma_j^2} - \frac{d}{2} \log(\sigma_1^2 / \sigma_j^2) \nonumber \\
        &\le \underbrace{-\frac{7{R_{j1}^*}^2} {16\sigma_1^2}}_{I} + \underbrace{ \frac{\|\Delta_j\|^2}{2\sigma_j^2} }_{II}
        + \underbrace{\left( -\vdot{v}{\mu_j^* - \mu_1} / \sigma_1^2  \right)}_{III} + \underbrace{\vdot{v}{\Delta_j} / \sigma_j^2}_{IV} + \underbrace{ \left( -\frac{\|v\|^2}{2} (\frac{1}{\sigma_1^2} - \frac{1}{\sigma_j^2}) -\frac{d}{2} \log(\sigma_1^2 / \sigma_j^2) \right) }_{V}.
    \end{align}
    $II$, $III$ and $IV$ can be controlled in a fairly straight-forward manner. Check that,
    \begin{align*}
        II &\le \frac{\|\Delta_j\|^2}{2\sigma_j^2} \le
        \frac{ {R_{j1}^*}^2}{512 (\sigma_1^* \vee \sigma_j^*)^2} \frac{{\sigma_j^*}^2}{\sigma_j^2} \le 
        \frac{3 {R_{j1}^*}^2}{512 {\sigma_1}^2},
    \end{align*}
    
    \begin{align*}
        P \left(III \ge 7{R_{j1}^*}^2 / 32 {\sigma_1}^2 \right) &= P \left(\vdot{v}{\mu_j^* - \mu_1} \le -7{R_{j1}^*}^2 / 32 \right) \\
        &\le P \left(\vdot{v}{\mu_j^* - \mu_1^*} \le -{R_{j1}^*}^2 / 5 \right) + P \left(\vdot{v}{\Delta_1} \le -{R_{j1}^*}^2 / 64 \right) \\
        &\le 2 \exp \left( - {R_{j1}^*}^2 /(64 {\sigma_j^*}^2) \right),
    \end{align*} 
    \begin{align*}
        P \left(IV \ge {R_{j1}^*}^2 / 20 {\sigma_1}^2 \right) &\le
        P \left(\vdot{v}{\Delta_j} \ge {R_{j1}^*}^2 {\sigma_j}^2 / 20 {\sigma_1}^2 \right) \\
        &\le P \left(\vdot{v}{\Delta_j} \ge {R_{j1}^*}^2 (\sigma_j^* / \sigma_1^*)^2 / 64) \right) \le \exp \left(-{R_{j1}^*}^2 / (64{\sigma_1^*}^2) \right),
    \end{align*} 
    Thus, $I$ + $II$ + $III$ + $IV$ is smaller than $({R_{j1}^*}^2/{\sigma_1}^2) (-7/16 + 7/32 + 1/20 + 3/512) \le -5/32$ with high probability. Now the remaining main challenge is to bound $V$. We should consider cases separately when $\sigma_1 \ge \sigma_j$ and $\sigma_1 \le \sigma_j$. 
    
    Let us first consider $\sigma_1 \ge \sigma_j$. Overall, we want $V \le {R_{j1}^*}^2/(8 \sigma_1^2)$ so that the entire sum inside exponent is small enough to kill this sample. That is, we want that
    
    \begin{align}
        \label{eq:condition_V}
        -\frac{\|v\|^2}{2} \left( \frac{1}{\sigma_1^2} - \frac{1}{\sigma_j^2} \right) - \frac{d}{2} \log(\sigma_1^2 / \sigma_j^2) \le {R_{j1}^*}^2 / (8 {\sigma_1}^2).
    \end{align}
    
    Let us introduce some auxiliary variables to simplify the expression. Let $x = (\sigma_1^2 - \sigma_j^2) / \sigma_1^2$ and $a = {R_{j1}^*}^2 / (8 {\sigma_1}^2)$. Then the above equation can be written as
    \begin{align*}
        \frac{\|v\|^2}{2} (x/\sigma_j^2) + \frac{d}{2} \log(1 - x) \le a
        &\iff \frac{\|v\|^2}{\sigma_j^2} \le \frac{1}{x} (-d \log(1 - x) + 2a) \\
        &\iff \frac{1}{d} \frac{\|v\|^2}{{\sigma_j^*}^2} \le \frac{\sigma_j^2}{{\sigma_j^*}^2} \frac{1}{x} (-\log(1 - x) + 2a/d).
    \end{align*}

    Note that $\|v\|^2/{\sigma_j^*}^2$ is a degree-$d$ chi-square random variable and we can apply standard tail bound for CDF of $\chi_d$ distribution (which is essentially upper tail bound for sub-exponential random variable). The following useful inequality will help us to bound it in more convenient form:
    \begin{align*}
        -\log(1-x) \ge x + x^2 / 2,
    \end{align*}
    so that the sufficient condition for this is,
    \begin{align*}
        \frac{1}{d} \frac{\|v\|^2}{{\sigma_j^*}^2} \le \left( 1 - 1 / (2\sqrt{d}) \right) \left( 1 + \sqrt{2a/d} + a/d \right) \le \frac{\sigma_j^2}{{\sigma_j^*}^2} \left( 1 + \frac{x}{2} + \frac{2a}{dx} \right).
    \end{align*}
    More sufficient condition is
    \begin{align*}
        \frac{1}{d} \frac{\|v\|^2}{{\sigma_j^*}^2} \le 1 + \left(\sqrt{a/d} + a/2d \right), 
    \end{align*}
    where the initialization condition for $|\sigma_j^2 - {\sigma_j^*}^2|/{\sigma_j^*}^2 \le 1 / (4\sqrt{d}) \ll \sqrt{a/d}$ is used for the first inequality. From the Lemma \ref{lemma:chi_square_laurent}, this is true with probability at least $1 - \exp(-a/4) \ge 1 - \exp(-{R_{j1}^*}^2/(64 {\sigma_1^*}^2))$. In summary, when $\sigma_1 \ge \sigma_j$, we have $w_1 \le (\pi_1 / \pi_j) \exp(-{R_{j1}^*}^2 / (64 {\sigma_1^*}^2))$ with probability $1 - 3\exp(-{R_{j1}^*}^2 / (64 {\sigma_1^*}^2))$. Then, $\E_{\D_j} [w_1] \le 3(1 + \pi_1^*/\pi_j^*) \exp(-{R_{j1}^*}^2 / (64 {\sigma_1^*}^2))$.
    
    Now we consider the second case when $\sigma_1 \le \sigma_j$. We again use the same formulation as in \eqref{eq:condition_V}. This time, let $x = (\sigma_j^2 - \sigma_1^2) / \sigma_1^2$ and find the probability for
    \begin{align*}
        -\frac{\|v\|^2}{2\sigma_j^2} x + \frac{d}{2} \log(1 + x) \le a
        &\iff \frac{\|v\|^2}{\sigma_j^2} \ge \frac{1}{x} (d \log(1 + x) - 2a) \\
        &\iff \frac{\|v\|^2}{d {\sigma_j^*}^2} \ge \frac{\sigma_j^2}{{\sigma_j^*}^2} \frac{1}{x} (\log(1 + x) - 2a/d),
    \end{align*}
    where $x$ ranges from 0 to infinity. To control this, divide cases when $0 \le x \le 3/4$ and $x \ge 3/4$. If $0 \le x \le 3/4$, then 
    $$
        \log(1+x) \le x - x^2/2 + x^3/3 \le x - x^2/4.
    $$
    Using this, it is enough to give a probability bound for 
    \begin{align*}
        \frac{\|v\|^2}{d {\sigma_j^*}^2} &\ge (1 + 1/(2 \sqrt{d}) ) (1 - 2\sqrt{a/(2d)}) \ge \frac{\sigma_j^2}{{\sigma_j^*}^2} (1 - x/4 - 2a/(xd)).
    \end{align*}
    Therefore, the sufficient condition is
    \begin{align*}
        \frac{\|v\|^2}{d {\sigma_j^*}^2} \ge (1 - \sqrt{a/2d}).
    \end{align*}
    
    When $3/4 \le x$, note that $\log(1+x)/x \le 3/4$. Therefore, 
    \begin{align*}
        \frac{\|v\|^2}{d {\sigma_j^*}^2} \ge (1 - \sqrt{a/(xd)} ) \ge \frac{\sigma_j^2}{{\sigma_j^*}^2} (1 - 1/4 - 2a/(xd)).
    \end{align*}
    Note that $a/x = {R_{j1}^*}^2/ (8(\sigma_j^2 - \sigma_1^2)) \ge {R_{j1}^*}^2/(16{\sigma_j^*}^2)$. The sufficient condition for $\|v\|^2$ is thus,
    \begin{align*}
        \frac{\|v\|^2}{d {\sigma_j^*}^2} \ge \max \left( 1 - \sqrt{{R_{j1}^*}^2 /(16d {\sigma_1^*}^2)}, 1 - \sqrt{{R_{j1}^*}^2/ (16d{\sigma_j^*}^2)} \right),
    \end{align*}
    which will hold with probability at least $1 - \exp(-{R_{j1}^*}^2 / 64 {\sigma_j^*}^2)$. Note that these are all sufficient conditions to ensure $w_1 \le 3 (\pi_1^* / \pi_j^*) \exp(-{R_{j1}^*}^2 / (64{\sigma_j^*}^2) )$. 
    
    Combining all cases, when events defined in \eqref{event:good_j_neq_1} happen, then $w_1$ is small enough. As we have seen already, this is true with probability at least $1 - 5 \exp(-{R_{j1}^*}^2 / 64(\sigma_1^* \vee \sigma_j^*)^2) $.
\end{proof}

\subsection{Proof of Lemma \ref{corollary:error_bound_Dm_1over2}}
\begin{proof}
    By the Lemma \ref{lemma:weight_good}, the first equation is easy to show. Define $\Eps_{j,good} = \Eps_{j,1} \cap \Eps_{j,2} \cap \Eps_{j,3}$. For the ease of notation, let $\beta = {R_{j1}^*}^2 / 64(\sigma_1^* \vee \sigma_j^*)^2$.
    \begin{align*}
        \E_{\D_j} [w_1] &= \E_{\D_j} \left[ w_1 \indic_{\Eps_{j,good}} \right] + \E_{\D_j} \left[ w_1 \indic_{\Eps_{j,good}^c} \right] \\
        &\le 3 (\pi_1^* / \pi_j^*) \exp(-\beta) + P \left(\indic_{\Eps_{j,good}^c} \right) \le 3 (\pi_1^* / \pi_j^*) \exp(-\beta) + 5\exp(-\beta).
    \end{align*}
    
    For the second equation, 
    \begin{align*}
        |\E_{\D_j} [w_1\vdot{v}{s}] | &= \left| \E_{\D_j} \left[ w_1 \vdot{v}{s} \indic_{\Eps_{j,good}} \right] \right| + \left| \E_{\D_j} \left[ w_1 \vdot{v}{s} \indic_{\Eps_{j,good}^c} \right] \right| \\
        &\le 3(\pi_1^*/\pi_j^*) \exp(-\beta) \E_{\D_j} \left[ |\vdot{v}{s}| \right] + \E_{\D_j} \left[ |\vdot{v}{s}| | \Eps_{j,1}^c \right] P(\Eps_{j,1}^c) \\
        &\qquad + \E_{\D_j} \left[ |\vdot{v}{s}| | \Eps_{j,2}^c \right] P(\Eps_{j,2}^c) + \E_{\D_j} \left[ |\vdot{v}{s}| | \Eps_{j,3}^c \right] P(\Eps_{j,3}^c).
    \end{align*}
    $\E_{\D_j} \left[ |\vdot{v}{s}| | \Eps_{j,1}^c \right]$ can be bounded with Lemma \ref{lemma:vp_conditioned_vu}, with $p = 1$ and $\alpha = R_{j1}^*/5\sigma_j^*$.
    \begin{align*}
        \E_{\D_j} \left[ |\vdot{v}{s}| | \vdot{v}{R_{j1}^*} \ge {R_{j1}^*}^2/5 \right] &\le \sigma_j^* \E_{v \sim \mathcal{N}(0, I_d)} \left[ |\vdot{v}{s}| | |\vdot{v}{u}| \ge \alpha \right] \le \sigma_j^* \left( 2\alpha + \sqrt{2} \right) \le R_{j1}^*.
    \end{align*}
    Similarly, we can bound $\E_{\D_j} [|\vdot{v}{s}| | \Eps_{j,2}^c] P(\Eps_{j,2}^c) \le 2R_{j1}^*$ using the same Lemma \ref{lemma:vp_conditioned_vu} with $p = 1$ and $\alpha = R_{j1}^*/4 \sigma_j^*$. For the third term, we use Lemma \ref{lemma:vp_conditioned_vl2}, with $p = 1$ and $\alpha = {R_{j1}^*}^2/64(\sigma_1^* \vee \sigma_j^*)^2 = \beta$. Then,
    \begin{align*}
        \sigma_j^* \E_{v \sim \mathcal{N}(0,I_d)} [|\vdot{v}{s}| | \|v\|^2 \ge d + 2\sqrt{\alpha d} + 2\alpha] &\le \sigma_j^* \left( (64\alpha)^{1/2} + 4 \exp(-\alpha/2) (8\alpha + 2)^{1/2} \right) \le 2 R_{j1}^*, \\
        \sigma_j^* \E_{v \sim \mathcal{N}(0,I_d)} [|\vdot{v}{s}| | \|v\|^2 \le d - 2\sqrt{\alpha d}] &\le \sigma_j^* \E_{v \sim \mathcal{N}(0,I_d)} [|\vdot{v}{s}|] \le \sigma_j^*,
    \end{align*}
    Collecting these three components, we can conclude that 
    \begin{align*}
        |\E_{\D_j} [w_1 \vdot{v}{s}]| \le (3(\pi_1^*/\pi_j^*)\sigma_j^* + 5R_{j1}^*) \exp(-\beta).
    \end{align*}
    
    The same argument holds for $\Delta_w = w_1 - w_1^*$ since $0 \le w_1, w_1^* \le 3 (\pi_1^*/\pi_j^*) \exp(-\beta)$ ensures $|\Delta_w| \le 3 (\pi_1^*/\pi_j^*) \exp(-\beta)$. 
\end{proof}

\subsection{Proof of Lemma \ref{lemma:error_bounds_Dm_ge_1over2}}
We will focus on $q=2$ case. Due to the separation condition \eqref{eq:pop_separation_condition}, we have $\R{j1}^2 / \sigt{1}{j}^2 \ge C^2 \log (\rho_\sigma/\pi_{min}) \ge 4096 \log (\rho_\sigma/\pi_{min})$. Let $x := \R{j1}^2 / \sigt{1}{j}^2$. One useful fact is, if $x \ge 4096$, then $x \ge 128 \ln x$. Hence,
\begin{align}
    \label{eq:proof_lemma33_one}
    \sum_{j\neq1} \R{j1}^2 \exp(-\R{j1}^2 / 64 \sigt{1}{j}^2) &= \sum_{j\neq1} \sigt{j}{1}^2 x \exp(-x/64) \le \sum_{j\neq1} \sigt{j}{1}^2 \exp(-x/128) \nonumber \\
    &\le \sum_{j\neq1} \sigt{j}{1}^2 (\rho_{\sigma}/\pi_{min})^{-32} \ll c {\sigma_1^*}^2,
\end{align}
for some small constant $c$. The similar result holds for $q = 0$ and $q = 1$. Similarly, 
\begin{align}
    \label{eq:proof_lemma33_two}
    \sum_{j\neq1} \pi_j^* \R{j1}^2 \exp(-\R{j1}^2 / 64 \sigt{1}{j}^2) &= \sum_{j\neq1} \pi_j^* \sigt{j}{1}^2 x \exp(-x/64) \le \sum_{j\neq1} \pi_j^* \sigt{j}{1}^2 \exp(-x/128) \nonumber \\
    &\le \max_{j\neq1} \sigt{j}{1}^2 (\rho_{\sigma}/\pi_{min})^{-32} \ll c {\sigma_1^*}^2 \pi_1^*.
\end{align}
Summing up the result of $\pi_1^* \eqref{eq:proof_lemma33_one} + \eqref{eq:proof_lemma33_two} \le c' \pi_1^* {\sigma_1^*}^2$ gives the Lemma for $q = 2$. The cases for $q = 0, 1$ can be shown similarly.

\subsection{Convergence of Means and Mixing Weights}
\begin{proof}
    First we consider the error that comes from other components. 
    \paragraph{When $j \neq 1$:} We will primarily focus on bounding this quantity by analyzing the errors from each components separately. Then we will give a bound to estimators after one population EM iteration. Note that the Corollary \ref{corollary:error_bound_Dm_1over2} also holds for $\E_{\D_j}[\Delta_w]$ as in the corollary. With the Lemma \ref{lemma:error_bounds_Dm_ge_1over2}, we can bound the errors for mixing weights from other components. For mixing weights,
    \begin{align*}
        \sum_{j\neq1} \pi_j^* \E_{\D_j} [|w_1 - w_1^*|] \le 5 \sum_{j\neq1} \pi_j^* \exp(-\R{j1}^2/64\sigt{1}{j}^2) + 3\pi_1^* \sum_{j\neq1} \exp(-\R{j1}^2/64\sigt{1}{j}^2) \le c\pi_1^*,
    \end{align*}
    for some small constant $c$. 
    
    Similarly, we can bound the errors to the mean estimator from other components. First observe that
    \begin{align*}
        \|\mu_1^+ - \mu_1^*\| &= \|\E_\D [\Delta_w (X - \mu_1^*)]\| / \E_\D[w_1] \\
        &\le \sum_j \pi_j^* \sup_{s \in \mathbb{S}^{d-1}} |\E_{\D_j} [\Delta_w \vdot{X - \mu_1^*}{s}]| \le \sum_j \pi_j^* \sup_{s \in \mathbb{S}^{d-1}} |\E_\D [\Delta_w \vdot{v + \mu_j^* - \mu_1^*}{s}]|.
    \end{align*}
    The errors from other components are thus,
    \begin{align*}
        \sum_{j\neq1} \pi_j^* \left( \sup_{s \in \mathbb{S}^{d-1}} |\E_{\D_j} [\Delta_w \vdot{v}{s}] | + \R{j1} |\E_{\D_j}[\Delta_w]| \right) &\le 10 \sum_{j\neq 1} (\pi_1^* + \pi_j^*) \R{j1} \exp(-\R{j1}/64\sigt{1}{j}^2) \\
        &\le c' \sigma_1^* \pi_1^*,
    \end{align*}
    for some small constant $c'$.

    \paragraph{When $j = 1$:} From the correct component, we expect the weight is mostly close to 1, and 0 only rarely to bad samples. For this case, we can consider the weights given to other components. That is,
    \begin{align*}
        \E_{\D_1} [1 - w_1] &= \sum_{l \neq 1} \E_{\D_1} [w_l] \le \sum_{l\neq1} (3\pi_l^*/\pi_1^* + 5) \exp(-\R{l1}^2/64\sigt{l}{1}^2) \\
        &\le (1/\pi_1^*) \left(3 \sum_{l\neq1} \pi_l^* \exp(-\R{l1}^2/64\sigt{l}{1}^2) + 5 \pi_1^* \sum_{l\neq1} \exp(-\R{l1}^2/64\sigt{l}{1}^2) \right) \\
        &\le (1/\pi_1^*) (c \pi_1^*) \le c,
    \end{align*}
    for some small constant $c$. The same result holds for $\E_{\D_1}[|\Delta_w|] = \E_{\D_1} [|(1 - w_1) - (1 - w_1^*)|]$.
    
    Similarly, for the means, 
    \begin{align*}
        |\E_{\D_1} [(1 - w_1) \vdot{v}{s}]| &\le \sum_{l\neq 1} \E_{\D_1} [|w_l \vdot{v}{s}|] \le (1/\pi_1^*) \sum_{l\neq 1} (3\sigma_l^* \pi_l^* + 5 \R{j1} \pi_1^*) \exp(-\R{j1}^2/64\sigt{l}{1}^2) \\
        &\le (1/\pi_1^*) (c' \pi_1^* \sigma_1^*) \le c' \sigma_1^*.
    \end{align*}
    The same result also holds for $|\E_{\D_1} [\Delta_w \vdot{v}{s}]| = |\E_{\D_1} [((1-w_1) - (1-w_1^*)) \vdot{v}{s}]|$. 
    
    \paragraph{Errors from all components:} Now we can give a bound for the estimation errors after one population EM operation. For mixing weights,
    \begin{align*}
        |\pi_1^+ - \pi_1^*| \le \pi_1^* |\E_{\D_j} [\Delta_w]| + \sum_{j\neq 1} \pi_j^* |\E_{\D_j} [\Delta_w]| \le c_\pi \pi_1^*,
    \end{align*}
    and
    \begin{align*}
        \pi_1^+ \| \mu_1^+ - \mu_1^* \| &\le \pi_1^* \| \E_{\D_1} [\Delta_w (X - \mu_1^*)] \| + \sum_{j\neq1} \pi_j^* \||\E_{\D_j} [\Delta_w (X - \mu_1^*)]\| \le c \pi_1^* \sigma_1^*.
    \end{align*}
    Thus, $|\pi_1^+ - \pi_1^*| \le c_\pi \pi_1^*$, $\|\mu_1^+ - \mu_1^*\| \le c_\mu \sigma_1^*$ for some small constants $c_\pi, c_\mu \le 0.5$.
\end{proof}

\subsection{Convergence of Variances}
\label{appendix:population_em_variance}

\begin{proof} 
    We need some sharper bound on weights and probability of bad events, we need to go through another case study if we also have to estimate $\sigma_1^2$. We need to show that ${\sigma_1^+}^2$ will be very close to ${\sigma_1^*}^2$, {\it i.e.}, $|{\sigma_1^+}^2 - {\sigma_1^*}^2| / {\sigma_1^*}^2 \le 0.5 /\sqrt{d}$.
    First, let us arrange the EM operator for $\sigma_1^+$. 
    \begin{align*}
        {\sigma_1^+}^2 &= \E_\D[w_1 \|X - \mu_1^+\|^2] / (d \pi_1^+) \\
        &= \E_\D[w_1 (\|X - \mu_1^* + \mu_1^* - \mu_1^+\|^2] / (d \pi_1^+) \\
        &= (\E_\D[w_1 (\|X - \mu_1^*\|^2+ \|\mu_1^* - \mu_1^+\|^2] + 2 \vdot{\E_\D [w_1 (X-\mu_1^*)]}{\mu_1^* - \mu_1^+} ) / (d \pi_1^+) \\
        &= (\E_\D[w_1 \|X - \mu_1^*\|^2] - \pi_1^+ \|\mu_1^* - \mu_1^+\|^2) / (d \pi_1^+) \\
        &= \E_\D[w_1 \|X - \mu_1^*\|^2] / (d \pi_1^+) - \|\mu_1^* - \mu_1^+\|^2/d.
    \end{align*}
    We need to further change the expression to get a tight bound for the error. The difference from the ground truth is,
    \begin{align*}
        {\sigma_1^+}^2 - {\sigma_1^*}^2 &= \E_\D[w_1 \|X - \mu_1^*\|^2] / (d \pi_1^+) - {\sigma_1^*}^2 - \|\mu_1^* - \mu_1^+\|^2/d \\
        &= \E_\D[w_1 (\|X - \mu_1^*\|^2 - d{\sigma_1^*}^2)] / (d \pi_1^+) - \|\mu_1^* - \mu_1^+\|^2/d \\
        &= \E_\D[\Delta_w (\|X - \mu_1^*\|^2 - d{\sigma_1^*}^2) ] / (d \pi_1^+) - \|\mu_1^* - \mu_1^+\|^2/d \\
        &= \frac{\sum_j \pi_j^* \E_{\D_j} [\Delta_w (\|\mu_j^* - \mu_1^*\|^2 + 2\vdot{v}{\mu_j^* - \mu_1^*} + \|v\|^2 - d{\sigma_1^*}^2) ]}{d \pi_1^+} - \frac{\|\mu_1^* - \mu_1^+\|^2}{d}, \\
        |{\sigma_1^+}^2 - {\sigma_1^*}^2| &\le \frac{\sum_j \pi_j^* |\E_{\D_j} [\Delta_w]| {R_{j1}^*}^2}{d\pi_1^+} + \frac{\sum_j 2 \pi_j^* |\E_{\D_j} [\Delta_w \vdot{v}{\mu_j^*-\mu_1^*}]|}{d\pi_1^+} \\
        &+ \frac{\sum_j \pi_j^* |\E_{\D_j} [\Delta_w (\|v\|^2 - d{\sigma_j^*}^2)]|}{d\pi_1^+} + \frac{\sum_j \pi_j^* |\E_{\D_j} [\Delta_w] d({\sigma_j^*}^2 - {\sigma_1^*}^2 )|}{d\pi_1^+} + \frac{\|\mu_1^* - \mu_1^+\|^2}{d}.
    \end{align*}
    
    Let us consider the terms one by one. We can use Corollary \ref{corollary:error_bound_Dm_1over2} for the first and second terms. The third term can be bounded with Cauchy-Schwartz inequality.
    \begin{align*}
        |\E_{\D_j} [\Delta_w (\|v\|^2 - d{\sigma_j^*}^2)]| &\le \sqrt{\E_{\D_j} [\Delta_w]} \sqrt{\E_{\D_j} [(\|v\|^2 - d{\sigma_j^*}^2)^2]} \\
        &\le \sqrt{(3\pi_1^*/\pi_j^* + 5) \exp(-\R{j1}^2/64\sigt{j}{1}^2)} \sqrt{2d {\sigma_j^*}^4},
    \end{align*}
    since the variance of chi-Square distribution with $d$ degrees of freedom is $2d$.
    
    The fourth term requires redefinition of good events to get an error that only scales with $\sqrt{d}$ when $d$ grows large. In order to bound this term, we need to give a sharper bound on $\E_{\D_j} [\Delta_w]$. That is, we need the following lemma:
    \begin{lemma}
        \label{lemma:weight_bound_for_variance}
        For $j \neq 1$, if $|{\sigma_j^*}^2 - {\sigma_1^*}^2| \ge 10 R_{j1}^* (\sigma_j^* \vee \sigma_1^*) / \sqrt{d}$, then
        \begin{align*}
            |\E_{\D_j} [\Delta_w]|, |\E_{\D_j} [w_1]| \le O \left( \exp(-d \min(1, t^2) / 256) \exp(- {R_{j1}^*}^2 / 64 ({\sigma_1^*} \vee \sigma_j^*)^2) \right), 
        \end{align*}
        where $t = |{\sigma_j^*}^2 - {\sigma_1^*}^2| / {\sigma_1^*}^2$. This implies,
        \begin{align*}
            d|{\sigma_j^*}^2 - {\sigma_1^*}^2| |\E_{\D_j} [\Delta_w]| \le O \left( \sqrt{d} (1+\pi_1^*/\pi_j^*) R_{j1}^* (\sigma_j^* \vee \sigma_1^*) \exp(- {R_{j1}^*}^2 / 64(\sigma_1^* \vee \sigma_j^*)^2) \right). 
        \end{align*}
    \end{lemma}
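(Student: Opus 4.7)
The plan is to sharpen Lemma \ref{lemma:weight_good} by extracting extra exponential decay from the variance-mismatch term $V$ in the exponent decomposition \eqref{eq:exponent_dissect}. For $v \sim \mathcal{N}(0, \sigma_j^{*2} I_d)$ with $\|v\|^2$ concentrated near $d\sigma_j^{*2}$, and using the $O(1/\sqrt{d})$ closeness of the initialized variances to the truth, one computes that the typical value of $V$ is approximately $-\tfrac{d}{2}\,g(r)$, where $r = \sigma_j^{*2}/\sigma_1^{*2}$ and $g(r) := r - 1 - \log r$. The convexity of $g$ together with a short case analysis yields $g(r) \ge \tfrac{1}{4}\min(1, t^2)$ with $t = |r - 1|$, which is exactly the extra factor the lemma asks for. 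The previous proof of Lemma \ref{lemma:weight_good} discarded this gain by merely bounding $V \le R_{j1}^{*2}/(8\sigma_1^2)$.

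To make this rigorous, I replace the chi-square event $\Eps_{j,3}$ used in Lemma \ref{lemma:weight_good} by the tighter
\[
\Eps_{j,3}^{\sharp} := \bigl\{\bigl|\|v\|^2/\sigma_j^{*2} - d\bigr| \le \eta d\bigr\}, \qquad \eta = \min(1,t)/C_0 ,
\]
which by Lemma \ref{lemma:chi_square_laurent} holds with probability at least $1 - 2\exp(-d\min(1,t^2)/(8C_0^2))$ for a sufficiently large constant $C_0$. Plugging the worst value of $\|v\|^2$ allowed by $\Eps_{j,3}^{\sharp}$ into $V$, and controlling the discrepancy between $\sigma_i$ and $\sigma_i^*$ with the initialization bound $|\sigma_i^2/\sigma_i^{*2} - 1| \le 1/(2\sqrt{d})$, I expect to obtain $V \le R_{j1}^{*2}/(8\sigma_1^2) - d g(r)/4$ on $\Eps_{j,3}^{\sharp}$. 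Combining this with the unchanged analysis of $I + II + III + IV$ under $\Eps_{j,1}\cap\Eps_{j,2}$ gives, on the refined good event, $w_1, w_1^* \le (\pi_1/\pi_j)\exp(-\beta - d\min(1,t^2)/16)$, and the same bound applies to $|\Delta_w|$.

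The main obstacle is handling the bad-event contribution so that the final estimate retains both exponential factors. Naively, $w_1 \le 1$ on $\Eps_{j,1}^c \cup \Eps_{j,2}^c$ supplies only $\exp(-\beta)$, while $\Eps_{j,3}^{\sharp\,c}$ supplies only $\exp(-d\min(1,t^2)/C)$, not their product. The key observation is that the hypothesis $|\sigma_j^{*2} - \sigma_1^{*2}| \ge 10 R_{j1}^*(\sigma_j^* \vee \sigma_1^*)/\sqrt{d}$, after a short case analysis on whether $\sigma_j^* \le \sigma_1^*$ or $\sigma_j^* > \sigma_1^*$, implies $d\min(1,t^2) \ge 6400\,\beta$. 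Consequently $\exp(-\beta - d\min(1,t^2)/256)$ agrees up to constant factors in the exponent with either single exponential bound, so the trivial bad-event contributions can be absorbed into the claimed form after appropriately enlarging the constant $256$. This comparison between $\beta$ and $d\min(1,t^2)$ is really where the hypothesis is used; it is what makes the two-factor bound achievable in the first place.

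Finally, the second inequality in the lemma follows from the first by writing $|\sigma_j^{*2}-\sigma_1^{*2}| = t\,\sigma_1^{*2}$, applying the elementary bound $y\exp(-y^2/C) = O(1)$ with $y = \sqrt{d\min(1,t^2)}$ to eliminate the $d\min(1,t^2)/256$ factor, and invoking the hypothesis once more to convert the remaining $\sqrt{d}\,t\,\sigma_1^{*2}$ into $O(R_{j1}^*(\sigma_1^*\vee\sigma_j^*))$.
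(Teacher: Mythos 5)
Your overall strategy (extract extra decay from the variance-mismatch term $V$, using the tighter chi-square event) matches the paper's in spirit, and your derivation of the second inequality from the first is essentially the paper's. But the resolution you propose for the bad-event contribution does not work, and this is the crux of the lemma. On $\Eps_{j,1}^c$ (or $\Eps_{j,2}^c$) you can only bound $w_1 \le 1$, so these events contribute $P(\Eps_{j,1}^c) \le \exp(-\beta)$ to $\E_{\D_j}[w_1]$. You then argue that since the hypothesis forces $d\min(1,t^2) \ge 6400\,\beta$, this term ``can be absorbed into the claimed form.'' The inequality points the wrong way for that: it says the factor $\exp(-d\min(1,t^2)/256)$ you are trying to gain is the \emph{smaller} (dominant) of the two exponentials, so $\exp(-\beta)$ exceeds the target $\exp(-\beta - d\min(1,t^2)/256)$ by a factor $\exp(d\min(1,t^2)/256)$, which is unbounded. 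No enlargement of the constant $256$ fixes this. (Separately, $d\min(1,t^2)\ge 6400\beta$ itself can fail when $t>1$, since then $d\min(1,t^2)=d$ while $\beta = R_{j1}^{*2}/(64(\sigma_1^*\vee\sigma_j^*)^2)$ can exceed $d$ under large separation; but this is secondary.)

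The missing idea — and what the paper actually does — is to modify \emph{all} of the good events, not just the chi-square one. The paper relaxes the thresholds in the events controlling $III$ and $IV$ by an additive $dx^2/32$ (resp.\ $dx^2/128$), where $x=(\sigma_1^2-\sigma_j^2)/\sigma_1^2$; the relaxation is paid for by the $-dx^2/16$ gain harvested from $V$, so the good-event bound $w_1 \le (\pi_1/\pi_j)\exp(-\beta - \Omega(dt^2))$ survives. Crucially, the sub-Gaussian tail of $\vdot{v}{\mu_j^*-\mu_1}$ exceeding the \emph{sum} of the two thresholds then decays like the product $\exp(-\Omega(dx^2)\cdot(i))\exp(-\Omega(\beta))$, where the coefficient $(i)$ is bounded below by an absolute constant precisely because of the hypothesis $|\sigma_j^{*2}-\sigma_1^{*2}| \ge 10R_{j1}^*(\sigma_j^*\vee\sigma_1^*)/\sqrt{d}$ (this, together with $x \ge 4t/5$, is where the hypothesis is really used — not to compare $\beta$ with $dt^2$). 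With every bad event's probability carrying both factors, the total bound has the claimed product form. You would need to incorporate this redistribution of the exponent budget into $\Eps_{j,1}$ and $\Eps_{j,2}$ for your argument to close.
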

    
    The intuition is as follows: what if $({\sigma_j^*}^2 - {\sigma_1^*}^2) < 10 (R_{j1}^* \sigma_1^*)/ \sqrt{d}$? Then we have nothing to worry about, since $\sqrt{d}$ is already canceled out and $\E_{\D_j} [\Delta_w]$ will be small enough to cancel the rest which is $R_{j1}^* \sigma_1^*$. The problem is when $|{\sigma_j^*}^2 - {\sigma_1^*}^2| > 10 (R_{j1}^* \sigma_1^*) / \sqrt{d}$. Here, we expect that the error bound $\E_{\D_j} [|\Delta_w|]$ would crucially depend on the quantity $t = |{\sigma_j^*}^2 - {\sigma_1^*}^2| / {\sigma_1^*}^2 > 10 R_{j1}^*/(\sigma_1^*\sqrt{d})$. Recall that there are three main terms in the exponent of weights as in \eqref{eq:exponent_dissect}:
    \begin{align*}
        III &= -\vdot{v}{\mu_j^* - \mu_1} / \sigma_1^2, \\
        IV &= \vdot{v}{\Delta_j} / \sigma_j^2, \\
        V &= - \frac{\|v\|^2}{2} \left( \frac{1}{\sigma_1^2} - \frac{1}{\sigma_j^2} \right) - \frac{d}{2} \log(\sigma_1^2 / \sigma_j^2). 
    \end{align*}
    
    \begin{proof}
    \paragraph{When $\sigma_1^* \ge \sigma_j^*$ and $|{\sigma_j^*}^2 - {\sigma_1^*}^2| > 10 R_{j1}^* \sigma_1^* / \sqrt{d}$:} Let us refine good events so that each events can take this case into account. Note that this can only happen for very large $d > 64^2 \cdot 100$ given our separation condition. We first show that if ${\sigma_1^*}^2 - {\sigma_j^*}^2 \ge 10 R_{j1}^* \sigma_1^* /\sqrt{d}$, then $x > 4t/5$. To see this, 
    \begin{align*}
        {\sigma_1}^2 - {\sigma_j}^2 &\ge (1 - 1/(2\sqrt{d})) {\sigma_1^*}^2 - (1+1/(2\sqrt{d})) {\sigma_j^*}^2 \\
        &\ge ({\sigma_1^*}^2 - {\sigma_j^*}^2) - 1/(2\sqrt{d}) ({\sigma_1^*}^2 + {\sigma_j^*}^2) \ge 9 R_{j1}^* \sigma_1^* / \sqrt{d}
    \end{align*}
    which then is connected to
    \begin{align*}
        x \ge ({\sigma_1^*}^2 - {\sigma_j^*}^2) / \sigma_1^2 - 1/(2 \sqrt{d}) ({\sigma_1^*}^2 + {\sigma_j^*}^2) / \sigma_1^2 &\ge 4t/5,
    \end{align*}
    given good initialization of $\sigma_1$ and $\sigma_j$.
    
    Instead of just requiring $V \le {R_{j1}^*}^2/(8 \sigma_1^2)$ as in mixing weight case, let us require $V \le -dx^2 / 16 + a$ where $x = (\sigma_1^2 - \sigma_j^2) / \sigma_1^2$ and $a = {R_{j1}^*}^2 / (8\sigma_1^2)$. Then, 
    \begin{align*}
        \frac{1}{d} \frac{\|v\|^2}{{\sigma_j^*}^2} &\le \frac{\sigma_j^2}{{\sigma_j^*}^2} \left( 1 + \frac{x}{2} - \frac{x}{8} + \frac{2a}{dx} \right) = \frac{\sigma_j^2}{{\sigma_j^*}^2} \left(1 + \frac{3x}{8} + \frac{2a}{dx} \right), 
    \end{align*}
    is a sufficient condition to guarantee $V \le -dx^2/16 + a$. The probability of this event is
    \begin{align*}
        P \left(\frac{\|v\|^2}{d {\sigma_j^*}^2} \ge \frac{\sigma_j^2}{{\sigma_j^*}^2} \left( 1 + \frac{3x}{8} + \frac{2a}{dx} \right) \right)
        &\le P \left(\frac{\|v\|^2}{d {\sigma_j^*}^2} \ge \left(1 - \frac{1}{2\sqrt{d}} \right) \left( 1 + \frac{3x}{8} + \frac{2a}{dx} \right) \right) \\
        &\le \exp \Big( -d x / 32 - {R_{j1}^*} / (64 {\sigma_1^*}^2) \Big).
    \end{align*}
    
    For quantities $III$ and $IV$, we will require
    \begin{align*}
        P &\left(III \ge dx^2 / 32 + 7 {R_{j1}^*}^2 / (32 \sigma_1^2) \right) \le P \left(\vdot{v}{\mu_j^* - \mu_1} \ge \sqrt{d}x \sqrt{d} (\sigma_1^2 - \sigma_j^2) / 32 + 7 {R_{j1}^*}^2 / 32 \right) \\
        &\le P \left(\vdot{v}{\mu_j^* - \mu_1^*} \ge \sqrt{d}x \sqrt{d} (\sigma_1^2 - \sigma_j^2) / 40 + {R_{j1}^*}^2 / 5 \right) + P\left(\vdot{v}{\Delta_1} \ge \sqrt{d}x \sqrt{d} (\sigma_1^2 - \sigma_j^2) / 160 + {R_{j1}^*}^2 / 32 \right) \\
        &\le \exp \Big(-dx^2/2 \underbrace{(\sqrt{d} (\sigma_1^2 - \sigma_j^2) / 40 \sigma_j^* R_{j1}^*)^2}_{(i)} - {R_{j1}^*}^2 /(64{\sigma_j^*}^2) \Big) \\
        &\quad + \exp \Big(-dx^2/2 \underbrace{(\sqrt{d} (\sigma_1^2 - \sigma_j^2) / 10 \sigma_j^* R_{j1}^*)^2}_{(i')} - {R_{j1}^*}^2 /(16 {\sigma_j^*}^2) \Big).
    \end{align*}
    For $IV$,
    \begin{align*}
        P \Big(\vdot{v}{\Delta_j} &\ge dx^2 \sigma_j^2 / 128 + {R_{j1}^*}^2 \sigma_j^2 / (20 \sigma_1^2) \Big) \\ 
        &\le P \Big(Z \ge dx^2 (\sigma_1^* \vee \sigma_j^*) ({\sigma_j}^2 / {\sigma_j^*}^2) / (8R_{j1}^*) + 4 {R_{j1}^*} ({\sigma_j}^2 / {\sigma_j^*}^2) / ( 5 \sigma_1^2) \Big) \\
        &\le \exp \Big(-dx^2/2 \underbrace{(\sqrt{d} (\sigma_1^2 - \sigma_j^2) / 16 \sigma_1^* R_{j1}^*)^2}_{(i'')} - {R_{j1}^*}^2 /(64{\sigma_1^*}^2) \Big),
    \end{align*}
    When three events happens at the same time, we are guaranteed that $w_1 \le \exp(-dx^2/64 - {R_{j1}^*}^2/(64{\sigma_1^*}^2))$. Meanwhile, by the assumption on this case study we have $(i), (i'), (i'') \ge 1/32$. That is, we get $w_1 \le \exp(-dt^2 / 128 - {R_{j1}^*}^2 / (64{\sigma_1^*}^2))$ with probability $1 - 4\exp(-dt^2 / 64 - {R_{j1}^*}^2 / (64 {\sigma_1^*}^2))$. Under the same events, we can show the same for $w_1^*$. Therefore,
    \begin{align*}
        \E_{\D_j} [\Delta_w] \le 10 (\pi_1^*/\pi_j^* + 1) \exp(-dt^2/64 - {R_{j1}^*}^2 / (64 {\sigma_1^*}^2)). 
    \end{align*}
    This is enough to bound the fourth term (note that if $\sqrt{d} t > 8R_{j1}^* /\sigma_1^* \gg 512$, then $dt^2 \ge 64\log(\sqrt{d}t)$ will be guaranteed). 
    The consequence of this relation is that
    \begin{align*}
        |\E_{\D_j} [\Delta_w] d({\sigma_j^*}^2 - {\sigma_1^*}^2)| \le {\sigma_1^*}^2 \sqrt{d} \exp(-{R_{j1}^*}^2 / (64 {\sigma_1^*}^2)). 
    \end{align*}
    
    \paragraph{When $\sigma_1^* \le \sigma_j^*$ and $|{\sigma_j^*}^2 - {\sigma_1^*}^2| > 10 R_{j1}^* \sigma_j^* / \sqrt{d}$:} We can go through the other side (when $\sigma_1 \le \sigma_j$) similarly. For this case, let $x = ({\sigma_j}^2 - {\sigma_1}^2) / {\sigma_1}^2$. In this case, we consider the case when $t = ({\sigma_j^*}^2 - {\sigma_1^*}^2) / {\sigma_1^*}^2 > 10 R_{j1}^* \sigma_j^* / ({\sigma_1^*}^2 \sqrt{d})$, and we show that $x \ge 4t/5$ as before. Note that still the case can only happen when $d > 640^2$.
    \begin{align*}
        \sigma_j^2 - \sigma_1^2 &\ge (1 - 1/2 \sqrt{d}) {\sigma_j^*}^2 - (1 + 1/2\sqrt{d}) {\sigma_1^*}^2 \\
        &= ({\sigma_j^*}^2 - {\sigma_1^*}^2) - 1/(2\sqrt{d}) ({\sigma_j^*}^2 + {\sigma_1^*}^2) \\
        &\ge 9 R_{j1}^*\sigma_j^* / \sqrt{d}, \\
        x &\ge 4t/5 \ge 8 R_{j1}^* \sigma_j^* / ({\sigma_1^*}^2 \sqrt{d}).
    \end{align*}
    
    We define the condition for $V$ as
    \begin{align*}
        P(V \ge -dx^2 / 32 + {R_{j1}^*}^2 / (8\sigma_1^2) ) &= P(-\|v\|^2 x/ (2{\sigma_j}^2) + \frac{d}{2} \log(1+x) \ge -dx^2/32 + a ) \\
        &\le P(-\|v\|^2 x/{\sigma_j}^2 + d (x - x^2/4) \ge -dx^2/16 + 2a ) \\
        &= P(\|v\|^2 /{\sigma_j}^2 \le d (1 - 3x/16 -  2a / (d x) )) \\
        &\le \exp(-dx / 128 - {R_{j1}^*}^2 / (64 {\sigma_1^*}^2) ), 
        \qquad \text{for} \quad 0 \le x \le 3/4,
    \end{align*}
    We also similarly compute the bad probabilities for other quantities $III$ and $IV$.
    \begin{align*}
        P(&III \ge dx^2/64 + 7 {R_{j1}^*}^2 / (32 \sigma_1^2) ) \\
        &\le P(\vdot{v}{\mu_j^* - \mu_1^*} \ge \sqrt{d}x \sqrt{d}(\sigma_j^2 - \sigma_1^2) / 80 + {R_{j1}^*}^2 / 5) + P( \vdot{v}{\Delta_1} \ge \sqrt{d}x \sqrt{d}(\sigma_j^2 - \sigma_1^2) / 320 + {R_{j1}^*}^2 / 64) \\
        &\le \exp( -dx^2/2 \underbrace{(\sqrt{d}(\sigma_j^2 - \sigma_1^2)/(80 R_{j1}^* \sigma_j^*))^2}_{(i)} - {R_{j1}^*}^2 / (32 {\sigma_j^*}^2)) \\
        &+ \exp( -dx^2/2 \underbrace{(\sqrt{d}(\sigma_j^2 - \sigma_1^2)/(20 R_{j1}^* \sigma_j^*))^2}_{(i')} - {R_{j1}^*}^2 / (64 {\sigma_j^*}^2)),
    \end{align*}
    \begin{align*}
        P(IV \ge dx^2/128 + {R_{j1}^*}^2 / (20 \sigma_1^2)) &\le P(\vdot{x}{\Delta_j} \sigma_1^2 / \sigma_j^2 \ge dx^2 \sigma_1^2 / 16 + 4 {R_{j1}^*}^2 / 5 ) \\
        &\le P(Z \ge \sqrt{d}x (\sqrt{d} (\sigma_j^2 - \sigma_1^2)/(16 R_{j1}^* \sigma_1^*)) + {R_{j1}^*}^2 / (4 {\sigma_1^*}^2) ) \\
        &\le \exp(-dx^2/2 \underbrace{(\sqrt{d} (\sigma_j^2 - \sigma_1^2) / (16 R_{j1}^* \sigma_1^*) )^2}_{(i'')} - {R_{j1}^*}^2 / (64 {\sigma_1^*}^2) )
    \end{align*}
    Since we assumed $|\sigma_j^2 - \sigma_1^2| \ge 9\R{j1} \sigma_j^* / \sqrt{d}$ , we can see that $(i), (i'), (ii') \ge 1/100$. Thus, similarly we can get $w_1 \le 3 (\pi_1^* / \pi_j^*) \exp(-dt^2 / 256 - {R_{j1}^*}^2 / (64{\sigma_1^*}^2))$ with probability $1 - \exp(-dt^2 / 256 - {R_{j1}^*}^2 / (64 {\sigma_1^*}^2))$. Using the same argument, $\E_{\D_j} [\Delta_w]$ can be bounded by $\exp(-dt^2/256 - {R_{j1}^*}^2/(64 {\sigma_1^*}^2))$.
    
    When $3/4 \le x$, we target:
    \begin{align*}
        P(V \ge -dx / 16 + {R_{j1}^*}^2 / (8{\sigma_1}^2) ) &= P(-\|v\|^2 x/ (2{\sigma_j}^2) + (d/2) \log(1+x) \ge -dx/8 + {R_{j1}^*}^2/(8{\sigma_1}^2)) \\
        &\le P(\|v\|^2 /{\sigma_j}^2 \le d (\log(1+x)/x + 1/8) - {R_{j1}^*}^2/(4{\sigma_1}^2)/x ) \\
        &\le P(\|v\|^2 /{\sigma_j}^2 \le d (1 - 1/8 - {R_{j1}^*}^2/(4 d ({\sigma_j}^2 - \sigma_1^2) ) )) \\
        &\le \exp(-d/256 - {R_{j1}^*}^2/(64 {\sigma_j^*}^2)),
    \end{align*}
    Note that when $x = (\sigma_j^2 - \sigma_1^2) / \sigma_1^2 > 6(R_{j1}^* \sigma_j^*) / ({\sigma_1^*}^2 \sqrt{d})$, it is true that $d \gg 512 \log \sqrt{d}$. For $III$ and $IV$, when $x \ge 3/4$, we find a probability for
    \begin{align*}
        P(III \ge dx / 32 + 7 {R_{j1}^*}^2 / (32{\sigma_1}^2)) &\le P(\vdot{v}{\mu_j^* - \mu_1^*} \ge \sqrt{d}\sqrt{d}(\sigma_j^2 - \sigma_1^2) / 64 + {R_{j1}^*}^2 / 5) \\
        &+ P( \vdot{v}{\Delta_1} \ge \sqrt{d} \sqrt{d}(\sigma_j^2 - \sigma_1^2) / 64 + {R_{j1}^*}^2 / 64) \\
        &\le \exp( -d/2 \underbrace{(\sqrt{d}(\sigma_j^2 - \sigma_1^2)/(64 R_{j1}^* \sigma_j^*))^2}_{(i)} - {R_{j1}^*}^2 / (64 {\sigma_j^*}^2)) \\
        &+ \exp( -d/2 \underbrace{(\sqrt{d}(\sigma_j^2 - \sigma_1^2)/(4 R_{j1}^* \sigma_j^*))^2}_{(i)} - {R_{j1}^*}^2 / (64 {\sigma_j^*}^2)),
    \end{align*}
    and,
    \begin{align*}
        P(IV \ge dx/64 + {R_{j1}^*}^2 / (20 \sigma_1^2)) &\le P(ZR_{j1}^* \sigma_j^* \ge dx {\sigma_1}^2/4 + 4{R_{j1}^*}^2 /5) \\
        &\le \exp(-d/2 \underbrace{(\sqrt{d}(\sigma_j^2 - \sigma_1^2) / (4R_{j1}^* \sigma_j^*))^2}_{(i')} - {R_{j1}^*}^2 / (64{\sigma_1^*}^2)),
    \end{align*}
    Again, the similar result holds for $\E_{\D_j} [\Delta_w]$. Therefore,
    \begin{align*}
        \E_{\D_j} [\Delta_w] \le 3 (1 + \pi_1^*/\pi_j^*) \exp(-d/256 - {R_{j1}^*}^2 / (64 {\sigma_j^*}^2) ), 
    \end{align*}
    Collecting all cases yields the Lemma. 
    \end{proof}
    
    \paragraph{Errors from own component $j = 1$:} Note that if $j = 1$, first, second, and fourth terms are gone automatically. For the third term, $\E_{\D_1} [\Delta_w] \le c$ for some small $c$ as we have seen several times, and $\E_{\D_1} [(\|v\|^2 - d{\sigma_1^*}^2)^2] \le 2 d {\sigma_1^*}^4$. The fifth term is less than $O( {\sigma_1^*}^2 /d)$ as we have already seen that next estimates for means are already within $c_\mu \sigma_1^*$. Hence the error from own component is less than $c_1$ for some small constant $c_1$.
    
    \paragraph{Errors from all components:} Now we can collect every terms to give a bound the error of $\sigma_1^+$, 
    \begin{align*}
        {\sigma_1^+}^2 - {\sigma_1^*}^2 &\le c_1{\sigma_1^*}^2 + \frac{ \sum_{j \neq 1} \pi_j^* \R{j1}^2 \E_{\D_j} [\Delta_w] } {d\pi_1^+} + \frac{2 \sum_{j \neq 1} \pi_j^* \R{j1} \E_{\D_j} [\Delta_w] }{d\pi_1^+} \\
        &+ \frac{\sum_{j\neq 1} 5 {\sigma_j^*}^2 (\pi_1^* + \pi_j^*) \sqrt{d} \exp(-\R{j1}^2/128\sigt{1}{j}^2) }{d\pi_1^+} \\
        &+ \frac{\sum_{j \neq 1} 50 (\pi_1^* + \pi_j^*) \sqrt{d} \R{j1}\sigt{j}{1} \exp(-\R{j1}^2/64\sigt{1}{j}^2)}{d\pi_1^+} + \frac{c_\mu^2 {\sigma_1^*}^2}{d},
    \end{align*}
    which gives $|{\sigma_1^+}^2 - {\sigma_1^*}^2| \le c_\sigma {\sigma_1^*}^2 / \sqrt{d},$ for some small constant $c_\sigma < 0.5$ given good enough SNR condition in \eqref{eq:pop_separation_condition}.
\end{proof}

\section{Proof for Finite-Sample EM}
\label{appendix:finite_sample}
We define some additional notations. We use $\Eps_j$ to denote the event that the $i^{th}$ sample comes from $j^{th}$ component. Define $\Eps_{j,good} := \Eps_{j,1} \cap \Eps_{j,2} \cap \Eps_{j,3}$ where $\Eps_{j, \cdot}$ are as defined in \eqref{event:good_j_neq_1}. For the simplicity in notation, we now use looser upper bound $\rho_\pi$ for $(1 \vee \pi_1^* / \pi_j^*)$. We use $\lesssim$ when the inequality holds up to some universal constants. Finally, under the modified condition in \eqref{eq:finite_separation_cond}, we will use slightly modified version of Lemma \ref{lemma:error_bounds_Dm_ge_1over2}:
\begin{lemma}
    \label{lemma:error_bounds_finite_sample}
    For well-separated mixture of Gaussians, for $q \in \{0, 1, 2\}$,
    \begin{align}
        \label{ineq:finite_sample_sum_weight_error}
        \rho_\pi \sum_{j\neq 1} \R{j1}^q \exp \left(-\R{j1}^2 / (128c) \sigt{1}{j}^2 \right) &\le c_q {\sigma_1^*}^q \pi_{min},
    \end{align}
    for some small constants $c_q$ given separation condition as in \eqref{eq:finite_separation_cond}. 
\end{lemma}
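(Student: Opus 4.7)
The plan is to reuse the argument from Lemma~\ref{lemma:error_bounds_Dm_ge_1over2}, with constants re-tracked to accommodate the stronger separation in \eqref{eq:finite_separation_cond} and the slightly weaker exponent $1/(128c)$. Setting $x_j := \R{j1}^2 / \sigt{1}{j}^2$, the separation condition immediately yields $x_j \ge C^2 c^2 \log(\rho_\sigma/\pi_{min})$, which for $C \ge 64$ and $c > 2$ places $x_j$ well above the threshold where the elementary bound $x_j^{q/2} \le \exp(x_j/(256c))$ holds (analogous to the $x \ge 128 \ln x$ trick used in the population version).

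Next, I would split the exponent in half. Writing
\[
\R{j1}^q \exp\bigl(-x_j/(128c)\bigr) = \sigt{1}{j}^q \, x_j^{q/2} \, \exp\bigl(-x_j/(256c)\bigr) \cdot \exp\bigl(-x_j/(256c)\bigr),
\]
the first exponential absorbs $x_j^{q/2}$, and by the separation condition the remaining factor satisfies
\[
\exp\bigl(-x_j/(256c)\bigr) \le \exp\bigl(-C^2 c \log(\rho_\sigma/\pi_{min})/256\bigr) \le (\rho_\sigma/\pi_{min})^{-16c}
\]
since $C \ge 64$. Combined with the trivial bound $\sigt{1}{j}^q \le (\sigma_1^* \rho_\sigma)^q$, each summand is at most ${\sigma_1^*}^q \rho_\sigma^q (\rho_\sigma/\pi_{min})^{-16c}$.

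Finally, I would sum over the at most $k - 1$ other components and insert the $\rho_\pi$ prefactor. Using $k \le 1/\pi_{min}$ and $\rho_\pi \le 1/\pi_{min}$ (both immediate from the definitions $\pi_{min} = \min_i \pi_i^*$ and $\rho_\pi = \max_i \pi_i^* / \min_i \pi_i^*$ and the fact that $\max_i \pi_i^* \le 1$), the left-hand side is bounded by
\[
{\sigma_1^*}^q \rho_\sigma^q \pi_{min}^{-2} (\rho_\sigma/\pi_{min})^{-16c} = {\sigma_1^*}^q \, \rho_\sigma^{q - 16c} \, \pi_{min}^{16c - 2}.
\]
Since $\rho_\sigma \ge 1$ and $c > 2$, we have $\rho_\sigma^{q - 16c} \le 1$ and $\pi_{min}^{16c - 2} \le \pi_{min}^{30}$, which is much smaller than $c_q \pi_{min}$ for any prescribed small constant $c_q$. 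There is no real conceptual obstacle; the only care needed is verifying that the three sources of polynomial loss---the $\rho_\sigma^q$ factor, the $k \rho_\pi \le 1/\pi_{min}^2$ factor, and the target $\pi_{min}$ on the right-hand side---all remain comfortably dominated by the exponential decay afforded by the enlarged separation $c \sqrt{\log(\rho_\sigma/\pi_{min})}$, which is precisely how \eqref{eq:finite_separation_cond} is calibrated.
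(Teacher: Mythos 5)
Your proposal is correct and follows essentially the same route as the paper's proof: split the exponent $\exp(-x_j/(128c))$ into two factors of $\exp(-x_j/(256c))$, use one to absorb the polynomial factor $x_j^{q/2}$ (the paper phrases this as $\log(x)/x \le 1/(256c)$ under the enlarged separation), and use the other together with $x_j \ge C^2c^2\log(\rho_\sigma/\pi_{min})$ to produce a factor $(\rho_\sigma/\pi_{min})^{-16c}$ that dominates the $k$, $\rho_\pi \le \pi_{min}^{-1}$, and $\rho_\sigma^q$ losses. The only difference is cosmetic: you make explicit the crude bounds $k \le 1/\pi_{min}$ and $\sigt{1}{j} \le \rho_\sigma\sigma_1^*$ that the paper leaves implicit in its final "$\ll c_q {\sigma_1^*}^q\pi_{min}$" step.
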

\begin{proof}
    The proof is similar to that of Lemma \ref{lemma:error_bounds_Dm_ge_1over2}. Note that $\R{j1}^2 / \sigt{1}{j}^2 \ge C^2 c^2 \log (\rho_\sigma / \pi_{min})$ where the universal constant $C$ is such that $C^2 \ge 4096$. Let $x := \R{j1}^2 / \sigt{1}{j}^2$. Then, since $\log(x) / x$ is decreasing in $x$ whenever $x \ge e$, 
    \begin{align*}
        \frac{\log(x)}{x} &\le \frac{\log(C^2 c^2 \log(\rho_\sigma / \pi_{min}))}{C^2 c^2 \log(\rho_\sigma / \pi_{min})} \le \frac{1}{256c}. 
    \end{align*}
    Applying this to the equation  \eqref{ineq:finite_sample_sum_weight_error} with $q=2$, 
    \begin{align*}
        \rho_\pi \sum_{j\neq 1} \R{j1}^2 \exp \left(-\R{j1}^2 / (128c) \sigt{1}{j}^2 \right) &\le \rho_\pi \sum_{j\neq 1} \sigt{1}{j}^2 \exp \left(-\R{j1}^2 / (256c) \sigt{1}{j}^2 \right) \\
        &\le \rho_\pi \sum_{j\neq 1} \sigt{1}{j}^2 (\rho_\sigma / \pi_{min})^{-32} \ll c_2 {\sigma_1^*}^2 \pi_{min},
    \end{align*}
    which gives the lemma with some small constant $c_2$. Similar claims hold for $q = 0, 1$. 
\end{proof}
We also restate here the Proposition \ref{lemma:indic_prob_decompose}.
\begin{proposition}[Restatement of Proposition \ref{lemma:indic_prob_decompose}]
    Let $X$ be a random $d$-dimensional vector, and $A$ be an event in the same probability space with $p = P(A) > 0$. Define random variable $Y = X|A$, {\em i.e.}, $X$ conditioned on event $A$, and $Z = \indic_{X \in A}$. Let $X_i, Y_i, Z_i$ be the i.i.d samples from corresponding distributions. Then, the following holds, 
    \begin{align}
        P\Bigg( \Big\|\frac{1}{n} \sum_{i=1}^{n} X_i \indic_{X_i \in A} - &\E[X \indic_{X \in A}] \Big\| \ge t \Bigg) \le \max_{m \le n_e} P \left(\frac{1}{n} \left\|\sum_{i=1}^{m} (Y_i - \E[Y]) \right\| \ge t_1 \right) \nonumber \\
        & + P\left(\|\E[Y]\| \left|\frac{1}{n} \sum_{i=1}^n Z_i - p\right| \ge t_2 \right)
        + P\left(\left|\sum_{i=1}^n Z_i\right| \ge n_e+1\right).
    \end{align}
    for any $0 \le n_e \le n$ and $t_1 + t_2 = t$.
\end{proposition}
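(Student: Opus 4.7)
The plan is to decompose the empirical sum by conditioning on $N := \sum_{i=1}^n Z_i$, the random number of samples landing in event $A$. The critical observation is that, conditionally on $N = m$ and on which subset $I \subset [n]$ of size $m$ indexes the successes, the samples $\{X_i\}_{i \in I}$ are i.i.d.\ copies of the conditional distribution $Y = X \mid A$; moreover, this conditional law depends only on $|I| = m$, not on $I$ itself. Denote the ordered sequence of these conditioned samples by $Y_1, \ldots, Y_N$. Using $\E[X \indic_{X \in A}] = p\, \E[Y]$, the key algebraic identity I would use is
\begin{align*}
    \frac{1}{n}\sum_{i=1}^n X_i \indic_{X_i \in A} - \E[X \indic_{X \in A}] = \frac{1}{n}\sum_{j=1}^{N}\bigl(Y_j - \E[Y]\bigr) + \Bigl(\frac{N}{n} - p\Bigr)\E[Y],
\end{align*}
so by the triangle inequality with $t = t_1 + t_2$, the event that the left-hand norm exceeds $t$ is contained in the union of $\bigl\{\|n^{-1}\sum_{j=1}^N (Y_j - \E[Y])\| \ge t_1\bigr\}$ and $\bigl\{\|\E[Y]\|\,|N/n - p| \ge t_2\bigr\}$.

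The second of these directly contributes the second term on the right-hand side of the claim. For the first, I would partition according to whether $N \le n_e$ or $N \ge n_e + 1$; the latter sub-event is absorbed into $P(N \ge n_e + 1)$, producing the third term. On $\{N \le n_e\}$, condition on $N = m$ and invoke the conditional i.i.d.\ structure to obtain
\begin{align*}
    \sum_{m=0}^{n_e} P(N = m)\, P\!\left(\left\|\tfrac{1}{n}\sum_{j=1}^m (Y_j - \E[Y])\right\| \ge t_1\right) \le \max_{m \le n_e} P\!\left(\left\|\tfrac{1}{n}\sum_{j=1}^m (Y_j - \E[Y])\right\| \ge t_1\right),
\end{align*}
where on the right the probabilities are over a deterministic number of unconditional i.i.d.\ copies of $Y$. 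Summing the three pieces yields the stated bound.

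The main obstacle is purely bookkeeping: one must carefully invoke the tower property to justify the conditional i.i.d.\ claim, and ensure the triangle-inequality splits are phrased as inclusions of events (so the union bound applies) rather than as almost-sure inequalities between objects whose supports depend on the random index set. Once this scaffolding is in place, no concentration or measure-theoretic input beyond the i.i.d.\ hypothesis is required; the proposition reduces to a decomposition identity, a triangle inequality, and a union bound over the support of $N$.
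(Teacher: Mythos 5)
Your argument is correct: the identity $\frac{1}{n}\sum_i X_i \indic_{X_i \in A} - \E[X\indic_{X\in A}] = \frac{1}{n}\sum_{j=1}^{N}(Y_j - \E[Y]) + (N/n - p)\E[Y]$, the event-inclusion form of the triangle inequality, the split on $\{N \le n_e\}$ versus $\{N \ge n_e+1\}$, and the conditional i.i.d.\ structure of the in-event samples given $N=m$ together yield exactly the stated bound. Note that the paper does not prove this proposition itself but imports it verbatim from Proposition 5.3 of the cited prior work, and your decomposition is essentially the standard argument used there, so there is nothing to flag.
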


\subsection{Concentration in Mixing Weights}
\begin{proof}
We give a concentration result for mixing weights first. We can first check that
\begin{align*}
    \tpi_1^+ - \pi_1^+ &= \frac{1}{n} \sum_{i=1}^n w_{1,i} - \E_\D [w_1].
\end{align*}
Then, single $w_{1,i}$ can be decomposed using indicators. Then,
\begin{align*}
    w_{1,i} = w_{1,i} \indic_{\Eps_{1}} + \sum_{j\neq 1} w_{1,i} \indic_{\Eps_j \cap \Eps_{j,good}} + w_{1,i} \indic_{\Eps_j \cap \Eps_{j,good}^c}.
\end{align*}
Now let us apply proposition \ref{lemma:indic_prob_decompose} step by step.
\paragraph{With $\Eps_1$:} Note that $P(\Eps_1) = \pi_1^*$. We can pick $n_e = 2 n \pi_1^*$. By multiplicative version of concentration inequality for Bernoulli random variable, the second and third terms will be safely killed. Also, we note that $w_{1,i}$ is bounded random variable. Therefore,
\begin{align*}
    P( 1/n |\sum_{i=1}^{n_e} (w_{1,i} - \E_{\D_1} [w_{1,i}]) | \ge t ) \le \exp(-2 n_e (nt / n_e)^2) = \exp(-2 n^2 / n_e t^2).
\end{align*}
Thus, $t = O\left(\sqrt{n_e/n} \sqrt{\ln(k^2T/\delta)/n} \right) = O(\sqrt{\pi_1^* \ln(k^2T/\delta)/n})$ gives $\delta/(k^2T)$ error bound. 

\paragraph{With $\Eps_j \cap \Eps_{j, good}$:} When a good sample comes from $j^{th} \neq 1$ component, the weight given to first component is very small, {\it i.e.}, $w_{1,i} \indic_{\Eps_j \cap \Eps_{j, good}} \le 5\rho_\pi \exp(-{R_{j1}^*}^2 / (64 (\sigma_j^* \vee \sigma_1^*)^2))$. Thus, it is a bounded random variable, therefore its statistical error can be bounded by
\begin{align*}
    t = O \left(\rho_\pi \sqrt{\pi_j^*} \sqrt{\ln(k^2T / \delta)/n} \right) \exp \left(- {R_{j1}^*}^2 / 64 (\sigma_j^* \vee \sigma_1^*)^2 \right),
\end{align*}
with probability at least $1 - (k^2 T / \delta)$. 

\paragraph{With $\Eps_{j, good}^c$:} This is a very special case, since the chance of this event to happen is $p := 5 \pi_j^* \exp(-{R_{j1}^*}^2 / 64(\sigma_1^* \vee \sigma_j^*)^2 )$, {\it i.e.}, exponentially small. We first need to bound the number of samples that have fallen into this bad event with high probability. We divide the case as when $n \ge p^{1/c}$ and $n \le p^{1/c}$ for some constant $c > 2$. 

Let us first consider when $n \ge p^{1/c}$. Recall the Bernstein's inequality, which states for Bernoulli random variable that
\begin{align*}
    P \left( \Big|\frac{1}{n} \sum_{i=1}^n Z_i - p \Big| \ge t \right) \le \exp(- nt^2 / (2p + 2/3 t)).
\end{align*}
Solving the right hand side to get a $\delta/(k^2T)$ probability bound, we get
\begin{align*}
    n_e = O(c \ln (k^2T/\delta) + c \sqrt{np \ln (k^2T/\delta)}). 
\end{align*}
Then using the proposition, we can decide how large the sum of $1/n \sum_{i=1}^{n} w_{1,i} \indic_{\Eps_{j, good}^c}$, which will be
\begin{align*}
    t = O \left( \sqrt{p \vee 1/n} \sqrt{\ln^2(k^2T/\delta) /n} \right) = O \left( p^{1/2c} \sqrt{\ln^2(k^2T/\delta) /n} \right) = \tilde{O}\left(\pi_{min}^{32} \sqrt{1/n} \right),
\end{align*}
with probability $1 - n^c$. 

On the other side, if $n \le p^{1/c}$, then we will have $n_e = 0$ with probability at least $1 - np \ge 1 - p^{1 - 1/c}$. Note that $p^{1 - 1/c} \le O(\pi_{min}^{32} / n^{c - 2})$ given SNR condition as in the theorem. Thus, in this case, with probability at least $1 - 1/(n^{c-2} k^{32})$, we have 
\begin{align*}
    \left|\frac{1}{n} \sum_{i=1}^{n} w_{1,i} \indic_{\Eps_{j,good}^c} - \E_{\D} [w_{1,i} \indic_{\Eps_{j,good}^c}] \right| \le \E_{\D_j} [w_{1,i} \indic_{\Eps_{j,good}^c}] \le \exp(-{R_{j1}^*}^2 / (64 (2c) (\sigma_1^* \vee \sigma_j^*)^2)).
\end{align*}

\paragraph{Collect all errors:} Now we can collect all items we found for each cases. Taking union bound over all $O(k)$ items, with probability $1 - O(\delta/kT) - O(1/(n^{c-2} k^{31}))$, 
\begin{align*}
    \left|\frac{1}{n} \sum_{i=1}^{n} w_{1,i} - \E_{\D} [w_{1,i}] \right| &\le O \left(\sqrt{\pi_1^* \ln(k^2T / \delta) / n} \right) +  O\left( \rho_\pi \sqrt{\ln^2 (k^2T / \delta) / n} \right) \sum_{j\neq1} \exp(-\R{j1}^2/(128c\sigt{j}{1}^2)) \\
    &\le O(\pi_1^* \epsilon),
\end{align*}
where we used Lemma \ref{lemma:error_bounds_finite_sample}. Thus, $|\tpi_1^+ - \pi_1^*| / \pi_1^* \le |\tpi_1^+ - \pi_1^+| / \pi_1^* + |\pi_1^+ - \pi_1^*| / \pi_1^* \le \epsilon + \gamma D_m$. Thus, after $T = O(\log(1/ \epsilon) )$ iteration, we get $|\tpi_1^{(T)} - \pi_1^*| \le \pi_1^* \epsilon$ with probability $1 - O(\delta/k) - O(T/(n^{c-2}k^{31}))$. We can take union bound over all $O(k)$ components to get the result for all components with probability $1 - O(\delta) - O(\log (1/\epsilon) / n^{c-2} k^{30})$.
\end{proof}

\subsection{Concentration in Means}
\begin{proof}
Now let us look at the iteration for means. First, we should observe that
\begin{align*}
    \tmu_1^+ - \mu_1^* &= \left(\frac{1}{n} \sum_{i=1}^n w_{1,i}(X - \mu_1^*) \right) / \left(\frac{1}{n} \sum_{i=1}^n w_{1,i} \right) \\
    &= \left(\frac{1}{n} \sum_{i=1}^n w_{1,i} (X_i - \mu_1^*) - \E_{\D} [w_1 (X - \mu_1^*)] + \E_\D[w_1 (X - \mu_1^*)] - \E_\D[w_1^* (X - \mu_1^*)] \right) / \tpi_1^+ \\
    &= \left( \underbrace{\frac{1}{n} \sum_{i=1}^n w_{1,i} (X_i - \mu_1^*) - \E_{\D} [w_1 (X - \mu_1^*)]}_{e_\mu} + \underbrace{\E_\D[\Delta_w (X - \mu_1^*)]}_{B_\mu} \right) / \tpi_1^+. 
\end{align*}
$B_\mu$ is decreasing as we have seen for population EM, we focus on the fluctuation of the sum of random variables $W = w_1 (X - \mu_1^*)$. We further decompose this random variable using disjoint events. That is,
\begin{align*}
    W_i = W_i \indic_{\Eps_1} + \sum_{j\neq 1}^n \left(W_i \indic_{\Eps_{j, good}} + W_i \indic_{\Eps_{j,1}^c} + W_i \indic_{\Eps_{j,1} \cap \Eps_{j,2}^c} + W_i \indic_{\Eps_{j,1} \cap \Eps_{j,2} \cap \Eps_{j,3}^c}\right).
\end{align*}
Now for each decomposed sample, we compute $\psi_2$ or $\psi_1$ norm conditioned on each event, and sum everything at the end. 

\paragraph{With $j \neq 1, \Eps_{j,good}$:} Let $Y_i = W_i \indic_{\Eps_{j, good}} | \Eps_{j}$ and $Z_i = \indic_{j}$. Then using the proposition, 
\begin{align*}
    P \Bigg( \Bigg\| 1/n \sum_i^n W_i \indic_{j,good} - &\E_\D [W \indic_{j, good}] \Bigg\| \ge t \Bigg) \le P \left(\frac{1}{n} \left\| \sum_i^{n_e} Y_i - \E_{\D_j} [Y] \right\| \ge t_1 \right) \\
    &+ P \left(\E_{\D_j} [Y] \left|\frac{1}{n} \sum_i^n Z_i - \pi_j^* \right| \ge t_2 \right) +  P \left(\sum_{i=1}^n Z_i \ge n_e + 1 \right).
\end{align*}
Now we find a sub-Gaussian norm of $Y_i$, which can be computed as
\begin{align*}
    \|Y\|_{\psi_2} &= \sup_{p\ge1} p^{-1/2} \E_{\D_j} [|w_1 \indic_{j, good} \vdot{v + \mu_j^* - \mu_1^*}{s}|^p ]^{1/p} \\
    &\le 10 \rho_\pi \exp(-{R_{j1}^*}^2 / 64 {\sigma_1^*}^2) \sup_{p\ge1} p^{-1/2} (\E_{\D_j} [|\vdot{v}{s}|^p]^{1/p} + R_{j1}^*) \\
    &\le 20 \rho_\pi (R_{j1}^* + \sigma_1^*) \exp(-{R_{j1}^*}^2 / 64{\sigma_1^*}^2).
\end{align*}
Meanwhile, we can set $n_e = n\pi_j^* + O(\sqrt{np \ln(k^2T/\delta)}) \le 2n\pi_j^*$ as previously to get a high probability bound for the number of samples from $j^{th}$ component. Using standard 1/2-covering argument for $d$-dimensional sub-Gaussian vector, we have
\begin{align*}
    P \left( \left\| \sum_{i=1}^{n_e} Y_i - \E_{\D_j} [Y] \right\| \ge n t_1 \right) &\le \exp \left(- \frac{(nt_1)^2}{n_e \|Y\|_{\psi_2}^2} + Cd \right),
\end{align*}
for some universal constant $C$. That is, $t_1 = O\left( \|Y\|_{\psi_2} \sqrt{\frac{n_e}{n}} \sqrt{\frac{d + \log(k^2T/\delta)}{n}} \right)$ with probability at least $1 - \delta / (k^2 T)$.

\paragraph{With $j \neq 1, \Eps_{j,1}^c$:} We can use the same trick with $Y_i = W_i | \Eps_{j} \cap \Eps_{j,1}^c$. Note that $p := P(\Eps_j \cap \Eps_{j,1}^c) \le \pi_j^* \exp(-{R_{j1}^*}^2 / 64{\sigma_1^*}^2)$. Sub-Gaussian norm of $Y_i$ in this case can be bounded with using one of the lemmas.
\begin{align*}
    \|Y\|_{\psi_2} &= \sup_{p\ge1} p^{-1/2} \E_{\D_j} [|w_1 \vdot{v + \mu_j^* - \mu_1^*}{s}|^p | \vdot{v}{u} \ge R_{j1}^*/5]^{1/p} \\
    &\le \sup_{p\ge1} p^{-1/2} (R_{j1}^* + \E_{\D_j} [|\vdot{v}{s}|^p | \vdot{v}{u} \ge R_{j1}^*/5]^{1/p}) \\
    &\le R_{j1}^* + \sup_{p\ge1} p^{-1/2} \sigma_j^* (2R_{j1}^*/5\sigma_j^* + (2p)^{1/2}) \\
    &\le 2R_{j1}^*,
\end{align*}
where $u$ is a unit vector in direction $\mu_j^* - \mu_1^*$. Since the probability of the bad event is very small, we divide the cases into $n \ge p^{-1/c}$ and $n \le p^{-1/c}$ for some $c > 2$, as we have done for handling bad events for mixing weights. From Bernstein's inequality, if $n \ge p^{-1/c}$, with probability at least $1 - \delta/k^2T$,
\begin{align*}
    n_e \le O \left( \ln (k^2T / \delta) + \sqrt{np \ln (k^2T / \delta)} \right).
\end{align*}
Using this, we can give a good bound for $t_1$ with high probability,
\begin{align*}
    t_1 &= O\left(\|Y\|_{\psi_2} \sqrt{n_e/n} \sqrt{(d + \ln(k^2T / \delta))/n} \right) \\
    &\le O\left(R_{j1}^* \sqrt{1/n \vee \sqrt{p/n}} \sqrt{\ln (k^2T / \delta)} \sqrt{(d + \ln (k^2T\delta))/n} \right) \\
    &\le \tilde{O} \left(R_{j1}^* p^{1/2c} \sqrt{d / n} \right),
\end{align*}
and
\begin{align*}
    t_2 = O(R_{j1}^* \ln(k^2T/\delta) / n) = O \left(R_{j1}^* p^{1/2c} \sqrt{\ln^2 (k^2T/\delta) / n} \right),
\end{align*}
getting a similar scale of fluctuation.

For the other case when $n \le p^{-1/c}$, we can again get $n_e = 0$ with probability at least $1 - p^{1-1/c}$, which is again greater than $1 - (1/(n^{c-2} k^{32})$. In this case, $t_1 = 0$ and $t_2 = \E[Y] p \le 2R_{j1}^* p^{1 - 1/c} / n \lesssim \sigma_1^* \pi_{min} / n$, which is again sufficiently small. 

In all cases, we have that the fluctuation conditioned on this bad event is $\tilde{O} (\sigma_1^* \pi_{min} \sqrt{d/n})$ with probability at least $1 - \delta/(k^2T) - 1/(n^{c-2}k^{32})$. 

\paragraph{With $j \neq 1, \Eps_{j,1} \cap \Eps_{j,2}^c$:} Let $Y_i = W_i \indic_{\Eps_{j,1}} | \Eps_{j,2}^c$. Then $p:= P(\Eps_{j,2}^c) \le 2 \pi_j^* \exp(-{R_{j1}^*}^2 / 64 (\sigma_1^* \vee \sigma_j^*)^2)$. We can again follow the same path as we have done for other bad events. The key step is to get a sub-Gaussian norm.
\begin{align*}
    \|Y\|_{\psi_2} &= \sup_{p\ge1} p^{-1/2} \E_{\D_j} [|w_1 \indic_{\Eps_{j,1}} \vdot{v + \mu_j^* - \mu_1^*}{s}|^p | \Eps_{j,2}^c ]^{1/p} \\
    &\le R_{j1}^* + \sup_{p\ge1} p^{-1/2} \E_{\D_j} [|w_1 \vdot{v}{s}|^p | \vdot{v}{\Delta_1} \ge {R_{j1}^*}^2 / 64]^{1/p} \\
    &\qquad + \sup_{p\ge1} p^{-1/2} \E_{\D_j} [|w_1 \vdot{v}{s}|^p | \vdot{v}{\Delta_j} \ge (\sigma_j^*/\sigma_1^*)^2 {R_{j1}^*}^2 / 64]^{1/p} \\
    &\le R_{j1}^* + 2\sigma_j^* \sup_{p\ge1} p^{-1/2} \E_{v \sim \mathcal{N}(0,I_d)} 
    [ |\vdot{v}{s}|^p | \vdot{v}{u} \ge {R_{j1}^*} / 4(\sigma_j^* \vee \sigma_1^*)]^{1/p} \\
    &\le R_{j1}^* + 2\sup_{p\ge1} p^{-1/2} (\sigma_j^* (2R_{j1}^*/4(\sigma_j^* \vee \sigma_1^*) + \sqrt{2p})) \\
    &\le 3R_{j1}^*. 
\end{align*}
The rest of the step is similar to the previous case. We can thus again get a results that the deviation in this case is also $\tilde{O} \left(R_{j1}^* \exp(-{R_{j1}^*}^2/ (64\cdot (2c) (\sigma_1^* \vee \sigma_j^*)^2)) \sqrt{d/n} \right)$ with probability at least $1 - \delta/(k^2T) - 1/(n^{c-2}k^{32})$.

\paragraph{With $j \neq 1, \Eps_{j,1} \cap \Eps_{j,2} \cap \Eps_{j,3}^c $:} Let $Y_i = W_i \indic_{\Eps_{j,1} \cap \Eps_{j,2}} | \Eps_{j,3}^c$. Again, $p:= P(\Eps_{j,3}^c) \le 2 \pi_j^* \exp(-{R_{j1}^*}^2 / 64 (\sigma_1^* \vee \sigma_j^*)^2)$ again. This time, we can invoke lemma 3.9 with $\alpha = {R_{j1}^*}^2 / 64(\sigma_1^* \vee \sigma_j^*)^2$, to get
\begin{align*}
    \|Y\|_{\psi_2} &= \sup_{p\ge1} p^{-1/2} \E_{\D_j} [|w_1 \indic_{\Eps_{j,1}} \vdot{v + \mu_j^* - \mu_1^*}{s}|^p | \Eps_{j,3}^c ]^{1/p} \\
    &\le R_{j1}^* + \sup_{p\ge1} p^{-1/2} \E_{\D_j} [|w_1 \vdot{v}{s}|^p | \|v\|^2/{\sigma_j^*}^2 \ge d + 2\sqrt{d\alpha} + 2\alpha]^{1/p} \\
    &\qquad + \sup_{p\ge1} p^{-1/2} \E_{\D_j} [|w_1 \vdot{v}{s}|^p | \|v\|^2/{\sigma_j^*}^2 \le d - 2\sqrt{d\alpha}]^{1/p} \\
    &\le R_{j1}^* + c_1 \sigma_j^* + \sigma_j^* \sup_{p\ge1} p^{-1/2} \E_{\D_j} [|\vdot{v}{s}|^p | \|v\|^2 \ge d + 2\sqrt{d\alpha} + 2\alpha]^{1/p}  \\
    &\le R_{j1}^* + c_1\sigma_j^* + \sigma_j^* \sup_{p\ge1} p^{-1/2} ((64\alpha)^{1/2} + 4^{1/p} (8\alpha + p)^{1/2}) \\
    &\le 3R_{j1}^*. 
\end{align*}
The rest of the step is similar to the previous case. We can thus again get a results that the deviation in this case is also $\tilde{O} \left(R_{j1}^* \exp(-{R_{j1}^*}^2/ (64 \cdot 2c (\sigma_1^* \vee \sigma_j^*)^2)) \sqrt{d/n} \right)$ with probability at least $1 - \delta/(k^2T) - 1/(n^{c-2}k^{32})$.

\paragraph{With $\Eps_1$:} Let $Y_i = W_i | \Eps_{1}$, $p := P(\Eps_{1}) = \pi_1^*$. $\|Y\|_{\psi_2}$ can be easily verified such that
\begin{align*}
    \|Y\|_{\psi_2} &= \sup_{p \ge 1} p^{-1/2} \E_{\D_1} [|w_1 \vdot{v}{s}|^p]^{1/p} \le K \sigma_1^*,
\end{align*}
for some constant K. We can set $n_e = 2n\pi_1^*$ as usual, to get the statistical error by those samples are
\begin{align*}
    \tilde{O}\left( \sigma_1^* \sqrt{\pi_1^*} \sqrt{d/n} \right),
\end{align*}
with probability at least $1 - \delta/(k^2T)$.

Summing up every terms, the entire error is
\begin{align*}
    e_\mu &\lesssim \sigma_1^* \left( \sqrt{\pi_1^*} \sqrt{d/n} + \sqrt{d/n} \rho_\pi \sum_{j \neq 1} \R{j1} \exp\left(-\R{j1}^2/(128c\sigt{j}{1}^2) \right) \right) \\
    &\lesssim \sigma_1^* \pi_{min} \epsilon,
\end{align*}
with probability $1 - O(\delta/kT + 1/n^{c-2} \cdot 1/k^{31}))$. In consequence, 
\begin{align*}
    \|\tmu_1^+ - \mu_1^*\| \le \pi_{min} \sigma_1^* (\epsilon + \gamma \|\tmu_1 - \mu_1^*\| / \sigma_1^* ) / \tpi_1^+ \le \sigma_1^* (\epsilon + \gamma D_m ).   
\end{align*}
Similarly to mixing weights, after taking union bound over all $k$ components and $T = O(\log(1/\epsilon))$ iterations, we get $\|\tmu_1^{(T)} - \mu_1^*\| \le \sigma_1^* \epsilon$ with probability $1 - O(\delta + T/n^{c-2} \cdot 1/k^{30})$. 
\end{proof}

\subsection{Concentration in Variances}
\begin{proof}

With finite samples, the finite-sample EM iteration for variance is
\begin{align*}
    \tsigma_1^+{}^2 - {\sigma_1^*}^2 &= \left( \sum_{i=1}^n w_{1,i} \|X_i - \tmu_1^+\|^2 \right) / \left(d \sum_{i=1}^n w_{1,i} \right) - {\sigma_1^*}^2 \\
    &= \left( 1/n \sum_{i=1}^n w_{1,i} \|X_i - \mu_1^*\|^2) / (d \tpi_1^+) \right) - {\sigma_1^*}^2 - \|\tmu_1^+ - \mu_1^*\|^2 / d \\
    &= \underbrace{\left( \frac{1}{n} \sum_{i=1}^n w_{1,i} (\|X_i - \mu_1^*\|^2 - d{\sigma_1^*}^2) - \E[w_1 (\|X - \mu_1^*\|^2 - d{\sigma_1^*}^2)] \right)}_{e_\sigma}  / (d\tpi_1^+) \\
    &\qquad + \frac{\E[\Delta_w (\|X - \mu_1^*\|^2 - d{\sigma_1^*}^2)]}{d\tpi_1^+} - \frac{\|\tmu_1^+ - \mu_1^*\|^2}{d}.
\end{align*}

In order to be more precise, we may need to target $\tO(\sqrt{1/nd})$ for statistical precision. But it is enough to proceed more roughly, since $\tilde{O}(\sqrt{1/n})$ is enough to guarantee $\epsilon/\sqrt{d}$ statistical error with $n = \Omega(d)$ samples (we need this for estimating means). Let us define $W_i = w_{1,i} (\|X_i - \mu_1^*\|^2 - d{\sigma_1^*}^2)$ and use the decomposition strategy as we have done for $\mu$.

\paragraph{With $j \neq 1, \Eps_{j,good}$:} Let $Y_i = W_i \indic_{j,good} | \Eps_j$, $p:= \pi_j^*$. First task is, similarly, to find a sub-exponential norm (since now $Y$ are sum of squared variables). We first compute it,
\begin{align*}
    \|Y\|_{\psi_1} &= \sup_{p\ge1} p^{-1} \E_{\D_j} [|w_1 \indic_{j, good} (\|v + \mu_j^* - \mu_1^*\|^2 - d{\sigma_1^*}^2)|^p]^{1/p} \\
    &\le |w_1\indic_{j,good}| \sup_{p\ge1} p^{-1} \Bigg( {R_{j1}^*}^{2} + \E_{\D_j} [\|v\|^{2p}]^{1/p} + 2 \E_{\D_j} [|\vdot{v}{\mu_j^* - \mu_1^*}|^p]^{1/p} + \E_{\D_j} [|d{\sigma_1^*}^2|^p ]^{1/p} \Bigg) \\
    &\le 10 \rho_\pi \exp \left( -\frac{{R_{j1}^*}^2}{64(\sigma_1^* \vee \sigma_j)^2} \right) \sup_{p\ge1} p^{-1} \left( ({R_{j1}^*}^2 + d{\sigma_1^*}^2) + K R_{j1}^* \sigma_j^* \sqrt{p} + K' d {\sigma_j^*}^2 p \right) \\
    &\le C\rho_\pi \exp \left( - {R_{j1}^*}^2 / 64 (\sigma_1^* \vee \sigma_j^*)^2 \right) \left( {R_{j1}^*}^2 + d{\sigma_1^*}^2 + d{\sigma_j^*}^2 + R_{j1}^* \sigma_j^* \right) \\
    &\le C \rho_\pi \left( \R{j1}^2 + d \sigt{j}{1}^2 \right) \exp \left( - {R_{j1}^*}^2 / 64 (\sigma_1^* \vee \sigma_j^*)^2 \right).
\end{align*}
Here we bound this term with the tail bound for sub-exponential random variable with $\psi_1$-norm $\|Y\|_{\psi_1}$. Note that for sub-exponential random variable, from \cite{vershynin2010introduction}, 
\begin{align*}
    \left| \sum_{i=1}^{n_e} (Y - \E[Y]) \right| \le \|Y\|_{\psi_1} O \left( \sqrt{ n_e \log(1/\delta')} + \log(1/\delta') \right),
\end{align*}
with probability $1-\delta'$. In order to decide the statistical fluctuation, we just need to pick the maximum among $\sqrt{p/n}$ and $1/n$, which is in effect same to the case when $Y$ is sub-Gaussian. For this event, we can set $n_e = 2n\pi_j^*$ as before to bound the number of samples. Thus, we get bound the statistical error as
\begin{align*}
    \tilde{O} \left( \rho_\pi \sqrt{\pi_j^*} \sqrt{1/n} \right) \left( \R{j1}^2 + d\sigt{j}{1}^2 \right) \exp \left(- \R{j1}^2 / 64\sigt{j}{1}^2 \right),
\end{align*}
with probability at least $1 - \delta/(k^2T)$. Note that $d$ term will be canceled out with division by $d$ at the end.

\paragraph{With $j \neq 1, \Eps_{j,1}^c$:} Similarly, we find sub-exponential norm of $Y = W | \Eps_j \cap \Eps_{j,1}^c$.
\begin{align*}
    \|Y\|_{\psi_1} &= \sup_{p\ge1} p^{-1} \E_{\D_j} [|w_1 (\|v + \mu_j^* - \mu_1^*\|^2 - d{\sigma_1^*}^2)|^p | \vdot{v}{\mu_j^* - \mu_1^*} \ge R_{j1}^*/5 ]^{1/p} \\
    &\le \sup_{p\ge1} p^{-1} \Bigg( ({R_{j1}^*}^2 + d{\sigma_1^*}^2) + \E_{\D_j} [\|v\|^{2p} | \vdot{v}{\mu_j^* - \mu_1^*} \ge R_{j1}^*/5]^{1/p} \\
    &\qquad \qquad \qquad + 2 \E_{\D_j} [|\vdot{v}{\mu_j^* - \mu_1^*}|^p | \vdot{v}{\mu_j^* - \mu_1^*} \ge R_{j1}^*/5]^{1/p} \Bigg) \\
    &\le {R_{j1}^*}^2 + d{\sigma_1^*}^2 + \sup_{p\ge1} p^{-1} \left( 2 R_{j1}^* \sigma_j^* (2{R_{j1}^*}/(5\sigma_j^*) + \sqrt{2p}) + (8 {R_{j1}^*}^2/25 + 4p + 2\pi^{1/p} (d+p-1) ) \right) \\
    &\le {R_{j1}^*}^2 + d{\sigma_1^*}^2 + \left( {R_{j1}^*}^2 + 5{\sigma_j^*}^2 + 3d {\sigma_j^*}^2 \right) \le 3 {R_{j1}^*}^2 + d {\sigma_1^*}^2 + 3d{\sigma_j^*}^2.
\end{align*}
Rest of the procedure follows similarly to the cases handled bad cases on means. We will get the statistical error of
\begin{align*}
    \tilde{O} \left( \left({R_{j1}^*}^2 + d(\sigma_1^* \vee {\sigma_j^*})^2 \right) p^{1/2c} \sqrt{1/n} \right) = \tO \left( \sqrt{1/n} \right) \left({R_{j1}^*}^2 + d(\sigma_1^* \vee {\sigma_j^*})^2 \right) \exp \left(-\R{j1}^2 / (128c \sigt{j}{1}^2) \right),
\end{align*}
with probability $1 - 1/(n^{c-2} k^{32})$. 

\paragraph{With $j \neq 1, \Eps_{j,2}^c$:} For this, we can follow exactly same procedure to $\Eps_{j,1}^c$ case to get the same result. 

\paragraph{With $j \neq 1, \Eps_{j,3}^c$:} We need to bound the $p^{th}$ norm conditioned on $\|v\|^2 / {\sigma_j^*}^2 \ge r^2$ where $r^2 := d + 2\sqrt{d\alpha} + 2\alpha$, where $\alpha = {R_{j1}^*}^2 / 64(\sigma_1^* \vee \sigma_j^*)^2$. 
\begin{align*}
    \|Y\|_{\psi_1} &= \sup_{p \ge 1} p^{-1} \left( {R_{j1}^*}^2 + d{\sigma_1^*}^2 + \E_{\D_j} [\|v\|^{2p} | \|v\|^2 / {\sigma_j^*}^2 \ge r^2]^{1/p} + 2 R_{j1}^* \E_{\D_j} [|\vdot{v}{s}|^p | \|v\|^2/{\sigma_j^*}^2 \ge r^2]^{1/p} \right) \\
    &\le {R_{j1}^*}^2 + d{\sigma_1^*}^2 \\
    &\qquad \qquad + \sup_{p\ge1} p^{-1} \Bigg({\sigma_j^*}^2 \E_{v \sim \mathcal{N}(0,I_d)} [\|v\|^{2p} | \|v\|^2 \ge r^2]^{1/p} + 2R_{j1}^* \sigma_j^* \E_{v \sim \mathcal{N}(0,I_d)} [|\vdot{v}{s}|^{p} | \|v\|^2 \ge r^2]^{1/p} \Bigg),
\end{align*}
where $s$ is unit vector in direction $\mu_j^* - \mu_1^*$. We can invoke Lemma \ref{lemma:vl2_conditioned_vl2} to get
\begin{align*}
    \E_{v \sim \mathcal{N}(0,I_d)} [\|v\|^{2p} | \|v\|^2 \ge r^2]^{1/p} &\le (4r^2) + 4^{1/p} (d+4p) \exp(-r^2/8p)\\
    &\le 4d + (R_{j1}^*/(\sigma_1^* \vee \sigma_j^*)) \sqrt{d} + ({R_{j1}^*}^2 / 8 (\sigma_1^* \vee \sigma_j^*)^2) + 4^{1/p} (d + 4p) \exp(-d^2/8p) \\
    &\le 5d + 5p + (R_{j1}^*/\sigma_j^*)\sqrt{d} + {R_{j1}^*}^2 / (8{\sigma_j^*}^2).
\end{align*}
We can also invoke Lemma \ref{lemma:vp_conditioned_vl2} to get
\begin{align*}
    \E_{v \sim \mathcal{N}(0,I_d)} [|\vdot{v}{s}|^{p} | \|v\|^2 \ge r^2]^{1/p} &\le (64\alpha)^{1/2} + 4^{1/p} (8\alpha + 2p)^{1/2} \exp(-\alpha/2p) \\
    &\le 2 R_{j1}^* / (\sigma_1^* \vee \sigma_j^*) + 4p^{1/2}.
\end{align*}
Now we can further continuing to bound sub-exponential norm as
\begin{align*}
    \|Y\|_{\psi_1} &\le {R_{j1}^*}^2 + d {\sigma_1^*}^2 + \sup_{p\ge1} p^{-1} \Bigg({R_{j1}^*}^2/8 + R_{j1}^*\sigma_j^* \sqrt{d} + 5d{\sigma_j^*}^2 + 5p{\sigma_j^*}^2 + 2R_{j1}^* \sigma_j^* + 4p^{1/2} {\sigma_j^*}^2 \Bigg) \\
    &\le 4 {R_{j1}^*}^2 + d {\sigma_1^*}^2 + 15 d {\sigma_j^*}^2 + 3R_{j1}^* \sigma_j^* \sqrt{d}.
\end{align*}
Hence following the same procedure for bad events, statistical fluctuation will be again smaller than 
\begin{align*}
    \tO \left(\sqrt{1/n} \left({R_{j1}^*}^2 + d(\sigma_1^* \vee {\sigma_j^*})^2 \right) \exp \left(-\R{j1}^2 / (128c \sigt{j}{1}^2) \right) \right).
\end{align*} 

\paragraph{With $j = 1$:} Finally, we need to handle this case. We recall Lemma \ref{lemma:chi_square_pnorm_bound}, sub-exponential norm will be less than 
\begin{align*}
    \|Y\|_{\psi_1} &= \sup_{p\ge1} p^{-1} \E_{\D_1} [|w_1 (\|v\|^2 - d {\sigma_1^*}^2)|^p]^{1/p} \\
    &\le 3 d{\sigma_1^*}^2.
\end{align*}
We can use Proposition \ref{lemma:indic_prob_decompose} with $n_e = 2n\pi_1^*$. Similar to $\Eps_{j,good}$ case, the statistical fluctuation is $\tilde{O} (d {\sigma_1^*}^2 \sqrt{\pi_1^*} \sqrt{1/n})$ with probability at least $1 - \delta/(k^2T)$. 

Now collecting all $O(k)$ error terms, 
\begin{align*}
    e_\sigma \lesssim \sqrt{1/n} \left( d {\sigma_1^*}^2 \sqrt{\pi_1^*} + \rho_\pi \sum_{j\neq1} (\R{j1}^2 + d\sigt{j}{1}^2) \exp \left(-\R{j1}^2 / (128c \sigt{j}{1}^2) \right) \right) \le \sqrt{d} {\sigma_1^*}^2 \epsilon {\pi_1^*}, 
\end{align*}
with probability at least $1 - \delta/(kT) - 1/(n^{c-2}\cdot k^{31})$. Now we can conclude that,
\begin{align*}
    |\tsigma_1^+{}^2 - {\sigma_1^*}^2| &\le e_\sigma / (d\pi_1^*) + (\pi_1^+ / \tpi_1^+) \left( {\sigma_1^+}^2 - {\sigma_1^*}^2 + \|\mu_1^+ - \mu_1^*\|^2 / d \right) -  \|\tmu_1^+ - \mu_1^*\|^2 / d \\
    &\le {\sigma_1^*}^2 (\epsilon/\sqrt{d} + (1+\epsilon) \gamma D_m/ \sqrt{d} + 3 \epsilon D_m / d ) \\ 
    &\le {\sigma_1^*}^2 (\epsilon' + \gamma D_m) / \sqrt{d},
\end{align*}
with some constant rescaling of $\epsilon$ to $\epsilon'$.
\end{proof}

\section{Proofs for Section \ref{section:sample_optimality} }
\subsection{Proof for Lemma \ref{lemma:k_means_converge}}
\label{Appendix:k_means_converge}
\begin{proof}
    Let us first check the correctness of $\pi_i$ and $\mu_i$. This proof is reminiscent of the analysis on population EM when $D_m \ge 1/2$. The step 1 and 2, which are essentially the stpes of the k-mean algorithm, can be also considered as a variant of E-step and M-step, with a rule (for $1^{st}$ component):
    \begin{align}
        \label{eq:k_means}
        \mbox{(E'-step)}: \ w_1 &= \indic_{\|X - \mu_1\|^2 \le \|X - \mu_j\|^2, \forall j \neq 1}, \nonumber \\
        \mbox{(M'-step)}: \ \pi_1^+ &= \E_\D[w_1], \nonumber \\
                        \mu_1^+ &= \E_\D[w_1 X] / \E_\D[w_1].
    \end{align}
    Let us follow the proof strategy of population EM. As before, note that $\pi_1^* = \E_\D[w_1^*]$, where $w_1^*$ is a weight constructed at E-step with the standard EM algorithm. Regardless of different weight assignment rules, the estimation error after one step can be represented as
    \begin{align*}
        &\pi_1^+ - \pi_1^* = \E_{\D} [w_1] - \E_{\D} [w_1^*] = \E_{\D} [\Delta_w], \\
        &\mu_1^+ - \mu_1^* = \E_{\D} [w_1(X - \mu_1^*)] / \E_{\D} [w_1] = \E_{\D} [\Delta_w (X - \mu_1^*)] / \E_{\D} [w_1],
    \end{align*}
    which is exactly the same as in standard population EM. We similarly split the errors from other components and own component. 
    
    \paragraph{When $j \neq 1$:} Let $v = X - \mu_j^*$ and define good event as 
    \begin{align*}
        \Eps_j' = \{ \vdot{v}{\mu_j - \mu_1} \ge -{R_{j1}^*}^2 / 4 \}.
    \end{align*}
    Since $\|\mu_j - \mu_1\| \le \|\mu_j^* - \mu_1^*\| + \|\mu_j^* - \mu_j\| + \|\mu_1^* - \mu_1\| \le 3R_{j1}^*/2$, we have $P(\Eps_j^c) \le \exp(- {R_{j1}^*}^2 / (72 {\sigma_j^*}^2))$. Observe that, 
    \begin{align*}
        \|X - \mu_1\|^2 \le \|X - \mu_j\|^2 
        \iff &2\vdot{v}{\mu_j^* - \mu_1} + \|\mu_j^* - \mu_1\|^2 \le 2\vdot{v}{\mu_j^* - \mu_j} + \|\mu_j^* - \mu_j\|^2 \\
        \iff & 2\vdot{v}{\mu_j - \mu_1} \le  \|\mu_j^* - \mu_j\|^2 - \|\mu_j^* - \mu_1\|^2 \\
        \implies & \vdot{v}{\mu_j - \mu_1} \le -{R_{j1}^*}^2 / 4.
    \end{align*}
    Note that by the initialization condition, $\|\mu_j^* - \mu_j\| \le R_{j1}^*/4$ and $\|\mu_j^* - \mu_1\| \ge 3R_{j1}^*/4$. That is, if $\indic_{\Eps_j'} = 1$, then $w_1 = 0$. We can conclude that $\E_{\D_j} [w_1] \le \exp(- {R_{j1}^*}^2 / (72 {\sigma_j^*}^2))$ for all $j \neq 1$. Now using Lemma \ref{lemma:weight_good}, we can also see that 
    \begin{align*}
        |\E_{\D_j} [\Delta_w]| \le \E_{\D_j} [|w_1 - w_1^*|] &\le 5 (1 + \pi_1^*/\pi_j^*) \exp(-{R_{j1}^*}^2 / 64(\sigma_1^* \vee \sigma_j^*)^2) + \exp(- {R_{j1}^*}^2 / (72 {\sigma_j^*}^2)).
    \end{align*}
    Summing up all errors from $j \neq 1$, 
    \begin{align*}
        \sum_{j\neq 1} \pi_j^* |\E_{\D_j} [\Delta_w]| &\le 6 \sum_{j \neq 1} (\pi_1^* + \pi_j^*) \exp \left(-{R_{j1}^*}^2 / 128(\sigma_1^* \vee \sigma_j^*)^2 \right),
    \end{align*}
    which can be bounded by some small constants $c_1 < 0.01$ with Lemma \ref{lemma:error_bounds_finite_sample}, given good separation condition.
    
    Similarly, the errors to means are also small: for any unit vector $s \in \mathbb{S}^{d-1}$,
    \begin{align*}
        |\E_{\D_j} [\Delta_w \vdot{v}{\mu_j^* - \mu_1^* + s}]| &\le R_{j1}^* |\E_{\D_j} [\Delta_w]| + |\E_{\D_j} [\Delta_w \vdot{v}{s}]| \\
        &\lesssim R_{j1}^* (1 + \pi_1^* / \pi_j^*) \exp \left( -\R{j1}^2 / 128\sigt{1}{j}^2 \right),
    \end{align*}
    where we applied the same technique to bound as in Corollary \ref{corollary:error_bound_Dm_1over2}. Summing up over $j \neq 1$ and applying Lemma \ref{lemma:error_bounds_finite_sample} gives the similar result, $\sum_{j\neq 1} \pi_j^* \E_{\D_j} [\Delta_w \vdot{X - \mu_1^*}{s}] \le c_2 \sigma_1^* \pi_1^*$ for some small constant $c_2$. 
    
    \paragraph{When $j = 1$:} Recall that when we compute errors from its own component, we bounded $\E_{\D_1} [1 - w_1]$. 
    \begin{align*}
        \E_{\D_1} [1 - w_1] = \sum_{l \neq 1} \E_{\D_1} [w_l] \le \sum_{l \neq 1} \exp(-\R{l1}^2 / 72{\sigma_1^*}^2 ) \le c_1,
    \end{align*}
    for some small constant $c_1 < 0.01$. Meanwhile, in the population EM, we have shown that $\E_{\D_1} [1 - w_1^*] \le c_2$ for small constant $c_2$. Hence, $\E_{\D_1} [\Delta_w] = \E_{\D_1} [(1 - w_1) - (1 - w_1^*)] \le c_3$ for small constant $c_3$. Similarly, we can bound the errors for means,
    \begin{align*}
        |\E_{\D_1} [\Delta_w \vdot{v}{s}]| \le c_4 \sigma_1^*,
    \end{align*}
    for some small constant $c_4$. 
    
    Collecting errors from all components gives that $|\pi_1^+ - \pi_1^*| \le c_\pi \pi_1^*$ and $\| \mu_1^+ - \mu_1^* \| \le c_\mu \sigma_1^*$ for some small constants $c_\pi < 0.5, c_\mu < 4$. 
    
    \paragraph{Population to Finite-Sample:} We can reproduce the proof for finite-sample EM with modified rule \eqref{eq:k_means}. To see this, observe that 
    \begin{align*}
        \tpi_1^+ - \pi_1^* &= \frac{1}{n} \sum_{i=1}^n w_{1,i} - \E_{\D} [w_1], \\   
        \tmu_1^+ - \mu_1^* &= \left( \frac{1}{n} \sum_{i=1}^n w_{1,i} (X - \mu_1^*) - \E_{\D} [w_1 (X - \mu_1^*)] + \E_{\D} [\Delta_w (X - \mu_1^*)] \right) / \left( \frac{1}{n} \sum_{i=1}^n w_{1,i} \right), \\   
    \end{align*}
    which is exactly in the same format as when we used standard EM iteration. Note that the proof of concentration in finite-sample EM holds for any different rule of assigning weights in E-step, as long as the probability of bad events is exponentially small. In this case it is as small as $\exp(-\R{j1}^2 / 72 {\sigma_j^*}^2)$. Hence the same procedure in Appendix \ref{appendix:finite_sample} can give a desired finite-sample error bound with high probability.
    
    \paragraph{Estimating ${\sigma_i^*}^2$:} From the analysis of estimating the mixing weights, we can conclude that the elements in each cluster $C_i$ are mostly from $i^{th}$ component and only a few fraction of elements are from the other components (say, less than $1\%$). Furthermore, only less than $1\%$ of samples from $i^{th}$ component are missing. Thus, each cluster $C_i$ can be considered as $2\%$-corrupted data from the $i^{th}$ component. In order to retrieve $\sigma_i^2$ such that $|\sigma_i^2 - {\sigma_i^*}^2| \le 0.5 {\sigma_i^*}^2 / \sqrt{d}$, we can consider taking a median-like quantity among pairwise distances of samples. 
    
    In each cluster, let the elements be in some fixed order which is pre-defined before we see the entire dataset. First let us consider the case when there is no corruption in each cluster. That is, each cluster has the true samples from its own component. Without loss of generality, let us focus on the first cluster $C_1$. Let the elements in $C_1$ as $X_1, X_2, ..., X_m$ where $m = |C_1|$. Since all $X_i \sim \mathcal{N} (\mu_1^*, {\sigma_1^*}^2 I_d)$, distribution of $X_i - X_{i+1}$ follows $\mathcal{N}(0, 2{\sigma_1^*}^2 I_d)$. Hence, $\|X_i - X_{i+1}\|^2 / (2{\sigma_1^*}^2)$ is a chi-square random variable with $d$ degrees of freedom. 
    
    Let $F(x)$ be the cdf function of a chi-square distribution with $d$ degrees of freedom. Consider cdf value $x_l := F(d - \sqrt{d}/2)$ and $x_r := F(d + \sqrt{d}/2)$. We can numerically check that $x_r - \alpha_d \ge 0.1$ and $\alpha_d - x_l \ge 0.1$ where $\alpha_d = F(d)$ as defined in the Algorithm \ref{alg:k-means} (for large $d$, the pdf of chi-square distribution is very well-approximated by normal distribution). 
    
    Now let us define $r_i = \|X_{2i} - X_{2i-1}\|^2 / (2 {\sigma_1^*}^2)$ and $r_i' = \|X_{2i+1} - X_{2i}\|^2 / (2 {\sigma_1^*}^2)$ for $i = 1, 2, ..., m/2 - 1$. Then let $Z, Z'$ be the portion of $r_i$s such that $r_i \le d - \sqrt{d}/2$ and $r_i' \le d - \sqrt{d}/2$ respectively. By standard concentration of Bernoulli random variable, both $Z$ and $Z'$ are well concentrated around $x_l$ with probability at least $1 - \delta/k$, given $m = \Omega (\log (k/\delta))$ samples (this holds since we generate $n = \Omega(\pi_{min}^{-1} \log(k/\delta))$ samples from mixture distribution). Note that the key point here is, there is no probabilistic dependency between $r_i$s for all $i$, and similarly between $r_i'$s for all $i$. 
    
    Finally, we return to the 2\% corrupted data from the first component. In this set, we see all adjacent pairs $\|X_{i+1} - X_i\|^2 / (2 {\sigma_1^*}^2 )$ for all $i = 1, 2, ..., |C_i| - 1$. This is because due to the insertion of wrong samples and deletion of authentic samples, the parity of original index might have changed. By looking at all adjacent pairs, we can look at both $r_i$ and $r_i'$. Note that 2\% corruption can at most corrupt 4\% of original $r_i$s and $r_i'$s respectively. Fortunately, we have 10\% margin from $\alpha_d$. That is, in the corrupted set, it is still guaranteed that $\alpha_d^{th}$ value is greater than $d - \sqrt{d}/2$, which is a standard argument for median-type estimators. The similar argument holds for the other direction. 
    
    In conclusion, if we take $\alpha_d^{th}$ value among distances of all adjacent pairs in cluster $C_1$, that value is within ${\sigma_1^*}^2 [2d - \sqrt{d}, 2d + \sqrt{d}]$ range with high probability. We get a desired initialization parameter for variances by dividing the quantity by $2d$.  
\end{proof}

\subsection{Proof for Lemma \ref{lemma:tv_implies_param}}
\label{Appendix:tv_implies_param}
We define some additional notations that will be used in this section. We use $P_{\G} (\cdot)$ to denote the probability of some event when underlying distribution is the candidate $\G$. Similarly, $P_{\G^*} (\cdot)$ denotes the probability when underlying distribution is true mixture $\G^*$. We use $P_{\G} (\cdot | X \sim j^{th})$ to denote the probability of event when $X$ comes from $j^{th}$ component in candidate distribution $\G$. $P_{\G^*} (\cdot | X \sim j^{th})$ is defined in a similar way. We use $R_{j1}$ to denote $\| \mu_j - \mu_1^* \|$ for $j \neq 1$.

\begin{proof}
    Suppose the conclusion is not true, {\it i.e.}, $\exists i \in [k]$ s.t. $\|\mu_i^* - \mu_j\| / \sigma_i^* \ge 16\sqrt{\log(1/\pi_{min})}$, $\forall j \in [k]$. Without loss of generality, let $\mu_1^*$ is far from all $\mu_j$ by at least $16 \sqrt{\log(1/\pi_{min})}$. We consider the cases when $d \ge 128 \log(1/\pi_{min})$ and $d \le 128 \log(1/\pi_{min})$. 
    
    \paragraph{Case I. $d \ge 128 \log(1/\pi_{min})$:} We define an event,
    \begin{align*}
        \Eps = \{ \|X - \mu_1^*\|^2/{\sigma_1^*}^2 + d \log {\sigma_1^*}^2 \le \|X - \mu_j\|^2/\sigma_j^2 + d \log {\sigma_j}^2, \quad \forall j \in [k] \}.
    \end{align*}
    Our goal is to show that $P_{\G^*} (\Eps) \ge 3\pi_{min}/4$ and $P_\G (\Eps) \le \pi_{min} / 2$. Then, by the definition of total variation distance, $\|\G - \G^*\|_{TV} \ge |P_{\G^*} (\Eps) - P_\G (\Eps)| \ge \pi_{min}/4$. 
    
    \paragraph{Probability from true distribution:} Let us first show $P_{\G^*} (\Eps) \ge 3\pi_{min}/4$. It suffices to show that $P_{\G^*} (\Eps | X \sim 1^{st}) \ge 3/4$. Thus, we are considering the event
    \begin{align}
        \label{eq:tv_event}
        \|v\|^2 / {\sigma_1^*}^2 + d \log({\sigma_1^*}^2) \le \|v + \mu_1^* - \mu_j\|^2 / \sigma_j^2 + d \log({\sigma_j}^2),
    \end{align}
    where $v \sim \mathcal{N}(0, {\sigma_1^*}^2 I)$. Similarly to we have seen in previous proofs for EM, we divide the cases into when $\sigma_1^* \ge \sigma_j$ and $\sigma_1^* \le \sigma_j$. 
    
    When $\sigma_1^* \ge \sigma_j$, let $x = ({\sigma_1^*}^2 - {\sigma_j}^2) / {\sigma_j}^2$. After rearranging \eqref{eq:tv_event}, we get
    \begin{align*}
        -\frac{\|v\|^2}{{\sigma_1^*}^2}x + d \log(1 + x) \le \|\mu_1^* - \mu_j\|^2 / \sigma_j^2 + 2\vdot{v}{\mu_1^* - \mu_j} / \sigma_j^2,
    \end{align*}
    Then the probability of $P_{\G^*} (\Eps^c | X \sim 1^{st})$ is less than 
    \begin{align*}
        P \Bigg(\vdot{v}{\mu_1^* - \mu_j} \le -\frac{\|\mu_1^* - \mu_j\|^2}{4} \Bigg) &+ P\left(-\frac{\|v\|^2}{{\sigma_1^*}^2} x + d \log(1+x) \ge \frac{\|\mu_1^* - \mu_j\|^2}{2\sigma_j^2} \right) \\
        &\le \exp( -{R_{j1}}^2 / 32{\sigma_1^*}^2 ) + P\left( \frac{\|v\|^2}{d{\sigma_1^*}^2} \le \frac{1}{x} \left(\log (1+x) - \frac{R_{j1}^2}{2 d \sigma_j^2} \right) \right).
    \end{align*}
    Using the similar trick as before, first consider when $0 \le x \le 3/4$. Then,
    \begin{align*}
        P\left( \frac{\|v\|^2}{d{\sigma_1^*}^2} \le \frac{\log (1+x)}{x} - \frac{R_{j1}^2}{2 d \sigma_j^2 x} \right) &\le P\left( \frac{\|v\|^2}{d{\sigma_1^*}^2} \le 1 - \frac{x}{4} - \frac{R_{j1}^2}{2 d \sigma_j^2 x} \right) \\
        &\le P\left( \frac{\|v\|^2}{d{\sigma_1^*}^2} \le 1 - 2 \sqrt{\frac{R_{j1}^2}{2d \sigma_j^2} } \right) \le \exp(-R_{j1}^2/2\sigma_j^2).
    \end{align*}
    When $x \ge 3/4$, 
    \begin{align*}
        P\left( \frac{\|v\|^2}{d{\sigma_1^*}^2} \le \frac{\log (1+x)}{x} - \frac{R_{j1}^2}{2 d \sigma_j^2 x} \right) &\le P\left( \frac{\|v\|^2}{d{\sigma_1^*}^2} \le 1 - \frac{1}{4} - \frac{R_{j1}^2}{2 d ({\sigma_1^*}^2 - \sigma_j^2)} \right) \\
        &\le P\left( \frac{\|v\|^2}{d{\sigma_1^*}^2} \le 1 - 2 \sqrt{\frac{R_{j1}^2}{8 d ({\sigma_1^*}^2 - \sigma_j^2)} } \right) \le \exp(-R_{j1}^2/8{\sigma_1^*}^2).
    \end{align*}
    In either case, $P_{\G^*} (\Eps^c | X \sim 1^{st}) \le 2\exp(-R_{j1}^2 / 32{\sigma_1^*}^2) \le 2 \pi_{min}^8 \le 1/4$. 
    
    When $\sigma_1^* \le \sigma_j$, let $x = ({\sigma_j}^2 - {\sigma_1^*}^2) / {\sigma_j}^2$ and we similarly rearrange \eqref{eq:tv_event}. 
    \begin{align*}
        \frac{\|v\|^2}{{\sigma_1^*}^2}x + d \log(1 - x) \le \|\mu_1^* - \mu_j\|^2 / \sigma_j^2 + 2\vdot{v}{\mu_1^* - \mu_j} / \sigma_j^2.
    \end{align*}
    We need to divide the cases when $\sigma_j^2 \le 8 {\sigma_1^*}^2$ and $\sigma_j^2 \ge 8 {\sigma_1^*}^2$. When $\sigma_j^2 \le 8 {\sigma_1^*}^2$, we proceed similarly to previous cases,
    \begin{align*}
        P(\Eps^c | X \sim 1^{st}) &\le P \left(\vdot{v}{\mu_1^* - \mu_j} \le -\frac{\|\mu_1^* - \mu_j\|^2}{4} \right) + P\left(\frac{\|v\|^2}{{\sigma_1^*}^2} x + d \log(1-x) \ge \frac{\|\mu_1^* - \mu_j\|^2}{2\sigma_j^2} \right) \\
        &\le \exp( -{R_{j1}}^2 / 32{\sigma_1^*}^2 ) + P\left( \frac{\|v\|^2}{d{\sigma_1^*}^2} \ge \frac{1}{x} \left(-\log (1-x) + \frac{R_{j1}^2}{2 d \sigma_j^2} \right) \right) \\
        &\le \exp( -{R_{j1}}^2 / 32{\sigma_1^*}^2 ) + P\left( \frac{\|v\|^2}{d{\sigma_1^*}^2} \ge 1 + \frac{x}{2} + \frac{R_{j1}^2}{2 d x \sigma_j^2} \right) \\
        &\le \exp( -{R_{j1}}^2 / 32{\sigma_1^*}^2 ) + P\left( \frac{\|v\|^2}{d{\sigma_1^*}^2} \ge 1 + 2\sqrt{\frac{R_{j1}^2}{8d \sigma_j^2}} + 2 \frac{R_{j1}^2}{8 d x \sigma_j^2} \right) \\
        &\le \exp( -{R_{j1}}^2 / 32{\sigma_1^*}^2 ) + \exp( -R_{j1}^2 / 8{\sigma_j}^2 ) \le \exp( -{R_{j1}}^2 / 32{\sigma_1^*}^2 ) + \exp( -{R_{j1}}^2 / 64{\sigma_1^*}^2 ) \\
        &\le \pi_{min}^8 + \pi_{min}^4 \le 1/4.
    \end{align*}
    When $\sigma_j^2 \ge 8 {\sigma_1^*}^2$, we first note that $x \ge 7/8$. Thus, $-\log(1-x)/x \ge 2.376$. We can then bound the term simply as
    \begin{align*}
        P(\Eps^c | X \sim 1^{st}) &\le P \left(\vdot{v}{\mu_1^* - \mu_j} \le -\frac{\|\mu_1^* - \mu_j\|^2}{4} \right) + P\left(\frac{\|v\|^2}{{\sigma_1^*}^2} x + d \log(1-x) \ge \frac{\|\mu_1^* - \mu_j\|^2}{2\sigma_j^2} \right) \\
        &\le \pi_{min}^8 + P\left( \frac{\|v\|^2}{d{\sigma_1^*}^2} \ge 1 + 1.376 \right).
    \end{align*}
    Using standard tail probability for chi-Square distribution, we get
    \begin{align*} 
         P\left( \frac{\|v\|^2}{d{\sigma_1^*}^2} \ge 1 + 1.376 \right) \le \exp( -1.376d / 8) \le \pi_{min}^{20},
    \end{align*}
    since we are considering the case when $d \ge 128 \log(1/\pi_{min})$. Therefore, we get $P(\Eps^c | X \sim 1^{st}) \le 1/4$ in all cases. It concludes that $P_{\G^*} (\Eps) \ge 3\pi_{min}/4$.
    
    \paragraph{Probability from candidates:} Now we show that $P_\G (\Eps) \le \pi_{min}/2$. Toward this goal, we need to show that for each $j \in [k]$, $P_\G (\Eps | X \sim j^{th}) \le \pi_{min}/2$. The corresponding event becomes
    \begin{align}
        \label{eq:tv_event2}
        \|v + \mu_j - \mu_1^*\|^2 / {\sigma_1^*}^2 + d \log({\sigma_1^*}^2) \le \|v\|^2 / {\sigma_j}^2 + d \log({\sigma_j}^2).
    \end{align}
    
    Let us first consider the case when $\sigma_j \le \sigma_1^*$. Now we set $x = ({\sigma_1^*}^2 - \sigma_j^2) / {\sigma_1^*}^2$. Then the rearrangement of \eqref{eq:tv_event2} gives
    \begin{align*}
        \|\mu_1^* - \mu_j\|^2 / {\sigma_1^*}^2 + 2\vdot{v}{\mu_j - \mu_1^*} / {\sigma_1^*}^2 \le \frac{\|v\|^2}{{\sigma_j}^2}x + d \log(1 - x).
    \end{align*}
    We bound the probability of this event similarly to previous cases. 
    \begin{align*}
        P(\Eps | X \sim j^{th}) &\le P \left(\vdot{v}{\mu_j - \mu_1^*} \le -\frac{\|\mu_1^* - \mu_j\|^2}{4} \right) + P\left(\frac{\|v\|^2}{{\sigma_j}^2} x + d \log(1-x) \ge \frac{\|\mu_1^* - \mu_j\|^2}{2{\sigma_1^*}^2} \right) \\
        &\le \exp( -{R_{j1}}^2 / 32{\sigma_j}^2 ) + P\left( \frac{\|v\|^2}{d{\sigma_j}^2} \ge \frac{1}{x} \left(-\log (1-x) + \frac{R_{j1}^2}{2 d {\sigma_1^*}^2} \right) \right) \\
        &\le \exp( -{R_{j1}}^2 / 32{\sigma_1^*}^2 ) + P\left( \frac{\|v\|^2}{d{\sigma_j}^2} \ge 1 + \frac{R_{j1}^2}{2 d x \sigma_j^2} \right) \\
        &\le \exp( -{R_{j1}}^2 / 32{\sigma_1^*}^2 ) + \exp(-R_{j1}^2 / (16 ({\sigma_1^*}^2 - \sigma_j^2)) ) \\
        &\le \pi_{min}^8 + \pi_{min}^{16} \le \pi_{min}/2.
    \end{align*}
    
    When $\sigma_j \ge \sigma_1^*$, the proof should be more delicate. First we rearrange \eqref{eq:tv_event2} to see that
    \begin{align*}
        \|\mu_1^* - \mu_j\|^2 / {\sigma_1^*}^2 + 2\vdot{v}{\mu_j - \mu_1^*} / {\sigma_1^*}^2 \le -\frac{\|v\|^2}{{\sigma_j}^2}x + d \log(1 + x),
    \end{align*}
    is the event to bound, where $x = (\sigma_j^2 - {\sigma_1^*}^2) / {\sigma_1^*}^2$. Here, we will consider three cases, $0 \le x \le 3/4$, $3/4 \le x \cap R_{j1}^2 / \sigma_j^2 \ge 32 \log(1/\pi_{min})$, and $R_{j1}^2 / \sigma_j^2 \le 32 \log(1/\pi_{min})$. First, if $0 \le x \le 3/4$, then $\log(1+x) \le x - x^2/4$ and $\sigma_j^2 \le 2 {\sigma_1^*}^2$, thus 
    \begin{align*}
        P(\Eps | X \sim j^{th}) &\le P \left(\vdot{v}{\mu_j - \mu_1^*} \le -\frac{7}{16} \|\mu_1^* - \mu_j\|^2 \right) + P\left(\frac{-\|v\|^2}{{\sigma_j}^2} x + d \log(1+x) \ge \frac{\|\mu_1^* - \mu_j\|^2}{8{\sigma_1^*}^2} \right) \\
        &\le \exp( -49 {R_{j1}}^2 / 512 {\sigma_j}^2 ) + P\left( \frac{\|v\|^2}{d{\sigma_j}^2} \le \frac{1}{x} \left(\log (1+x) - \frac{R_{j1}^2}{8 d {\sigma_1^*}^2} \right) \right) \\
        &\le \exp( -{R_{j1}}^2 / 32{\sigma_1^*}^2 ) + P\left( \frac{\|v\|^2}{d{\sigma_j}^2} \le 1 - \frac{x}{4} - \frac{R_{j1}^2}{8 d x {\sigma_1^*}^2} \right) \\
        &\le \exp( -{R_{j1}}^2 / 32{\sigma_1^*}^2 ) + \exp(-R_{j1}^2 / (32 {\sigma_1^*}^2 ) ) \le \pi_{min}^8 + \pi_{min}^{8} \le \pi_{min} / 2.
    \end{align*}
    If $3/4 \le x$ and $R_{j1}^2 / \sigma_j^2 \ge 32 \log(1/\pi_{min})$, then $\log(1+x)/x \le 3/4$, thus
    \begin{align*}
        P(\Eps | X \sim j^{th}) &\le P \left(\vdot{v}{\mu_j - \mu_1^*} \le -\frac{7}{16} \|\mu_1^* - \mu_j\|^2 \right) + P\left(\frac{-\|v\|^2}{{\sigma_j}^2} x + d \log(1+x) \ge \frac{\|\mu_1^* - \mu_j\|^2}{8{\sigma_1^*}^2} \right) \\
        &\le \exp( -49 {R_{j1}}^2 / 512 {\sigma_j}^2 ) + P\left( \frac{\|v\|^2}{d{\sigma_j}^2} \le \frac{1}{x} \left(\log (1+x) - \frac{R_{j1}^2}{8 d {\sigma_1^*}^2} \right) \right) \\
        &\le \exp( -6 \log(1/\pi_{min}) ) + P\left( \frac{\|v\|^2}{d{\sigma_j}^2} \le 1 - \frac{1}{4} - \frac{R_{j1}^2}{8 d x {\sigma_1^*}^2} \right) \\
        &\le \exp( -6 \log(1/\pi_{min}) ) + P\left( \frac{\|v\|^2}{d{\sigma_j}^2} \le 1 - 2 \sqrt{ \frac{1}{64} + \frac{R_{j1}^2}{64 d (\sigma_j^2 - {\sigma_1^*}^2)} } \right) \\
        &\le \exp( -6 \log(1/\pi_{min}) ) + \exp(- d/64 - R_{j1}^2 / 64\sigma_j^2 ) \le \pi_{min}^6 + \pi_{min}^{2.5} \le \pi_{min} / 2.
    \end{align*}
    Finally, if $R_{j1}^2/\sigma_j^2 \le 32 \log(1/\pi_{min})$, we take a different path. First of all, this can only happen when $\sigma_j^2 \ge 8 {\sigma_1^*}^2$ and $x \ge 7$, and $R_{j1}^2/\sigma_j^2 \le d/4$. Using rotational invariance property of Gaussian with (scale of) identity covariance, without loss of generality, we can set $v = v_1 \hat{e}_{j1} + v_{2:d}$ where $\hat{e}_{j1}$ is a unit vector in direction $\mu_j - \mu_1^*$, and $v_{2:d}$ is the rest $d-1$ dimensional orthogonal component. Then, we can rearrange the event as 
    \begin{align*}
        &(v_1 + R_{j1})^2 / {\sigma_1^*}^2 -  v_1^2 / {\sigma_j}^2 \le -\frac{\|v_{2:d}\|^2}{{\sigma_j}^2} x + d \log(1 + x),  \\
        \implies &\left( \frac{1}{{\sigma_1^*}^2} - \frac{1}{\sigma_j^2} \right) \left(v_1 - \left( \frac{1}{{\sigma_1^*}^2} - \frac{1}{\sigma_j^2} \right)^{-1} \frac{R_{j1}}{{\sigma_1^*}^2} \right)^2 - \left( \frac{1}{{\sigma_1^*}^2} - \frac{1}{\sigma_j^2} \right)^{-1} \frac{R_{j1}^2}{{\sigma_1^*}^4} + \frac{R_{j1}^2}{{\sigma_1^*}^2} \\
        &= \frac{x}{\sigma_j^2} \left(v_1 - \frac{\sigma_j^2}{x} \frac{R_{j1}}{{\sigma_1^*}^2} \right)^2 - \frac{R_{j1}^2}{\sigma_j^2 - {\sigma_1^*}^2} \le -\frac{\|v_{2:d}\|^2}{{\sigma_j}^2} x + d \log(1 + x) \\
        \implies &\frac{\|v_{2:d}\|^2}{{\sigma_j}^2} \le d \frac{\log(1 + x)}{x} + \frac{1}{x} \frac{R_{j1}^2}{\sigma_j^2 - {\sigma_1^*}^2} \\
        \implies & \frac{\|v_{2:d}\|^2}{(d-1) {\sigma_j}^2} \le \left(1 + \frac{1}{d-1} \right) \left(\frac{\log 8}{7} + \frac{2}{49} \right) \le \frac{1}{2}.
    \end{align*}
    And, the probability of $P(\|v_{2:d}\|^2 / \sigma_j^2 \le (d-1)/2) \le \exp(-(d-1)/16) \le \pi_{min}^4$ upper-bounds the probability $P_\G (\Eps | X \sim j^{th})$. 
    
    Combining all cases, we can conclude that $P_\G (\Eps) \le \pi_{min}/2$. Hence, we get $\|\G^* - \G\|_{TV} \ge \pi_{min}/4$ as desired when $d \ge 128\log(1/\pi_{min})$. 
    
    \paragraph{Case II. $d \le 128 \log(1/\pi_{min})$:} In this case, we consider the following two events. 
    \begin{align*}
        \Eps_1 &= \{ \|X - \mu_1^*\| \le \sigma_1^* \sqrt{2d} \}, \\
        \Eps_2 &= \{ \|X - \mu_1^*\| \le 3 \sigma_1^* \sqrt{2d} \}.
    \end{align*}
    We will show that either $P_{\G^*}(\Eps_1) - P_{\G} (\Eps_1) \ge \pi_{min} / 4$ or $P_{\G}(\Eps_1^c \cap \Eps_2) - P_{\G^*} (\Eps_1^c \cap \Eps_2) \ge \pi_{min} / 4$. As a first step, we show that $P_{\G^*} (\Eps_1 | X \sim 1^{st}) > 3/4$, and $P_{\G^*} (\Eps_2 | X \sim j^{th}) \ll \pi_{min}/100$ for $j = 2, ..., k$. 
    
    It is easy to see that 
    \begin{align*}
        P_{\G^*} (\Eps_1 | X \sim 1^{st}) = P_{v \sim \mathcal{N}(0, I)} (\|v\|^2 \le 2d) \ge 3/4,
    \end{align*}
    using the pre-computed cdf value of chi-square distribution with degree $d$ (we can numerically check it for small $d$, and we can approximate it with approximation to normal distribution for large $d$). For other components, 
    \begin{align*}
        P_{\G^*} (\Eps_2 | X \sim j^{th}) &= P_{v \sim \mathcal{N}(0, {\sigma_j^*}^2 I)} (\|v + \mu_j^* - \mu_1^*\|^2 \le 18 \sigma_1^* d) \\
        &\le P_{v \sim \mathcal{N}(0, {\sigma_j^*}^2I)} ((\vdot{v}{\hat{e}_{j1}} + R_{j1}^*)^2 \le 18 \sigma_1^*d) \\
        &\le P_{v \sim \mathcal{N}(0, {\sigma_j^*}^2I)} (\vdot{v}{\hat{e}_{j1}} \le 3\sigma_1^* \sqrt{2d} - R_{j1}^*) \\
        &\le P_{v \sim \mathcal{N}(0, {\sigma_j^*}^2I)} (\vdot{v}{\hat{e}_{j1}} \le -R_{j1}^* / 4) \\
        &\le \exp(-{R_{j1}^*}^2 / 32 {\sigma_j^*}^2) \le \pi_{min}^{64},
    \end{align*}
    where $\hat{e}_{j1}$ is a unit vector in direction $\mu_j^* - \mu_1^*$. Note that $3 \sigma_1^* \sqrt{2d} \le 48 \sigma_1^* \sqrt{\log(1/\pi_{min})} \le 3R_{j1}^*/4$. Combining two facts, it is easy to see that $P_{\G^*} (\Eps_1) \ge 3\pi_{min}/4$ and $P_{\G^*} (\Eps_1^c \cap \Eps_2) \le \pi_{min}/4$. 
    
    Now we show that either $P_\G (\Eps_1) \le \pi_{min}/2$ or $P_\G (\Eps_1^c \cap \Eps_2) \ge \pi_{min}/2$ is true. Suppose $P_\G (\Eps_1) \ge \pi_{min}/2$. Observe that $\|\mu_j - \mu_1^*\| / \sigma_1^* \ge 16\sqrt{ \log(1/\pi_{min}) } \ge \sqrt{2d}$ for all $j$. That is, all $\mu_j$ are outside of the sphere that $\Eps_1$ considers. Therefore, if we imagine a bigger ball of radius $3\sqrt{2d}$, for any $j$, there exists a ball of radius $\sqrt{2d}$ in $\Eps_2 \cap \Eps_1^c$ where the contribution from $j^{th}$ component is larger than the ball considered in $\Eps_1$. Since this is true for all $j$, we can conclude that $P_\G (\Eps_1^c \cap \Eps_2) \ge \pi_{min}/2$. 
    
    In conclusion, $\|\G - \G^*\|_{TV} \ge \pi_{min}/4$ if any one of $\mu_i^*$ cannot find a good initailizer in candidate parameters. Note that this result does not assume any separation condition in candidate distributions. Neither, this lower bound for TV distance does not depend on any other parameters but $\pi_{min}$. 
\end{proof}

\subsection{Proof of Theorem \ref{theorem:sample_optimal_learnability}}
\label{Appendix:proof_sample_compression}
\begin{proof}
Lemma \ref{lemma:tv_implies_param} indicates that if $\|\G - \G^*\|_{TV} \le \pi_{min} / 4$, then we have initializers that satisfy the requirement \eqref{eq:minimal_initialization} up to some permutation in $\G$. Note that since the true mixture distribution $\G^*$ satisfies the separation condition \eqref{eq:sample_opt_separation_cond}, when \eqref{eq:tv_to_initializetion} holds true, each $i^{th}$ component has its unique $j$ for initial mean $\mu_j$: if $\|\mu_j - \mu_i^*\|$ is less than $16 \sigma_i^* \sqrt{\log (1/\pi_{min})} < \frac{1}{4} \|\mu_i^* - \mu_{i'}^*\|$ for $i' \neq i$, then $\mu_j$ is at least $\frac{3}{4} \|\mu_{i'}^* - \mu_i^*\| > 48 \sigma_{i'}^* \sqrt{\log (1/\pi_{min})}$ far apart from other $\mu_{i'}^*$. Hence, one $\mu_j$ can only be associated with only one $\mu_i^*$ and vice versa.

We first show how to get the sample-optimal guarantee for the proper-learning of spherical Gaussian mixtures using the sample-compression scheme introduced in \cite{ashtiani2018nearly}. The compressibility of a distribution is (informally) defined as follows:
\begin{definition}[Informal Definition of Compressibility in \cite{ashtiani2018nearly}]
    For any $\epsilon > 0$, a distribution $\F$ is called $(\tau(\epsilon), t(\epsilon), m(\epsilon))$-{\it compressible} if the following holds: if $m(\epsilon)$ samples are drawn from $\F$, we can carefully select $\tau(\epsilon)$ samples (among $m(\epsilon)$ samples) and additional $t(\epsilon)$-bits such that a pre-defined systematic procedure (decoder) takes them as inputs, and returns a distribution $\hat{\F}$ that satisfies $\|\hat{\F} - \F\|_{TV} \le \epsilon$ with high probability. 
\end{definition}
See Definition 3.1 and 3.2 in their work \cite{ashtiani2018nearly} for more details. Their key result (see Theorem 3.5 in \cite{ashtiani2018nearly}) states that if a distribution is $(\tau(\epsilon), t(\epsilon), m(\epsilon))$-compressible, then $\tO(m(\epsilon) + (\tau(\epsilon) + t(\epsilon)) / \epsilon^2)$ samples suffice to learn a $\epsilon$-close distribution in TV distance. The optimal sample upper bound $\tO(kd/\epsilon^2)$ for learning a mixture of axis-aligned Gaussians then follows by (i) showing that a single axis-aligned Gaussian is $(O(d), O(d \log(d/\epsilon)), O(d))$-compressible, and (ii) using their Lemma 3.7 in \cite{ashtiani2018nearly} to conclude that a mixture of $k$ axis-aligned Gaussians is $(O(kd), O(kd \log(d/\epsilon)), \tO(kd/\epsilon))$-compressible.

Given their argument for a mixture of axis-aligned Gaussians, it is straight-forward to get the same result for spherical Gaussians. We only need to show that a single spherical Gaussian is also $(O(d), O(d \log(d/\epsilon)), O(d))$-compressible. Then we can use the same argument using their Lemma 3.7 to conclude that a mixture of $k$ spherical Gaussians is also $(O(kd), O(kd \log(d/\epsilon)), \tO(kd/\epsilon))$-compressible, hence $\tO(kd/\epsilon^2)$ samples suffice to learn a $\epsilon$-close distribution in TV distance.

Suppose a single axis-aligned Gaussian with mean $\mu = (\mu_1, \mu_2, ..., \mu_d) \in \mathbb{R}^d$ and covaraince $\Sigma = diag(\sigma_1, \sigma_2, ..., \sigma_d)$. The decoder they construct for an axis-aligned Gaussian outputs $\hat{\mu}$ and $\hat{\Sigma} = diag(\hat{\sigma}_1, \hat{\sigma}_2, ..., \hat{\sigma}_d)$ such that 
\begin{align*}
  |\mu_i - \hat{\mu}_i| \le \sigma_i \epsilon / d, \quad  | \sigma_i - \hat{\sigma}_i | \le \sigma_i \epsilon / d, \qquad \forall i \in [k],
\end{align*}
which hence guarantees that $\|\mathcal{N}(\mu, \Sigma) - \mathcal{N}(\hat{\mu}, \hat{\Sigma})\|_{TV} \le \epsilon$. For a spherical Gaussian, we can use the same decoder by considering it as an axis-aligned Gaussian, and simply pick $\sigma_1$ as a common scale factor of an identity matrix for a spherical Gaussian. Therefore, a spherical Gaussian is also compressible with the same parameters $(O(d), O(d \log(d/\epsilon)), O(d))$. Combining with Lemma 3.7 and Theorem 3.5 in \cite{ashtiani2018nearly}, we obtain a $\tO(kd/\epsilon^2)$ sample-complexity guarantee for the proper-learning of spherical Gaussian mixtures.

Now we can first get the candidate distribution $\G$ using the algorithm given in \cite{ashtiani2018nearly} with $\tO(kd\pi_{min}^{-2})$ samples to get $\|\G - \G^*\|_{TV} \le \pi_{min}/4$. Then we can run Algorithm \ref{alg:k-means} and then the EM algorithm using $\tO(d\pi_{min}^{-1}/\epsilon^2)$ samples. This gives the algorithm for Theorem \ref{theorem:sample_optimal_learnability}.
\end{proof}

\section{Deferred Proof: Convergence of Population EM when \texorpdfstring{$D_m \le 1/2$}{Lg}.}
\label{appendix:population_em_Dm_le_1over2}
    We define a target error $D_m = \max_j \left(\|\mu_j - \mu_j^*\|/\sigma_j^*, |\pi_j - \pi_j^*| / \pi_j^*, \sqrt{d} |{\sigma_j}^2 - {\sigma_j^*}^2| / {\sigma_j^*}^2 \right) \le 1/2$.
    \begin{proof}
    First of all, we differentiate our EM operator with respect to all variables being estimated ($\sigma_i^2$ are considered as a single variable). For instance,
    \begin{align*}
        \pi_1^+ - \pi_1^* &= \E_\D [w_1] - \E_\D[w_1^*] = \E_\D[\Delta_w^u], \\
        \mu_1^+ - \mu_1^* &= \E_\D [\Delta_w^u (X - \mu_1^*) ] / \E_\D[w_1], \\
        {\sigma_1^+}^2 - {\sigma_1^*}^2 &= \E_\D [\Delta_w^u (\|X - \mu_1^*\|^2 - d{\sigma_1^*}^2)] / (d\pi_1^+) - \|\mu_1^+ - \mu_1^*\|^2/d,
    \end{align*}
    where
    \begin{align*}
        \Delta_w^u &= -w_1^u(1-w_1^u) (X - \mu_1^u)^T (\mu_1 - \mu_1^*) / {\sigma_1^u}^2 
        + \sum_{l \neq 1} w_1^u w_l^u (X - \mu_l^u)^T (\mu_l - \mu_l^*) / {\sigma_l^u}^2 \\
        &\quad -w_1^u(1-w_1^u) (\pi_1 - \pi_1^*) / \pi_1^u
        + \sum_{l \neq 1} w_1^u w_l^u (\pi_l - \pi_l^*) / \pi_l^u \\
        &\quad -w_1^u(1-w_1^u) (+\|X - \mu_1^u\|^2 / (2{\sigma_1^u}^4) - d / (2{\sigma_1^u}^2))  (\sigma_1^2 - {\sigma_1^*}^2) \\
        &\quad + \sum_{l \neq 1} w_1^u w_l^u (+\|X - \mu_l^u\|^2 / (2 {\sigma_l^u}^4) - d / (2{\sigma_l^u}^2)) (\sigma_l^2 - {\sigma_l^*}^2),
    \end{align*}
    where $w_1^u$ is a weight constructed with $\mu_1^u := \mu_1^* + u(\mu_1 - \mu_1^*)$ for some $u \in [0,1]$, and other $u$ scripted variables are defined similarly. In addition to previous technical lemmas, we state a few more helper lemmas. 
    The same result holds for $\E_{\D_j}[\Delta_w]$ as in this corollary. With the above lemma, we can bound the errors for mixing weights from other components. We need one more lemma for bounding the sum of errors.
    \begin{lemma}
        \label{lemma:sum_pnorm_error}
        For values $q \in \{0,1,2,3,4\}$, the following summations are bounded by
        \begin{align}
            \label{ineq:sum_pnorm_error_dv1}
            \sum_{j\neq 1} (\pi_1^* + \pi_j^*) (\R{j1}/\sigma_1^*)^q \exp(-\R{j1}^2 / 128\sigt{1}{j}^2) &\le c_q \pi_1^*,
        \end{align}
        \begin{align}
            \label{ineq:sum_pnorm_error_dvj}
            \sum_{j\neq 1} (\pi_1^* + \pi_j^*) (\R{j1}/\sigma_j^*)^q \exp(-\R{j1}^2 / 128\sigt{1}{j}^2) &\le c_q \pi_1^*,
        \end{align}
        with some small constant $c_q$ given large enough universal constant $C$. 
    \end{lemma}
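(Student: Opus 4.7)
The plan is to adapt the strategy of Lemma~\ref{lemma:error_bounds_Dm_ge_1over2} to general polynomial prefactors. First I would introduce $x_j := \R{j1}^2 / (\sigma_1^* \vee \sigma_j^*)^2$; by the separation condition \eqref{eq:pop_separation_condition}, $x_j \ge C^2 \log(k\rho_\sigma \rho_\pi) \ge 4096 \log(k\rho_\sigma\rho_\pi)$ whenever $C \ge 64$. The key observation is that the polynomial factor $(\R{j1}/\sigma_1^*)^q$ can be split as $(\R{j1}/\sigma_1^*)^q = x_j^{q/2}\,\bigl((\sigma_1^* \vee \sigma_j^*)/\sigma_1^*\bigr)^q \le \rho_\sigma^q\, x_j^{q/2}$, and symmetrically $(\R{j1}/\sigma_j^*)^q \le \rho_\sigma^q\, x_j^{q/2}$. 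Hence both summations in \eqref{ineq:sum_pnorm_error_dv1}, \eqref{ineq:sum_pnorm_error_dvj} reduce to bounding $\sum_{j\neq 1} (\pi_1^* + \pi_j^*)\rho_\sigma^q\, x_j^{q/2} \exp(-x_j/128)$.

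Next I would absorb $x_j^{q/2}$ into the exponential. Since $y^{q/2} \le \exp(y/256)$ whenever $y/256 \ge (q/2)\log y$, and for $q \le 4$ this requires $y \ge 512 \log y$ (satisfied by any $y \ge 4096$), the separation-induced lower bound on $x_j$ ensures $x_j^{q/2} \le \exp(x_j/256)$. Each summand is therefore at most
\[
(\pi_1^* + \pi_j^*)\, \rho_\sigma^q \exp(-x_j/256) \le 2\pi_1^*\, \rho_\pi\, \rho_\sigma^q\, (k\rho_\sigma\rho_\pi)^{-C^2/256}.
\]
Taking $C \ge 64$ gives exponent $C^2/256 \ge 16$, so the single-term bound is at most $2\pi_1^*\, \rho_\pi\, \rho_\sigma^q\, (k \rho_\sigma \rho_\pi)^{-16}$. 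Summing over the $k-1$ remaining indices yields a total of at most $2\pi_1^*\, k\, \rho_\pi\, \rho_\sigma^q\, (k\rho_\sigma\rho_\pi)^{-16} \le c_q \pi_1^*$ for any prescribed small constant $c_q$, provided $C$ is chosen sufficiently large (the constant may be strengthened by a mild increase in $C$ if $c_q$ must be made very small).

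The main and essentially only obstacle is verifying the absorption inequality $x^{q/2} \exp(-x/128) \le \exp(-x/256)$ over the correct range of $x$. This is a purely elementary check, and the separation condition with $C \ge 64$ makes $x_j$ large enough that the inequality holds uniformly for all $q \in \{0,1,2,3,4\}$. No new conceptual ideas beyond those already used in Lemma~\ref{lemma:error_bounds_Dm_ge_1over2} are required; the proof simply tracks constants and exploits the polynomial-kills-exponential trick through the separation condition. I would present it as a short computation with the case $q = 4$ worked out explicitly and the smaller $q$ cases following a fortiori.
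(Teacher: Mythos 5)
Your argument is essentially identical to the paper's proof: the same substitution $x_j = \R{j1}^2/\sigt{1}{j}^2$, the same extraction of $\rho_\sigma^q$ from $(\R{j1}/\sigma_i^*)^q \le \rho_\sigma^q x_j^{q/2}$, the same absorption $x_j^{q/2} \le \exp(x_j/256)$ via $x_j \ge 512\log x_j$, and the same final summation over the $k-1$ terms (the paper works out $q=4$ and says the rest follow similarly). The only quibble is that $y \ge 4096$ alone does not quite give $y \ge 512\log y$ (it fails by a hair at $y=4096$), but this is immaterial since $x_j \ge C^2(\log k + \log(\rho_\sigma\rho_\pi))$ exceeds $4096$ by a comfortable margin and the paper itself takes $C \ge 128$ for this lemma precisely to avoid such bookkeeping.
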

    The proof is similar to Lemma \ref{lemma:error_bounds_finite_sample}, and we will allow much larger universal constant $C \ge 128$ in the separation condition \eqref{eq:pop_separation_condition} to lighten the algebraic burden. The proof will be given in Appendix \ref{appendix:auxiliary_lemma_for_E}. 
    
    From this point, due to the heavy calculation and algebra, we give up tracking most constants in error bounds. Bounds will be often given in $O(\cdot)$ notation, but we note that the hidden constants will not be too large.
    
    \subsection{Convergence of Mixing Weights}
    Let us start with the simplest targets. Similarly to $D_m \ge 1/2$, we consider errors from other components first. 
    
    \paragraph{Errors from other components $j \neq 1$:} We first bound 
    \begin{align*}
        e_{j1} &= |\E_{\D_j} [w_1^u (1-w_1^u) (X - \mu_1^u)^T (\mu_1 - \mu_1^*)] / {\sigma_1^u}^2 | \\
        &\le 2 \| \E_{\D_j} [w_1^u (1-w_1^u) (X - \mu_1^u)] / \sigma_1^*\| \|(\mu_1 - \mu_1^*) / \sigma_1^*\| \\
        &\le 2D_m \sup_{s \in \mathbb{S}^{d-1}} \E_{\D_j} [w_1^u \vdot{X - \mu_1^u}{s} ]  / \sigma_1^*.
    \end{align*}
    Using the Lemma \ref{lemma:error_bounds_Dm_ge_1over2},
    \begin{align*}
        \E_{\D_j} [|w_1^u \vdot{X - \mu_1^u}{s}|] &\le O \left((1 + \pi_1^*/\pi_j^*) R_{j1}^* \exp(-{R_{j1}^*}^2 / 64\sigt{1}{j}^2) \right), 
    \end{align*}
    which yields $e_{j1} \le D_m O\left((1+\pi_1^*/\pi_j^*) (\R{j1}/{\sigma_1^*}) \exp(-\R{j1}^2/64\sigt{1}{j}^2) \right)$.
    
    The second target is 
    \begin{align*}
        e_{j2} &= | \E_{\D_j} [\sum_{l \neq 1} w_1^u w_l^u (X - \mu_l^u)^T (\mu_l - \mu_l^*)] / {\sigma_l^u}^2] | \\
        &\le 2 \sum_{l\neq1} \|\E_{\D_j} [w_1^u w_l^u (X - \mu_l^u)]\| \|\mu_l - \mu_l^*\| / {\sigma_l^*}^2  \\
        &\le 2 D_m \sum_{l\neq1} \|\E_{\D_j} [w_1^u w_l^u (X - \mu_l^u) /  {\sigma_l^*}]\| \\
        &\le 2D_m \sqrt{\E_{\D_j}[w_1^u]} \sum_{l\neq1} \underbrace{ \sqrt{\sup_{s \in \mathbb{S}^{d-1}} \E_{\D_j} [w_l^u \vdot{X - \mu_l^u}{s}^2  / {\sigma_l^*}^2 ]} }_{I}.
    \end{align*}
    Main challenge is to show that $\sum_{l\neq1} \sqrt{\sup_{s \in \mathbb{S}^{d-1}} \E_{\D_j} [w_l^u \vdot{X - \mu_l^u}{s}^2 ] / {\sigma_l^*}^2} = O(1)$. As this will appear several times, we state a helping lemma.
    \begin{lemma}
        \label{lemma:helper_second_term}
        The summation of term $I$ over $l \neq j$ is bounded by
        \begin{align*}
            \sum_{l\neq j} \sqrt{\sup_{s \in \mathbb{S}^{d-1}} \E_{\D_j} [w_l^u \vdot{X - \mu_l^u}{s}^2 ] / {\sigma_l^*}^2} \le c,
        \end{align*}
        for some small constant $c$.
        When $l = j$, $I = O(1)$.
    \end{lemma}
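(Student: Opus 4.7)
The plan is to control $\E_{\D_j}[w_l^u \langle X - \mu_l^u, s\rangle^2]$ by exploiting that for $l \neq j$ the weight $w_l^u$ is exponentially small under the well-separated regime (while for $l=j$ the squared-inner-product term is the dominant quantity and $w_j^u \le 1$). I would start by decomposing $X - \mu_l^u = v + (\mu_j^* - \mu_l^u)$ with $v \sim \mathcal{N}(0, {\sigma_j^*}^2 I)$, and then use
\begin{align*}
    \langle X - \mu_l^u, s\rangle^2 \le 2\langle v, s\rangle^2 + 2\|\mu_j^* - \mu_l^u\|^2,
\end{align*}
so it suffices to bound $\E_{\D_j}[w_l^u]$ and $\E_{\D_j}[w_l^u \langle v, s\rangle^2]$ separately. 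Note that by the $D_m \le 1/2$ assumption, $\|\mu_j^* - \mu_l^u\| \le R_{jl}^* + \sigma_l^*/2 \le 2R_{jl}^*$ whenever $l \neq j$, and $\|\mu_j^* - \mu_j^u\| \le \sigma_j^*/2$ when $l = j$.

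For the easy case $l = j$, $w_j^u \le 1$ together with $\E_{\D_j}[\langle v, s\rangle^2] = {\sigma_j^*}^2$ and $\|\mu_j^* - \mu_j^u\|^2 \le {\sigma_j^*}^2/4$ gives $\E_{\D_j}[w_j^u \langle X - \mu_j^u, s\rangle^2]/{\sigma_j^*}^2 = O(1)$, and hence $I = O(1)$.

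For $l \neq j$, I would reuse the good-event construction of Lemma \ref{lemma:weight_good}, but with the roles of ``component $1$'' and ``component $j$'' replaced by $l$ and $j$ (and all quantities evaluated at the perturbed parameters $\pi^u, \mu^u, \sigma^u$, which by $D_m \le 1/2$ still satisfy a similar separation/initialization condition to the original parameters, up to absolute constants). On the corresponding good event $\Eps_{j \to l,\text{good}}$, Lemma \ref{lemma:weight_good} gives $w_l^u \lesssim \rho_\pi \exp(-R_{jl}^{*2}/(64\sigt{l}{j}^2))$, while on the complement $P(\Eps_{j \to l,\text{good}}^c) \lesssim \exp(-R_{jl}^{*2}/(64\sigt{l}{j}^2))$. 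Splitting $\E_{\D_j}[w_l^u \langle v, s\rangle^2]$ along this dichotomy, the good-event part is bounded by the weight bound times $\E_{\D_j}[\langle v, s\rangle^2] = {\sigma_j^*}^2$, while the bad-event part is handled by the conditional $L^p$ estimates (Lemmas \ref{lemma:vp_conditioned_vu} and \ref{lemma:vp_conditioned_vl2}) that bound $\E_{\D_j}[\langle v, s\rangle^2 \mid \Eps_{j \to l,\text{good}}^c] \cdot P(\Eps_{j \to l,\text{good}}^c)$ by a comparable exponentially small quantity times ${\sigma_j^*}^2 + R_{jl}^{*2}$. The same strategy (without the $\langle v,s\rangle^2$) handles $\E_{\D_j}[w_l^u]$. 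Combining these yields, for each $l \neq j$,
\begin{align*}
    \frac{\E_{\D_j}[w_l^u \langle X - \mu_l^u, s\rangle^2]}{{\sigma_l^*}^2} \lesssim (1 + \pi_l^*/\pi_j^*) \left( 1 + R_{jl}^{*2}/{\sigma_l^*}^2 \right) \exp\!\left(-R_{jl}^{*2}/(64 \sigt{l}{j}^2)\right).
\end{align*}

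Finally, taking square roots halves the exponent's constant from $64$ to $128$, which is exactly the form of Lemma \ref{lemma:sum_pnorm_error}; summing the bound over $l \neq j$ with $q \in \{0, 2\}$ then yields a small absolute constant. The main obstacle I anticipate is the bookkeeping of the conditional moment bounds on the bad events when the parameters are the interpolated $(\pi_l^u, \mu_l^u, \sigma_l^u)$ rather than the true ones; this should be routine since $D_m \le 1/2$ preserves the separation of centers and the boundedness of the variance ratio up to constants, so the technical lemmas in the appendix apply verbatim with slightly adjusted constants.
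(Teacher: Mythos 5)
Your proposal is correct and follows essentially the same route as the paper: decompose $\vdot{X - \mu_l^u}{s}^2 \le 2\vdot{v}{s}^2 + 2\|\mu_j^* - \mu_l^u\|^2$, handle $l=j$ via $w_j^u \le 1$, bound the $l \neq j$ terms by the good-event/bad-event splitting (which the paper packages as Lemma \ref{lemma:w1_vsl2_small_jneq1} rather than re-deriving inline), and absorb the square-root's halving of the exponent via Lemma \ref{lemma:sum_pnorm_error}. The only cosmetic difference is that you re-prove the intermediate moment bound directly from Lemma \ref{lemma:weight_good} and the conditional $L^p$ lemmas, whereas the paper cites it as a standalone lemma.
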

    The proof of lemma is given in Section \ref{appendix:auxiliary_lemma_for_E}. By the lemma, the summation over the entire term is $O(1)$. The term $\sqrt{\E_{\D_j}[w_1^u]}$ are less than 
    \begin{align*}
        O \left( \sqrt{(1+\pi_1^*/\pi_j^*) \exp(-\R{j1}^2/64\sigt{j}{1}^2)} \right).
    \end{align*}
    Therefore, we can conclude that
    \begin{align*}
        e_{j2} \le D_m O\left( (1 + \pi_1^* / \pi_j^*) \exp(-{R_{j1}^*}^2 / 128 (\sigma_j^* \vee \sigma_1^*)^2 ) \right).
    \end{align*}
    
    The third target is 
    \begin{align*}
        e_{j3} &= |\E_{\D_j} [w_1^u (1-w_1^u) (\pi_1 - \pi_1^*)] / {\pi_1^u} | \\
        &\le 2 D_m | \E_{\D_j} [w_1^u] | \le D_m O \left( (1 + \pi_1^* / \pi_j^*) \exp(-{R_{j1}^*}^2 / 64{\sigma_1^*}^2) \right).
    \end{align*}
    
    The fourth target is
    \begin{align*}
        e_{j4} &= |\E_{\D_j} [\sum_{l\neq1} w_1^u w_l^u (\pi_1 - \pi_1^*)] / {\pi_1^u} | \\
        &\le 2 D_m | \E_{\D_j} [\sum_{l\neq1} w_1^u w_l^u] | \le 2 D_m \E_{\D_j} [w_1^u] \le D_m O \left( (1 +\pi_1^*/\pi_j^*) \exp \left(-{R_{j1}^*}^2 / 64{\sigma_1^*}^2 \right) \right),
    \end{align*}
    since $\sum_{l\neq1} w_l^u \le 1$.
    
    The fifth target is
    \begin{align*}
        e_{j5} &= \left| \E_{\D_j} \left[ w_1^u (1 - w_1^u) (\|X - \mu_1^u\|^2 - d {\sigma_1^u}^2) / (2{\sigma_l^u}^2) \right] (\sigma_1^2 - {\sigma_1^*}^2) / {\sigma_1^u}^2 \right| \\
        &\le \left(2 D_m / \sqrt{d} \right) \left| \E_{\D_j} \left[ w_1^u(1 - w_1^u) (\|X - \mu_1^u\|^2 - d {\sigma_1^u}^2) / (2{\sigma_1^u}^2)  \right] \right| \\
        &\le \left( 2 D_m/ {\sigma_1^*}^2 \sqrt{d} \right) | \E_{\D_j} [ w_1^u (1 - w_1^u) ( \|v\|^2 - d {\sigma_j^*}^2 + 2 \vdot{v}{\mu_j^* - \mu_1^u} \\
        &\qquad \qquad \qquad \qquad \qquad + \| \mu_j^* - \mu_1^u \|^2 + d ({\sigma_j^*}^2 - {\sigma_1^*}^2) + d ({\sigma_1^*}^2 - {\sigma_1^u}^2) ) ] | \\
        &\le \left( 2 D_m/ {\sigma_1^*}^2 \sqrt{d} \right) \E_{\D_j} \Bigg[ \left| w_1^u (\|v\|^2 - d {\sigma_j^*}^2) \right| + 2 \left| w_1^u \vdot{v}{\mu_j^* - \mu_1^u} \right| \\
        &\qquad \qquad \qquad \qquad \qquad + 2 \left| w_1^u {R_{j1}^*}^2 \right| + \left| w_1^u d ({\sigma_j^*}^2 - {\sigma_1^*}^2) \right| + w_1^u d \left( {\sigma_1^*}^2 D_m / \sqrt{d} \right) \Bigg].
    \end{align*}
    We have seen similar terms in $D_m \ge 1/2$ case. Each term we can bound as
    \begin{align*}
        \E_{\D_j} [ |w_1^u(\|v\|^2 - d{\sigma_j^*}^2)| ] &\le \sqrt{\E_{\D_j} [|w_1^u|]} \sqrt{\E_{\D_j} [ (\|v\|^2 - d{\sigma_j^*}^2)^2 ]} \\
        &\le 4 {\sigma_j^*}^2 \sqrt{d (1 + \pi_1^* / \pi_j^*)} \exp(- {R_{j1}^*}^2/128 (\sigma_1^* \vee \sigma_j^*)^2),
    \end{align*}
    \begin{align*}
        \E_{\D_j} [ |w_1^u \vdot{v}{\mu_j^* - \mu_1^u}| ] &\le O\left( (1 + \pi_1^* / \pi_j^*) R_{j1}^* \sigma_j^* \exp(-{R_{j1}^*}^2 / 64(\sigma_1^* \vee \sigma_j^*)^2 ) \right),
    \end{align*}
    \begin{align*}
        d({\sigma_j^*}^2 - {\sigma_1^*}^2) \E_{\D_j} [ |w_1^u| ] &\le O \left( {R_{j1}^*} (\sigma_j^* \vee \sigma_1^*) (1 + \pi_1^* / \pi_j^*) \sqrt{d} \exp(-{R_{j1}^*}^2 / 64(\sigma_1^* \vee \sigma_j^*)^2) \right),
    \end{align*}
    and the rest terms can also be easily bounded. Thus, we have shown that 
    \begin{align*}
        e_{j5} \le D_m O \left(R_{j1}^* (\sigma_j^* \vee \sigma_1^*) / {\sigma_1^*}^2 \cdot (1 + \pi_1^* / \pi_j^*) \exp(-{R_{j1}^*}^2 / 64(\sigma_1^* \vee \sigma_j^*)^2) \right).
    \end{align*}
    
    Finally, we control the last term.
    \begin{align*}
        &e_{j6} = \left| \E_{\D_j} \left[ \sum_{l\neq 1} w_1^u w_l^u (\|X - \mu_l^u\|^2 - d {\sigma_l^u}^2) / (2{\sigma_l^u}^2) \cdot (\sigma_l^2 - {\sigma_l^*}^2) / {\sigma_l^u}^2 \right] \right| \\
        &\le \sum_{l\neq 1} \left( 2 D_m/ ({\sigma_l^*}^2 \sqrt{d}) \right) | \E_{\D_j} [ w_1^u w_l^u ( \|X - \mu_l^u\|^2 - d {\sigma_j^*}^2 + d ({\sigma_j^*}^2 - {\sigma_l^u}^2) ] | \\
        &\le (2D_m / \sqrt{d}) \sum_{l\neq 1} \sqrt{\E_{\D_j} [w_1^u]} \\
        &\times \underbrace{\Bigg( \sqrt{\E_{\D_j} [w_l^u (\|v\|^2 - d {\sigma_j^*}^2)^2 ]} + 2 \sqrt{\E_{\D_j} [w_l^u \vdot{v}{\mu_j^* - \mu_l^u}^2]} + 2 \R{jl}^2 \sqrt{\E_{\D_j} [w_l^u]} + d({\sigma_j^*}^2 - {\sigma_l^u}^2) \sqrt{\E_{\D_j}[w_l^u]} \Bigg) /{\sigma_l^*}^2 }_{II}.
    \end{align*}
    As $II$ will appear frequently, we state a helping lemma.
    \begin{lemma}
        \label{lemma:helper_sixth_term}
        The summation of $II$ over all $l \neq j$ is much less than $\sqrt{d}$. That is,
        \begin{align*}
            \sum_{l\neq j} \left( 1/{\sigma_l^*}^2 \right) &\Bigg( \sqrt{\E_{\D_j} [w_l^u (\|v\|^2 - d {\sigma_j^*}^2)^2 ]} + 2 \sqrt{\E_{\D_j} [w_l^u \vdot{v}{\mu_j^* - \mu_l^u}^2]} \\
            &\qquad + 2 \R{jl}^2 \sqrt{\E_{\D_j} [w_l^u]} + d({\sigma_j^*}^2 - {\sigma_l^u}^2) \sqrt{\E_{\D_j}[w_l^u]} \Bigg) \le c\sqrt{d},
        \end{align*}
        for some small constant $c$. 
        When $l = j$, $II = O(\sqrt{d})$.
    \end{lemma}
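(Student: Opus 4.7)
The plan is to bound each of the four summands separately using Cauchy--Schwarz together with the weight decay
$\E_{\D_j}[w_l^u] \lesssim (1+\pi_l^*/\pi_j^*)\exp(-\R{jl}^2/(64\sigt{j}{l}^2))$, which follows from Lemma \ref{lemma:weight_good} applied at the perturbed iterate (the perturbed parameters $\mu_l^u, \sigma_l^u, \pi_l^u$ remain in the valid initialization region because $D_m \le 1/2$). The factor of $\sqrt{d}$ on the right-hand side will arise from fourth central moments of the chi-square distribution, while all other moments will be absorbed into the exponentially small weight bounds via Lemma \ref{lemma:sum_pnorm_error}.

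For the first three summands, apply Cauchy--Schwarz using $w_l^u \in [0,1]$: for any random variable $Y$, $\E_{\D_j}[w_l^u Y^2] \le \sqrt{\E_{\D_j}[w_l^u]}\sqrt{\E_{\D_j}[Y^4]}$. Taking $Y = \|v\|^2 - d{\sigma_j^*}^2$, the second factor is $O(\sqrt{d}{\sigma_j^*}^2)$ (fourth central moment of $\chi^2_d$), so the first summand is at most $O(\sqrt{d}{\sigma_j^*}^2)\cdot(\E_{\D_j}[w_l^u])^{1/4}$. Taking $Y = \vdot{v}{\mu_j^* - \mu_l^u}$, the second factor is $O(\sigma_j^* \R{jl})$ (Gaussian fourth moment along a fixed direction), so the second summand is at most $O(\sigma_j^* \R{jl})\cdot(\E_{\D_j}[w_l^u])^{1/4}$. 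The third summand $\R{jl}^2\sqrt{\E_{\D_j}[w_l^u]}$ is handled directly. After dividing by ${\sigma_l^*}^2$ and summing over $l \neq j$, each of these three contributions is bounded by $c\sqrt{d}$ via Lemma \ref{lemma:sum_pnorm_error}, with the slightly relaxed exponential rate $\exp(-\R{jl}^2/256\sigt{j}{l}^2)$ accommodated by choosing the universal constant $C$ in the separation condition \eqref{eq:pop_separation_condition} large enough.

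The fourth summand $d({\sigma_j^*}^2-{\sigma_l^u}^2)\sqrt{\E_{\D_j}[w_l^u]}$ is the main obstacle: the naive Cauchy--Schwarz inherits a full factor of $d$ which cannot be absorbed by the weight alone. The fix is to invoke Lemma \ref{lemma:weight_bound_for_variance}, which supplies the sharper bound $\sqrt{\E_{\D_j}[w_l^u]} \lesssim \exp(-d\min(1,t^2)/512)\exp(-\R{jl}^2/(128\sigt{j}{l}^2))$ precisely in the regime $|{\sigma_j^*}^2-{\sigma_l^*}^2| \ge 10\R{jl}(\sigma_j^*\vee\sigma_l^*)/\sqrt{d}$, where the additional $\exp(-dt^2/\text{const})$ decay is exactly what is needed to cut $d$ down to $\sqrt{d}\R{jl}(\sigma_j^*\vee\sigma_l^*)$; in the complementary regime $d|{\sigma_j^*}^2-{\sigma_l^*}^2| \le 10\sqrt{d}\R{jl}(\sigma_j^*\vee\sigma_l^*)$ already holds by assumption, so no cancellation is needed. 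Either way, summing over $l\neq j$ via Lemma \ref{lemma:sum_pnorm_error} yields an $O(\sqrt{d})$ contribution. Finally, the $l = j$ claim is direct: $w_j^u \le 1$ gives $\sqrt{\E_{\D_j}[w_j^u(\|v\|^2-d{\sigma_j^*}^2)^2]} \le \sqrt{2d}\,{\sigma_j^*}^2$, while the remaining three terms are $O({\sigma_j^*}^2)$ or smaller since $\|\mu_j^*-\mu_j^u\| \le \sigma_j^* D_m$ and $d|{\sigma_j^*}^2-{\sigma_j^u}^2| \le \sqrt{d}\,{\sigma_j^*}^2 D_m$ by definition of $D_m$, so the total remains $O(\sqrt{d})$.
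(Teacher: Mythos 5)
Your proposal is correct and follows essentially the same route as the paper: Cauchy--Schwarz with $w_l^u\le 1$ to peel off a power of $\E_{\D_j}[w_l^u]$, the fourth central moment $\E[(\|v\|^2-d)^4]=12d(d+4)$ supplying the single $\sqrt{d}$, Lemma \ref{lemma:weight_bound_for_variance} with the same two-regime split to tame the $d({\sigma_j^*}^2-{\sigma_l^u}^2)$ term, and Lemma \ref{lemma:sum_pnorm_error} for the sum over $l\neq j$. The only cosmetic differences are that the paper bounds the second summand via Lemma \ref{lemma:w1_vsl2_small_jneq1} rather than a Gaussian fourth moment, and makes explicit the split $d({\sigma_j^*}^2-{\sigma_l^u}^2)=d({\sigma_j^*}^2-{\sigma_l^*}^2)+d({\sigma_l^*}^2-{\sigma_l^u}^2)$ before invoking the two regimes; neither changes the argument.
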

    We will also prove this lemma in the last Section \ref{appendix:auxiliary_lemma_for_E}. Thus, the entire summation is $O(\sqrt{d})$. Now we can conclude that 
    \begin{align*}
        e_{j6} &\le (2D_m / \sqrt{d}) \sqrt{\E_{\D_j}[w_1^u]} O\left(\sqrt{d} \right) \le O\left( (1 + \pi_1^* / \pi_j^*) D_m \exp(-\R{j1}^2 / 128 \sigt{j}{l}^2) \right).
    \end{align*}
    
    Collecting all components, we can conclude that 
    \begin{align*}
        \E_{\D_j} [w_1^u] \le D_m O \left( (\R{j1}^2/{\sigma_1^*}^2) (1 + \pi_1^* / \pi_j^*) \exp \left( -{R_{j1}^*}^2 / 128 (\sigma_j^* \vee \sigma_1^*)^2 \right) \right).
    \end{align*}
    The summation over all other components $j \neq 1$ can be bounded thus with Lemma \ref{lemma:sum_pnorm_error}.

    \paragraph{Computing errors from $j = 1$:}
    Reproducing the equation (), we start from
    \begin{align*}
        e_{11} &\le 2D_m \sup_{s \in \mathbb{S}^{d-1}} \E_{\D_1} [w_1^u(1-w_1^u) \vdot{v + \mu_1^* - \mu_1^u}{s}] / \sigma_1^* \\
        &\le 2D_m \sup_{s \in \mathbb{S}^{d-1}} (\E_{\D_1} [(1-w_1^u) \vdot{v}{s}] / \sigma_1^* + \E_{\D_1} [1-w_1^u] D_m).
    \end{align*}
    Observe that $\E_{\D_1}[1 - w_1^u] = \sum_{l\neq1} \E_{\D_1} [w_l^u] \le \sum_{l \neq 1} 8 (\pi_l^* / \pi_1^*) \exp(-{R_{l1}^*}^2 / 64(\sigma_1^* \vee \sigma_l^*)^2)$. This is smaller than $1$ by Lemma \ref{lemma:sum_pnorm_error}. Similarly, we can see that
    \begin{align*}
        \E_{\D_1} [(1-w_1^u) \vdot{v}{s}] &= \sum_{l\neq1} \E_{\D_1} [w_l^u \vdot{v}{s}] \le O\left( \sum_{l\neq 1} \sigma_1^* \sqrt{1 + \pi_l^* / \pi_1^*} \exp \left(-R_{l1}^2 / 128 (\sigma_1^*\vee \sigma_l^*)^2 \right) \right).
    \end{align*}
    This gives that $e_{11} \le D_m \sum_{l\neq1} O\left( (1 + \pi_l^*/\pi_1^*) \exp(-R_{l1}^2/128(\sigma_1^* \vee \sigma_l^*)^2) \right) \ll D_m$. 
    
    The second error term is 
    \begin{align*}
        e_{12} &\le 2D_m \sum_{l\neq 1} \sup_{s \in \mathbb{S}^{d-1}} \E_{\D_1} [w_l^u \vdot{v + \mu_1^* - \mu_l^u}{s}] / {\sigma_l^*} \\
        &\le D_m  \sum_{l\neq 1} O \left( (\R{l1}/\sigma_l^*) (1 + \pi_l^* / \pi_1^*) \exp(-\R{l1}^2/64\sigt{l}{1}^2) \right),
    \end{align*}
    which is guaranteed to be $e_{12} \ll D_m$. 
    
    The third and fourth terms are smaller than $2 D_m \sum_{l\neq 1} \E_{\D_1} [w_l^u] \ll D_m$. Now we need to deal with fifth and sixth terms, which again require some algebraic manipulation. We can start from ()...
    \begin{align*}
        e_{15} \le (2D_m/{\sigma_1^*}^2 \sqrt{d}) \E_{\D_1} \left[(1-w_1^u) \left( |\|v\|^2 - d{\sigma_1^*}^2| + 2|\vdot{v}{\mu_1^* - \mu_1^u}| + D_m^2 + d({\sigma_1^*}^2 - {\sigma_1^u}^2) \right) \right].
    \end{align*}
    For each item, we can say that
    \begin{align*}
        \E_{\D_1} [(1-w_1^u) (\|v\|^2 - d{\sigma_1^*}^2)] &\le \sqrt{\E_{\D_1} [1 - w_1^u]} \sqrt{\E_{\D_1} [(\|v\|^2 - d{\sigma_1^*}^2)^2]} \\
        &\le O \left( \sqrt{\sum_{l\neq1} (1 + \pi_l^*/\pi_1^*) \exp(-{R_{l1}^*}^2 / 64(\sigma_1^* \vee \sigma_l^*)^2 )} \sqrt{2 d {\sigma_1^*}^4} \right),
    \end{align*}
    and for two other terms,
    \begin{align*}
        \E_{\D_1} [(1-w_1^u) \vdot{v}{\mu_1^* - \mu_1^u}] &\le D_m \sigma_1^* \sqrt{\E_{\D_1} [1 - w_1^u]} \sqrt{\E_{\D_1} [\vdot{v}{s}^2]} \\
        &\le D_m {\sigma_1^*}^2 O \left( \sqrt{\sum_{l\neq1} (1 + \pi_l^*/\pi_1^*) \exp(-{R_{l1}^*}^2 / 64(\sigma_1^* \vee \sigma_l^*)^2 )} \right),
    \end{align*}
    and 
    \begin{align*}
        \E_{\D_1} [(1-w_1^u) d({\sigma_1^*}^2 - {\sigma_1^u}^2)] &\le \left( {\sigma_1^*}^2 D_m \sqrt{d} \right) O \left( \sum_{l\neq1} (1 + \pi_l^*/\pi_1^*) \exp(-{R_{l1}^*}^2 / 64(\sigma_1^* \vee \sigma_l^*)^2 ) \right).
    \end{align*}
    Thus $e_{15} \le 2 D_m / ({\sigma_1^*}^2 \sqrt{d}) \cdot c \sqrt{d} {\sigma_1^*}^2 \le c' D_m$ for small constants $c, c'$. 
    
    Finally, the sixth term can be similarly bounded as
    \begin{align*}
        e_{16} &\le \sum_{l\neq1} (2D_m / {\sigma_l^*}^2 \sqrt{d}) |\E_{\D_1} [ w_1^u w_l^u (\|X - \mu_l^u\|^2 - d {\sigma_1^u}^2)]| \\
        &\le \sum_{l\neq1} (2D_m / {\sigma_l^*}^2 \sqrt{d}) \left| \E_{\D_1} \left[ w_l^u \left( (\|v\|^2 - d {\sigma_1^*}^2) + 2\vdot{v}{\mu_1^* - \mu_l^u} + \|\mu_1^* - \mu_l^u\|^2 + d ({\sigma_1^*}^2 - {\sigma_1^u}^2) \right) \right] \right| \\
        &\le D_m \sum_{l\neq1} (2 / {\sigma_l^*}^2 \sqrt{d}) \Bigg( |\E_{\D_1} [ w_l^u (\|v\|^2 - d {\sigma_1^*}^2)]| \\
        & \qquad \qquad \qquad \qquad \qquad + 2D_m\sigma_1^* |\E_{\D_1} [w_l^u \vdot{v}{s}]| + \E_{\D_1} [w_l^u] D_m^2 {\sigma_1^*}^2 + \sqrt{d} D_m {\sigma_1^*}^2 \E_{\D_1} [w_l^u] \Bigg) \\
        &\le D_m \sum_{l\neq1} (2 / {\sigma_l^*}^2 \sqrt{d}) (5 + 3 \pi_l^*/\pi_1^*) \exp(-\R{l1}^2 / 128 \sigt{l}{1}^2) \left( \sqrt{2d} {\sigma_1^*}^2 + 2D_m {\sigma_1^*}^2 + D_m^2 {\sigma_1^*}^2 + \sqrt{d} D_m {\sigma_1^*}^2 \right) \\
        &\le 20 D_m /\pi_1^* \sum_{l\neq1} ({\sigma_1^*}^2 / {\sigma_l^*}^2) (\pi_1^* + \pi_l^*) \exp(-\R{l1}^2 / 128 \sigt{l}{1}^2) \le c D_m 
    \end{align*}
    for small constant $c$. Collecting all components, we can conclude that $\E_{\D_1} [\Delta_w^u] \le c' D_m$. 
    
    \paragraph{Errors from all components:} Collecting the errors from other components and own components, now we can conclude that $\E_{\D} [\Delta_w^u] = \sum_j \pi_j^* \E_{\D_j} [\Delta_w^u] \le c_\mu \pi_1^* D_m$ for some small $c_\mu < 1$. 
    
    \subsection{Convergence of Means}
    
    \paragraph{Computing errors from $j \neq 1$:} Let us proceed in a very similar way we did for mixing weights. Let us first handle $-w_1^u(1-w_1^u)(X - \mu_1^u)^T (\mu_1 - \mu_1^*) / \sigma_1^2$.
    \begin{align*}
        e_{j1} &= \|\E_{\D_j} [w_1^u (1-w_1^u) (X - \mu_1^u)^T (\mu_1 - \mu_1^*) (X - \mu_1^*)] / {\sigma_1^u}^2\| \\
        &\le 2 \|\E_{\D_j} [w_1^u (1-w_1^u) (X - \mu_1^*) (X - \mu_1^u)^T] / \sigma_1^*\|_{op} \|(\mu_1 - \mu_1^*) / \sigma_1^*\| \\
        &\le 2D_m \sup_{s \in \mathbb{S}^{d-1}} \E_{\D_j} [w_1^u (1-w_1^u) \vdot{X - \mu_1^*}{s} \vdot{X - \mu_1^u}{s}]  / \sigma_1^*.
    \end{align*}
    Therefore, it is enough to show that for any fixed unit vector $s$, 
    \begin{align*}
        |\E_{\D_j} [w_1^u (1-w_1^u) \vdot{X - \mu_1^*}{s} \vdot{X - \mu_1^u}{s}],
    \end{align*}
    is exponentially small. We state one more helper lemma that bounds
    \begin{lemma}
        \label{lemma:w1_vsl2_small_jneq1}
        For $j \neq 1$, 
        \begin{align}
            \label{eq:w1_vsl2_small_jneq1}
            \E_{\D_j} [w_1^u \vdot{v}{s}^2] \le 5 (1 + \pi_1^* / \pi_j^*) \R{j1}^2 \exp \left( -\R{j1}^2 / 64\sigt{j}{1}^2 \right).
        \end{align}
    \end{lemma}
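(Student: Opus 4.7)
The plan is to mimic the decomposition used in Corollary \ref{corollary:error_bound_Dm_1over2}: split the expectation across the good event $\Eps_{j,good} = \Eps_{j,1} \cap \Eps_{j,2} \cap \Eps_{j,3}$ of Lemma \ref{lemma:weight_good}, bound the integrand on each piece, and sum. Concretely,
\begin{align*}
    \E_{\D_j}[w_1^u \vdot{v}{s}^2] = \E_{\D_j}[w_1^u \vdot{v}{s}^2 \indic_{\Eps_{j,good}}] + \E_{\D_j}[w_1^u \vdot{v}{s}^2 \indic_{\Eps_{j,good}^c}].
\end{align*}

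\textbf{Good-event piece.} I would first argue that Lemma \ref{lemma:weight_good} applies to the interpolated parameters $(\pi^u,\mu^u,\sigma^u)$ defined via $\mu_1^u = \mu_1^* + u(\mu_1-\mu_1^*)$ and similarly for the others. Since $D_m \le 1/2$, each interpolated parameter lies inside the initialization region \eqref{eq:pop_initialization_condition_means}--\eqref{eq:pop_initialization_others} by convexity; re-running the exponent dissection in the proof of Lemma \ref{lemma:weight_good} (the inequalities $I$--$V$ in \eqref{eq:exponent_dissect} are all monotone in $u$ once we replace $\Delta$ by $u\Delta$) gives $w_1^u \le 3(\pi_1^*/\pi_j^*)\exp(-\beta)$ on $\Eps_{j,good}$, with the same $\beta = \R{j1}^2/(64\sigt{1}{j}^2)$. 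Then, since $v \sim \mathcal{N}(0,{\sigma_j^*}^2 I_d)$, $\E_{\D_j}[\vdot{v}{s}^2]={\sigma_j^*}^2$, so this piece is bounded by $3(\pi_1^*/\pi_j^*) {\sigma_j^*}^2 \exp(-\beta) \ll (\pi_1^*/\pi_j^*)\R{j1}^2\exp(-\beta)$ using the separation condition \eqref{eq:pop_separation_condition}.

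\textbf{Bad-event piece.} Here $w_1^u \le 1$ is the only bound available, so I would control $\E_{\D_j}[\vdot{v}{s}^2 \indic_{\Eps_{j,good}^c}]$ directly by a union bound over the three bad events. For $\Eps_{j,1}^c$ and $\Eps_{j,2}^c$, I invoke Lemma \ref{lemma:vp_conditioned_vu} with $p=2$ and $\alpha$ of order $R_{j1}^*/\sigma_j^*$, obtaining $\E[\vdot{v}{s}^2 \mid \Eps_{j,\ell}^c] \lesssim \R{j1}^2$, while $P(\Eps_{j,\ell}^c) \lesssim \exp(-\beta)$. For $\Eps_{j,3}^c$, I apply Lemma \ref{lemma:vp_conditioned_vl2} with $p=2$ (upper tail) and the trivial bound $\E[\vdot{v}{s}^2 \mid \|v\|^2 \le d - 2\sqrt{\beta d}] \le \E[\vdot{v}{s}^2] = {\sigma_j^*}^2$ (lower tail), each again multiplied by the $\exp(-\beta)$ tail probability. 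Summing the three contributions gives $O(\R{j1}^2 \exp(-\beta))$ for this piece.

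\textbf{Combine.} Adding the two pieces and absorbing constants into the claimed prefactor $5(1+\pi_1^*/\pi_j^*)$ yields the statement. The main potential obstacle is the first step: verifying that the bound of Lemma \ref{lemma:weight_good} extends uniformly to every interpolated parameter $(\pi^u,\mu^u,\sigma^u)$ simultaneously. Once we observe that all five terms $I$--$V$ in the exponent decomposition \eqref{eq:exponent_dissect} are controlled only through norms/inner products of $\Delta = \mu - \mu^*$ and $\sigma^2 - {\sigma^*}^2$, and that these norms are monotone in $u \in [0,1]$, the good-event bound carries over verbatim, and the rest of the argument is a straightforward application of the conditional-moment lemmas from Appendix~A.
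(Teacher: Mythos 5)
Your proposal matches the paper's proof essentially verbatim: the same split over $\Eps_{j,good}$ versus its complement, the same use of the Lemma \ref{lemma:weight_good} weight bound together with $\E_{\D_j}[\vdot{v}{s}^2] = {\sigma_j^*}^2$ on the good event, and the same application of Lemmas \ref{lemma:vp_conditioned_vu} and \ref{lemma:vp_conditioned_vl2} with $p=2$ (plus the trivial lower-tail bound) on the bad events, exactly as in the paper's reproduction of the Corollary \ref{corollary:error_bound_Dm_1over2} argument. Your explicit check that the interpolated parameters $(\pi^u,\mu^u,\sigma^u)$ stay inside the initialization region so that Lemma \ref{lemma:weight_good} applies to $w_1^u$ is a point the paper leaves implicit, but it does not change the argument.
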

    \begin{proof}
        We can reproduce the proof of Corollary \ref{corollary:error_bound_Dm_1over2}. Let $\beta = \R{j1}^2/64\sigt{j}{1}^2$. Then,
        \begin{align*}
            |\E_{\D_j} [w_1\vdot{v}{s}^2] | &= \left| \E_{\D_j} \left[ w_1 \vdot{v}{s}^2 \indic_{\Eps_{j,good}} \right] \right| + \left| \E_{\D_j} \left[ w_1 \vdot{v}{s}^2 \indic_{\Eps_{j,good}^c} \right] \right| \\
            &\le 3(\pi_1^*/\pi_j^*) \exp(-\beta) \E_{\D_j} \left[ |\vdot{v}{s}|^2 \right] + \E_{\D_j} \left[ |\vdot{v}{s}|^2 | \Eps_{j,1}^c \right] P(\Eps_{j,1}^c) \\
            &\qquad + \E_{\D_j} \left[ |\vdot{v}{s}|^2 | \Eps_{j,2}^c \right] P(\Eps_{j,2}^c) + \E_{\D_j} \left[ |\vdot{v}{s}|^2 | \Eps_{j,3}^c \right] P(\Eps_{j,3}^c).
        \end{align*}
        $\E_{\D_j} \left[ |\vdot{v}{s}| | \Eps_{j,1}^c \right]$ can be bounded with Lemma \ref{lemma:vp_conditioned_vu}, with $p = 2$ and $\alpha = R_{j1}^*/5\sigma_j^*$.
        \begin{align*}
            \E_{\D_j} \left[ |\vdot{v}{s}|^2 | \vdot{v}{R_{j1}^*} \ge {R_{j1}^*}^2/5 \right] &\le \sigma_j^* \E_{v \sim \mathcal{N}(0, I_d)} \left[ |\vdot{v}{s}|^2 | |\vdot{v}{u}| \ge \alpha \right] \le {\sigma_j^*}^2 \left( 4\alpha^{2} + 4 \right) \le {R_{j1}^*}^2.
        \end{align*}
        Similarly, we can bound $\E_{\D_j} [|\vdot{v}{s}|^2 | \Eps_{j,2}^c] P(\Eps_{j,2}^c) \le 2R_{j1}^*$ using the same Lemma \ref{lemma:vp_conditioned_vu} with $p = 2$ and $\alpha = R_{j1}^*/4 \sigma_j^*$. For the third term, we use Lemma \ref{lemma:vp_conditioned_vl2}, with $p = 2$ and $\alpha = {R_{j1}^*}^2/64(\sigma_1^* \vee \sigma_j^*)^2 = \beta$. Then,
        \begin{align*}
            \sigma_j^* \E_{v \sim \mathcal{N}(0,I_d)} [|\vdot{v}{s}|^2 | \|v\|^2 \ge d + 2\sqrt{\alpha d} + 2\alpha] &\le {\sigma_j^*}^2 \left( (64\alpha) + 4 \exp(-\alpha/2) (8\alpha + 2) \right) \le 2 {R_{j1}^*}^2, \\
            {\sigma_j^*}^2 \E_{v \sim \mathcal{N}(0,I_d)} [|\vdot{v}{s}|^2 | \|v\|^2 \le d - 2\sqrt{\alpha d}] &\le {\sigma_j^*}^2 \E_{v \sim \mathcal{N}(0,I_d)} [|\vdot{v}{s}|^2] \le {\sigma_j^*}^2,
        \end{align*}
        Collecting these three components, we can conclude that 
        \begin{align*}
            |\E_{\D_j} [w_1 \vdot{v}{s}^2]| \le (3(\pi_1^*/\pi_j^*){\sigma_j^*}^2 + 5 {R_{j1}^*}^2) \exp(-\beta).
        \end{align*}
        This yields the equation \eqref{eq:w1_vsl2_small_jneq1}.
    \end{proof}
    
    Then we can proceed as
    \begin{align*}
        |\E_{\D_j} [w_1^u (1-w_1^u) \vdot{X - \mu_1^*}{s} &\vdot{X - \mu_1^u}{s}]| \le \E_{\D_j} \left[w_1^u \left( \vdot{v}{s}^2 + 2 |\vdot{v}{s}| \R{j1} + 2\R{j1}^2 \right) \right] \\
        &\le O\left( (1 + \pi_1^* / \pi_j^*) \R{j1}^2 \exp(-\R{j1}^2/64 \sigt{j}{1}^2 ) \right),
    \end{align*}
    which yields $e_{j1} \le D_m \sigma_1^* O\left((1 + \pi_1^*/\pi_j^*) (\R{j1}/\sigma_1^*)^2 \exp(-\R{j1}^2/64\sigt{j}{1}^2) \right)$.
    
    Similarly, we bound the second term.
    \begin{align*}
        e_{j2} &\le 2 D_m \left|\E_{\D_j} \left[\sum_{l\neq1} w_1^u w_l^u \vdot{X - \mu_1^*}{s} \vdot{X - \mu_l^u}{s} / {\sigma_l^*} \right] \right| \\
        &\le 2D_m \sqrt{\E_{\D_j} \left[ {w_1^u} \vdot{X-\mu_1^*}{s}^2 \right]} \sum_{l\neq1} \sqrt{ \E_{\D_j} [{w_l^u} \vdot{X - \mu_l^u}{s}^2 ] / {\sigma_l^*}^2 }.
    \end{align*}
    Using the Lemma \ref{lemma:helper_second_term} as in mixing weights, we get
    \begin{align*}
        e_{j2} \le D_m \sigma_1^* O\left( (1 + \pi_1^*/\pi_j^*) (\R{j1}/\sigma_1^*) \exp(-\R{j1}^2/128\sigt{j}{1}^2) \right).
    \end{align*}
    
    Third term and fourth term are straight-forward to bound.
    \begin{align*}
        e_{j3} &\le 2 D_m \left|\E_{\D_j} \left[w_1^u \vdot{X - \mu_1^*}{s}\right] \right| \le D_m O \left(R_{j1}^* \exp(-\R{j1}^2/64\sigt{j}{1}^2) \right),
    \end{align*}
    \begin{align*}
        e_{j4} &\le 2 D_m \left|\E_{\D_j} \left[\sum_{l\neq1} w_1^u w_l^u \vdot{X - \mu_1^*}{s}\right] \right| \le 2 D_m \left|\E_{\D_j} \left[w_1^u \vdot{X - \mu_1^*}{s}\right] \right|,
    \end{align*}
    which is again smaller than $D_m O \left(R_{j1}^* \exp(-\R{j1}^2/64\sigt{j}{1}^2) \right)$.
    
    The challenging fifth and sixth term is also bounded using similar algebra.
    \begin{align*}
        e_{j5} &= \left| \E_{\D_j} \left[ w_1^u (1 - w_1^u) \vdot{X-\mu_1^*}{s} (\|X - \mu_1^u\|^2 - d {\sigma_1^u}^2) / (2{\sigma_1^u}^2) \right] (\sigma_1^2 - {\sigma_1^*}^2) / {\sigma_1^u}^2 \right| \\
        &\le \left(2 D_m / \sqrt{d} \right) \left| \E_{\D_j} \left[ w_1^u \vdot{X - \mu_1^*}{s} (\|X - \mu_1^u\|^2 - d {\sigma_1^u}^2) / (2{\sigma_1^u}^2)  \right] \right| \\
        &\le \left( 2 D_m/ {\sigma_1^*}^2 \sqrt{d} \right) \Bigg| \sqrt{\E_{\D_j} [ w_1^u \vdot{X - \mu_1^*}{s}^2 ]} \left(\sqrt{\E_{\D_j} [(\|v\|^2 - d {\sigma_j^*})^2]} + 2 \sqrt{\E_{\D_j} [\vdot{v}{\mu_j^* - \mu_1^u}^2]} \right) \\
        &\qquad \qquad \qquad \qquad \qquad + 2\R{j1}^2 |\E_{\D_j} [w_1^u \vdot{X - \mu_1^*}{s}]| +  d ({\sigma_j^*}^2 - {\sigma_1^u}^2) \sqrt{\E_{\D_j}[w_1^u]} \sqrt{\E_{\D_j} [\vdot{X - \mu_1^*}{s}^2]} \Bigg| \\
        &\le \left( 2 D_m/ {\sigma_1^*}^2 \sqrt{d} \right) (1+\pi_1^*/\pi_j^*) O\Bigg( \sqrt{\R{j1}^2 \exp(-\R{j1}^2/64\sigt{j}{1}^2)} \left( \sqrt{2d} + 4 \R{j1}\sigma_j^* \right) \\
        &\qquad \qquad \qquad \qquad \qquad + 2\R{j1}^3 \exp(-\R{j1}^2/64\sigt{j}{1}^2) + \sqrt{d} \R{j1}^2 (\sigma_j^* \vee \sigma_1^*) \exp(-\R{j1}^2/128\sigt{j}{1}^2) \Bigg) \\ 
        &\le D_m \sigma_1^* O \left((1+\pi_1^*/\pi_j^*) (\R{j1}/\sigma_1^*)^3 \exp(-\R{j1}^2/128\sigt{j}{1}^2) \right).
    \end{align*}
    Finally, 
    \begin{align*}
        e_{j6} &= \left| \E_{\D_j} \left[ \sum_{l\neq1} w_1^u w_l^u \vdot{X-\mu_1^*}{s} (\|X - \mu_l^u\|^2 - d {\sigma_l^u}^2) / (2{\sigma_l^u}^2) \right] (\sigma_l^2 - {\sigma_l^*}^2) / {\sigma_l^u}^2 \right| \\
        &\le \left( 2 D_m/ \sqrt{d} \right) \sum_{l\neq 1} (1/{\sigma_l^*}^2) \sqrt{\E_{\D_j} [ w_1^u \vdot{X - \mu_1^*}{s}^2 ]} \\
        &\qquad \times \Bigg(\sqrt{\E_{\D_j} [w_l^u (\|v\|^2 - d {\sigma_j^*})^2]} + 2 \sqrt{\E_{\D_j} [w_l^u \vdot{v}{\mu_j^* - \mu_l^u}^2} + 2\R{jl}^2 \sqrt{\E_{\D_j}[w_l^u] } +  d ({\sigma_j^*}^2 - {\sigma_l^u}^2) \sqrt{\E_{\D_j}[w_l^u]} \Bigg).
    \end{align*}
    Using Lemma \ref{lemma:helper_sixth_term}, the summation is less than $c\sqrt{d}$ and hence
    \begin{align*}
        e_{j6} \le D_m \sigma_1^* O\left( (1 + \pi_1^*/\pi_j^*) (R_{j1}^*/\sigma_1^*) \exp(-\R{j1}^2/128\sigt{j}{1}^2) \right).
    \end{align*}
    So all guarantee that 
    \begin{align}
        \label{eq:mean_error_sum_jneq1_Dm_le_12}
        \sum_{j\neq1} \pi_j^* \| \E_{\D_j} [\Delta_w^u (X - \mu_1^*)] \| &\le D_m \sigma_1^* \sum_{j\neq 1}  O\left( (\pi_1^* + \pi_j^*) (R_{j1} /{\sigma_1^*})^3 \exp(-\R{j1}^2/128\sigt{j}{1}^2) \right).
    \end{align}
    
    \paragraph{Errors from $j = 1$:} We repeat the process of bounding six terms as always.
     \begin{align*}
        e_{11} &\le 2 D_m \sup_{s \in \mathbb{S}^{d-1}} \E_{\D_1} [w_1^u (1-w_1^u) \vdot{X - \mu_1^*}{s} \vdot{X - \mu_1^u}{s}]  / \sigma_1^* \\
        &\le 2 D_m \sup_{s \in \mathbb{S}^{d-1}} \E_{\D_1} [(1-w_1^u) (\vdot{v}{s}^2 + \vdot{v}{s} D_m \sigma_1^*)]  / \sigma_1^*.
    \end{align*}
    Then,
    \begin{align*}
        \E_{\D_1} [(1-w_1^u) \vdot{v}{s}^2] &= \sum_{l\neq 1} \E_{\D_1} [w_l^u \vdot{v}{s}^2] \le \sum_{l\neq 1} O\left((1 + \pi_l^*/\pi_1^*) \R{l1}^2 \right) \exp(-\R{l1}^2/64\sigt{l}{1}^2) \\
        &\le (1/\pi_1^*) \sum_{l\neq 1} (\pi_1^* + \pi_l^*) \R{l1}^2 \exp(-\R{l1}^2/64\sigt{l}{1}^2)
        \le c {\sigma_1^*}^2,
    \end{align*}
    for some small constant $c$ with the Lemma \ref{lemma:sum_pnorm_error}.
    
    The second term will be similarly,
    \begin{align*}
        e_{12} &\le 2 D_m \E_{\D_1}  \left[ \sum_{l\neq 1} |w_1^u w_l^u \vdot{v}{s} \vdot{X - \mu_l^u}{s}| / \sigma_l^* \right]  \\
        &\le 2 D_m {\sigma_1^*} \sum_{l\neq1} \sqrt{\E_{\D_1} \left[w_l^u \vdot{X - \mu_l^u}{s}^2 \right] / {\sigma_l^*}^2 } \le c D_m \sigma_1^*, 
    \end{align*}
    for some small constant $c$. 
    
    Third and fourth term is easy to handle,
    \begin{align*}
        e_{13}, e_{14} &\le D_m \sum_{l\neq1} O \left(R_{l1}^* \exp(-\R{l1}^2/64\sigt{l}{1}^2) \right),
    \end{align*}
    which is again much less than $D_m$. 
    
    For the fifth term, we again start from
    \begin{align*}
        e_{15} &= \left| \E_{\D_1} \left[ w_1^u (1 - w_1^u) \vdot{X-\mu_1^*}{s} (\|X - \mu_1^u\|^2 - d {\sigma_1^u}^2) / (2{\sigma_1^u}^2) \right] (\sigma_1^2 - {\sigma_1^*}^2) / {\sigma_1^u}^2 \right| \\
        &\le \left(2 D_m / \sqrt{d} \right) \left| \E_{\D_1} \left[ (1 - w_1^u) \vdot{v}{s} (\|v + \mu_1^* - \mu_1^u\|^2 - d {\sigma_1^u}^2) / (2{\sigma_1^u}^2)  \right] \right| \\
        &\le \left( 2 D_m/ {\sigma_1^*}^2 \sqrt{d} \right) \Bigg| \sqrt{\E_{\D_1} [ (1 - w_1^u) \vdot{v}{s}^2 ]} \left(\sqrt{\E_{\D_1} [(\|v\|^2 - d {\sigma_1^*}^2)^2]} + 2 \sqrt{\E_{\D_1} [\vdot{v}{\mu_1^* - \mu_1^u}^2]} \right) \\
        &\qquad \qquad \qquad \qquad \qquad + 2D_m^2 {\sigma_1^*}^2 |\E_{\D_1} [(1 - w_1^u) \vdot{v}{s}]| +  D_m \sqrt{d} {\sigma_1^*}^2 |\E_{\D_1} [(1 - w_1^u) \vdot{v}{s}]| \Bigg|,
    \end{align*}
    The first term in the above can be bounded with Lemma \ref{lemma:w1_vsl2_small_jneq1},
    \begin{align*}
        \E_{\D_1} [(1 - w_1^u) \vdot{v}{s}^2] = \sum_{l\neq1} \E_{\D_1} [w_l^u \vdot{v}{s}^2] &\le {\sigma_1^*}^2 O \left( \sum_{l\neq1} (1 + \pi_l^* / \pi_1^*) \R{l1}^2 \exp(-\R{l1}^2/64\sigt{l}{1}^2) \right) \\
        &\le c {\sigma_1^*}^2.
    \end{align*}
    and 
    \begin{align*}
        &\E_{\D_1} [(\|v\|^2 - d{\sigma_1^*}^2)^2] = 2d {\sigma_1^*}^4, \ \E_{\D_1} [\vdot{v}{\mu_1^* - \mu_1^u}^2] \le D_m^2 {\sigma_1^*}^4.
    \end{align*}
    Similarly, we have that $\E_{\D_1} [(1 - w_1^u) \vdot{v}{s}] \le c\sigma_1^*$. Collecting all components, we can bound this fifth term $e_{15} \ll D_m \sigma_1^*$. 
    
    The bound for the final term follows similarly.
    \begin{align*}
        e_{16} &= \left| \E_{\D_1} \left[ \sum_{l\neq1} w_1^u w_l^u \vdot{X-\mu_1^*}{s} (\|X - \mu_l^u\|^2 - d {\sigma_l^u}^2) / (2{\sigma_l^u}^2) \right] (\sigma_l^2 - {\sigma_l^*}^2) / {\sigma_l^u}^2 \right| \\
        &\le \left( 2 D_m/ \sqrt{d} \right) \sum_{l\neq 1} (1/{\sigma_l^*}^2) \sqrt{\E_{\D_1} [w_l^u \vdot{v}{s}^2 ]} \\
        &\ \times \Bigg(\sqrt{\E_{\D_1} [w_l^u (\|v\|^2 - d {\sigma_j^*})^2]} + 2 \sqrt{\E_{\D_1} [w_l^u \vdot{v}{\mu_1^* - \mu_l^u}^2} + 2\R{jl}^2 \sqrt{\E_{\D_1}[w_l^u] } +  d ({\sigma_1^*}^2 - {\sigma_l^u}^2) \sqrt{\E_{\D_1}[w_l^u]} \Bigg).
    \end{align*}
    We can again use Lemma \ref{lemma:helper_sixth_term} and get $e_{j6} \le c D_m {\sigma_1^*}$ for some small constant $c$.
    
    Finally, we collect all error terms to conclude that $\E_{\D_1} [\Delta_w^u (X - \mu_1^*)] \le c D_m \sigma_1^*$ for some small constant $c$. Now with the equation \eqref{eq:mean_error_sum_jneq1_Dm_le_12}, 
    \begin{align*}
        \| \mu_1^+ - \mu_1^* \| &= (\sum_j \pi_j^* \E_{\D_j} [\Delta_w^u (X - \mu_1^*)]) / \pi_1^+ \\
        &\le D_m \sigma_1^* O\left( \sum_{j\neq1} (\pi_1^* + \pi_j^*) (\R{j1}/\sigma_1^*)^3 \exp(-\R{j1}^2/128\sigt{j}{1}^2) \right) / \pi_1^+ \\
        &\le c D_m \sigma_1^* \pi_1^* / \pi_1^+ \le c' D_m \sigma_1^*,
    \end{align*}
    for some small constant $c'$, where we used Lemma \ref{lemma:sum_pnorm_error} with $q = 3$.
    
    \subsection{Convergence of the Variance}
    The most challenging part is again to show the convergence of variance estimators. We start with each term by term as other quantities.
    
    \paragraph{Errors from other components $j \neq 1$:} 
    Let us start from the first error term.
    \begin{align*}
        e_{j1} &= |\E_{\D_j} [w_1^u (1-w_1^u) (X - \mu_1^u)^T (\mu_1 - \mu_1^*) (\|X - \mu_1^*\|^2 - d {\sigma_1^*}^2) ] / {\sigma_1^u}^2 | \\
        &\le 2 \left\|\E_{\D_j} [w_1^u (\|X - \mu_1^*\|^2 - d {\sigma_1^*}^2) (X - \mu_1^*)] / \sigma_1^* \right\| \|(\mu_1 - \mu_1^*) / \sigma_1^*\| \\
        &\le 2D_m \sup_{s \in \mathbb{S}^{d-1}} \E_{\D_j} [w_1^u(\|X - \mu_1^*\|^2 - d {\sigma_1^*}^2) \vdot{X - \mu_1^u}{s}]  / \sigma_1^* \\
        &\le 2D_m \sup_{s \in \mathbb{S}^{d-1}} \E_{\D_j} \left[ w_1^u \left( (\|v\|^2 - d{\sigma_j^*}^2) +  2\vdot{v}{\mu_j^* - \mu_1^*} + \|\mu_j^* - \mu_1^*\|^2 + d ({\sigma_j^*}^2 - {\sigma_1^*}^2) \right) \vdot{X - \mu_1^u}{s} \right]  / \sigma_1^*.
    \end{align*}
    For each item, 
    \begin{align*}
        \E_{\D_j} [w_1^u \vdot{X - \mu_1^u}{s} (\|v\|^2 - d{\sigma_j^*}^2)] &\le \sqrt{\E_{\D_j} [w_1^u \vdot{X - \mu_1^u}{s}^2]} \sqrt{\E_{\D_j} [ (\|v\|^2 - d{\sigma_j^*}^2)^2 ] } \\
        &\le O \left( \sqrt{(1 + \pi_1^*/\pi^*) \R{j1}^2 \exp(-\R{j1}^2/64\sigt{j}{1}^2)} {\sigma_j^*}^2 \sqrt{2d} \right) \\
        &\le O\left( \sqrt{d} {\sigma_j^*}^2 \R{j1} (1+\pi_1^*/\pi_j^*) \exp(-\R{j1}^2 / 128\sigt{j}{1}^2) \right).
    \end{align*}
    \begin{align*}
        2 \R{j1} \E_{\D_j} [w_1^u \vdot{X - \mu_1^u}{s} \vdot{v}{s'}] &\le 2\R{j1} \sqrt{\E_{\D_j} [w_1^u \vdot{X - \mu_1^u}{s}^2]} \sqrt{\E_{\D_j} [ w_1^u \vdot{v}{s'}^2 ] } \\
        &\le O \left( (1 + \pi_1^*/\pi_j^*) \R{j1}^2 \sigma_j^* \exp(-\R{j1}^2/64\sigt{j}{1}^2) \right),
    \end{align*}
    and the third item is
    \begin{align*}
        \R{j1}^2 \E_{\D_j} [w_1^u \vdot{X - \mu_1^u}{s}] &\le O \left( (1 + \pi_1^*/\pi^*) \R{j1}^3 \exp(-\R{j1}^2/64\sigt{j}{1}^2) \right).
    \end{align*}
    The final item, we again need Lemma \ref{lemma:weight_bound_for_variance} to get
    \begin{align*}
        d({\sigma_j^*}^2 - {\sigma_1^*}^2) \E_{\D_j} [w_1^u \vdot{X - \mu_1^u}{s}] &\le d({\sigma_j^*}^2 - {\sigma_1^*}^2) \sqrt{\E_{\D_j} [w_1^u]} \sqrt{\E_{\D_j} [w_1^u \vdot{X - \mu_1^u}{s}^2]} \\
        &\le O \left( (1 + \pi_1^*/\pi_j^*) \R{j1}^2 (\sigma_j^* \vee \sigma_1^*) \sqrt{d} \exp(-\R{j1}^2/64\sigt{j}{1}^2) \right).
    \end{align*}
    Hence, the first error term is bounded as
    \begin{align*}
        e_{j1} \le {\sigma_1^*}^2 O\left( (1+\pi_1^*/\pi_j^*) \sqrt{d} (\R{j1}/{\sigma_1^*})^3 \exp(-\R{j1}^2/128\sigt{j}{1}^2) \right). 
    \end{align*}
    
    The error bound for second error term starts with arranging equations as usual,
    \begin{align*}
        e_{j2} &= \sum_{l\neq1} |\E_{\D_j} [w_1^u w_l^u (X - \mu_l^u)^T (\mu_l - \mu_l^*) (\|X - \mu_1^*\|^2 - d {\sigma_1^*}^2) ] / {\sigma_l^u}^2 | \\
        &\le 2D_m \sum_{l\neq1} \sup_{s \in \mathbb{S}^{d-1}} \E_{\D_j} \left[ w_1^u w_l^u (\|X - \mu_1^*\|^2 - d {\sigma_1^*}^2) \vdot{X - \mu_l^u}{s} \right]  / \sigma_l^* \\
        &\le 2D_m \sqrt{\E_{\D_j} \left[ w_1^u \left( \|X - \mu_1^*\|^2 - d{\sigma_1^*}^2 \right)^2 \right]} \sum_{l\neq1}  \sqrt{\E_{\D_j} \left[ w_l^u \vdot{X - \mu_l^u}{s}^2 / {\sigma_l^*}^2 \right]} .
    \end{align*}
    For the first square root, useful inequality is $(a+b+c+d)^2 \le 4(a^2 + b^2 + c^2 + d^2)$ for any real $a,b,c,d$. Using this, 
    \begin{align}
        \label{ineq:bound_ej2_variance_jneq1}
        \E_{\D_j} \Bigg[ w_1^u &\left( \|X - \mu_1^*\|^2 - d{\sigma_1^*}^2 \right)^2 \Bigg] \nonumber \\
        &\le 4 \E_{\D_j} \left[ w_1^u \left( (\|v\|^2 - d{\sigma_j^*}^2)^2 + 4\vdot{v}{\mu_j^* - \mu_1^*}^2 + \|\mu_j^* - \mu_1^*\|^4 + d^2 ({\sigma_j^*}^2 - {\sigma_1^*}^2)^2 \right) \right] \nonumber \\
        &\le 4 \sqrt{\E_{\D_j}[w_1^u]}\sqrt{\E_{\D_j} [(\|v\|^2 - d{\sigma_j^*}^2)^4} \nonumber \\ 
        &\qquad \qquad \qquad + 16 \R{j1}^2 \E_{\D_j} [w_1^u \vdot{v}{s'}^2] + 4\R{j1}^4 \E_{\D_j} [w_1^u] + 4 d^2 ({\sigma_j^*}^2 - {\sigma_1^*}^2)^2 \E_{\D_j} [w_1^u] \nonumber \\
        &\le (1 + \pi_1^*/\pi_j^*) \left( O\left(d {\sigma_j^*}^4 \right) \exp(-\R{j1}^2/128\sigt{j}{1}^2) + O \left( d \R{j1}^4 \right) \exp(-\R{j1}^2/64\sigt{j}{1}^2) \right), 
    \end{align}
    where we used Lemma \ref{lemma:weight_bound_for_variance}. Meanwhile, the summation over $l \neq 1$ in right hand side is $O(1)$ by Lemma \ref{lemma:helper_second_term}. Therefore, 
    \begin{align*}
        e_{j2} \le {\sigma_1^*}^2 O\left( \sqrt{d} (1 + \pi_1^*/\pi_j^*) (\R{j1}/{\sigma_1^*})^2 \exp(- \R{j1}^2/128 \sigt{j}{1}^2) \right),
    \end{align*}
    
    As we can imagine, $e_{j3}$ and $e_{j4}$ can be shown to be bounded using the same procedure for $e_{j1}$ and $e_{j2}$. 
    
    Now we jump to $e_{j5}$ and $e_{j6}$. 
    \begin{align*}
        e_{j5} &= \left| \E_{\D_j} \left[ w_1^u (1 - w_1^u) (\|X - \mu_1^*\|^2 - d {\sigma_1^*}^2) (\|X - \mu_1^u\|^2 - d {\sigma_1^u}^2) / (2{\sigma_1^u}^2) \right] (\sigma_1^2 - {\sigma_1^*}^2) / {\sigma_1^u}^2 \right| \\
        &\le \left(2 D_m / {\sigma_1^*}^2 \sqrt{d} \right) \sqrt{\E_{\D_j} \left[ w_1^u (\|X - \mu_1^*\|^2 - d {\sigma_1^*}^2)^2 \right]} \sqrt{\E_{\D_j} \left[ w_1^u (\|X - \mu_1^u\|^2 - d {\sigma_1^u}^2)^2 \right]}.
    \end{align*}
    The rest of the procedure is repetition of \eqref{ineq:bound_ej2_variance_jneq1} (we get a same order-wise bound for both square roots). Therefore, we get
    \begin{align*}
        e_{j5} \le D_m {\sigma_1^*}^2 O\left( (1+\pi_1^*/\pi_j^*) \sqrt{d} (\R{j1}/{\sigma_1^*})^4 \exp(-\R{j1}^2 / 64 \sigt{j}{1}^2) \right).
    \end{align*}
    Finally, with the similar strategy, we have
    \begin{align*}
        e_{j6} &= \left| \sum_{l\neq 1} \E_{\D_j} \left[ w_1^u w_l^u (\|X - \mu_1^*\|^2 - d {\sigma_1^*}^2) (\|X - \mu_l^u\|^2 - d {\sigma_l^u}^2) / (2{\sigma_l^u}^2) \right] (\sigma_l^2 - {\sigma_l^*}^2) / {\sigma_l^u}^2 \right| \\
        &\le \left(2 D_m / \sqrt{d} \right) \sqrt{\E_{\D_j} \left[ w_1^u (\|X - \mu_1^*\|^2 - d {\sigma_1^*}^2)^2 \right]} \sum_{l\neq 1} (1/{\sigma_l^*}^2) \sqrt{\E_{\D_j} \left[ w_1^u (\|X - \mu_1^u\|^2 - d {\sigma_1^u}^2)^2 \right]},
    \end{align*}
     The summation over $l\neq 1$ is $O(1)$ as shown in Lemma \ref{lemma:helper_sixth_term}. Therefore,
    \begin{align*}
        e_{j6} \le D_m {\sigma_1^*}^2 O\left((1+\pi_1^*/\pi_j^*) \sqrt{d} (\R{j1}/{\sigma_1^*})^2 \exp(-\R{j1}^2/128\sigt{j}{1}^2) \right).
    \end{align*}
    
    Collecting all error terms, every term is less than 
    \begin{align*}
        {\sigma_1^*}^2 (1 + \pi_1^*/\pi_j^*) \sqrt{d} O\left( (\R{j1}/\sigma_1^*)^3 \exp(-\R{j1}^2/128\sigt{j}{1}^2) + (\R{j1}/\sigma_1^*)^4 \exp(-\R{j1}^2/64\sigt{j}{1}^2) \right).
    \end{align*}
    By Lemma \ref{lemma:sum_pnorm_error}, the entire summation is smaller than $c {\sigma_1^*}^2 \pi_1^* \sqrt{d}$ for some small constant $c$. Recall that $\sqrt{d}$ will be divided by $d$ in the end.
    
    \paragraph{Errors from own component $j=1$:} We will walk through the same procedure. The first term is:
    \begin{align*}
        e_{11} &= |\E_{\D_1} [w_1^u (1-w_1^u) (X - \mu_1^u)^T (\mu_1 - \mu_1^*) (\|v\|^2 - d {\sigma_1^*}^2) ] / {\sigma_1^u}^2 | \\
        &\le 2 D_m \sup_{s \in \mathbb{S}^{d-1}} \E_{\D_1} [(1 - w_1^u) (\|v\|^2 - d {\sigma_1^*}^2) \vdot{X - \mu_1^u}{s}] / {\sigma_1^*}  \\
        &\le 2D_m \sup_{s \in \mathbb{S}^{d-1}} \sum_{l\neq 1} \E_{\D_1} [w_l^u (\|v\|^2 - d{\sigma_1^*}^2) \vdot{X - \mu_1^u}{s}] / \sigma_1^* \\
        &\le 2D_m \sqrt{\E_{\D_1} [(\|v\|^2 - d{\sigma_1^*}^2)^2]} / {\sigma_1^*} \sup_{s \in \mathbb{S}^{d-1}} \sum_{l\neq1} \sqrt{\E_{\D_1}[w_l^u \vdot{X - \mu_1^u}{s}^2]} \\
        &\le D_m \sqrt{d} \sigma_1^* O\left( \sum_{l\neq 1} (1 + \pi_l^*/\pi_1^*) {\sigma_1^*} \exp(-\R{l1}^2/128\sigt{l}{1}^2) \right),
    \end{align*}
    which is smaller than $c D_m \sqrt{d} {\sigma_1^*}$ for some small $c$ by Lemma \ref{lemma:sum_pnorm_error}.
    
    Similarly, the second term can be bounded as
    \begin{align*}
        e_{12} &= \left| \E_{\D_1} [\sum_{l\neq1} w_1^u w_l^u (X - \mu_l^u)^T (\mu_l - \mu_l^*) (\|v\|^2 - d {\sigma_1^*}^2) ] / {\sigma_l^u}^2 \right| \\
        &\le 2 D_m \sum_{l\neq 1} \sup_{s \in \mathbb{S}^{d-1}} \E_{\D_1} [w_l^u (\|v\|^2 - d {\sigma_1^*}^2) \vdot{X - \mu_l^u}{s}] / {\sigma_l^*} \\
        &\le 2D_m \sup_{s \in \mathbb{S}^{d-1}} \sum_{l\neq 1} \E_{\D_1} [w_l^u (\|v\|^2 - d{\sigma_1^*}^2) \vdot{X - \mu_1^u}{s}] / \sigma_l^* \\
        &\le 2D_m \sqrt{\E_{\D_1} [(\|v\|^2 - d{\sigma_1^*}^2)^2} \sup_{s \in \mathbb{S}^{d-1}} \sum_{l\neq1} \sqrt{\E_{\D_1}[w_l^u \vdot{X - \mu_1^u}{s}^2] / {\sigma_l^*}^2} \\
        &\le c D_m \sqrt{d} {\sigma_1^*}^2,
    \end{align*}
    for small constant $c$, where in the last step we used Lemma \ref{lemma:helper_second_term}.
    
    $e_{13}$ and $e_{14}$ can be bounded similarly. Finally, $e_{15}$ can be bounded as
     \begin{align*}
        e_{15} &= \left| \E_{\D_1} \left[ w_1^u (1 - w_1^u) (\|v\|^2 - d {\sigma_1^*}^2) (\|X - \mu_1^u\|^2 - d {\sigma_1^u}^2) / (2{\sigma_1^u}^2) \right] (\sigma_1^2 - {\sigma_1^*}^2) / {\sigma_1^u}^2 \right| \\
        &\le \left(2 D_m / {\sigma_1^*}^2 \sqrt{d} \right) \sum_{l\neq1} \E_{\D_1} \left[ (\|v\|^2 - d {\sigma_1^*}^2) w_l^u (\|v + (\mu_1^* - \mu_1^u)\|^2 - d{\sigma_1^*}^2 + d ({\sigma_1^*}^2 - {\sigma_1^u}^2)^2 \right] \\
        &\le \left(2 D_m / {\sigma_1^*}^2 \sqrt{d} \right) \sum_{l\neq1} \E_{\D_1} \left[ w_l^u (\|v\|^2 - d {\sigma_1^*}^2) \left( (\|v\|^2 - d {\sigma_1^*}^2) + 2 D_m {\sigma_1^*} \vdot{v}{s} + D_m^2{\sigma_1^*}^2 + D_m {\sigma_1^*}^2 \sqrt{d} \right) \right] \\
        &\le \left(2 D_m / {\sigma_1^*}^2 \sqrt{d} \right) \sum_{l\neq1} \sqrt{\E_{\D_1} [ w_l^u ]} \sqrt{\E_{\D_1} [(\|v\|^2 - d {\sigma_1^*}^2)^4]} + 2 D_m\sigma_1^* \sqrt{\E_{\D_1} [w_l^u \vdot{v}{s}^2 ] } \sqrt{\E_{\D_1} [(\|v\|^2 - d{\sigma_1^*}^2)^2]} \\
        &\qquad \qquad \qquad \qquad + (D_m \sqrt{d} + D_m^2) {\sigma_1^*}^2 \sqrt{\E_{\D_1} [w_l^u] } \sqrt{\E_{\D_1} [(\|v\|^2 - d{\sigma_1^*}^2)^2]}
    \end{align*}
    Then, we are summing over $O\left((1+\pi_l^*/\pi_1^*) (R_{l1}/{\sigma_1^*}) \exp(-\R{l1}^2/128\sigt{l}{1}^2) \right)$ for $l \neq 1$, which is bounded again with Lemma \ref{lemma:sum_pnorm_error}. Dominating error bound is therefore $e_{15} \le c D_m \sqrt{d} {\sigma_1^*}^2$ again for small constant $c$. We follow the similar procedure for the final term $e_{16}$.
    \begin{align*}
        e_{16} &= \left| \E_{\D_1} \left[ \sum_{l\neq1} w_1^u w_l^u (\|X - \mu_1^*\|^2 - d {\sigma_1^*}^2) (\|X - \mu_l^u\|^2 - d {\sigma_l^u}^2) / (2{\sigma_l^u}^2) \right] (\sigma_l^2 - {\sigma_l^*}^2) / {\sigma_l^u}^2 \right| \\
        &\le \left( 2 D_m/ \sqrt{d} \right) \sum_{l\neq 1} (1/{\sigma_l^*}^2)  \E_{\D_1} \left[ w_1^u w_l^u (\|v\|^2 - d{\sigma_1^*}^2) ((\|v\|^2- d{\sigma_1^*}^2) + 2 \R{l1} \vdot{v}{s}+ 2\R{l1}^2 + d({\sigma_1^*}^2 - {\sigma_l^u}^2)) \right] \\
        &\le \left( 2 D_m/ \sqrt{d} \right) \sum_{l\neq 1} (1/{\sigma_l^*}^2) \Bigg( \E_{\D_1} \left[w_l^u (\|v\|^2 - d{\sigma_1^*}^2)^2 \right] + 2 \E_{\D_1} \left[ \R{l1} w_l^u (\vdot{v}{s} + \R{l1}) (\|v\|^2 - d{\sigma_1^*}^2) \right] \\
        &\qquad \qquad \qquad \qquad + \E_{\D_1} [w_l^u (\|v\|^2 - d{\sigma_1^*}^2) d ({\sigma_1^*}^2 - {\sigma_l^*}^2)] + \E_{\D_1} [w_l^u (\|v\|^2 - d{\sigma_1^*}^2) d ({\sigma_l^*}^2 - {\sigma_l^u}^2)] \Bigg).
    \end{align*}
    The remaining steps are bounding each four term in the summation.
    \begin{align*}
        \E_{\D_1} [w_l^u (\|v\|^2 - d{\sigma_1^*}^2)^2 ] &\le \sqrt{\E_{\D_1} [w_l^u]} \sqrt{\E_{\D_1} [ (\|v\|^2 - d{\sigma_1^*}^2)^4 ]} \\
        &\le O \left(d {\sigma_1^*}^4 (1 + \pi_l^* / \pi_1^*) \exp(-\R{l1}^2/128\sigt{l}{1}^2) \right),
    \end{align*}
    where we used \eqref{eq:chi_fourth_central_moment} for bounding fourth order central moment of degree-$d$ chi-square random variable. The second one is 
    \begin{align*}
        2\R{l1} \E_{\D_1} [w_l^u (\vdot{v}{s} + R_{l1}^*) (\|v\|^2 - d{\sigma_1^*}^2) ] &\le 2 \R{l1} \sqrt{2 \E_{\D_1} [w_l^u (\vdot{v}{s}^2 + \R{l1}^2) ]} \sqrt{\E_{\D_1} [ (\|v\|^2 - d{\sigma_1^*}^2)^2 ]} \\
        &\le O \left(\sqrt{d} \R{l1}^2 {\sigma_1^*}^2 (1 + \pi_l^* / \pi_1^*) \exp(-\R{l1}^2/128\sigt{l}{1}^2) \right).
    \end{align*}
    Third term we will use Lemma \ref{lemma:weight_bound_for_variance} to get,
    \begin{align*}
        d ({\sigma_1^*}^2 - {\sigma_l^*}^2) \E_{\D_1} [w_l^u (\|v\|^2 - d{\sigma_1^*}^2)] &\le d ({\sigma_1^*}^2 - {\sigma_l^*}^2) \sqrt{\E_{\D_1}[w_1^u]} \sqrt{\E_{\D_1}[(\|v\|^2 - d{\sigma_1^*}^2)]} \\
        &\le O\left( \sqrt{d} \R{l1}\sigt{l}{1} \exp(-\R{l1}^2/128\sigt{l}{1}^2) \sqrt{2d} {\sigma_1^*}^2 \right),
    \end{align*}
    and finally we have
    \begin{align*}
        d ({\sigma_l^*}^2 - {\sigma_l^u}^2) \E_{\D_1} [w_l^u (\|v\|^2 - d{\sigma_1^*}^2)] &\le \sqrt{d} D_m {\sigma_l^*}^2 \sqrt{\E_{\D_1}[w_1^u]} \sqrt{\E_{\D_1}[(\|v\|^2 - d{\sigma_1^*}^2)]} \\
        &\le O\left( \sqrt{d} {\sigma_l^*}^2 \exp(-\R{l1}^2/128\sigt{l}{1}^2) \sqrt{2d} {\sigma_1^*}^2 \right).
    \end{align*}
    Now, we can collect all terms and bound $e_{16}$ as
    \begin{align*}
        e_{16} &\le \sqrt{d} D_m {\sigma_1^*}^2 O\left( \sum_{l\neq 1} (\R{l1}/{\sigma_l^*})^2 (1 + \pi_l^*/\pi_1^*) \exp(-\R{l1}^2 / 128\sigt{l}{1}^2) \right) \le c \sqrt{d} D_m {\sigma_1^*}^2,
    \end{align*}
    with small constant $c$. 
    
    \paragraph{Errors from all components:} Now we have that errors across all components can be bounded as
    \begin{align*}
        \sum_{j=1}^k \pi_j^* (e_{j1} + ... + e_{j6}) &\le \pi_1^* c (D_m  \sqrt{d} {\sigma_1^*}^2) \\
        &\qquad + c (D_m \sqrt{d} {\sigma_1^*}^2) O\left( \sum_{j\neq1} (\pi_1^* + \pi_j^*) (R_{j1}/{\sigma_1^*})^4 \exp(-\R{j1}^2/128\sigt{j}{1}^2) \right) \\
        &\le c' D_m \sqrt{d} {\sigma_1^*}^2 \pi_1^*, 
    \end{align*}
    using Lemma \ref{lemma:sum_pnorm_error} with $q = 4$. Finally, recalling the errors for variances, 
    \begin{align*}
        |{\sigma_1^+}^2 - {\sigma_1^*}^2| \le c' D_m\sqrt{d} {\sigma_1^*}^2 {\pi_1^*} / d \pi_1^+ + \|\mu_1^+ - \mu_1^*\|^2 / d \le c_\sigma D_m {\sigma_1^*}^2 / \sqrt{d},
    \end{align*}
    where we considered that the mean estimator at the next iteration is also improved. The constant is $c_\sigma < 1$ given large enough separation between centers of Gaussians.
\end{proof}

In all cases, we can conclude that $D_m^+ = \max_j (\|\mu_j^+ - \mu_j^*\|/{\sigma_j^*}, |\pi_j^+ - \pi_j^*| / \pi_j^*, \sqrt{d} |{\sigma_j^+}^2 - {\sigma_j^*}^2|/{\sigma_j^*}^2) \le \gamma D_m$, for some constant $\gamma < 1$.

\subsection{Proof of Auxiliary Lemmas}
\label{appendix:auxiliary_lemma_for_E}
\subsubsection{Proof of Lemma \ref{lemma:sum_pnorm_error}}
\begin{proof}
    We will only show for $q = 4$ and the other cases will follow similarly. Again, let $x := \R{j1}^2 / \sigt{1}{j}^2$. Then, since $x \ge C^2 \ge 128^2$ by the separation condition, $x \ge 512 \log x$. Then the proof is again trivial:
    \begin{align*}
        \sum_{j\neq1} (\pi_1^* + \pi_j^*) \rho_\sigma^4 x^2 \exp(-x^2/128) &\le \sum_{j\neq1} (\pi_1^* + \pi_j^*) \rho_\sigma^4 \exp(-x^2/256) \\
        &\le \sum_{j\neq1} (\pi_1^* + \pi_j^*) \rho_\sigma^4 (\rho_\sigma/\pi_{min})^{-32} \ll c \pi_{min},
    \end{align*}
    for small constant $c$.
\end{proof}

\subsubsection{Proof of Lemma \ref{lemma:helper_second_term}}
\begin{proof}
    If $l = j$, upper bound is simply obtained by setting $w_j^u = 1$.
    \begin{align*}
        \E_{\D_j} [w_j^u \vdot{v + \mu_j^* - \mu_j^u}{s}^2] &\le \E_{\D_j} [ 2\vdot{v}{s}^2 + 2 {\sigma_j^*}^2 D_m^2] \le 4{\sigma_j^*}^2.
    \end{align*}
    
    If $l \neq j$, then
    \begin{align*}
        \E_{\D_j} [w_l^u \vdot{v + \mu_j^* - \mu_l^u}{s}^2] &\le \E_{\D_j} [ w_l^u (2\vdot{v}{s}^2 + 4 {R_{jl}^*}^2)] \le O \left({R_{jl}^*}^2 (1 + \pi_l^* / \pi_j^*) \exp\left(-{R_{jl}^*}^2 / 64(\sigma_j^* \vee \sigma_l^*)^2 \right) \right),
    \end{align*}
    where we used Lemma \ref{lemma:w1_vsl2_small_jneq1}. Then, summation over $l \neq j$ yields 
    \begin{align*}
        \sum_{l\neq j} &\sqrt{\E_{\D_j} [w_l^u \vdot{v + \mu_j^* - \mu_l^u}{s}^2] / {\sigma_l^*}^2 }  
        \le \sum_{l \neq j} O \left( ({R_{jl}^*}/\sigma_l^*) (1 + \pi_l^* / \pi_j^*) \exp\left(-{R_{jl}^*}^2 / 128(\sigma_j^* \vee \sigma_l^*)^2 \right) \right) \le c,
    \end{align*}
    for some small constant $c$ by Lemma \ref{lemma:sum_pnorm_error}.
\end{proof}

\subsubsection{Proof of Lemma \ref{lemma:helper_sixth_term}}
\begin{proof}
    If $l = j$, then the upper bound can be found by setting $w_l^u = 1$. That is,
    \begin{align*}
        \left( 1/{\sigma_j^*}^2 \right) &\Bigg( \sqrt{\E_{\D_j} [(\|v\|^2 - d {\sigma_j^*}^2)^2]} + 2 \sqrt{\E_{\D_j} [\vdot{v}{\mu_j^* - \mu_j^u}^2]} + d({\sigma_j^*}^2 - {\sigma_l^u}^2) \Bigg) \\
        &\le \left(\sqrt{2d} {\sigma_j^*}^2 + 4 {\sigma_j^*}^2 D_m + D_m {\sigma_j^*}^2 \sqrt{d} \right) / {\sigma_j^*}^2 = O(\sqrt{d}).
    \end{align*}
    
    Now consider when $l \neq j$. In addition to previous lemmas, we need the fact about chi-square distribution with degree $d$. Its fourth central moment is,
    \begin{align}
        \label{eq:chi_fourth_central_moment}
        \E_{v\sim \mathcal{N}(0,I_d)} [(\|v\|^2 - d)^4] = 12 d(d+4).
    \end{align} 
    With this fact,
    \begin{align*}
        \E_{\D_j} [w_l^u (\|v\|^2 - d {\sigma_j^*}^2)^2] &\le \sqrt{ \E_{\D_j} [w_l^u] } \sqrt{\E_{\D_j}[(\|v\|^2 - d {\sigma_j^*}^2)^4]} \le  8d{\sigma_j^*}^4 (1 + \pi_l^* / \pi_j^*) \exp(-{R_{jl}^*}^2 / 128 (\sigma_j^*\vee \sigma_l^*)^2).
    \end{align*}
    Other terms can also be bounded similarly as
    \begin{align*}
        \E_{\D_j} [w_l^u \vdot{v}{\mu_j^* - \mu_l^u}^2] &\le 2 {R_{jl}^*}^2 \E_{\D_j} [w_l^u \vdot{v}{s}^2] \le O \left((1 + \pi_l^* / \pi_j^*) {R_{jl}^*}^4 \right) \exp(-{R_{jl}^*}^2 / 64 (\sigma_j^* \vee \sigma_l^*)^2 ), \\
        \E_{\D_j} [w_l^u ] &\le O\left( (1 + \pi_l^* / \pi_j^*) \right) \exp(-{R_{jl}^*}^2 / 64 (\sigma_j^* \vee \sigma_l^*)^2 ).
    \end{align*}
    
    We use Lemma \ref{lemma:weight_bound_for_variance} to bound the value of $d({\sigma_j^*}^2 - {\sigma_l^u}^2) \E_{\D_j} [w_l^u]$. We first remove $u$ superscript as
    \begin{align*}
        d({\sigma_j^*}^2 - {\sigma_l^u}^2) \E_{\D_j} [w_l^u] &= (d({\sigma_j^*}^2 - {\sigma_l^*}^2) + d({\sigma_l^*}^2 - {\sigma_l^*}^2)) \E_{\D_j} [w_l^u] \\
        &\le d({\sigma_j^*}^2 - {\sigma_l^*}^2) \E_{\D_j} [w_l^u] + \sqrt{d} {\sigma_l^*}^2 \E_{\D_j} [w_l^u].
    \end{align*}
    From Lemma \ref{lemma:weight_bound_for_variance}, either we have $({\sigma_j^*}^2 - {\sigma_l^*}^2) \le 10 {R_{jl}^*} (\sigma_j^* \vee \sigma_l^*) / \sqrt{d}$ or
    \begin{align*}
        \E_{\D_j} [w_l^u] \le O\left( \exp(-d \min(1, t^2) / 256) \exp(- {R_{jl}^*}^2 / 64(\sigma_j^* \vee \sigma_l^*)^2) \right),
    \end{align*}
    where $t = ({\sigma_j^*}^2 - {\sigma_l^*}^2) / {\sigma_l^*}^2$. This gives us 
    \begin{align*}
        d({\sigma_j^*}^2 - {\sigma_l^*}^2) / {\sigma_l^*}^2 \E_{\D_j}[w_l^u] \le O((1 + \pi_l^* / \pi_j^*) \R{j1} \sigt{j}{l} \sqrt{d} \exp(-\R{jl}^2 /64\sigt{j}{l}^2).
    \end{align*} 
    Now summing up all terms over $l \neq j$, we get
    \begin{align*}
        \sum_{l\neq j} &\sqrt{d} \sqrt{1 + \pi_l^* / \pi_j^*} O\left( \R{jl}^2/ {\sigma_l^*}^2 \exp(-\R{jl}^2/128\sigt{j}{l}^2) + {\sigma_j^*}^2/{\sigma_l^*}^2 \exp(-\R{jl}^2/256\sigt{j}{l}^2) \right) \\
        &\le \sqrt{d}/\pi_j^* \sum_{l\neq j}(\pi_j^* + \pi_l^*) O\left( \R{jl}^2/{\sigma_l^*}^2 \exp(-\R{jl}^2/128\sigt{j}{l}^2) + {\sigma_j^*}^2/{\sigma_l^*}^2\exp(-\R{jl}^2/256\sigt{j}{l}^2) \right) \\
        &\le c\sqrt{d},
    \end{align*} 
    for some small constant $c$.
\end{proof}

\end{document}